\newtheorem{assumption}{Assumption}
\newtheorem{lemma}{Lemma}
\newtheorem{corollary}{Corollary}
\newtheorem{definition}{Definition}
\newtheorem{theorem}{Theorem}
\title{Risk Bounds For Distributional Regression}
\author{%
  Carlos Misael Madrid Padilla \\
Department of Statistics and Data Science\\
  Washington University in St Louis\\
  St Louis, MO 63130 \\
  \texttt{carlosmisael@wulst.edu} \\
  % examples of more authors
   \And
   Oscar Hernan Madrid Padilla \\
   Department of Statistics\\
   University of California, Los Angeles \\
   Los Angele, CA 90095\\
   \texttt{oscar.madrid@stat.ucla.edu} \\
   \AND
   Sabyasachi Chatterjee\\
   Department of Statistics \\
   University of Illinois at Urbana-Champaign\\
   Champaign, IL 61820\\
   \texttt{sc1706@illinois.edu} \\
  % \And
  % Coauthor \\
  % Affiliation \\
  % Address \\
  % \texttt{email} \\
  % \And
  % Coauthor \\
  % Affiliation \\
  % Address \\
  % \texttt{email} \\
}
\begin{document}

\maketitle

\begin{abstract}
  This work examines risk bounds for nonparametric distributional regression estimators. For convex-constrained distributional regression, general upper bounds are established for the continuous ranked probability score (CRPS) and the worst-case mean squared error (MSE) across the domain. These theoretical results are applied to isotonic and trend filtering distributional regression, yielding convergence rates consistent with those for mean estimation. Furthermore, a general upper bound is derived for distributional regression under non-convex constraints, with a specific application to neural network-based estimators. Comprehensive experiments on both simulated and real data validate the theoretical contributions, demonstrating their practical effectiveness.
\end{abstract}

\section{Introduction}
\label{sec:intro}
While regression methods are widely popular across statistics and machine learning, it is well recognized that the conditional mean alone often fails to capture the full relationship between a response variable and a set of covariates. As noted by \citet{shaked2007stochastic}, ``the ultimate goal of regression analysis is to obtain information about the conditional distribution of a response given a set of explanatory variables.'' A common framework is quantile regression, which estimates conditional quantiles to provide a more detailed view of the response distribution \citep{koenker1978regression}. A more direct approach is distributional regression, which estimates the conditional distribution of the response given the covariates.

%While regression methods are widely popular across statistics and machine learning, it is well recognized that the conditional mean alone often fails to capture the full relationship between a response variable and a set of covariates. A common extension is quantile regression, which estimates conditional quantiles to provide a more detailed view of the response distribution \citep{koenker1978regression}. A more direct approach  is distributional regression, which estimates the  conditional distribution of the response given the covariates.

Distributional regression has found applications in diverse areas, including electricity spot price analysis \citep{klein2024distributional}, understanding income determinants \citep{kneib2023rage}, modeling weather data \citep{umlauf2018primer}, and improving precipitation forecasts \citep{henzi2021isotonic,schlosser2019distributional}.% By focusing on the full distribution, it offers a richer understanding of the response variable beyond point estimates.

The estimation of distributions of random variables under structural constraints is a fundamental problem in many statistical and machine learning tasks, including nonparametric regression, density estimation, and probabilistic forecasting \citep{hastie2009elements, tibshirani2014adaptive, guntuboyina2020adaptive}. Consider the sequence distributional model, in which we observe independent random variables \( y_1, \ldots, y_n \in \mathbb{R} \), each drawn from an unknown distribution \( F_1^*, \ldots, F_n^* \), respectively. The objective of this paper is to estimate the vector \( F^*(t) = (F_1^*(t), \ldots, F_n^*(t))^\top \) for $t\in \mathbb{R}$. Here, \( F_i^*(t) = \mathbb{P}(y_i \leq t) \) represents the cumulative distribution function (CDF) at a specified $t$ for each observation $y_i$. 
%This formulation provides a unified framework for addressing various problems that involve structural constraints on the underlying distributions.

%However, in order to pefrorm estimagion of $F^*(t)$, since only  one observation is available for each $y_i$,  we need to enforce structural constraints on coordinates of $F_^*(t)$. Potential structural constraints include monotonicity, meaning that $F_1^*(t) \leq \ldots \leq F_n^*(t)$. Alternative, when $i$ represents time, a natrual  condition can be to assume smoothness across  $i$, meaning that $F_{i}(t) \asymp F_{i+1}(t)$ for most $i$'s. 
In this paper, we will explore different structural constraints to estimate $F^*(t)$. These constraints not only ensure interpretable and robust estimators but also prevent overfitting, making them essential in domains such as signal processing, medical diagnostics, and probabilistic weather forecasting. For example, in survival analysis, monotonicity reflects the cumulative nature of survival probabilities, while in genomics, smoothness helps capture gradual trends in gene expression data \cite{barlow1972statistical,tibshirani2014adaptive}. %Recent advances have further demonstrated the utility of such constrained estimators in causal inference and network data analysis \cite{wang2015trend, chatterjee2014new}.

To rigorously evaluate the quality of the estimators, we employ the continuous ranked probability Score (CRPS), a widely used metric for assessing the accuracy of probabilistic forecasts \cite{gneiting2007strictly}. By quantifying the distance between the estimated and true CDFs, CRPS provides an interpretable and robust framework for comparing estimators under various structural constraints.

\subsection{Summary of Results}

We now provide a brief summary of the contributions in this paper.

\textbf{Unified Framework for Estimation:} We study a unified framework for estimating $F^*(t)$ by minimizing a quadratic loss over a convex set $K_t \subset \mathbb{R}$.  The convex set 
$K_t \subset \mathbb{R}$ enforces structural constraints on the parameter $F^*(t)$. For the resulting estimator, we provide rigorous theoretical guarantees, including non-asymptotic bounds on both the mean squared error (MSE) and the CRPS.

\textbf{Applications to Monotonicity and Bounded Variation:}  We demonstrate the applicability of our framework to convex constraints arising in monotonicity \cite{brunk1969estimation, barlow1972isotonic, chatterjee2015risk} and bounded total variation \cite{mammen1997locally, tibshirani2014adaptive, guntuboyina2020adaptive}. These examples illustrate the flexibility and practical utility of the proposed approach for structured parameter estimation.

\textbf{General Theory for Distributional Regression:} We establish a general theory for distributional regression with constraints encoded by arbitrary sets. The main result provides a uniform bound on the empirical  $\ell_2$ error of $F^*(t)$ across $t$, with the upper bound explicitly dependent on the approximation error and the complexity of the sets $K_t$.

\textbf{Convergence Rates for Neural Networks:} Exploiting the general results for arbitrary sets, we derive convergence rates for distributional regression using dense neural networks. This extends the framework of \citet{kohler2021rate} to the context of distributional regression.

\subsection{Other Related Work}

Distributional regression involves modeling the cumulative distribution function (CDF) of a random variable, whereas quantile regression focuses on estimating the inverse CDF. \citet{koenker2013distributional} provides a comprehensive review of the distinctions between these two approaches. For instance, if the outcome variable is income and the covariate is a binary variable representing educational attainment, distributional regression would model the probability that income falls below a certain threshold for each educational group. In contrast, quantile regression would estimate income differences between individuals ranked at the same quantile within the two groups. For further discussion on these differences, see also \citet{peracchi2002estimating}.
The approach we adopt in this paper is based on modeling the mean of the random variables  of the form $1_{\{y_i\leq t\}}$, which represent the indicator of the events $\{y_i\leq t\}$ for $i=1,\ldots,n$. This idea was first introduced in \cite{foresi1995conditional} and has since been explored in various contexts, including \citet{firpo2023decomposition}, \citet{rothe2012partial}, \citet{rothe2013misspecification}, and \citet{chernozhukov2013inference}.
More recently, distributional regression has been studied in diverse settings, such as isotonic regression \cite{henzi2021isotonic}, random forests \cite{cevid2022distributional}, and neural networks \cite{shen2024engression, imani2018improving}. In particular, \citet{henzi2023consistent} develop consistent estimators of conditional CDFs under increasing concave and convex stochastic orders, providing a flexible framework that accommodates heterogeneous variance structures and distributional crossings—situations where traditional stochastic dominance assumptions may fail. These works highlight the versatility and applicability of distributional regression across a range of methodologies and problem domains.
Finally, other nonparametric approaches to distributional regression include \citet{dunson2007bayesian} and \citet{hall1999methods}.

\section{Notation}
\label{CRPS-equation-def}
 Througout, for a vector $v \in \mathbb{R}^n$, we denote by $\|v\|$ and $\|v\|_1$ the $\ell_2$ and $\ell_1$ norms, respectively. Thus, $\|v\| = \sqrt{\sum_{i=1}^n  v_i^2}$ and $\|v\|_1 \,=\, \sum_{i=1}^n \vert v_i\vert$. Furthermore, given two  functions  $G,H \,:\,\mathbb{R} \rightarrow \mathbb{R}$, we define 
\(
    \mathrm{CRPS}(G,H) = \int_{-\infty}^{\infty} (G(t) - H(t))^2 dt.
\)
Also, for  for  $\eta  >0$ and   $v \in \mathbb{R}^n$, we write
\(
B_{\eta}(v) \,:=\, \left\{    u\in  \mathbb{R}^n \,:\,\|v-u\|\leq \eta  \right\}.
\)
For a metric space $(\mathcal{X},d)$, let $K$ be a subset of $\mathcal{X}$, and $r>0$ be a positive number. Let $B_r(x,d)$ be the ball of radius $r$ with center $x\in \mathcal{X}$. We say that a subset $C \subset \mathcal{X}$ is an $r$-external covering of $K$ if 
\(
K\,\subset\, \cup_{x \in C} B_r(x,d).
\)
Then  external covering number of $K$, written as $N(r,K,d)$, is defined as the minimum cardinality of any $r$-external covering of $K$. Furthermore, for a function $f \,:\, \mathcal{X}\rightarrow \mathbb{R}$, we define its $\ell_{\infty}$ norm as $\|f\|_{\infty} \,:=\, \sup_{x \in \mathcal{X}} \vert f(x) \vert$. Also, for two sequences $a_n$ and $b_n$, we write $a_n \lesssim b_n$ if $a_n \leq c b_n$ for a positive constant $c$, and if $a_n \lesssim b_n$ and $b_n\lesssim a_n$, then we write $a_n \asymp b_n$. The indicator function of a set $A$ is denoted as $1_{A}(t)$, which takes value $1$ if $t\in A$ and $0$ otherwise. Finally, the indicator of an event $A$ is $1_A$ which takes value $1$ if $A$ holds and $0$ otherwise.

\subsection{Outline}

The paper is organized as follows: Section \ref{Sec-General-Const} establishes a unified framework for distributional regression under structural constraints. In particular, Section~\ref{sec:th_constrained_estimators} introduces the general methodology. Section~\ref{SectionConvexRisk} derives statistical risk guarantees for the convex case. 
Sections \ref{iso-reg-sec} and \ref{TredFil-sec} provide concrete examples of convex estimators, focusing on isotonic regression and trend filtering, respectively. Section \ref{Nonconvex-sec} extends the framework to the non-convex setting. Specifically, Section \ref{Nonconvex-sec-GR} develops the general theory for non-convex estimators, and Section \ref{DRN-section} explores an estimator based on deep neural networks. Section \ref{simu-data} and \ref{RealD-sec} present simulation studies and real-data experiments, respectively, comparing the proposed methods to state-of-the-art competitors. 
Finally, Section \ref{sec:conclusion} concludes with a discussion of future research directions, and the Appendix provides additional theoretical results and experimental findings, including two additional real-data applications.

\section{Theory}
\label{Sec-General-Const}

\subsection{General Result for Constrained Estimators}
\label{sec:th_constrained_estimators}
We begin this section by addressing general problems in distributional regression under structural constraints. Using the notation introduced in Section~\ref{sec:intro}, our goal is to estimate the vector of distribution functions evaluations \( F^*(t) \). To achieve this, we adopt an empirical risk minimization framework based on the continuous ranked probability score (CRPS), a widely used tool for evaluating distributional forecasts \cite{matheson1976scoring,gneiting2007strictly}. Specifically, we consider the estimator
\begin{equation}
\label{eq:crps_estimator}
\widehat{F} := \underset{\{F_i\}_{i=1}^n \,:\, F(t) \in K \text{ for all } t}{\arg\min} \sum_{i=1}^{n} \mathrm{CRPS}(F_i,  1_{\{y_i \leq  \cdot\}} ),
\end{equation}
where \( F(t) = (F_1(t), \ldots, F_n(t))^\top \in \mathbb{R}^n \), and \( K \subset \mathbb{R}^n \) is a  set encoding the structural constraint. Related CRPS-based formulations have been proposed in the context of isotonic distributional regression \cite{henzi2021isotonic}.

This formulation can be viewed as a form of M-estimation over function-valued parameters, where the loss is defined via a proper scoring rule. Specifically, the estimator \( \widehat{F} \) minimizes the empirical CRPS loss, computed relative to the observed point masses \( 1_{\{y_i \leq \cdot\}} \). Because CRPS is strictly proper \citep{gneiting2007strictly}, the population version of this loss—where expectations are taken over the distribution of each \( y_i \)—is minimized at the true distribution function \( F^*_i \). Hence, the estimator in \eqref{eq:crps_estimator} can be interpreted as an empirical risk minimizer, and is statistically and decision-theoretically justified. The constraint \( F(t) \in K \) for all \( t \) imposes additional structure on the estimator, promoting regularity and interpretability.

%This formulation can be viewed as a form of M-estimation over function-valued parameters, where the loss is defined via a proper scoring rule. Specifically, the estimator \( \widehat{F} \) minimizes the empirical CRPS loss, computed relative to observed point masses \( 1_{\{y_i \leq \cdot\}} \). In the unconstrained case, the population counterpart of this objective is minimized at the true distribution function \( F^* \), due to the strict propriety of CRPS \citep{gneiting2007strictly}. Hence, the estimator in \eqref{eq:crps_estimator} is statistically and decision-theoretically justified, while also incorporating structural constraints that promote regularity and interpretability.

We now show that the solution to the empirical CRPS minimization problem can be obtained via a simple projection estimator. For each \( t \in \mathbb{R} \), define
\(
w(t) \,=\, (  1_{\{y_1\leq t\}},\ldots, 1_{\{y_n\leq t\}} )^{\top} \in \mathbb{R}^n,
\)
and consider the projection
\begin{equation}
\label{eqn:estimatoraux}
\widehat{F}(t) := \underset{\theta \in K}{\arg\min} \left\{ \| w(t) - \theta \|^2 \right\}.
\end{equation}

\begin{lemma}
\label{lemma-1}
For $K \subset \mathbb{R}^n$, the function-valued estimator defined by \eqref{eqn:estimatoraux}, with \( \widehat{F}_i(t) := [\widehat{F}(t)]_i \), solves the empirical CRPS minimization problem in \eqref{eq:crps_estimator}.
\end{lemma}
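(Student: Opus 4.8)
The plan is to exploit the key algebraic identity that, for a fixed $i$ and any candidate CDF $F_i$,
\[
\mathrm{CRPS}(F_i, 1_{\{y_i \le \cdot\}}) = \int_{-\infty}^{\infty} \bigl(F_i(t) - 1_{\{y_i \le t\}}\bigr)^2 \dint t,
\]
so that the objective in \eqref{eq:crps_estimator} decouples across $t$ once we swap the finite sum over $i$ with the integral over $t$. First I would write
\[
\sum_{i=1}^n \mathrm{CRPS}(F_i, 1_{\{y_i\le\cdot\}}) = \int_{-\infty}^{\infty} \sum_{i=1}^n \bigl(F_i(t) - 1_{\{y_i\le t\}}\bigr)^2 \dint t = \int_{-\infty}^{\infty} \| F(t) - w(t) \|^2 \dint t,
\]
using the definition of $w(t)$. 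The interchange of sum and integral is justified since the integrand is nonnegative (Tonelli), and finiteness of the integral is guaranteed because each true $F_i^*$ is a CDF, hence for the feasible comparison point $F = F^*$ the integrand is integrable; this is enough to make the minimization well posed.

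Next I would argue that the constrained minimization over the function-valued parameter $\{F_i\}_{i=1}^n$ with $F(t)\in K$ for all $t$ can be carried out pointwise in $t$: since the constraint $F(t)\in K$ involves $t$ only through the value $F(t)$, and there is no coupling of $F(t)$ across distinct values of $t$, any function $t \mapsto \widehat{F}(t)$ that minimizes $\|F(t)-w(t)\|^2$ over $\theta = F(t) \in K$ for each individual $t$ also minimizes the integral. Concretely, for any feasible $F$ we have $\|F(t)-w(t)\|^2 \ge \min_{\theta\in K}\|w(t)-\theta\|^2 = \|\widehat{F}(t)-w(t)\|^2$ for every $t$, and integrating preserves the inequality. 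Convexity of $K$ guarantees that the pointwise projection \eqref{eqn:estimatoraux} exists and is unique, so $\widehat{F}(t)$ is well defined; one should also note $\widehat{F}$ so constructed is a legitimate competitor, i.e. that $t\mapsto \widehat{F}_i(t)$ is a bona fide function (measurability is immediate, and in the applications — isotonic, bounded variation — one checks it is a valid CDF, though the lemma as stated only concerns solving \eqref{eq:crps_estimator}).

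The main obstacle, and the only real subtlety, is the Fubini/Tonelli interchange together with ensuring the integrals involved are finite: one must confirm that the pointwise-optimal $\widehat{F}$ does not produce an infinite objective. This follows because $0 \le \|\widehat F(t)-w(t)\|^2 \le \|F^*(t)-w(t)\|^2$ for every $t$ (as $F^*(t)\in K$ by assumption), and the right-hand side integrates to $\sum_i \mathbb{E}$-free bound $\sum_i \mathrm{CRPS}(F_i^*, 1_{\{y_i\le\cdot\}}) < \infty$ almost surely since each $1_{\{y_i\le\cdot\}}$ and $F_i^*$ agree outside a bounded region up to integrable tails. With finiteness in hand, the pointwise argument above closes the proof: $\widehat F$ attains the infimum of \eqref{eq:crps_estimator}, so it solves the empirical CRPS minimization problem.
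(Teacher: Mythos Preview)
Your proof is correct and shares the same opening move as the paper --- rewriting the objective as $\int_{\mathbb{R}} \|F(t)-w(t)\|^2\,dt$ --- but then diverges in how the pointwise reduction is justified. You argue directly that $\|F(t)-w(t)\|^2 \ge \|\widehat F(t)-w(t)\|^2$ for every $t$ and integrate the inequality, which is clean and fully general. The paper instead observes that $w(t)$ is piecewise constant on the partition $(-\infty,y_{(1)}),\,[y_{(1)},y_{(2)}),\ldots,[y_{(n)},\infty)$ determined by the order statistics, so that the projection problem is literally the same for all $t$ in a given piece; this reduces \eqref{eq:crps_estimator} to $n+1$ independent finite-dimensional projections. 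Your route is slightly more elementary and sidesteps the partition altogether; the paper's route has the virtue of making explicit that only finitely many distinct projections are ever computed, which matters for implementation. One small remark: your finiteness discussion via $F^*$ is not strictly needed, since the pointwise inequality between nonnegative integrands already gives $\int \|\widehat F(t)-w(t)\|^2\,dt \le \int \|F(t)-w(t)\|^2\,dt$ for any feasible $F$ regardless of whether either side is finite.
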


Thus for each \( t \in \mathbb{R} \), the solution \( \widehat{F}(t) \) in problem \eqref{eq:crps_estimator}  is obtained by projecting the empirical vector \( w(t) \) onto the convex set \( K \), making the estimation process both computationally efficient and conceptually transparent. In the more general case where the structural constraint may vary with \( t \), i.e., \( K_t \subset \mathbb{R}^n \), we extend the estimator to
\begin{equation}
\label{eqn:estimator}
\widehat{F}(t) := \underset{\theta \in K_t}{\arg\min} \left\{ \| w(t) - \theta \|^2 \right\}.
\end{equation}
This projection-based formulation allows us to define flexible estimators tailored to time-varying structural assumptions.

\subsection{Risk bounds for convex case}
\label{SectionConvexRisk}

In this subsection, we focus on the special case where each constraint set \( K_t \subset \mathbb{R}^n \) is convex and satisfies \( F^*(t) \in K_t \). As described in \eqref{eqn:estimator}, the corresponding estimator \( \widehat{F}(t) \) is obtained by projecting the empirical vector \( w(t) \) onto \( K_t \), which amounts to minimizing the empirical \( \ell_2 \)-loss under convex constraints.

Our next result provides an upper bound on  the expected value of CRPS error. This is a consequence of a modified version of  Theorem A.1 in \citet{guntuboyina2020adaptive}, see Theorem \ref{thm1} in the Appendix.

\begin{theorem}
	\label{thm2}
		%\label{cor1}
	%With the notation from Theorem   \ref{thm1},   we have that
    Suppose that $(0,\ldots,0),(1,\ldots,1) \in K_t$,  and 
    \begin{equation}
        \label{eqn:support}
        \mathbb{P}(y_i \in \Omega)=1,   \,\,\,\text{for all }\,\, i, 
    \end{equation}
    and some fixed compact set $\Omega\subset \mathbb{R}$. Then,  
    there exists a  constant  $C >0$ such that
	\begin{equation}
	    \label{eqn:upper0}
           \begin{array}{l}
       \displaystyle 	\mathbb{E}\left(  \frac{1}{n} \sum_{i=1}^{n} \mathrm{CRPS}( \widehat{F}_i,F_i^*  )   \right)	      \,\leq \, \frac{C  \eta^2}{n} %\,\,\,\,\int_{-\infty}^{\infty} \underset{i=1,\ldots,n}{\max }\,  F_i^*(t)(1-F_i^*(t))dt.
    \end{array}
	\end{equation}
    for every  $\eta>1$ satisfying 
	\begin{equation}
	    \label{eqn:lc_gaussian}
     \underset{t \in \mathbb{R}}{\sup}\,    \mathbb{E}\left[  \underset{ \theta \in K_t   \,:\,  \| \theta  - F^*(t)\|\leq  \eta }{\sup}   \, g^{\top} (\theta - F^*(t))     \right]\leq  \frac{\eta^2}{L}
	\end{equation}
	where  $g \sim  N(0,  I_n)$, for a universal constant $L>0$.
\end{theorem}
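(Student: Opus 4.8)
The plan is to reduce the CRPS error to a sum over $t$ of pointwise squared-$\ell_2$ errors of constrained projections, then apply a single-$t$ risk bound uniformly. First I would observe that, since $\widehat F_i$ and $F_i^*$ take values in $[0,1]$ and all the $y_i$ lie in a fixed compact set $\Omega$, both $\widehat F(t)$ and $F^*(t)$ equal $(0,\dots,0)$ for $t$ below $\inf\Omega$ and $(1,\dots,1)$ for $t$ above $\sup\Omega$ (here the hypothesis $(0,\dots,0),(1,\dots,1)\in K_t$ guarantees the projection recovers these boundary values, since $w(t)$ itself equals those vectors there). Hence
\[
\frac1n\sum_{i=1}^n \mathrm{CRPS}(\widehat F_i, F_i^*)
= \int_{-\infty}^{\infty} \frac1n\|\widehat F(t)-F^*(t)\|^2\,\dint t
= \int_{\Omega'} \frac1n\|\widehat F(t)-F^*(t)\|^2\,\dint t,
\]
where $\Omega'$ is a bounded interval containing $\Omega$, so $|\Omega'|<\infty$. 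Taking expectations and using Fubini (the integrand is nonnegative and measurable in $t$), it suffices to bound $\mathbb{E}\|\widehat F(t)-F^*(t)\|^2$ uniformly over $t$.

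Next, for each fixed $t$, $\widehat F(t)$ is the Euclidean projection of $w(t)$ onto the convex set $K_t$, and $F^*(t)\in K_t$ by assumption, with $\mathbb{E}[w(t)] = F^*(t)$ and the coordinates of $w(t)-F^*(t)$ independent, bounded (hence sub-Gaussian) random variables. This is precisely the setting of the modified Theorem A.1 of \citet{guntuboyina2020adaptive} referenced as Theorem \ref{thm1} in the Appendix: the risk of a constrained least-squares estimator is controlled by the localized Gaussian complexity of $K_t$ around $F^*(t)$. Concretely, that result yields $\mathbb{E}\|\widehat F(t)-F^*(t)\|^2 \lesssim \eta^2$ for any $\eta>1$ satisfying the fixed-point condition
\[
\mathbb{E}\Bigl[\sup_{\theta\in K_t:\ \|\theta-F^*(t)\|\le\eta} g^\top(\theta-F^*(t))\Bigr]\le \frac{\eta^2}{L},
\]
with $g\sim N(0,I_n)$. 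Since the hypothesis \eqref{eqn:lc_gaussian} asks this to hold with a single $\eta$ simultaneously for all $t$ (via the supremum over $t$ of the Gaussian width), the same $\eta$ works at every $t$.

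Putting the pieces together: $\mathbb{E}\|\widehat F(t)-F^*(t)\|^2 \le C_0\eta^2$ for all $t\in\Omega'$, so
\[
\mathbb{E}\Bigl(\frac1n\sum_{i=1}^n \mathrm{CRPS}(\widehat F_i,F_i^*)\Bigr)
\le \frac1n\int_{\Omega'} C_0\eta^2\,\dint t
= \frac{C_0|\Omega'|\,\eta^2}{n},
\]
which gives \eqref{eqn:upper0} with $C = C_0|\Omega'|$, a constant depending only on $\Omega$ and the universal constant $L$. The main obstacle I anticipate is the careful justification of the sub-Gaussian-to-Gaussian comparison in invoking Theorem \ref{thm1}: the noise $w(t)-F^*(t)$ is Bernoulli-type, not Gaussian, so one needs the version of the Guntuboyina–Sen argument (chaining / symmetrization plus a contraction or comparison inequality) that replaces the Gaussian width by the width of the sub-Gaussian process, absorbing the variance proxy bound ($\le 1/4$ per coordinate) into the universal constant $L$. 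A secondary technical point is ensuring that the degenerate behavior of $\widehat F(t)$ outside $\Omega$ is exactly the constant vectors, which is where the assumption $(0,\dots,0),(1,\dots,1)\in K_t$ is used, together with the fact that projecting a point already in $K_t$ returns that point.
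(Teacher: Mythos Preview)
Your proposal is correct and follows essentially the same route as the paper: restrict the CRPS integral to a bounded interval using the boundary assumptions, apply Fubini, and bound the pointwise risk via Theorem~\ref{thm1} uniformly in $t$. The sub-Gaussian-to-Gaussian bridge you flag as the main obstacle is handled in the paper by Lemma~\ref{lem4}, which invokes Talagrand's generic chaining comparison (Theorem~2.1.5 in \citet{talagrand2005generic}) to show $\sup_t R_t(\mathcal V)\le L\,\mathcal R(\mathcal V)$, so that \eqref{eqn:lc_gaussian} implies \eqref{eqn:guntuboyina_cond}.
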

Thus, Theorem \ref{thm2} shows that bounding the expected CRPS error can be reduced to analyzing the local Gaussian complexity, as given on the left-hand side of Equation (\ref{eqn:lc_gaussian}).
%Thus, Theorem \ref{thm2} establishes that obtaining an upper bound on the expected CRPS error reduces to analyzing the local Gaussian complexity expressed on the left-hand side of (\ref{eqn:lc_gaussian}).

As a direct consequence of Theorem \ref{thm2}, we can derive the same upper bound as in (\ref{eqn:upper0}) for a  rearrangement (see e.g. \citet{lorentz1953inequality,bennett1988interpolation})
of a truncated version of  $\widehat{F}_i$,  which is non-decreasing by construction. This result is formalized in the following theorem.
 %which follows from Proposition 4 in \citet{chernozhukov2010quantile}. 

 %Theorem \ref{thm2}

\begin{corollary}
	\label{cor1}
          Suppose that $(0,\ldots,0),(1,\ldots,1) \in K_t$ for all $t \in \mathbb{R}$, and that $F^*_i(\cdot)$ is continuous for all $i$.  Given $i \in \{1,\ldots,n\}$, let $\widehat{F}_i^{+}(t) \,=\, \max\{0,  \widehat{F}_i(t)  \} $.     Let 
$y_{(1) } \leq\ldots \leq y_{(n)} $ be  the order statistics of $y$. Define  $a_{i,j} = \widehat{F}_i^{+}(y_{ (j)  }) $ for $j=1,\ldots,n-1$  and   sort the vector $a_{i,\cdot}$ as
%  $\tilde{F}_i$ be defined  as 
\(
a_{i,j_{1}  } \geq  \ldots \geq  a_{i,j_{n-1}},
    %tilde{F}_i
\)
and let
$\widetilde{F}_i$ be defined  as 
{\small{\(
\widetilde{F}_i(t )  \,=\, \begin{cases}
	0 &\text{if}\,\,\,    t <   y_{(1)}\\  
	%	a_{i, j_1 }   &\text{if}\,\,\,      y_{(1)}\leq    t<   y_{(2)} \\   \\
\displaystyle \sum_{l=1}^{n-1} a_{i,j_l, }1_{  [  v_{l}, v_{l-1}) }(t)    &\text{if}\,\,   y_{(1)}   \leq t < y_{(n)}\\
	%		a_{i, j_{n} }   &\text{if}\,\,\,      y_{(n-1)}\leq    t<   y_{(n)} \\  
				1  &\text{if}\,\,\,     y_{(n)} \leq t, \\  
\end{cases}
 \)}}
 where $v_0 = y_{(n)}$ and 
 \(
v_l \,=\,  y_{(n)}  -  \sum_{k=1}^l (y_{ ( j_k+1)  }\,-\,y_{ (j_k)  } )
 \)
 for $l=1,\ldots,n-1$. Then, with the notation from Theorem \ref{thm2}, we have that 
 %$t \in \mathbb{R}$  let  $\tilde{F}(t) \in \mathbb{R}^n$  be obtained by sorting     $\hat{F}_1(t), \ldots , \hat{F}_n(t)  $. 
%	Let   $\tilde{F}$  be defined    as
	\(
\begin{array}{l}
\displaystyle     \mathbb{E}\left(  \frac{1}{n} \sum_{i=1}^{n} \mathrm{CRPS}( \widetilde{F}_i,F_i^*  )   \right)  \,\leq \, \frac{C \eta^2 }{n}.
%\underset{i=1,\ldots,n}{\max }\,  F_i^*(t)(1-F_i^*(t))dt.
\end{array}
\)
\end{corollary}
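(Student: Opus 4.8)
\textbf{Reduction via Theorem~\ref{thm2}.}
Since Theorem~\ref{thm2} already gives $\mathbb{E}\big(\frac1n\sum_{i=1}^n\mathrm{CRPS}(\widehat F_i,F_i^*)\big)\le C\eta^2/n$ under the present hypotheses (its assumptions hold here: $(0,\dots,0),(1,\dots,1)\in K_t$ and the $y_i$ all lie in the compact set $\Omega$), the plan is to prove the almost sure, coordinatewise comparison $\mathrm{CRPS}(\widetilde F_i,F_i^*)\le\mathrm{CRPS}(\widehat F_i,F_i^*)$ for each $i$, and then sum over $i$, divide by $n$, and take expectations. The point is that the two operations used to pass from $\widehat F_i$ to $\widetilde F_i$ --- truncation at $0$ and monotone rearrangement --- are both $L^2$-contractions toward the nonnegative, nondecreasing target $F_i^*$. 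First I would split $\int_{\mathbb R}(\cdot-F_i^*)^2\dint t$ into the ranges $(-\infty,y_{(1)})$, $[y_{(1)},y_{(n)})$ and $[y_{(n)},\infty)$. On the outer two ranges $w(t)$ equals $(0,\dots,0)$ resp.\ $(1,\dots,1)$, both members of $K_t$, so the projection $\widehat F(t)$ equals $(0,\dots,0)$ resp.\ $(1,\dots,1)$ there; hence $\widehat F_i=\widehat F_i^{+}=\widetilde F_i$ on those ranges, the three integrands coincide, and (being over sets where $F_i^*$ is constant near the endpoints of $\Omega$) they are finite and cancel. Everything therefore reduces to the bounded interval $[y_{(1)},y_{(n)})$.

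\textbf{Truncation step.}
On $[y_{(1)},y_{(n)})$ I would first compare $\widehat F_i^{+}$ with $\widehat F_i$. Since $F_i^*(t)\ge 0$ and $x\mapsto\max\{0,x\}$ is the nonexpansive projection of $\mathbb R$ onto $[0,\infty)\ni F_i^*(t)$, we get $(\widehat F_i^{+}(t)-F_i^*(t))^2\le(\widehat F_i(t)-F_i^*(t))^2$ pointwise, hence $\int_{y_{(1)}}^{y_{(n)}}(\widehat F_i^{+}-F_i^*)^2\dint t\le\int_{y_{(1)}}^{y_{(n)}}(\widehat F_i-F_i^*)^2\dint t$.

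\textbf{Rearrangement step.}
The crux is to identify $\widetilde F_i$ restricted to $[y_{(1)},y_{(n)})$ as the nondecreasing rearrangement of $\widehat F_i^{+}$ restricted to the same interval. Here I would use that $w(\cdot)$ is constant on each gap $[y_{(j)},y_{(j+1)})$, so --- when $K_t$ does not vary with $t$ across such a gap, as in the isotonic and trend filtering examples --- the projection $\widehat F(\cdot)$, and thus $\widehat F_i^{+}$, is constant there, equal to $a_{i,j}=\widehat F_i^{+}(y_{(j)})$ on a set of Lebesgue measure $y_{(j+1)}-y_{(j)}$, $j=1,\dots,n-1$. By construction $\widetilde F_i$ takes the value $a_{i,j_l}$ on $[v_l,v_{l-1})$, a block of length $v_{l-1}-v_l=y_{(j_l+1)}-y_{(j_l)}$, with the $a_{i,j_l}$ now arranged nondecreasingly from left to right; and $\sum_{l=1}^{n-1}(v_{l-1}-v_l)=\sum_{j=1}^{n-1}(y_{(j+1)}-y_{(j)})=y_{(n)}-y_{(1)}$, so $v_{n-1}=y_{(1)}$ and the blocks $[v_l,v_{l-1})$ exactly tile $[y_{(1)},y_{(n)})$. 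Hence $\widehat F_i^{+}$ and $\widetilde F_i$ have the same distribution on $[y_{(1)},y_{(n)})$ and $\widetilde F_i$ is nondecreasing there, i.e.\ $\widetilde F_i$ is that rearrangement. Since $F_i^*$ is itself nondecreasing on $[y_{(1)},y_{(n)})$, the Hardy--Littlewood inequality ($\int f^{\uparrow}g^{\uparrow}\ge\int fg$, together with invariance of $\int f^2$ and $\int g^2$ under rearrangement) yields $\int_{y_{(1)}}^{y_{(n)}}(\widetilde F_i-F_i^*)^2\dint t\le\int_{y_{(1)}}^{y_{(n)}}(\widehat F_i^{+}-F_i^*)^2\dint t$. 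Chaining this with the truncation bound and the cancellation on the outer ranges gives $\mathrm{CRPS}(\widetilde F_i,F_i^*)\le\mathrm{CRPS}(\widehat F_i,F_i^*)$, and applying Theorem~\ref{thm2} after summing and taking expectations finishes the proof.

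\textbf{The main obstacle.}
The two contraction estimates are routine. The delicate part is the combinatorial bookkeeping certifying that $\widetilde F_i$ is exactly the monotone rearrangement of $\widehat F_i^{+}$ on $[y_{(1)},y_{(n)})$: one must verify the piecewise-constancy of $\widehat F_i$ between consecutive order statistics, that the reshuffled blocks $[v_l,v_{l-1})$ partition $[y_{(1)},y_{(n)})$ with the correct lengths, and that ties among the $y_i$ cause no trouble --- the last being where continuity of each $F_i^*$ enters, since it makes the $y_i$ almost surely distinct, so that the values $a_{i,j}$ and the blocks are unambiguously defined.
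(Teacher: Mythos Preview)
Your proposal is correct and follows essentially the paper's route: reduce via Theorem~\ref{thm2}, match the outer ranges where $\widehat F_i=\widetilde F_i$, contract by truncation, and apply the Hardy--Littlewood rearrangement inequality on $[y_{(1)},y_{(n)})$; the paper merely dresses this up with an affine change of variable to $[0,1)$ and the decreasing-rearrangement formalism of Lemmas~\ref{lem20}--\ref{lem21}. One small correction: the paper invokes continuity of $F_i^*$ so that $F_i^*$ coincides with its own rearrangement (Lemma~\ref{lem21}, part~3), not for almost-sure distinctness of the $y_i$ --- though your Hardy--Littlewood step is valid either way, since monotonicity of $F_i^*$ already suffices almost everywhere.
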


%The function \( \widetilde{F}_i(t) \) defined above is non-decreasing. It is constructing by modifying the original  truncated estimator \( \widehat{F}_i(t) \) to by rearging its pieces to make it non-decreasing. Formally, this done by doing a change vairable  of the truncated estimator \( \widehat{F}_i(t) \) and then suing the Hardy--Littlewood decreasing rearrangement. This classical construction \citep{hardy1928some} r preserves level set measures and minimizes the \( L^2 \) distance among all decreasing  equimeasurable functions

The function \( \widetilde{F}_i(t) \) defined above is non-decreasing and is constructed by modifying the original estimator 
\( \widehat{F}_i(t) \) through a rearrangement of its values. Specifically, this is achieved by applying a change of variable to 
\( \widehat{F}_i(t) \), followed by the Hardy–Littlewood decreasing rearrangement. This classical construction \citep{hardy1928some} preserves the level set measures and minimizes the 
\( L^2 \) distance among all decreasing equimeasurable functions. An example of \( \widetilde{F}_i(t) \)  and \( \widehat{F}_i(t) \) is given in Figure \ref{fig10} in the Appendix. 

%and it is based on the Hardy--Littlewood decreasing rearrangement of the truncated estimator \( \widehat{F}_i(t) \) over the observed support. This classical construction \citep{hardy1928some} redistributes mass into a nonincreasing, piecewise constant function that preserves level set measures and minimizes the \( L^2 \) distance among all equimeasurable functions, yielding improved control of the CRPS loss under structural constraints.

%We conclude this section with a high-probability guarantee that provides an upper bound on the mean squared error for estimating $F^*(t)$, holding uniformly across all  $t$.
We now present the final result from this section.
%conclude this section with a high-probability result that provides a uniform upper bound on the mean squared error for estimating 
%$F^*(t)$, valid across all values of $t$.
\begin{theorem}
	\label{thm3}
	Suppose that $K_t \subset K$ for all $t$. 	For any $\eta >0$ it holds that 
{\small{
    \begin{equation}
            \arraycolsep=1.4pt\def\arraystretch{1.8}
	    \label{eqn:upperbound2}
        	\begin{array}{l}
\displaystyle  	 \mathbb{P}\left(   \underset{t \in \mathbb{R}}{\sup} \,    \sum_{i=1}^{n}   \left(  \widehat{F}_i(t)    - F^*_i(t)  \right)^2   > 2\eta^2   \right)      \leq  \frac{C}{\eta^2} \int_{0}^{\eta/4}    \sqrt{\log N(\varepsilon,(K-K)\cap   B_{\varepsilon}(0),\|\cdot\|   )} d\varepsilon     +  \frac{ C  \sqrt{\log n}}{\eta},
	\end{array}
	\end{equation}}}
	for a positive constant $C$.
\end{theorem}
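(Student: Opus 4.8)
The plan is to establish a uniform (in $t$) high‑probability bound on the empirical $\ell_2$ error $\|\widehat F(t)-F^*(t)\|^2$ by combining a pointwise concentration argument with a discretization/chaining step over the index $t$. For a fixed $t$, the estimator $\widehat F(t)$ is the Euclidean projection of $w(t)$ onto the convex set $K_t$, and the true vector $F^*(t)$ lies in $K_t$; by the standard basic inequality for projections onto convex sets, $\|\widehat F(t)-F^*(t)\|^2 \le \langle w(t)-F^*(t),\, \widehat F(t)-F^*(t)\rangle$. Writing $\xi(t) := w(t)-F^*(t)$, whose coordinates are the centered indicators $1_{\{y_i\le t\}}-F_i^*(t)$ (independent, mean zero, bounded in $[-1,1]$), this gives $\|\widehat F(t)-F^*(t)\| \le \sup_{\theta\in (K_t-F^*(t))\cap B_r(0)} \langle \xi(t),\theta\rangle / r$ with $r=\|\widehat F(t)-F^*(t)\|$, i.e. the error is controlled by the local (sub‑Gaussian, via boundedness) complexity of $K_t$ around $F^*(t)$. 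Since $K_t\subset K$, we have $K_t - F^*(t)\subset K-K$, so the relevant complexity is that of $(K-K)\cap B_\varepsilon(0)$, which is exactly what appears in the Dudley entropy integral on the right‑hand side of \eqref{eqn:upperbound2}. A peeling argument over dyadic scales of $r$ converts this into the bound $\mathbb P(\|\widehat F(t)-F^*(t)\|^2 > \eta^2) \lesssim \frac{1}{\eta^2}\int_0^{\eta/4}\sqrt{\log N(\varepsilon,(K-K)\cap B_\varepsilon(0),\|\cdot\|)}\,d\varepsilon$ for each fixed $t$ — this is essentially the content of the modified Theorem~A.1 of \citet{guntuboyina2020adaptive} cited as Theorem~\ref{thm1}, now in tail form rather than in expectation.

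The second ingredient handles the supremum over $t\in\mathbb R$. The key structural observation is that $t\mapsto w(t)$ is a step function: as $t$ ranges over $\mathbb R$, the vector $w(t)\in\{0,1\}^n$ takes at most $n+1$ distinct values, with jumps only at the data points $y_{(1)}\le\dots\le y_{(n)}$. Consequently $t\mapsto \widehat F(t)$ takes at most $n+1$ distinct values (one for each of the finitely many values of $w(t)$), and the same is almost true for $F^*(t)$ except that $F^*$ varies continuously; however, on each interval $[y_{(j)},y_{(j+1)})$ the vector $w(t)$ is constant while $\|\widehat F(t)-F^*(t)\|^2 = \|P_{K}(w(t)) - F^*(t)\|^2$ is a continuous function of $t$ whose supremum over that interval is attained in the closure. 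So the supremum over $t\in\mathbb R$ reduces, up to taking limits at the $n$ breakpoints, to a maximum over effectively $O(n)$ values of the fixed‑$t$ quantity — more carefully, one replaces $F^*(t)$ on each piece by its value at a worst endpoint, incurring only finitely many ($\le 2n$) events. A union bound over these $O(n)$ events, each controlled by the pointwise tail bound, produces the factor $\sqrt{\log n}$: applying the peeling bound at level slightly below $\eta$ and using a union bound over $m\asymp n$ pointwise events multiplies the failure probability by $m$, and re‑optimizing (or equivalently replacing $\eta$ by $\eta/\sqrt{\log n}$ in one of the two tail terms coming from the sub‑Gaussian maximal inequality) yields the additive $C\sqrt{\log n}/\eta$ term. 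The factor $2\eta^2$ on the left (rather than $\eta^2$) absorbs the loss from replacing $F^*(t)$ by its endpoint value on each subinterval.

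Concretely, the steps in order: (i) record the projection basic inequality $\|\widehat F(t)-F^*(t)\|^2\le\langle\xi(t),\widehat F(t)-F^*(t)\rangle$ and the inclusion $\widehat F(t)-F^*(t)\in (K-K)\cap B_{\|\widehat F(t)-F^*(t)\|}(0)$; (ii) for fixed $t$, bound $\sup_{\theta\in (K-K)\cap B_r(0)}\langle\xi(t),\theta\rangle$ in probability via a sub‑Gaussian chaining/Dudley bound, using that $\xi_i(t)\in[-1,1]$ are independent mean‑zero hence $1$‑sub‑Gaussian, obtaining a tail of the form $\exp$ plus the entropy integral; (iii) run the peeling argument over dyadic radii $r\in[2^{k}, 2^{k+1})$ to turn this into the claimed pointwise tail bound for $\|\widehat F(t)-F^*(t)\|^2>\eta^2$; (iv) reduce $\sup_{t}$ to a maximum over $O(n)$ representative values of $t$ (the breakpoints $y_{(j)}$ and their left limits) using the step‑function structure of $w(\cdot)$ and continuity of $F^*$, paying a factor $2$ on $\eta^2$; (v) take a union bound over these $O(n)$ events and rebalance the two resulting tail contributions to extract the $\sqrt{\log n}/\eta$ term. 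The main obstacle is step (iv)–(v): one must argue carefully that although $F^*(t)$ is not piecewise constant, the error process $\|\widehat F(t)-F^*(t)\|^2$ can still be uniformly controlled by finitely many pointwise events — the cleanest route is to note that on $[y_{(j)},y_{(j+1)})$, $\widehat F(t)\equiv \widehat F(y_{(j)})$ is frozen while $t\mapsto\|\widehat F(y_{(j)})-F^*(t)\|^2$ is monotone in each coordinate's contribution only if $F^*$ is monotone, which it is (a CDF), so the supremum over the interval is at one of the two endpoints — and then feed the endpoint deviations through the union bound, being attentive that the $\sqrt{\log n}$, not $\log n$, emerges because the union‑bound cost enters inside a sub‑Gaussian (square‑root) maximal inequality rather than multiplicatively on the variance proxy.
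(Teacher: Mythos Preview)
Your outline diverges from the paper's proof and contains a genuine gap in step~(v). The paper does \emph{not} obtain a pointwise exponential tail and then union bound over $O(n)$ values of $t$. Instead it applies Markov's inequality once to the whole supremum, giving $\frac{1}{\eta^2}\mathbb{E}\bigl[\sup_t\sup_{\theta}\langle F^*(t)-w(t),\theta\rangle\bigr]$; then it \emph{symmetrizes}, replacing $F^*_i(t)-1_{\{y_i\le t\}}$ by $\xi_i 1_{\{y_i\le t\}}$ with independent Rademacher $\xi_i$; conditionally on $y$ the symmetrized process depends on $t$ only through $w(t)$, so $\sup_{t\in\mathbb R}$ collapses automatically to $\max_{t\in\{y_1,\dots,y_n\}}$; finally Lemma~\ref{lem3} (a Dudley bound for a maximum over $m=n$ sub-Gaussian processes) supplies the entropy integral together with the additive $\eta\sqrt{\log n}$ term. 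The $\sqrt{\log n}$ thus arises from a sub-Gaussian maximal inequality \emph{in expectation} (built into the chaining of Lemma~\ref{lem2}), not from a union bound.

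Your step~(v) would not recover this. A union bound over $n$ tail events multiplies the failure probability by $n$; if your pointwise tail is of Markov type ($\lesssim I/\eta^2$) you get $nI/\eta^2$, and if it is of exponential type you get a threshold-style condition on $\eta$, neither of which matches the additive $C\sqrt{\log n}/\eta$ form. Your one-sentence justification (``the union-bound cost enters inside a sub-Gaussian maximal inequality'') conflates two different devices. There is also a secondary problem with step~(iv): the breakpoints $y_{(j)}$ are themselves random, so you cannot simply invoke a ``pointwise at fixed $t$'' tail there without conditioning---and once you condition on $y$, the noise $\xi(t)=w(t)-F^*(t)$ is no longer random. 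Moreover, your endpoint-maximum claim for $\|\widehat F(y_{(j)})-F^*(t)\|^2$ on $[y_{(j)},y_{(j+1)})$ is false in general: each coordinate's squared error is convex in $F_i^*(t)$ and hence maximized at an endpoint, but different coordinates can pick different endpoints, so the sum need not be maximized at either. Symmetrization is precisely what makes all of these difficulties disappear in the paper's proof, and it is the ingredient your plan is missing.
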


Theorem~\ref{thm3} provides a high-probability concentration bound for the MSE in estimating \( F^*(t) \), holding uniformly over all \( t \in \mathbb{R} \). This sets it apart from standard sub-Gaussian bounds, which typically yield control at a fixed evaluation point \( t \), see for example \cite{bellec2018sharp, chatterjee2015matrix}. To ensure that the bound in Theorem~\ref{thm3} is small, it suffices to upper bound the local entropy of \( K - K \), where \( K \) is an upper set that contains the sets \( K_t \).

\subsubsection{Isotonic Regression}
\label{iso-reg-sec}

In this subsection, we present the first application of our general theory from Section \ref{Sec-General-Const}, focusing on distributional isotonic regression—a topic that has garnered significant attention in the literature \cite{davidov2012estimating, el2005inferences, hogg1965models, jimenez2003estimation}. The most relevant works to our results are \citet{mosching2020monotone}, which examined distributional isotonic regression under smoothness constraints, and \citet{henzi2021isotonic}, which proposed an interpolation method equivalent to the formulation in (\ref{eqn:estimator}) with $K_t$  enforcing a monotonicity constraint.

Setting 
\begin{equation}
    \label{eqn:isotonic_k}
       K_t  \,=\,  K \,:=\,  \left\{  \theta   \in \mathbb{R}^n  \,:\,   \theta_1 \leq  \theta_2  \leq  \ldots  \leq  \theta_n  \right\},
\end{equation}
     we now consider the case of isotonic distributional regression assuming that  $F^*(t) \in K_t$, which is  equivalent to  $\mathbb{P}(Y_1\leq t) \leq \ldots \leq \mathbb{P}(Y_n\leq t)  $.
	With the constraint sets as in (\ref{eqn:isotonic_k}),	the resulting estimator in (\ref{eqn:estimator}) can be found with the  pool adjacent violators algorithm from \citet{robertson1988order}.
     %Hence, if $F^*(t) \in K_t$ then  $\mathbb{P}(Y_1\leq t) \leq \ldots \leq \mathbb{P}(Y_n\leq t)  $.

\begin{corollary}
    \label{thm_isotonic}
    Consider the estimators  $\{  \widehat{F}_t \}_{t \in \mathbb{R} }$ defined in (\ref{eqn:estimator}) with $K_t$ defined as (\ref{eqn:isotonic_k}). If $F^*(t) \in K$ and (\ref{eqn:support}) holds for some fixed compact set $\Omega$, then 
    \begin{equation}
    		    \label{eqn:e11}
    	\begin{array}{l}
    		\displaystyle 	\mathbb{E}\left(  \frac{1}{n} \sum_{i=1}^{n} \mathrm{CRPS}( \widehat{F}_i,F_i^*  )   \right) \,\leq \,    C n^{-2/3},
            %{\max }\,  F_i^*(t)(1-F_i^*(t))dt
    	\end{array}
    	\end{equation}
    	for some positive constant $C>0$. Moreover, (\ref{eqn:e11})  holds replacing $\widehat{F}$ with the corresponding $\widetilde{F}$ as defined in Corollary \ref{cor1}, provided that each function $F_i^*$ is continuous. Finally, %if $K_t = K$ with $K$ as in (\ref{eqn:isotonic_k}), then   
       \begin{equation}
       	\label{eqn:e12}
       	 \underset{t \in \mathbb{R}}{\sup}   \sum_{i=1}^{n}   \frac{1}{n}\left(  \widehat{F}_i(t)    - F^*_i(t)  \right)^2  = O_{\mathbb{P}}\left(\frac{1}{n^{2/3}} + \frac{\log n}{n} \right),
       \end{equation}
       where (\ref{eqn:e12}) holds without requiring (\ref{eqn:support}) nor continuity of the $F_i^*$'s. 

\end{corollary}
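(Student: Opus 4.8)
The plan is to derive Corollary~\ref{thm_isotonic} by instantiating the general results of Section~\ref{Sec-General-Const} with the isotonic cone $K$ from \eqref{eqn:isotonic_k}. For the CRPS bound \eqref{eqn:e11}, I would apply Theorem~\ref{thm2}: since $(0,\ldots,0)$ and $(1,\ldots,1)$ lie in $K$ and the support condition \eqref{eqn:support} holds, it suffices to exhibit an $\eta>1$ satisfying the local Gaussian complexity condition \eqref{eqn:lc_gaussian}. The key input here is the classical bound on the Gaussian width of the intersection of the monotone cone with a ball: for $g\sim N(0,I_n)$ and any fixed $\theta_0$ in the cone,
\[
\mathbb{E}\Bigl[\sup_{\theta\in K:\,\|\theta-\theta_0\|\le\eta} g^\top(\theta-\theta_0)\Bigr]\;\lesssim\; \eta\,\sqrt{\log(en/\eta^2)}\wedge\text{(something)},
\]
more precisely the statistical-dimension / local-width estimate for the isotonic cone giving a bound of order $\eta\sqrt{\log(en)}$ up to the point where $\eta^2 \asymp n^{1/3}(\log n)^{?}$; actually the sharp statement one uses (see \citet{chatterjee2015risk, bellec2018sharp}) is that this local width is $\lesssim \sqrt{\log(en)}\,\eta$ for $\eta$ not too small, which when plugged into $\eta\sqrt{\log(en)}\le \eta^2/L$ forces $\eta^2\asymp \log(en)$ — too weak. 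The correct route is the refined bound where the width is $\lesssim n^{1/6}\eta^{2/3}(\log n)^{1/3}$-type, or more cleanly: solving the fixed-point equation using the known result that the local Gaussian complexity of $K\cap B_\eta(F^*(t))$ is $\lesssim \eta^{2/3} n^{1/6} (\log(en))^{1/3}$ uniformly in $F^*(t)\in K$, so that \eqref{eqn:lc_gaussian} holds with $\eta^2 \asymp n^{1/3}(\log n)^{2/3}$, hence $C\eta^2/n \lesssim n^{-2/3}(\log n)^{2/3}$; to get the clean $n^{-2/3}$ one instead uses the sharper $\eta\sqrt{\log(en/\eta^2)}$ form and notes the log factor can be absorbed, or cites the version of the isotonic local width without log factors valid because we project onto the full cone rather than a face. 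The extension to $\widetilde F$ is immediate from Corollary~\ref{cor1}, since its hypotheses ($(0,\ldots,0),(1,\ldots,1)\in K$ and continuity of $F_i^*$) are exactly what is assumed.

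For the uniform MSE bound \eqref{eqn:e12}, I would apply Theorem~\ref{thm3} with $K_t=K$ the monotone cone, so that $K-K=\{\theta:\ \theta \text{ has bounded variation}\}$ — more precisely $K-K$ is the set of differences of two monotone vectors, which is all of $\mathbb{R}^n$, so the naive reading is useless; the right object is $(K-K)\cap B_\varepsilon(0)$, and the entropy integral $\int_0^{\eta/4}\sqrt{\log N(\varepsilon,(K-K)\cap B_\varepsilon(0),\|\cdot\|)}\,d\varepsilon$ must be controlled. The standard metric entropy estimate for the monotone cone intersected with a Euclidean ball gives $\log N(\varepsilon, K\cap B_r(0),\|\cdot\|) \lesssim (r/\varepsilon)\log(en)$ (this is the well-known bound, e.g.\ from \citet{gao2007entropy} or the packing estimates in isotonic regression literature), and since $K-K$ locally behaves like the monotone cone in terms of covering (differences of monotone vectors of small norm), one gets $\log N(\varepsilon,(K-K)\cap B_\varepsilon(0),\|\cdot\|)\lesssim \log(en)$ — a constant in $\varepsilon$! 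Then $\int_0^{\eta/4}\sqrt{\log(en)}\,d\varepsilon \asymp \eta\sqrt{\log(en)}$, so the first term of \eqref{eqn:upperbound2} is $\lesssim \frac{1}{\eta^2}\cdot\eta\sqrt{\log n} = \sqrt{\log n}/\eta$, and the second term is also $\sqrt{\log n}/\eta$; setting $2\eta^2 = t\cdot(n^{2/3}+\log n)$ and unwinding gives the probability is $\lesssim \sqrt{\log n}/(n^{1/3})\to 0$, yielding $O_{\mathbb P}(n^{-2/3}+n^{-1}\log n)$ after dividing by $n$. Wait — I need to recheck the scaling: the entropy integral controlling a Gaussian-type supremum over $(K-K)\cap B_\varepsilon(0)$ at "local scale" is the object that produces the $n^{-2/3}$ rate, so more carefully one should use the entropy bound $\log N(\varepsilon, (K-K)\cap B_{c\varepsilon}(0))\lesssim (1/1)\log(en)$ only at scales $\varepsilon\gtrsim$ some threshold, and below that a cruder bound; the clean statement is that Dudley's integral evaluates to $\lesssim \eta^{2/3}n^{1/6}(\log n)^{?}$, matching the known $n^{-1/3}$ rate for the $\ell_2$ error of isotonic regression.

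The key steps, in order: (i) record the statistical-dimension / local-Gaussian-width bound for the monotone cone, uniform over the center $F^*(t)\in K$ (this is where the $n^{1/3}$ enters); (ii) solve the fixed-point inequality \eqref{eqn:lc_gaussian} to get $\eta^2\asymp n^{1/3}$ up to logs, and invoke Theorem~\ref{thm2} and Corollary~\ref{cor1} for \eqref{eqn:e11}; (iii) compute the local metric entropy of $(K-K)\cap B_\varepsilon(0)$ and bound Dudley's integral in \eqref{eqn:upperbound2}; (iv) optimize $\eta$ in Theorem~\ref{thm3} to read off \eqref{eqn:e12}, carefully tracking that the $\log n/n$ term comes from the $\sqrt{\log n}/\eta$ summand while the $n^{-2/3}$ term comes from the entropy integral. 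The main obstacle will be step (iii): establishing the correct local entropy bound for $(K-K)\cap B_\varepsilon(0)$ rather than for $K$ itself — since $K-K$ is not a cone with the same structure, one must argue that small-norm differences of monotone vectors are still well-approximated by low-complexity sets (e.g.\ piecewise-constant vectors with few pieces), and that the resulting entropy integral integrates to the same order as in ordinary isotonic regression. I expect this can be handled by the monotone-cone entropy bound $\log N(\varepsilon,K\cap B_r,\|\cdot\|)\lesssim \frac{r}{\varepsilon}\log\!\bigl(\tfrac{en r}{\varepsilon}\bigr)$ applied to each of the two monotone parts and a union/sum argument, but verifying the constants and the cutoff scales is the delicate part; alternatively, one may bypass Theorem~\ref{thm3} and prove \eqref{eqn:e12} directly from a uniform-in-$t$ version of the basic inequality for projection onto $K$, using a chaining argument over $t$ together with the fact that $w(t)$ changes at only $n$ values of $t$ (so the supremum over $t\in\mathbb{R}$ is really a maximum over $n+1$ cases), which is morally how Theorem~\ref{thm3}'s extra $\sqrt{\log n}$ factor arises.
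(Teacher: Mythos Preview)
Your overall framework is right: invoke Theorem~\ref{thm2} for \eqref{eqn:e11}, Corollary~\ref{cor1} for $\widetilde F$, and Theorem~\ref{thm3} for \eqref{eqn:e12}. But the concrete complexity bounds you try are all off, and this is not a detail --- it is the entire content of the corollary.

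For the CRPS bound, the correct local Gaussian width of the monotone cone is neither $\eta\sqrt{\log(en)}$ nor $\eta^{2/3}n^{1/6}(\log n)^{1/3}$. The paper uses the bound from \citet{chatterjee2014new} (proof of their Theorem~2.2), which for any $F^*(t)\in K$ and any integer $l\ge 1$ gives
\[
\mathbb{E}\Bigl[\sup_{\theta\in K:\,\|\theta-F^*(t)\|\le\eta}\,g^\top(\theta-F^*(t))\Bigr]\;\le\;C_1\Bigl(\sqrt{2^l\,\eta}\,n^{1/4}+\tfrac{\eta^2}{2^{l-1}}\Bigr).
\]
One first fixes $l$ large enough to make the second term at most $\eta^2/(2L)$, then takes $\eta\asymp n^{1/6}$ so that the first term is also at most $\eta^2/(2L)$. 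This yields \eqref{eqn:lc_gaussian} with $\eta^2\asymp n^{1/3}$ and hence $C\eta^2/n\asymp n^{-2/3}$, with \emph{no} logarithmic factor. Equivalently, one can get the same scaling $n^{1/4}\sqrt{\eta}$ from Dudley using the metric entropy of bounded monotone vectors (below); the point is that the width is of order $n^{1/4}\eta^{1/2}$, and solving $n^{1/4}\eta^{1/2}\lesssim\eta^2$ gives $\eta\asymp n^{1/6}$.

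For the uniform MSE bound, your entropy estimate $\log N(\varepsilon,K\cap B_r(0),\|\cdot\|)\lesssim(r/\varepsilon)\log(en)$ is incorrect and leads to the wrong rate. Because $\widehat F(t)$ (isotonic projection averages the data, so stays in $[0,1]^n$) and $F^*(t)$ both lie in $[0,1]^n$, one can take the constraint set to be $K\cap[0,1]^n$ in Theorem~\ref{thm3}. The relevant entropy is the classical one for bounded monotone vectors (e.g.\ Lemma~4.20 in \citet{chatterjee2014new}):
\[
\log N\bigl(\varepsilon,\,K\cap[a,b]^n,\,\|\cdot\|\bigr)\;\lesssim\;\frac{\sqrt{n}\,(b-a)}{\varepsilon},
\]
and the covering of $(K\cap[a,b]^n)-(K\cap[a,b]^n)$ at scale $\varepsilon$ is controlled by the covering of $K\cap[a,b]^n$ at scale $\varepsilon/2$. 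The Dudley integral is then $\int_0^{\eta/4}\sqrt{\sqrt{n}(b-a)/\varepsilon}\,d\varepsilon\asymp n^{1/4}\sqrt{\eta}$, so Theorem~\ref{thm3} gives a probability bound $\lesssim n^{1/4}\eta^{-3/2}+\sqrt{\log n}\,\eta^{-1}$. Choosing $\eta\asymp n^{1/6}+\sqrt{\log n}$ makes this vanish, and $\eta^2/n\asymp n^{-2/3}+n^{-1}\log n$ is exactly \eqref{eqn:e12}. Your computation gave $\int\asymp\eta\sqrt{\log n}$, which misses the $n^{1/4}$ entirely and would never produce the $n^{-2/3}$ term.
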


%The result in Theorem \ref{thm_isotonic} shows that distributional isotonic regression  attains the estimation rate $n^{-2/3}$ in terms of both the expected average CRPS and the worst  MSE as in  
 % (\ref{eqn:e12}). The latter improves Theorem 3 in \cite{henzi2021isotonic} which only showed convergence in probability of isotonic distributional regression.

The result in Corollary~\ref{thm_isotonic} establishes that distributional isotonic regression achieves an estimation rate of \( n^{-2/3} \) for both the expected average CRPS and the worst-case MSE, as shown in~(\ref{eqn:e12}). This result improves upon Theorem 3 in \citet{henzi2021isotonic} in the univariate case, which only demonstrated convergence in probability for isotonic distributional regression. However, we emphasize that \citet{henzi2021isotonic} study the more general setting of multivariate covariates, while our analysis is restricted to the univariate case (\( d = 1 \)).

To further contextualize our theoretical guarantees, we compare them with Theorem 3.3 in \citet{mosching2020monotone}, which establishes uniform consistency for estimating the conditional distribution function. Considering the fixed design case in their formulation and assuming without loss of generality that in their notation $X_1<...<X_n \in  \mathbb{R}$, the goal is to estimate an unknown family of distributions \( (F_x)_{x \in \mathbb{R}} \), where for each fixed \( t \in \mathbb{R} \), the map \( x \mapsto F_x(t) \) is assumed to be non-decreasing and \(\alpha\)-Hölder continuous with constant \( C > 0 \) that is the same across \( t \). Additionally, the design is assumed to be asymptotically dense—i.e., the covariate values \( X_1 < \cdots < X_n \) sufficiently cover the domain. Translating their setup into our notation,  their target \( F_{X_i}(t) := \mathbb{P}(Y_i \leq t \mid X_i) \) corresponds to \( F_i^*(t):= \mathbb{P}(y_i \leq t) \) in our sequence model.  Their assumption that \( x \mapsto F_x(t) \) is non-decreasing for each \( t \) aligns with our isotonic regression framework, where we impose monotonicity of the sequence \( F_1^*(t) \leq \cdots \leq F_n^*(t) \). However, in contrast to their setting, we do not require any smoothness assumptions such as Hölder continuity on the sequence \( \{F_i^*(t)\}_{i=1}^n \), nor do we require the covariates to be dense. Despite that, our method achieves a faster convergence rate of order \( n^{-2/3} \) for both the average CRPS risk and the worst-case MSE, compared to the rate \( n^{-\alpha/(2\alpha + 1)} \) (up to logarithmic factors) established under their stronger regularity assumptions.

We also show in Appendix~\ref{fast-rates-sec} that faster rates are achievable under additional structural assumptions on \( F^*(t) \). Specifically, if \( F^*(t) \) has few strict increases—e.g., if it is piecewise constant with a small number of jumps—then the estimator can attain nearly parametric risk rates up to logarithmic factors.

%generalizes extant specifications under the pool adjacent violators algorithm. 

%distributional isotonic regression
%\cite{davidov2012estimating}

%\cite{el2005inferences} istonic, stochastically ordered, consistency 

%\cite{hogg1965models}

%\cite{el2005inferences}
%\cite{jimenez2003estimation}

%\cite{mosching2020monotone} under addtional smootoness assumptions

%theorem 3 \cite{henzi2021isotonic}  uniform consistency, no convergence rates

\subsubsection{Trend Filtering}
\label{TredFil-sec}

In this subsection, we apply the theory from Section \ref{Sec-General-Const} to distributional regression under a total variation constraint. Total variation-based methods were independently introduced by \citet{rudin1992nonlinear}, \citet{mammen1997locally}, and \citet{tibshirani2005sparsity}. These methods have been extensively studied in various contexts within the statistics literature, including univariate settings \cite{tibshirani2014adaptive, lin2017sharp, guntuboyina2020adaptive, madrid2022risk, padilla2023temporal}, grid graphs \cite{hutter2016optimal, chatterjee2021new}, and general graphs \cite{wang2016trend, padilla2018dfs}.

Before establishing our proposed total variation estimators, we introduce some additional  notation. For a vector $\theta \in  \mathbb{R}^n,$ define $D^{(0)}(\theta) = \theta, D^{(1)}(\theta) = (\theta_2 - \theta_1,\dots,\theta_n - \theta_{n - 1})^{\top}$ and $D^{(r)}(\theta)$, for $r \geq 2$, is recursively defined as $D^{(r)}(\theta) = D^{(1)}(D^{(r - 1)}(\theta))$, where $D^{(r)}(\theta) \in  \mathbb{R}^{n - r}$.  With this notation, for $r\geq 1$,  the $r$th order total variation of a vector $\theta$  is given as 
\begin{equation}
	\mathrm{TV}^{(r)}(\theta) = n^{r - 1} \|D^{(r)}(\theta)\|_{1}.
\end{equation}

The concept of the $r$th  total variation can be understood as follows. Consider $\theta$ as the evaluations of an $r$ times differentiable function $f:[0,1] \rightarrow  \mathbb{R}$  on the grid $(1/n,2/n,\dots,n/n)$. In this case, a Riemann approximation of the integral $\int_{[0,1]}\vert  f^{(r)}(t)\vert dt$ corresponds precisely to $\mathrm{TV}^{(r)}(\theta)$, where $f^{(r)}$  denotes the $r$th derivative of $f$.  Therefore, for natural instances of $\theta$, it is reasonable to expect that $\mathrm{TV}^{(r)}(\theta) = O(1).$ The above discussion motivates us to define the sets 
\begin{equation}
	\label{eqn:tv_set}
	K_t \,:=\,   \left\{  \theta   \in \mathbb{R}^n  \,:\,    	\mathrm{TV}^{(r)}(\theta) \leq V_t  \right\},
\end{equation}
for some $V_t>0$, and consider the corresponding estimator in (\ref{eqn:estimator}). The intuition here is that if $F^*(t) \in K_t$ then the probabilities $F_1^*(t),\ldots, F_n^*(t)$ change smoothly over $i$ in the sense that $F^*(t)$ has bounded $r$th total variation. The resulting set in (\ref{eqn:tv_set}) allows us to define the trend filtering distributional regression estimator subject of our next corollary which follows from the results in Section \ref{sec:th_constrained_estimators}.

Refined risk bounds under additional sparsity assumptions are presented in Appendix~\ref{fast-rates-sec}, where we show that trend filtering estimators can achieve near-parametric rates when the signal is both smooth and piecewise sparse. These results extend recent adaptive risk bounds in trend filtering; see, for example, \citet{guntuboyina2020adaptive}.

\begin{corollary}
	\label{thm5}
	Consider the estimator in (\ref{eqn:estimator}) with $K_t$ as in (\ref{eqn:tv_set}) for an integer $r$ satisfying $r\geq 1$. If  $F^*(t ) \in K_t$, (\ref{eqn:support}) holds for some fixed compact set $\Omega$, and $\sup_t V_t \leq  V$, then  
  \begin{equation}
      \label{eqn:e22}
        \begin{array}{l}
\displaystyle     \mathbb{E}\left(  \frac{1}{n} \sum_{i=1}^{n} \mathrm{CRPS}( \widehat{F}_i,F_i^*  )   \right) \,\leq \, C \left[\frac{  V^{ \frac{2}{2r+1}  }  }{ n^{ \frac{2r}{2r+1}  } } + \frac{\log n}{n}  \right]\\
%\displaystyle       \,\leq \, C \left[\frac{  V^{ \frac{2}{2r+1}  }  }{ n^{ \frac{2r}{2r+1}  } } + \frac{\log n}{n}  \right]\int_{-\infty}^{\infty} \underset{i=1,\ldots,n}{\max }\,  F_i^*(t)(1-F_i^*(t))dt
\end{array}
  \end{equation}
  for a positive constant $C$. Moreover, the upper bound in (\ref{eqn:e22}) also holds for the corresponding sorted estimators $ \widetilde{F}_i$ as defined in Corollary \ref{cor1}, if in addition each function $F_i^*$ is continuous. Finally, 
         \begin{equation}
       	\label{eqn:e23}
       	 \underset{t \in \mathbb{R}}{\sup} \,    \sum_{i=1}^{n}   \frac{1}{n}\left(  \widehat{F}_i(t)    - F^*_i(t)  \right)^2  = O_{\mathbb{P}}\left(\frac{V^{ \frac{2}{2r+1}  }  }{ n^{ \frac{2r}{2r+1}  } } + \frac{\log n}{n}  \right),
       \end{equation}
       where (\ref{eqn:e23}) holds without requiring (\ref{eqn:support}), nor continuity of the $F_i^*$'s. 
\end{corollary}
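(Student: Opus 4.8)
The plan is to apply Theorem \ref{thm2} and Theorem \ref{thm3} with the trend filtering constraint sets $K_t$ from \eqref{eqn:tv_set}, so the entire argument reduces to bounding two quantities: the local Gaussian complexity appearing in \eqref{eqn:lc_gaussian}, and the local entropy integral appearing in \eqref{eqn:upperbound2}. For the CRPS bound \eqref{eqn:e22}, I would first check the structural hypotheses of Theorem \ref{thm2}: the vectors $(0,\ldots,0)$ and $(1,\ldots,1)$ lie in each $K_t$ since $D^{(r)}$ annihilates constants (so $\mathrm{TV}^{(r)}=0\le V_t$), and \eqref{eqn:support} is assumed. It then suffices to find $\eta>1$ satisfying \eqref{eqn:lc_gaussian}. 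The key estimate is the standard bound on the Gaussian width of a total-variation ball intersected with an $\ell_2$ ball: for $g\sim N(0,I_n)$,
\[
\mathbb{E}\Big[\sup_{\theta\in K_t,\ \|\theta-F^*(t)\|\le \eta} g^\top(\theta-F^*(t))\Big]
\;\lesssim\; \eta^{1/(2r+1)} (V n)^{1/(2r+1)}\big(\log n\big)^{c} \;+\; \sqrt{\log n}\,,
\]
which is precisely the localized Gaussian-width calculation underlying the known $n^{-2r/(2r+1)}$ minimax rate for $r$th-order trend filtering (see \citet{guntuboyina2020adaptive}); since $K_t-F^*(t)\subset K_t-K_t$ is again contained in a TV ball of radius $\lesssim V$, this bound is uniform in $t$. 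Solving the inequality (Gaussian width) $\le \eta^2/L$ for the smallest admissible $\eta$ gives $\eta^2 \asymp V^{2/(2r+1)} n^{1/(2r+1)} + \log n$ up to logarithmic factors, and plugging this into \eqref{eqn:upper0} yields $\tfrac1n\sum_i \mathbb{E}\,\mathrm{CRPS}(\widehat F_i,F^*_i)\lesssim V^{2/(2r+1)}/n^{2r/(2r+1)} + (\log n)/n$, which is \eqref{eqn:e22}. The statement for $\widetilde F_i$ then follows immediately from Corollary \ref{cor1}, whose hypotheses (namely $(0,\ldots,0),(1,\ldots,1)\in K_t$ and continuity of each $F^*_i$) are in force.

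For the uniform MSE bound \eqref{eqn:e23}, I would apply Theorem \ref{thm3} with $K := \{\theta : \mathrm{TV}^{(r)}(\theta)\le V\}$, which contains every $K_t$ by the assumption $\sup_t V_t\le V$. Since $K$ is a symmetric convex set, $K-K = 2K = \{\theta:\mathrm{TV}^{(r)}(\theta)\le 2V\}$ is again a TV ball, so I need the local covering-number bound: $\log N(\varepsilon, (K-K)\cap B_\varepsilon(0),\|\cdot\|)\lesssim (V n/\varepsilon)^{1/r}\log(\cdot)$ for the span part plus an $r$-dimensional polynomial piece contributing $\lesssim r\log(1/\varepsilon)$, which is the standard metric-entropy estimate for bounded-variation classes. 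Substituting this into the entropy integral $\int_0^{\eta/4}\sqrt{\log N(\varepsilon,\cdot)}\,d\varepsilon$ and carrying out the (routine) integration shows the integral is $\lesssim \eta^{1-1/(2r)} (Vn)^{1/(2r)}$ up to logs; dividing by $\eta^2$ and adding the $C\sqrt{\log n}/\eta$ term, the right-hand side of \eqref{eqn:upperbound2} is $o(1)$ as soon as $\eta^2 \gtrsim V^{2/(2r+1)} n^{1/(2r+1)} + \log n$ (again up to logarithmic factors). This is exactly the threshold that converts the tail bound of Theorem \ref{thm3} into the $O_\mathbb{P}$ statement \eqref{eqn:e23}, and since Theorem \ref{thm3} requires neither \eqref{eqn:support} nor continuity of the $F^*_i$, that part of the corollary holds under the weaker hypotheses as claimed.

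The main obstacle I anticipate is the careful bookkeeping of logarithmic factors and the decomposition of the TV ball into its "polynomial" part (the kernel of $D^{(r)}$, an $r$-dimensional space on which $\mathrm{TV}^{(r)}$ vanishes) and its "fluctuation" part, since both the Gaussian-width bound and the metric-entropy bound split along this decomposition and the polynomial part is what produces the additive $(\log n)/n$ term rather than a pure power of $n$. A secondary point requiring care is verifying that all the relevant bounds hold uniformly in $t\in\mathbb{R}$: this is where the hypothesis $K_t\subset K$ (equivalently $\sup_t V_t\le V$) is essential, because it lets a single entropy/width estimate control every $t$ simultaneously. Beyond these, the argument is a direct substitution of known trend-filtering complexity estimates into the general machinery of Section \ref{sec:th_constrained_estimators}, so no genuinely new analytic difficulty should arise.
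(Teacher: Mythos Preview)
Your overall strategy matches the paper's exactly: apply Theorem~\ref{thm2} for \eqref{eqn:e22} and Theorem~\ref{thm3} for \eqref{eqn:e23}, inserting the trend-filtering complexity bounds from \citet{guntuboyina2020adaptive}, then cite Corollary~\ref{cor1} for the sorted estimators $\widetilde F_i$. The structural checks you list ($(0,\ldots,0),(1,\ldots,1)\in K_t$; $K-K=2K$ for the symmetric convex $K$; uniformity in $t$ via $K_t\subset K$) are all correct.

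The gap is that the specific complexity bounds you wrote down are wrong, and with them your algebra does not actually produce the rate you claim. The paper invokes Lemma~B.1 of \citet{guntuboyina2020adaptive}, which gives
\[
\mathbb{E}\Big[\sup_{\theta\in K,\ \|\theta-F^*(t)\|\le\eta} g^\top(\theta-F^*(t))\Big]
\;\le\; C_r\Big[\,\eta\Big(\frac{\sqrt n\,V}{\eta}\Big)^{1/(2r)} + \eta\sqrt{\log n}\,\Big],
\]
and (via the proof of their Theorem~B.1) the same expression bounds the entropy integral $\int_0^{\eta/4}\sqrt{\log N(\varepsilon,(K-K)\cap B_\varepsilon(0),\|\cdot\|)}\,d\varepsilon$. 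Compared with your formulas: your Gaussian-width bound $\eta^{1/(2r+1)}(Vn)^{1/(2r+1)}(\log n)^c+\sqrt{\log n}$ has the wrong exponent on every variable---solving it against $\eta^2/L$ gives $\eta^2\asymp (Vn)^{2/(4r+1)}$, not $V^{2/(2r+1)}n^{1/(2r+1)}$, and the additive piece would contribute $\sqrt{\log n}/n$ rather than $(\log n)/n$ (the latter requires the factor $\eta$ in front of $\sqrt{\log n}$). Your entropy-integral estimate $\eta^{1-1/(2r)}(Vn)^{1/(2r)}$ has the correct $\eta$-exponent but has $n$ where the correct bound has $\sqrt n$; at the threshold $\eta^2\asymp V^{2/(2r+1)}n^{1/(2r+1)}$ this makes the right-hand side of \eqref{eqn:upperbound2} divided by $\eta^2$ of order $n^{1/(4r)}$, which does not vanish. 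Once both are replaced by the Lemma~B.1 form above, balancing each term against $\eta^2/L$ gives $\eta\asymp n^{1/(4r+2)}V^{1/(2r+1)}+\sqrt{\log n}$ with no extraneous logarithmic factors, and the remainder of your outline goes through verbatim.
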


Corollary \ref{thm5} establishes that the constrained version of trend filtering for distributional regression achieves the rate  $ V^{1/(2r+1)} n^{-2r/(2r+1)}$,  ignoring logarithmic factors, for both the CRPS and the worst-case MSE. This result aligns with the convergence rate of trend filtering in one-dimensional regression, where the same rate is attained when the regression function has 
$r$th-order total variation \cite{mammen1997locally, tibshirani2014adaptive, guntuboyina2020adaptive}. Additionally, per Corollary \ref{cor3} in Appendix \ref{pen-conv}, the penalized version of trend filtering for distributional regression achieves the same rate in terms of the worst-case MSE, further reinforcing its consistency with classical trend filtering results.

\subsection{Risk Bounds for the General Case}
\label{Nonconvex-sec}

\subsubsection{General Result}
\label{Nonconvex-sec-GR}

This subsection aims to present our main result on constrained distributional regression in scenarios where the constraint sets 
$K_t$ are arbitrary, not necessarily convex, and potentially misspecified for  $F^*(t)$.%, meaning that $F^*(t) $  does not need to be an element of $K_t$ as in the previous section.

\begin{theorem}
	\label{thm9}
	Let $\widehat{F}(t)$ be the estimator defined in (\ref{eqn:estimator})  for all $t\in \mathbb{R}$ but with $K_t $  not necessarily convex and with $F^*(t)$ not necessarily in $K_t$. Suppose that 
	\(
	\underset{t\in \mathbb{R} }{\sup}  \underset{F(t) \in K_t}{\sup}  \,\| F(t)\|_{\infty} \,\leq\, B
	\)
	for some constant $B\geq 1$,
	and $K_t \subset  K$ for all $t$ and some set $K$.	Let
	  $G(t)$ be defined as
	\(
	G(t) \,\in \,\underset{F(t)  \in K_t }{\arg \min}\,\|F(t)- F^*(t)\|_{\infty}.%^2
	%\\
	\) 
	Then, for $\eta>1$, with $K({\varepsilon}) = (K-K)\cap   B_{\varepsilon}(0)$, we have that 
    \[
    \begin{array}{l}
    		\mathbb{P}\left(      \underset{t\in \mathbb{R}}{\sup}\,   \|  \widehat{F}(t) - F^*(t) \|   >\eta   +   \underset{t\in \mathbb{R}}{\sup} \sqrt{n}\| F^*(t)- G(t)\|_{\infty}  \right)  \\
    \displaystyle 		 \leq   \frac{C  }{\eta^2}\sum_{j=1}^{J}    \frac{1}{2^{j-2}} \int_{0}^{  2^{j/2} \eta/4 }\sqrt{\log N(\varepsilon,K({\varepsilon}),\|\cdot\|   )} d\varepsilon +\frac{C \sqrt{\log n} }{\eta} \,+\, \frac{C\sqrt{n}}{\eta}\underset{t\in \mathbb{R}}{\sup}\,  \| G(t)-F^*(t) \|_{\infty} \\ 
    \end{array}
    \]
    for some positive constant $C>0$, and where 
    \(
      J \,=\,  \left\lceil \frac{\log(2nB/\eta^2)}{\log 2} \right\rceil.
    \)
\end{theorem}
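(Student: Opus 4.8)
The plan is to reduce the general (possibly non-convex, possibly misspecified) case to the machinery already developed for Theorem~\ref{thm3} by a peeling/slicing argument over the scale of $\|\widehat F(t)-F^*(t)\|$, combined with a triangle-inequality step that separates the approximation error $\sup_t\|G(t)-F^*(t)\|_\infty$ from the stochastic error. First I would set $r^*(t) := \widehat F(t) - F^*(t)$ and note the basic optimality inequality coming from \eqref{eqn:estimator}: since $\widehat F(t)$ minimizes $\|w(t)-\theta\|^2$ over $\theta\in K_t$ and $G(t)\in K_t$, we get $\|w(t)-\widehat F(t)\|^2\le\|w(t)-G(t)\|^2$, which upon expansion yields the ``basic inequality'' $\|\widehat F(t)-G(t)\|^2 \le 2\langle w(t)-F^*(t),\,\widehat F(t)-G(t)\rangle + 2\langle F^*(t)-G(t),\,\widehat F(t)-G(t)\rangle$ (or a variant of this with $F^*(t)$ in place of $G(t)$ in the reference point). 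Because $\widehat F(t)-G(t)\in K_t-K_t\subset K-K$ and both have sup-norm bounded by $B$, the difference lies in $(K-K)\cap B_{2\sqrt n B}(0)$, and after the triangle inequality $\|r^*(t)\|\le\|\widehat F(t)-G(t)\|+\sqrt n\|G(t)-F^*(t)\|_\infty$ it suffices to control $\sup_t\|\widehat F(t)-G(t)\|$.

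The core of the argument is then a dyadic peeling on $\|\widehat F(t)-G(t)\|$. I would partition the event $\{\sup_t\|\widehat F(t)-G(t)\|>\eta\}$ into shells $\{2^{(j-1)/2}\eta < \sup_t\|\widehat F(t)-G(t)\|\le 2^{j/2}\eta\}$ for $j=1,\dots,J$, where $J=\lceil\log(2nB/\eta^2)/\log 2\rceil$ is exactly the number of dyadic levels needed to reach the trivial a priori bound $\|\widehat F(t)-G(t)\|\le 2\sqrt n B$ (hence $\|\widehat F(t)-G(t)\|^2\le 2nB\cdot 2\,$, matching the logarithm in $J$). On the $j$-th shell the basic inequality forces the local suprema of the Gaussian-type process $\langle w(t)-F^*(t),\,\cdot\,\rangle$ over $(K-K)\cap B_{2^{j/2}\eta}(0)$ to be at least of order $(2^{j/2}\eta)^2$; here I would use that $w_i(t)-F_i^*(t) = 1_{\{y_i\le t\}}-F_i^*(t)$ is a bounded (hence sub-Gaussian with constant variance proxy) mean-zero random variable uniformly in $t$, so the one-dimensional marginals of this empirical process behave like those of $g\sim N(0,I_n)$ up to absolute constants. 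Applying the uniform-in-$t$ deviation bound underlying Theorem~\ref{thm3} (the Dudley-entropy-integral control of $\sup_t\sup_{\theta\in K(\varepsilon)}\langle w(t)-F^*(t),\theta\rangle$, with the extra $\sqrt{\log n}$ term accounting for the supremum over $t$ via a chaining/union argument over the at most $n+1$ distinct values of $w(t)$) on each shell and summing the resulting probabilities over $j=1,\dots,J$ gives the stated entropy-integral sum $\sum_{j=1}^J 2^{-(j-2)}\int_0^{2^{j/2}\eta/4}\sqrt{\log N(\varepsilon,K(\varepsilon),\|\cdot\|)}\,d\varepsilon$ together with the $\sqrt{\log n}/\eta$ term. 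The final additive term $C\sqrt n\,\sup_t\|G(t)-F^*(t)\|_\infty/\eta$ comes from the cross term $\langle F^*(t)-G(t),\widehat F(t)-G(t)\rangle$ in the basic inequality, bounded by Cauchy–Schwarz as $\sqrt n\|G(t)-F^*(t)\|_\infty\cdot\|\widehat F(t)-G(t)\|$ and then absorbed.

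The main obstacle I anticipate is handling the misspecification cleanly inside the peeling: the cross term $\langle F^*(t)-G(t),\widehat F(t)-G(t)\rangle$ scales linearly in $\|\widehat F(t)-G(t)\|$, so on a shell of size $\sim 2^{j/2}\eta$ it contributes $\sqrt n\,\delta\cdot 2^{j/2}\eta$ with $\delta:=\sup_t\|G(t)-F^*(t)\|_\infty$; one must check this is dominated by the quadratic gain $(2^{j/2}\eta)^2$ on every shell, i.e.\ that $\sqrt n\,\delta\lesssim 2^{j/2}\eta$ for all $j\ge 1$ — which is why $\delta$ appears additively (and pre-multiplied by $\sqrt n/\eta$) rather than inside the probability threshold, and why the event in the statement is phrased as $\sup_t\|\widehat F(t)-F^*(t)\|>\eta+\sqrt n\sup_t\|G(t)-F^*(t)\|_\infty$. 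A secondary technical point is making the ``Gaussianization'' rigorous: since $w(t)-F^*(t)$ is a bounded rather than Gaussian vector, I would either invoke a contraction/symmetrization argument reducing the empirical process to a Rademacher (hence sub-Gaussian) one and then compare to the Gaussian width via the standard $\sqrt{\log}$-factor-free equivalence for symmetric bounded summands, or cite the version of Theorem~A.1 of \citet{guntuboyina2020adaptive} already invoked for Theorem~\ref{thm3}, which is stated for exactly this type of bounded-noise sequence model. Everything else — the expansion of the basic inequality, Cauchy–Schwarz on the cross term, and the geometric summation of the shell probabilities — is routine.
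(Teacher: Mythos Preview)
Your proposal is correct and follows essentially the same route as the paper: the paper's proof uses exactly the basic inequality $\|\widehat F(t)-G(t)\|^2\le 2\langle G(t)-w(t),\,G(t)-\widehat F(t)\rangle$, dyadic peeling on $\|\widehat F(t)-G(t)\|^2$ up to the a priori bound $2nB$ (giving $J$ shells), Markov's inequality on each shell, the split $G(t)-w(t)=(G(t)-F^*(t))+(F^*(t)-w(t))$ with Cauchy--Schwarz on the first piece yielding the $\sqrt n\,\|G(t)-F^*(t)\|_\infty/\eta$ term, and symmetrization plus the Dudley-type Lemma~\ref{lem3} on the second piece to get the entropy integral and the $\sqrt{\log n}$ from the supremum over the $n$ distinct values of $w(t)$. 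Your anticipated ``Gaussianization'' via symmetrization to Rademacher and then sub-Gaussian chaining is precisely what the paper does (Lemma~\ref{lem3}), so there is no gap.
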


The intuition behind Theorem~\ref{thm9} is that \( \eta \) captures the estimation error, which depends on the local covering complexity of the sets \( K(\varepsilon) \). The second term, \( \sup_{t \in \mathbb{R}} \sqrt{n} \|F^*(t) - G(t)\|_{\infty} \), corresponds to the approximation error, measuring how well the true  \( F^*(t) \) can be approximated within the model class \( K_t \). While such a decomposition into estimation and approximation error is standard in nonparametric theory, our result provides a uniform guarantee over all \( t \in \mathbb{R} \), in contrast to classical bounds that control error only at a fixed evaluation point, as in \cite{padilla2024confidence}. This distinction is particularly relevant for distributional regression problems, where the goal is to control the entire CDF path. The proof of Theorem~\ref{thm9}, provided in Appendix~\ref{sec:proof_thm9}, relies on a peeling argument and extends techniques originally developed for the convex case.

\subsubsection{Dense ReLU Networks}
\label{DRN-section}

We now turn to the application of Theorem \ref{thm9} to the problem of distributional regression using dense neural networks. The results in this section add to the literature on statistical theory for rectified linear unit (ReLU) networks as in \citet{bauer2019deep,schmidt2020nonparametric,kohler2021rate,padilla2022quantile,ma2022theoretical,zhang2024dense,padilla2024confidence}.

Before presenting our main result, we first introduce some notation. Suppose   we are given i.i.d. data $\{(x_i,y_i)\}_{i=1}^n   \subset  [0,1]^{d_0}  \times \mathbb{R}$  and let
%such that, conditioning on $X =  \{ x_1,\ldots,x_n\}$, the  random  variables   $y_1\ldots,y_n$ are independent. Moreover, let 
\begin{equation}
    \label{eqn:signal}
  G^*(x,t)   \,:=\, \mathbb{P}(y_i \leq  t | x_i =x)    \,=\,   f^*_t(x)
\end{equation}
for unctions $f^*_t:  [0,1]^{d_0}\,\rightarrow\,\mathbb{R}$ for all $t$. We set the conditional cdf $F_i^*(t) = G^*(x_i,t)$.

To define the constraint set \( K_t \), we follow \citet{kohler2021rate} and assume all hidden layers have the same width. Let \( \mathcal{F}(L, \nu) \) denote the set of neural networks with depth \( L \), width \( \nu \), and ReLU activation, restricting the functions in  \( \mathcal{F}(L, \nu) \)  to satisfy \( \|f\|_\infty \leq 1 \). The structure of these networks is described in Appendix~\ref{Appendix-DRN}. Then, for all \( t \in \mathbb{R} \), we define
\begin{equation}
    \label{eqn:set_k}
    K_t = K := \left\{ \theta \in \mathbb{R}^n : \theta_i = f(x_i), \,\, i=1,\ldots,n, \text{ for some } f \in \mathcal{F}(L, \nu) \right\}.
\end{equation}

%To proceed, we first define the constraint set $K_t$. Proceeding as in \citet{kohler2021rate}, we suppose that all hidden layers possess an identical number of neurons and define  $\mathcal{F}(L, \nu )$ as the set of functions $f$ of the form \eqref{eq:form of approximation function 0} and \eqref{eq:form of approximation function L}, with $k_1 =\dots = k_L = \nu $, and $\|f\|_{\infty}\leq 1$. Then, we let 
%\begin{align}
%\mathcal{F}(L, r)=\left\{f: f \text { is of the form \eqref{eq:form of approximation function 0} and \eqref{eq:form of approximation function L} with } k_j=r\right\} \label{eq:space of neural network}
%\end{align}
%and we let 
%\begin{equation}
%	\label{eqn:set_k}
%\begin{array}{l}
%  	K_t = K \,:=\,\big\{ \theta \in \mathbb{R}^n\,:\,   \theta_i =  f(x_i)\, i= 1,\ldots,n,\,\,\text{for some} \,\, f\in   \mathcal{F}(L,\nu)     \big\}.
%\end{array}
%\end{equation}

We now present our main result concerning distributional regression with  ReLU neural networks.

\begin{corollary}
    \label{thm11}
    Let $\widehat{F}(t)$ be the estimator from (\ref{eqn:estimator}) with the set  $K_t$ as in (\ref{eqn:set_k}) for all $t\in \mathbb{R}$ with $F^*(t)$ not necessarily in $K_t$. Suppose that Assumption \ref{as1}, described in Appendix~\ref{Appendix-DRN}, holds. Let
    \(
    \phi_{n} = \max_{(p, M) \in \mathcal P } n^{\frac{-2p}{ (2p+M)}}. % = \max_{\left(p, K \right) \in \mathcal P  }\left(n\bar{m}\right)^{\frac{-2p}{\left(2p+K\right)}}
\)
   Under the choices of $L$ and $\nu$ specified in equations (\ref{eqn:choice1}) or (\ref{eqn:choice2}) in Appendix~\ref{Proofofthm11}, the estimator satisfies
    \begin{equation}
       	\label{eqn:e30}
       	 \underset{t \in \mathbb{R}}{\sup} \,    \sum_{i=1}^{n}   \frac{1}{n}\left(  \widehat{F}_i(t)    - F^*_i(t)  \right)^2  = O_{\mathbb{P}}\left(  \frac{\log n}{n} \,+\, \phi_n \log^4 n   \right).
       \end{equation}
 \end{corollary}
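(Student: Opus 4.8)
The plan is to apply Theorem~\ref{thm9} with the neural-network constraint set $K_t = K$ from \eqref{eqn:set_k}, and then balance the two sources of error: the estimation error governed by the local entropy of $K - K$, and the approximation error $\sup_t \sqrt n\, \|F^*(t) - G(t)\|_\infty$. First I would record that $B = 1$ here, since every $f \in \mathcal F(L,\nu)$ satisfies $\|f\|_\infty \le 1$, so $J \asymp \log(n/\eta^2) \lesssim \log n$. Next I would bound the covering number $N(\varepsilon, K(\varepsilon), \|\cdot\|)$: because $K - K \subset \{(g(x_1),\dots,g(x_n)) : g = f_1 - f_2, \ f_j \in \mathcal F(L,\nu)\}$, the log-covering number of this class in the empirical $\ell_2$ (equivalently scaled Euclidean) norm is controlled by the pseudo-dimension / VC-type bound for ReLU networks of depth $L$ and width $\nu$ — of order $(\text{number of parameters})\cdot \log(\text{number of parameters}) \lesssim L^2\nu^2 \log(L\nu)$, up to a $\log(1/\varepsilon)$ factor. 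This makes $\int_0^{c2^{j/2}\eta/4}\sqrt{\log N(\varepsilon,K(\varepsilon),\|\cdot\|)}\,d\varepsilon \lesssim 2^{j/2}\eta \sqrt{L^2\nu^2\log(L\nu)\,\log n}$ after a routine Dudley-type integration, and summing the geometric series $\sum_{j=1}^J 2^{-(j-2)}\cdot 2^{j/2}\eta\,(\cdots) \lesssim \eta\sqrt{L^2\nu^2\log(L\nu)\log n}$, so the first term on the right of the Theorem~\ref{thm9} bound is $\lesssim \eta^{-1}\sqrt{L^2\nu^2\log(L\nu)\log n}$.

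With that in hand, the estimation contribution to the failure probability is small once $\eta^2 \gtrsim L^2\nu^2 \log(L\nu)\log n =: s_n$ (and separately $\eta \gtrsim \sqrt{\log n}$ from the middle term); i.e.\ with high probability $\sup_t \|\widehat F(t) - F^*(t)\| \lesssim \sqrt{s_n} + \sqrt{\log n} + \sup_t \sqrt n\,\|F^*(t) - G(t)\|_\infty$. Dividing by $\sqrt n$ and squaring gives
\[
\sup_{t\in\mathbb R}\frac1n\sum_{i=1}^n\bigl(\widehat F_i(t)-F_i^*(t)\bigr)^2 = O_{\mathbb P}\!\left(\frac{s_n}{n} + \frac{\log n}{n} + \sup_{t\in\mathbb R}\|G(t)-F^*(t)\|_\infty^2\right).
\]
The remaining task is purely about the width/depth choices and the approximation rate. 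Under Assumption~\ref{as1} (a hierarchical composition / generalized additive structure on the $f_t^*$, in the style of \citet{kohler2021rate}), the network choices \eqref{eqn:choice1}–\eqref{eqn:choice2} are calibrated so that a single architecture $\mathcal F(L,\nu)$ simultaneously approximates every $f_t^*$ with $\sup_t\|G(t)-f_t^*\|_\infty^2 \lesssim \phi_n \log^4 n$, while keeping $L^2\nu^2\log(L\nu) \lesssim n\,\phi_n \log^4 n$ (up to constants), so that $s_n/n \lesssim \phi_n\log^4 n$ as well. Plugging both in yields the claimed bound $O_{\mathbb P}(\log n / n + \phi_n \log^4 n)$.

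The main obstacle is the uniformity in $t$: Theorem~\ref{thm9} requires a single set $K$ (here the same neural-network class for all $t$) and a single approximation bound valid simultaneously over all $t\in\mathbb R$. I expect the real work to be verifying that the construction underlying \eqref{eqn:choice1}–\eqref{eqn:choice2} — importing the approximation-theoretic results of \citet{kohler2021rate} — produces one architecture whose approximation error $\|f_t^* - \mathcal F(L,\nu)\|_\infty$ is controlled \emph{uniformly} in $t$ under Assumption~\ref{as1}, rather than with $t$-dependent constants; this is where the continuity/monotonicity in $t$ of $t\mapsto F^*(\cdot,t)$ and a reduction to finitely many values of $t$ (the data-dependent grid, reminiscent of Corollary~\ref{cor1}) enter. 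A secondary technical point is making the ReLU covering-number bound explicit in $(L,\nu,\varepsilon)$ and checking the Dudley integral converges near $\varepsilon = 0$, which is standard but must be done carefully to track the logarithmic powers that appear in $\phi_n\log^4 n$. Once these are settled, the balancing of $s_n/n$ against the approximation error is immediate from the stated choices of $L$ and $\nu$.
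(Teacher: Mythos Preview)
Your proposal is correct and matches the paper's proof: apply Theorem~\ref{thm9}, bound the entropy of $K-K$ via the ReLU covering estimate $\log N(\varepsilon,K,\|\cdot\|)\lesssim L^2\nu^2\log(L\nu)\log(\sqrt n/\varepsilon)$, sum the geometric series in $j$, and balance against the approximation error using $L\nu\asymp(\log n)\sqrt{n\phi_n}$. Two small corrections to your accounting: first, the uniformity in $t$ that you flag as the main obstacle is not one---Assumption~\ref{as1} already stipulates $t$-independent smoothness and Lipschitz constants for all the component functions, so the Kohler--Langer approximation theorem applies with constants uniform in $t$ and yields $\sup_t\|F^*(t)-G(t)\|_\infty\lesssim\sqrt{\phi_n}$ directly (no grid reduction or monotonicity argument is needed); second, the $\log^4 n$ in the final rate arises entirely from the estimation side (via $L^2\nu^2\log(L\nu)\log n\asymp n\phi_n\log^4 n$), not from the approximation error.
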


%Theorem \ref{thm11} demonstrates that dense ReLU neural network estimators for distributional regression uniformly achieve the rate $\phi_n$, up to logarithmic factors, in terms of the worst-case MSE for estimating the true parameters $\{F^*(t)\}_{t \in \mathbb{R} }$, provided these parameters belong to a hierarchical composition class. Importantly, this rate aligns with the results of \citet{kohler2021rate}, which established the same rate for regression function estimation in the hierarchical composition class under sub-Gaussian error assumptions.

Corollary~\ref{thm11} demonstrates that dense ReLU neural network estimators for distributional regression uniformly achieve the rate \( \phi_n \), up to logarithmic factors, in terms of the worst-case MSE for estimating the true parameters \( \{F^*(t)\}_{t \in \mathbb{R}} \), provided these parameters belong to a hierarchical composition class. Importantly, while this rate matches that of \citet{kohler2021rate} for mean regression under sub-Gaussian error assumptions, our result strengthens the guarantee by holding uniformly over all thresholds \( t \in \mathbb{R} \), rather than for a fixed target. This extension is essential for distributional learning tasks where uniform control is required, such as CRPS-based risk bounds.

\section{Simulated data analysis}\label{simu-data}

We evaluate the performance of the proposed methods against state-of-the-art approaches across diverse simulation settings that reflect various practical challenges and structural assumptions. Specifically, six distinct scenarios are considered to evaluate different aspects of the distributional regression problem.
%We assess the proposed methods against state-of-the-art approaches across diverse simulation settings that reflect various practical challenges and structural assumptions. Specifically, six distinct scenarios are considered to evaluate different aspects of the distributional regression problem.
We refer to our proposed approach as {\bf UnifDR} which adapts different methods based on the scenario. %Specifically, {\bf UnifDR} employs the isotonic regression  method (Section \ref{iso-reg-sec}) for Scenarios {\bf S1} and {\bf S2}, the trend filtering approach (Section \ref{TredFil-sec}) for {\bf S3} and {\bf S4}, and the Dense ReLU Networks (Section \ref{DRN-section}) for {\bf S5} and {\bf S6}.
%We refer to our proposed approach as {\bf UnifDR}, which adapts different methods based on the scenario. Specifically, {\bf UnifDR} applies isotonic regression (Section \ref{iso-reg-sec}) for Scenarios {\bf S1} and {\bf S2}, trend filtering (Section \ref{TredFil-sec}) for Scenarios {\bf S3} and {\bf S4}, and Dense ReLU Networks (Section \ref{DRN-section}) for Scenarios {\bf S5} and {\bf S6}.
In the first two scenarios, {\bf UnifDR} applies the isotonic regression method from Section~\ref{iso-reg-sec}; in the next two, it uses the trend filtering approach from Section~\ref{TredFil-sec}; and in the final two scenarios, it employs the Dense ReLU Networks method described in Section~\ref{DRN-section}.

 {\bf Scenario 1 (S1)}. We generate data \(y_i \sim \text{Normal}(\mu_i, 1)\) where \(\mu_i = 1 - i/n\) for \(i = 1, \ldots, n\). %Ensuring monotonicity in the CDFs.

 {\bf Scenario 2 (S2)}. We consider \(y_i\sim \text{Unif}(a_i, b_i)\), where \(a_i = (n - i)/n\) and \(b_i = a_i + 1\).%, which also guarantees monotonicity in the CDFs.

 {\bf Scenario 3 (S3)}. The true CDFs are modeled as \(F_i^*(t) = \text{Exp}(\mu_i)\), where \(\mu_i = 1 + 0.5 \sin(2\pi i/n)\). %creating a smoothly varying sequence of exponential distributions across \(i = 1, \ldots, n\).
    
 {\bf Scenario 4 (S4)}. Consider \(F_i^*(t) = \text{Gamma}(\text{shape} = 0.7, \text{scale} = \mu_i)\), where %the scale parameter follows a piecewise constant pattern as a function of 
%$i$, 
    \(
    \mu_i = 6 \cdot 1_{\{i \leq n/4\}} + 2 \cdot 1_{\{n/4 < i \leq n/2\}} + 8 \cdot 1_{\{n/2 < i \leq 3n/4\}} + 4 \cdot 1_{\{i > 3n/4\}}.
    \)

 {\bf Scenario 5 (S5)}. Let \(\mathbf{x}_i \sim \text{Unif}([0, 1]^5)\). %be generated independently.  
    The true CDFs are given by \(F_i^*(t) = \Phi((t - h(\mathbf{x}_i))/0.5)\), where 
    \(
    h(\mathbf{x}_i) = -3x_{i}^{(1)} + 2\log(1 + x_{i}^{(2)}) + x_{i}^{(3)} + 5x_{i}^{(4)} + (x_{i}^{(5)})^2,
    \) with $x_i^{(j)}$ denoting the $j$ th coordinate of $x_i$.
    The function \(\Phi\) represents the standard normal cumulative distribution function.

 {\bf Scenario 6 (S6)}. 
 Let \(\mathbf{x}_i\sim\text{Unif}([0, 1]^{10})\) and
 %The covariates \(\mathbf{x}_i\) are independently drawn from \(\text{Unif}([0, 1]^{10})\). The response values \(y_i\) follow a Chi-square distribution with a degrees of freedom parameter given by  
\(
y_i \sim \chi^2\left(h\left(\mathbf{x}_i\right)\right).
\)
%The function \(h(\mathbf{x}_i)\) is specified as  
Here,
\(
h(\mathbf{x}_i) = \log \left(\Big\vert-0.5 \cdot \sum_{j=1}^3 \sin(\pi x_i^{(j)}) - 0.5 \sum_{j=4}^9 x_i^{(j)} + 0.5 \cos(x_i^{(10)})\Big\vert + 2\right),
\) with $x_i^{(j)}$ denoting the $j$ th coordinate of $x_i$.

To implement our proposed approach \textbf{UnifDR} we proceed as follows. The isotonic method introduced in Section~\ref{iso-reg-sec} is implemented in \texttt{R} using the pool adjacent violators algorithm (PAVA) from \citet{robertson1988order}. For this approach there are no direct competitors in the distributional regression problem. The Trend Filtering estimator in Section~\ref{TredFil-sec} is implemented using the \texttt{trendfilter} function from the \texttt{glmgen} package in \texttt{R}, and we compare it with additive smoothing splines (AddSS) via the \texttt{smooth.spline} function in \texttt{R}. For the Dense ReLU Networks method in Section~\ref{DRN-section}, we use a fully connected feedforward architecture with an input layer, two hidden layers (64 units each), and an output layer. The network is implemented in \texttt{Python} and trained using the Adam optimizer with a learning rate of 0.001. In this case, we compare the proposed \textbf{UnifDR} method with five benchmark methods. First, we consider Classification and Regression Trees (CART) \citep{breiman1984richard}, implemented in \texttt{R} via the \texttt{rpart} package, with the complexity parameter used for tuning. Second, we evaluate Multivariate Adaptive Regression Splines (MARS) \citep{friedman1991multivariate}, available in the \texttt{earth} package, where the penalty parameter serves as the tuning parameter. Third, we assess Random Forests (RF) \citep{breiman2001random}, implemented in \texttt{R} via the \texttt{randomForest} package, using 500 trees and tuning the minimum terminal node size. Additionally, we consider two recent methods. Distributional Random Forests (DRF) \citep{cevid2022distributional} is implemented via the \texttt{drf} package in \texttt{Python}. DRF employs tree-based ensemble %for modeling conditional distributions, 
with tuning parameters including the splitting rule and the number of trees. Lastly, Engression (EnG) \citep{shen2024engression} is implemented using the \texttt{engression} package in \texttt{Python}. EnG utilizes hierarchical structured neural networks, 
and we adopt the same training hyperparameters as our deep learning approach to ensure optimization consistency. EnG also requires a sampling procedure, with the number of samples set to 1000 for accurate distribution estimation.

\begin{figure}[]
\vskip -0.4in
\centering
\begin{tabular}{cc}
    \includegraphics[width=0.47\columnwidth]{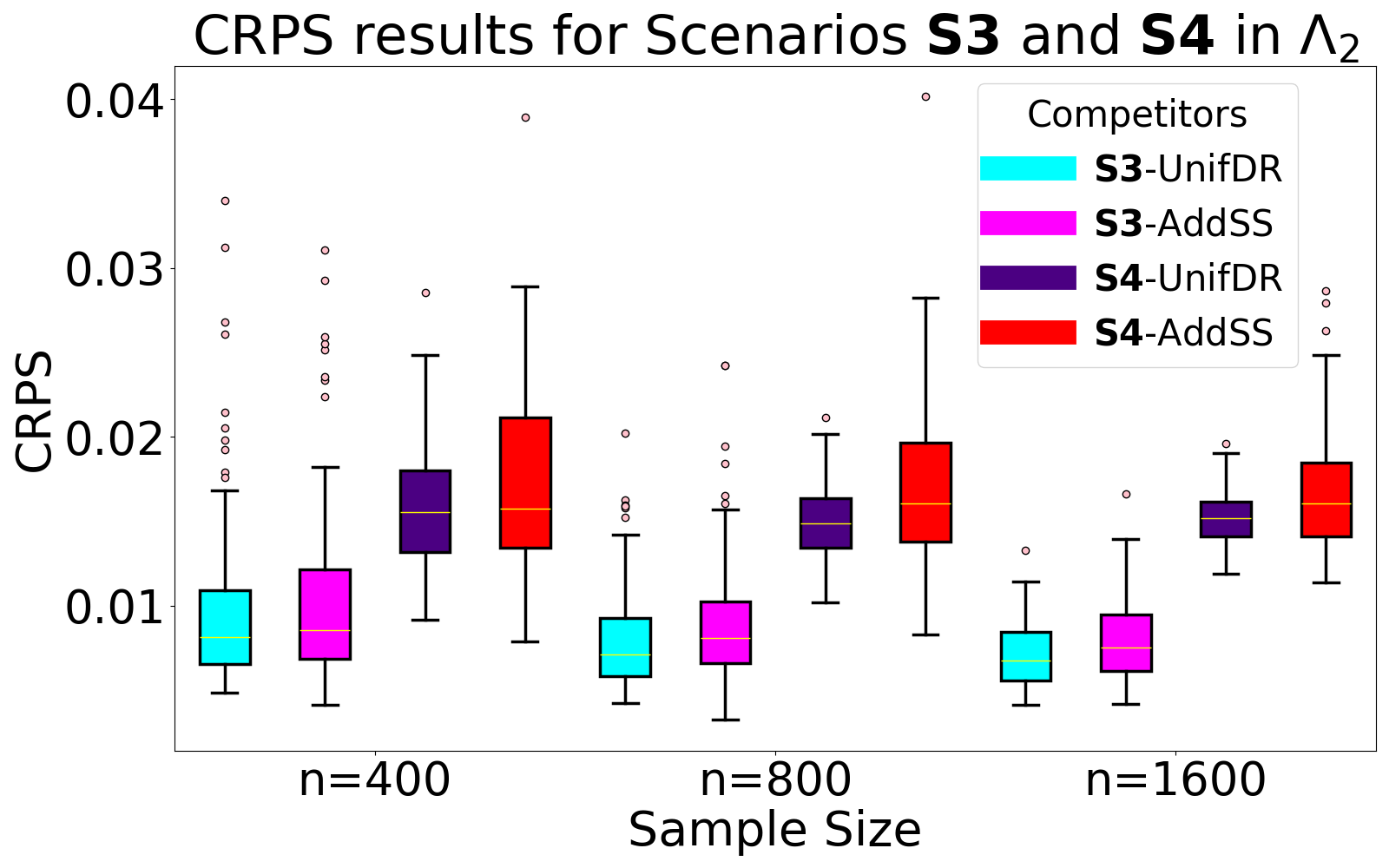} &
        \hspace{-1.1em}
\includegraphics[width=0.47\columnwidth]{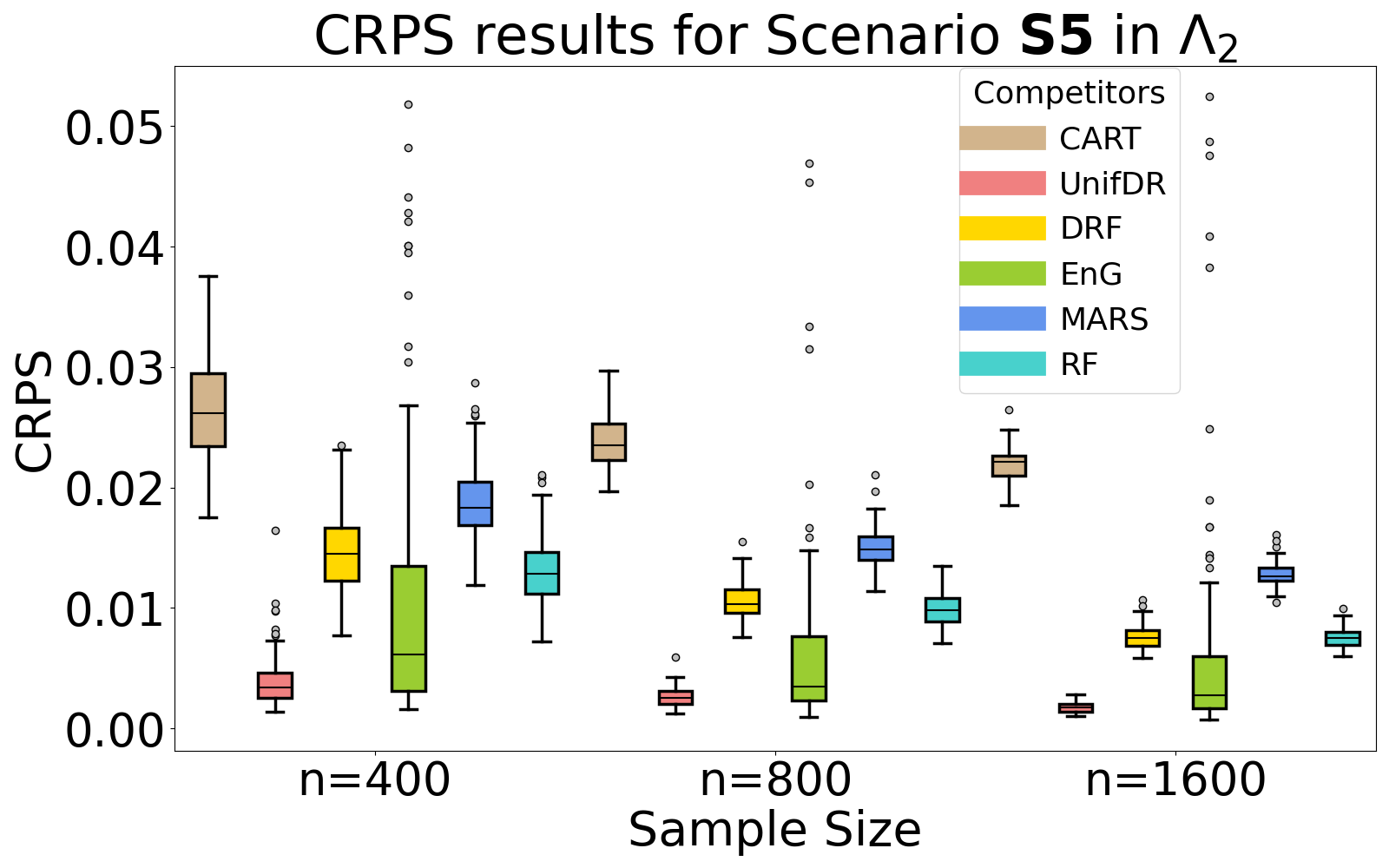} \\
\end{tabular}
\caption{Box plots of CRPS results in $\Lambda_2$. The left plot corresponds to {\bf S3} and {\bf S4}, while the right plot displays the results for {\bf S5}.}
\label{fig:2}
\vskip -0.2in
\end{figure}

\begin{table}[]
\vskip 0.07in
\caption{Evaluation metrics for {\bf{UnifDR}} (Trend Filtering approach) and its competitor AddSS on the 2015 Chicago crime dataset.}
\label{tab:RD1}
%\vskip 0.1in
\begin{center}
\begin{small}
\begin{sc}
\begin{tabular}{lcc}
\toprule
Method & CRPS (Mean $\pm$ Std) & MSD (Mean $\pm$ Std) \\
\midrule
UnifDR & \textbf{0.0976$\pm$0.0017} & \textbf{0.2509$\pm$0.0025} \\
AddSS    & 0.1223$\pm$0.0078 & 0.2850$\pm$0.0391 \\
\bottomrule
\end{tabular}
\end{sc}
\end{small}
\end{center}
\vskip -0.3in
\end{table}

{\bf{Performance Evaluation:}}
For each scenario, datasets with sample sizes \(n \in \{400, 800, 1600\}\) are generated, with each experiment repeated 100 times using Monte Carlo simulations. Evaluations are conducted at 100 evenly spaced points \(t\) from three fixed intervals: \(\Lambda_1=[-1,0.4]\), \(\Lambda_2=[-2,2]\), and \(\Lambda_3=[0.8,10]\). Each dataset is randomly split into 75\% training and 25\% test sets. Competing models undergo 5-fold cross-validation on the training data for hyperparameter tuning, with performance assessed on the test set. For the isotonic regression method, test set predictions are obtained via naive nearest neighbor interpolation. The accuracy of the estimated CDFs \(\widehat{F}_i(t)\) relative to the true CDFs \(F_i^*(t)\) is evaluated using the following performance metrics, averaged over 100 Monte Carlo repetitions.
\textbf{CRPS:}  
    CRPS evaluates the overall fit of \(\widehat{F}_i(t)\) to \(F_i^*(t)\) across all evaluation points in \(\mathbb{R}\), see Section \ref{CRPS-equation-def}. Since the evaluations in experiments are performed over a finite set of 100 values in \(\Lambda\), CRPS is approximated via a Riemann sum:
    $
    \text{CRPS} = \frac{1}{|\text{Test}|} \sum_{i \in \text{Test}} \frac{1}{100} \sum_{t \in \Lambda} \Big( \widehat{F}_i(t) - F_i^*(t) \Big)^2,
    $
    where \(|\text{Test}|=\frac{n}{4}\) is the size of the test set.
 \textbf{Maximum Squared Difference (MSD):}  
    MSD captures the worst-case discrepancy between \(\widehat{F}_i(t)\) and \(F_i^*(t)\), and is approximated as:
    $
    \text{MSD} = \max_{t \in \Lambda} \frac{1}{|\text{Test}|} \sum_{i \in \text{Test}} \Big( \widehat{F}_i(t) - F_i^*(t) \Big)^2.
    $

%The results presented below focus on the CRPS metric for Scenarios {\bf S1}, {\bf S2}, and {\bf S5}, evaluated on \(\Lambda_2\). Appendix \ref{ad-sim} provides additional results for these scenarios, including MSD performance across all evaluation sets and CRPS evaluations for \(\Lambda_1\) and \(\Lambda_3\). A comprehensive analysis of Scenarios {\bf S3}, {\bf S4}, and {\bf S6}, covering all evaluation sets and metrics, is also included in the appendix. Figure \ref{fig:2} presents the performance of {\bf UnifDR} in {\bf S1} and {\bf S2}, where it employs isotonic regression with no direct competitors. Figure \ref{fig:2} also shows results for {\bf S5}, where {\bf UnifDR} applies the Dense ReLU Networks method, compared to the five state-of-the-art competitors CART, MARS, RF, DRF, and EnG. Across all scenarios, {\bf UnifDR} consistently outperforms competitors, with performance improving as sample size increases. This dominance is further confirmed by the extended evaluations in Appendix \ref{ad-sim}, reinforcing the robustness of {\bf UnifDR} across various conditions.
%The results below focus on the CRPS metric for Scenarios {\bf S1} through {\bf S6}, using the evaluation set \(\Lambda_2\). Appendix \ref{ad-sim} provides additional results, including MSD performance across all evaluation sets and CRPS results for vealuation sets \(\Lambda_1\) and \(\Lambda_3\).

The results below focus on the CRPS metric for Scenarios \textbf{S3}, \textbf{S4}, and \textbf{S5}, where the CDFs are evaluated at the points in \(\Lambda_2\). Additional results, including MSD performance and CRPS evaluations at \(\Lambda_1\) and \(\Lambda_3\) across all scenarios, as well as the \(\Lambda_1\)-based CRPS for Scenarios \textbf{S1}, \textbf{S2}, and \textbf{S6}, are provided in Appendix~\ref{ad-sim}. Figure \ref{fig:2} presents the performance of {\bf UnifDR} in {\bf S3} and {\bf S4}, where we compare the Trend Filtering approach with AddSS. The same figure also includes results for {\bf S5}, where {\bf UnifDR} utilizes the Dense ReLU Networks method against five state-of-the-art competitors: CART, MARS, RF, DRF, and EnG. In all scenarios, {\bf UnifDR} consistently outperforms competing methods, with its performance superiority becoming more pronounced as the sample size increases. This dominance is further confirmed by the extended evaluations in Appendix \ref{ad-sim}, reinforcing the robustness of {\bf UnifDR} across various conditions.

%\begin{figure}[]
%\begin{center}
%\vskip 0.1in
%\centerline{\includegraphics[width=\columnwidth]{S5-CRPS-L2.png}}
% \caption{Box plots for CRPS results of {\bf{S5}}. Our proposed method {\bf{UnifDR}} uses the Dense ReLU Networks approach in Section \ref{DRN-section}.} 
%    \label{fig:1}
%\end{center}
%\vskip -0.2in
%\end{figure}

\section{Real data application}
\label{RealD-sec}

In this section, we evaluate the performance of the proposed \textbf{UnifDR} method using both the Trend Filtering and Dense ReLU network procedures on real-world datasets. 

\subsection{Chicago crime data} 

We analyze the 2015 Chicago crime dataset, available at \url{https://data.gov/open-gov/}, which records reported crimes in Chicago throughout the year. Following \citet{tansey2018maximum}, the spatial domain is discretized into a $100 \times 100$ grid, where each grid cell aggregates crime counts within its spatial boundary. The response variable is defined as the log-transformed total crime counts per grid cell. Grid cells with zero observed crimes are excluded, yielding a final dataset of 3,844 grid cells. The {\bf UnifDR'} Trend Filtering procedure does not use covariates. Instead, the spatial grid is treated as an ordered sequence, assuming a smooth spatial trend. The grid cells are ordered lexicographically. The dataset is randomly partitioned into 100 train (75\%)–test (25\%) splits, and evaluation is conducted at evenly spaced points \(\Lambda\) in the interval [-1,6]. Performance is assessed using the Continuous Ranked Probability Score (CRPS) and Maximum Squared Difference (MSD) metrics, comparing estimated CDFs \( \widehat{F}_i(t) \) against empirical indicators \( w_i(t) \), where $t\in \Lambda$.
AddSS is used as a benchmark for the Trend Filtering approach on the same dataset. Table \ref{tab:RD1} presents CRPS and MSD metrics, demonstrating {\bf UnifDR} superior performance. Furthermore, Figure \ref{fig:RD1} in Appendix \ref{Chi-Cri-ex-DRN} provides a
visualization of $\widehat{F}_i(t)$ at $t = 3$ for both competing methods. The same data set
is analyzed using {\bf UnifDR} with the Dense ReLU network
framework, with further details in Appendix \ref{other-rd}.

\subsection{Other real data examples}

Beyond the Chicago crime dataset, \textbf{UnifDR} is further evaluated on California housing prices and daily Ozone measurements. %offering insights into its applicability across diverse domains. 
Detailed descriptions, pre-processing steps, and results are provided in the Appendix \ref{other-rd}.

\section{Conclusion}
\label{sec:conclusion} 
This paper introduced a unified framework for nonparametric distributional regression under convex and non-convex structural constraints. We established theoretical risk bounds for the estimation of cumulative distribution functions (CDFs) in various settings, including isotonic regression, trend filtering, and deep neural networks. Our analysis leveraged continuous ranked probability scores (CRPS) and worst-case mean squared error (MSE) to quantify estimation accuracy, demonstrating that structured constraints such as monotonicity, bounded total variation, and hierarchical function composition lead to improved estimation accuracy. Our theoretical results establish explicit convergence guarantees for isotonic, strengthening prior findings. Moreover, the trend filtering estimator achieves rates consistent with classical one-dimensional regression results. For deep neural networks, we show that dense ReLU-based estimators achieve comparable rates under hierarchical composition constraints, aligning with existing results in structured regression. Experiments on simulated and real datasets further validated the theoretical guarantees, with our proposed methods consistently outperforming alternative approaches. In particular, {\bf{UnifDR}}, the distributional regression framework we study, demonstrated superior performance across all considered settings.

An important avenue for future research is extending our theory to dependent data settings. Many real-world applications involve time-series data, spatial data, or network-structured data, where dependencies among observations must be accounted for.

\newpage
\bibliographystyle{plainnat}  % Required for natbib compatibility
\bibliography{references}     % Name of your .bib file (without .bib extension)

\newpage
\appendix
%%%%%%%%%%%%%%%%%%%%%%%%%%%%%%%%%%%%%%%%%%%%%%%%%%%%%%%%%%%%%%%%%%%%%%%%%%%%%%%
%%%%%%%%%%%%%%%%%%%%%%%%%%%%%%%%%%%%%%%%%%%%%%%%%%%%%%%%%%%%%%%%%%%%%%%%%%%%%%%
% APPENDIX
%%%%%%%%%%%%%%%%%%%%%%%%%%%%%%%%%%%%%%%%%%%%%%%%%%%%%%%%%%%%%%%%%%%%%%%%%%%%%%%
%%%%%%%%%%%%%%%%%%%%%%%%%%%%%%%%%%%%%%%%%%%%%%%%%%%%%%%%%%%%%%%%%%%%%%%%%%%%%%%

\section{Fast Rates}
\label{fast-rates-sec}

This appendix develops refined risk bounds for distributional regression estimators under additional structural assumptions on the true signal. While our general theory establishes minimax-optimal rates under convex constraints such as monotonicity or bounded total variation, certain low-complexity signal classes allow for significantly faster convergence.

We provide two canonical examples illustrating this phenomenon. The first focuses on isotonic regression, where we show that if the true distribution functions have only a small number of strict increases, the estimator achieves a nearly parametric rate. The second concerns trend filtering, where we demonstrate that sparsity in the higher-order differences of the signal—combined with a minimum segment length condition—leads to similarly fast convergence.

These results extend analogous adaptive risk bounds from classical point estimation \citep{chatterjee2015risk, guntuboyina2020adaptive} to the distributional regression setting.

\subsection{Isotonic Regression}

We begin by revisiting the isotonic case. In the main text, Corollary~\ref{thm_isotonic} established a general risk bound of order \( n^{-2/3} \) under minimal monotonicity assumptions. Here, we refine this analysis by showing that much faster rates, nearly parametric, are achievable when the true distribution function \( F^*(t) \) exhibits low complexity, in the sense of having few strict increases.

To formalize this, consider the isotonic constraint set
\begin{equation}
    \label{eqn:isotonic_k2}
    K := \left\{ \theta \in \mathbb{R}^n : \theta_1 \leq \theta_2 \leq \cdots \leq \theta_n \right\},
\end{equation}
and assume \( F^*(t) \in K \) for all \( t \in \mathbb{R} \). Define
\[
k(t) := \left| \left\{ i \in \{1, \dots, n - 1\} : F_i^*(t) < F_{i+1}^*(t) \right\} \right|.
\]
This quantity counts the number of strict increases in \( F^*(t) \), and hence controls the complexity of the signal.

\begin{corollary}
\label{thm_isotonic-fr}
Let \( \widehat{F}_t \) be the estimator defined in \eqref{eqn:estimator} with \( K_t = K \) as in \eqref{eqn:isotonic_k2}. Assume \( F^*(t) \in K \) and that (\ref{eqn:support}) holds for a compact set \( \Omega \). Then
\[
\mathbb{E} \left( \frac{1}{n} \sum_{i=1}^n \mathrm{CRPS}(\widehat{F}_i, F_i^*) \right)
\leq C \underset{t \in \mathbb{R}}{\sup} \cdot \frac{1 + k(t)}{n} \log \left( \frac{e n}{1 + k(t)} \right)
\]
for some constant \( C > 0 \). Moreover, this bound also applies to the sorted estimator \( \widetilde{F} \) from Corollary~\ref{cor1}, provided each \( F_i^* \) is continuous.
\end{corollary}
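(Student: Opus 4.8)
The plan is to reduce the statement to an application of Theorem~\ref{thm2} via an improved bound on the local Gaussian complexity that exploits the low-complexity structure of $F^*(t)$. Recall that Theorem~\ref{thm2} (since $(0,\dots,0),(1,\dots,1)\in K$ and the support condition \eqref{eqn:support} holds) reduces the bound on the expected average CRPS to finding, for each $t$, a valid radius $\eta_t>1$ satisfying the local Gaussian complexity condition
\[
\mathbb{E}\left[\sup_{\theta\in K\,:\,\|\theta-F^*(t)\|\le\eta_t}g^\top(\theta-F^*(t))\right]\le\frac{\eta_t^2}{L},
\]
after which \eqref{eqn:upper0} gives the bound $C\eta_t^2/n$. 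Taking a supremum over $t$ then yields the stated result, so the work is entirely in controlling this Gaussian width over the isotonic cone localized at $F^*(t)$.

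First I would fix $t$ and write $k=k(t)$. The key structural observation is that $F^*(t)\in K$ with only $k$ strict increases means $F^*(t)$ is piecewise constant with at most $k+1$ constant blocks. The tangent cone of the monotone cone $K$ at such a piecewise-constant point is (a superset of) a direct sum of smaller monotone cones — one per constant block — plus unconstrained directions at the block boundaries; equivalently, the relevant localized set $(K-F^*(t))\cap B_\eta(0)$ is contained in a set whose statistical dimension is governed by $k+1$ blocks. Here I would invoke the classical bound on the Gaussian width / statistical dimension of the monotone cone restricted to $m$ blocks: the localized Gaussian complexity is of order $\eta\sqrt{(1+k)\log(en/(1+k))}$, which is exactly the entropy-type bound underlying the $n^{-2/3}$ analysis but now with the effective dimension $n$ replaced by $1+k$. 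This is the standard adaptivity phenomenon for isotonic regression (as in \citet{chatterjee2015risk}, and the metric-entropy computations behind Corollary~\ref{thm_isotonic}).

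Given such a bound $\mathbb{E}[\sup\ldots]\lesssim\eta\sqrt{(1+k)\log(en/(1+k))}$, the condition \eqref{eqn:lc_gaussian} is satisfied as soon as $\eta^2\gtrsim (1+k)\log(en/(1+k))$ (and $\eta>1$, which holds since the right side is at least a constant). Choosing $\eta_t^2\asymp(1+k(t))\log(en/(1+k(t)))$ and plugging into \eqref{eqn:upper0} gives the per-$t$ bound $C(1+k(t))\log(en/(1+k(t)))/n$; taking the supremum over $t\in\mathbb{R}$ produces the claimed inequality. The extension to the sorted estimator $\widetilde F$ follows immediately from Corollary~\ref{cor1}, which transfers any bound of the form $C\eta^2/n$ (valid under \eqref{eqn:lc_gaussian}) to $\widetilde F$ under continuity of the $F_i^*$, with no additional argument needed.

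The main obstacle is making the tangent-cone / localized-set inclusion precise and uniform in $t$: one must verify that $\sup_{t}\sqrt{(1+k(t))\log(en/(1+k(t)))}$ is the correct controlling quantity, i.e. that the Gaussian width bound holds with constants not depending on $t$ or on the particular block pattern of $F^*(t)$, only on $k(t)$. Concretely this requires either (i) a direct decomposition of $(K-F^*(t))\cap B_\eta$ into block-wise monotone cones and a union/Cauchy--Schwarz argument to recombine the per-block widths into the aggregate $\sqrt{(1+k)\log(en/(1+k))}$ rate, or (ii) citing an off-the-shelf statistical-dimension bound for the monotone cone at a point with $k$ jumps and checking its hypotheses apply here. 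A secondary, more routine point is confirming that the support/boundedness hypotheses of Theorem~\ref{thm2} are genuinely in force — $(0,\dots,0)$ and $(1,\dots,1)$ lie in the isotonic cone \eqref{eqn:isotonic_k2}, and \eqref{eqn:support} is assumed — so no extra truncation is needed beyond what Theorem~\ref{thm2} and Corollary~\ref{cor1} already handle.
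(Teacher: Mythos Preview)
Your proposal is correct and follows essentially the same route as the paper: reduce to Theorem~\ref{thm2}, bound the local Gaussian complexity by passing to the tangent cone of the isotonic cone at $F^*(t)$, decompose that tangent cone as a direct sum of $k(t)+1$ smaller isotonic cones (as in \citet{chatterjee2015risk}), use the statistical-dimension bound $\delta(\mathcal{M}^m)\le\log(em)$ together with Jensen to obtain $\eta\sqrt{(1+k(t))\log(en/(1+k(t)))}$, and then set $\eta$ equal to the supremum over $t$ of this quantity. One small clarification: Theorem~\ref{thm2} requires a \emph{single} $\eta$ satisfying \eqref{eqn:lc_gaussian} uniformly in $t$, not a per-$t$ radius $\eta_t$, so the supremum over $t$ must be taken before invoking the theorem rather than after---but you effectively arrive at the same choice, and the final bound is unchanged.
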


This result improves on the general \( n^{-2/3} \) bound when the number of strict increases \( k(t) \) is small. For example, if \( F^*(t) \) is piecewise constant with at most \( s \) jumps, then \( k(t) \leq s \), and the bound becomes \( \frac{1 + s}{n} \log \left( \frac{e n}{1 + s} \right) \), nearly achieving the parametric rate when \( s = O(1) \). The proof of Corollary~\ref{thm_isotonic-fr} is provided in Appendix~\ref{proofthm_isotonic-fr}.

\subsection{Trend Filtering}

We now turn to the trend filtering setting. In the main text, Corollary~\ref{thm5} established a general risk bound of order \( n^{-2r/(2r+1)} \) under a total variation constraint of order \( r \), matching the minimax-optimal rate for function estimation under such constraints \citep{mammen1997locally, tibshirani2014adaptive}.

Here, we show that significantly faster rates are achievable in the distributional regression setting when the signal exhibits additional structure. In particular, when the \( r \)th-order differences of the vector \( F^*(t) \) are sparse and well-separated, the estimator can attain nearly parametric accuracy up to logarithmic factors.

Define
\begin{equation}
	\label{eqn:tv_set-fr}
	K_t \,:=\,   \left\{  \theta   \in \mathbb{R}^n  \,:\,    	\mathrm{TV}^{(r)}(\theta)\le V_t^*  \right\},
\end{equation}
where $V_t^*=\mathrm{TV}^{(r)}(F^*(t)).$ 
\begin{corollary}
\label{Cor-fast-rat-trend}
Consider the estimator in (\ref{eqn:estimator}) with $K_t$ as in (\ref{eqn:tv_set-fr}) for an integer $r$ satisfying $r\geq 1$. Suppose $s=\left\|D^{(r)} F^*(t)\right\|_0$ and $S=\left\{j:\left(D^{(r)} F^*(t)\right)_j \neq 0\right\}$ for all $t$. Let $j_0<j_1<\ldots<j_{s+1}$ be such that $j_0=1, j_{s+1}=n-r$ and $j_1, \ldots, j_s$ are the elements of $S$. With this notation define $\eta_{j_0}=\eta_{j_{s+1}}=0$. Then for $j \in S$ define $\eta_j$ to be 1 if $\left(D^{(r-1)} F^*(t)\right)_j<\left(D^{(r-1)} F^*(t)\right)_{j+1}$, otherwise set $\eta_j=-1$. Suppose that $F^*(t)$ satisfies the following minimum length assumption
$$
\min _{l \in[s], \eta_{j_l} \neq \eta_{j_l+1}}\left(j_{l+1}-j_l\right) \geq \frac{c n}{s+1}
$$
for some constant $c$ satisfying $0 \leq c \leq 1$. Then, for $\sup_t V_t^* \leq  V^*,$
\begin{equation}
      \label{eqn:e22-fr}
        \begin{array}{l}
\displaystyle     \mathbb{E}\left(  \frac{1}{n} \sum_{i=1}^{n} \mathrm{CRPS}( \widehat{F}_i,F_i^*  )   \right) \,\leq \, C \left[\max \left\{\frac{V^*}{n^{r-1}}, 1\right\} \frac{(s+1)}{n} \log \left(\frac{e n}{s+1}\right)  \right]
\end{array}
\end{equation}
for a positive constant $C$. Moreover, the upper bound in (\ref{eqn:e22-fr}) also holds for the corresponding sorted estimators $ \widetilde{F}_i$ as defined in Corollary \ref{cor1}, provided that each function $F_i^*$ is continuous and $V_t^* \leq  V^*$ for all $t$. 
\end{corollary}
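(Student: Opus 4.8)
The plan is to reduce the claim to the general CRPS bound of Theorem \ref{thm2} applied to the constraint sets $K_t$ in \eqref{eqn:tv_set-fr}, and then to control the local Gaussian complexity appearing in \eqref{eqn:lc_gaussian} by invoking the adaptive risk analysis of trend filtering from \citet{guntuboyina2020adaptive}. Concretely: since $(0,\dots,0)$ and $(1,\dots,1)$ have zero $r$th-order differences, they lie in $K_t$, so the hypotheses of Theorem \ref{thm2} are met (using that \eqref{eqn:support} holds for a fixed compact $\Omega$). Hence it suffices to find $\eta>1$ with
\[
\sup_{t\in\mathbb R}\ \mathbb E\!\left[\ \sup_{\theta\in K_t:\ \|\theta-F^*(t)\|\le\eta}\ g^\top(\theta-F^*(t))\ \right]\ \le\ \frac{\eta^2}{L},
\qquad g\sim N(0,I_n),
\]
and then \eqref{eqn:e22-fr} follows from $\mathbb E\big(\tfrac1n\sum_i\mathrm{CRPS}(\widehat F_i,F_i^*)\big)\le C\eta^2/n$. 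The task is therefore to show that, uniformly in $t$, one may take
\[
\eta^2\ \asymp\ \max\!\Big\{\tfrac{V^*}{n^{r-1}},1\Big\}\,(s+1)\,\log\!\Big(\tfrac{en}{s+1}\Big),
\]
under the stated sparsity pattern $s=\|D^{(r)}F^*(t)\|_0$, sign pattern $\{\eta_j\}$, and minimum-length condition on the knot spacing.

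The key step is the local-complexity bound. Fix $t$ and write $\theta^*=F^*(t)$, which by construction satisfies $\mathrm{TV}^{(r)}(\theta^*)=V_t^*\le V^*$ and $\|D^{(r)}\theta^*\|_0=s$ with knots at $S=\{j_1<\dots<j_s\}$ and sign pattern $\eta_j$. The set $K_t=\{\theta:\mathrm{TV}^{(r)}(\theta)\le V_t^*\}$ is exactly the $r$th-order total-variation ball centered at the origin whose radius equals the total variation of the true signal; this is precisely the setting of the adaptive/oracle analysis for constrained trend filtering in \citet{guntuboyina2020adaptive}, whose main tool is a sharp bound on the Gaussian width of the intersection of such a TV ball with a Euclidean ball of radius $\eta$. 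Invoking that bound — which is where the minimum-length assumption $\min_{l\in[s],\eta_{j_l}\neq\eta_{j_l+1}}(j_{l+1}-j_l)\ge cn/(s+1)$ enters, as it is exactly the condition under which their localized entropy/interpolation estimates yield the sparse rate rather than the slow $n^{-2r/(2r+1)}$ rate — gives an inequality of the schematic form
\[
\mathbb E\!\left[\sup_{\theta\in K_t:\ \|\theta-\theta^*\|\le\eta} g^\top(\theta-\theta^*)\right]
\ \lesssim\ \sqrt{\max\!\Big\{\tfrac{V^*}{n^{r-1}},1\Big\}(s+1)\log\!\big(\tfrac{en}{s+1}\big)}\ \cdot\ \eta
\ +\ \max\!\Big\{\tfrac{V^*}{n^{r-1}},1\Big\}(s+1)\log\!\big(\tfrac{en}{s+1}\big).
\]
Setting $\eta^2$ equal to a large constant multiple of $\max\{V^*/n^{r-1},1\}(s+1)\log(en/(s+1))$ makes the right-hand side at most $\eta^2/L$, which is the required fixed-point inequality; since the bound is uniform in $t$ (all relevant quantities are controlled by $s$, the sign pattern, the length condition, and $V^*$, none of which depend on $t$ after taking the stated suprema), the supremum over $t$ poses no extra difficulty. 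Finally, the extension to the sorted estimator $\widetilde F_i$ follows verbatim from Corollary \ref{cor1}, using the continuity of each $F_i^*$ and $V_t^*\le V^*$.

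The main obstacle I anticipate is making the appeal to \citet{guntuboyina2020adaptive} fully rigorous in our normalization: their results are stated for the penalized or constrained trend-filtering estimator in a Gaussian sequence model with a specific scaling of $D^{(r)}$ and of the TV functional, whereas here we have the factor $n^{r-1}$ in $\mathrm{TV}^{(r)}$ and the noise is effectively the $N(0,I_n)$ vector $g$ of Theorem \ref{thm2} rather than $n^{-1/2}g$. Tracking these constants — in particular verifying that the radius $V_t^*$ of our TV ball translates into their oracle parameter correctly, and that the $\max\{V^*/n^{r-1},1\}$ factor is exactly what emerges — is the delicate bookkeeping. A secondary point is to confirm that the localized Gaussian-width bound we need is indeed of the two-term (linear-in-$\eta$ plus constant) shape above, so that the fixed-point argument closes with $\eta^2$ of the claimed order; this should follow from the standard Dudley/interpolation estimates underlying their analysis, but it requires stating the intermediate entropy bound for $(K_t-\theta^*)\cap B_\eta(0)$ explicitly and checking the minimum-length condition is used at the right place.
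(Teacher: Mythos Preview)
Your proposal is correct and follows essentially the same route as the paper's proof: reduce to Theorem~\ref{thm2}, bound the local Gaussian complexity $\mathcal{R}(\{\theta\in K_t-F^*(t):\|\theta\|\le\eta\})$ using existing adaptive trend-filtering results under the sparsity/sign/minimum-length conditions, choose $\eta^2$ of the claimed order, and finish via Corollary~\ref{cor1} for the sorted version. The only cosmetic difference is that the paper invokes Theorem~2 of \cite{madrid2022risk} directly (which already packages the Gaussian-complexity bound in the needed normalization), whereas you plan to extract it from \cite{guntuboyina2020adaptive}; your anticipated bookkeeping about the $n^{r-1}$ factor and the precise shape of the bound is exactly what that reference absorbs.
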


This result shows that, under a sparse difference structure and a minimum segment length condition, trend filtering estimators for distributional regression can achieve nearly parametric accuracy (up to logarithmic factors). The bound in (\ref{eqn:e22-fr}) mirrors the adaptive rates derived for standard trend filtering in point estimation problems; see, for example, \citet{guntuboyina2020adaptive}. Our extension demonstrates that the same refined rate behavior persists in the more general context of distributional regression, where the goal is to estimate full conditional distributions rather than scalar means.

The proof of Corollary~\ref{Cor-fast-rat-trend} is provided in Appendix~\ref{Corfastrattrendproof}.

\section{General Result for Penalized Estimators}
\label{pen-conv}

A natural alternative to shape-constrained estimators is the use of penalized estimators, where the penalty term promotes a desired behavior in the signal being estimated. Motivated by this approach, we present a general result for distributional regression using penalized estimators in this subsection. Specifically, consider estimators of the form
\begin{equation}
	\label{eqn:penalized}
	\hat{F}(t) \,:=\,   \underset{\theta \in \mathbb{R}^n}{\arg \min} \left\{   \frac{1}{2}\|  w(t)  - \theta \|^2    \,+\,   \lambda_t  \mathrm{pen}_t(\theta) \right \},
\end{equation}
where  $\lambda_t >0$ is a tuning parameter and $\mathrm{pen} \,:\, \mathbb{R}^n \rightarrow \mathbb{R}$  is a penalty function. We now present our main result for the penalized estimator defined in (\ref{eqn:penalized}).

\begin{theorem}
	\label{thm6}
	Suppose that $\mathrm{pen}_t(\cdot)$
 is convex for all $t$ and it is a semi-norm.
In addition, assume that 
 \(
    \underset{t \in \mathbb{R}}{\sup}  \,\,\mathrm{pen}_t (F^*(t)  ) \,\leq \,  V.
 \)
 Let  $K \,:=\,    \{  \theta   \in \mathbb{R}^n\,:\,  \mathrm{pen}_t(\theta )   \,\leq \, 6 V\} $.  Then for any $\eta> 0$, it holds that 
 \begin{equation}
     \label{eqn:penalized_est}
      	\begin{array}{l}
% 	    \mathbb{P}\left(   \underset{t \in  \mathbb{R}}{\sup}\, \| \hat{F}(t)-F^*(t)\| \,>\,\eta      \right) \\
\displaystyle  	 \mathbb{P}\left(   \underset{t \in \mathbb{R}}{\sup}     \sum_{i=1}^{n}   \left(  \widehat{F}_i(t)    - F^*_i(t)  \right)^2   > 2\eta^2   \right)
        \leq    \frac{C}{\eta^2}  \int_{0}^{\eta/4}    \sqrt{\log N(\varepsilon,K\cap   B_{\varepsilon}(0),\|\cdot\|   )} d\varepsilon  +  \frac{ C  \sqrt{\log n}}{\eta},
 	\end{array}
 \end{equation}
 for some constant $C>0$, proivide that we set $\lambda_t \,=\, \eta^2/  4   \mathrm{pen}(  F^*(t)).$
\end{theorem}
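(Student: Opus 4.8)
The plan is to reduce the penalized problem to the convex-constrained setting already analyzed in Theorem~\ref{thm3}, so that the covering-number bound can be invoked more or less verbatim. The first step is the standard ``basic inequality'' argument: fix $t$ and write $\widehat\theta = \widehat F(t)$, $\theta^* = F^*(t)$, and $w = w(t)$. By optimality of $\widehat\theta$ in \eqref{eqn:penalized},
\[
\tfrac12\|w-\widehat\theta\|^2 + \lambda_t\,\mathrm{pen}_t(\widehat\theta) \;\le\; \tfrac12\|w-\theta^*\|^2 + \lambda_t\,\mathrm{pen}_t(\theta^*).
\]
Expanding the squares and using $w - \theta^* = \xi$ (the centered ``noise'' vector with bounded coordinates, since $y_i\in\Omega$ and $w_i\in\{0,1\}$), one gets
\[
\tfrac12\|\widehat\theta-\theta^*\|^2 \;\le\; \langle \xi, \widehat\theta-\theta^*\rangle + \lambda_t\bigl(\mathrm{pen}_t(\theta^*) - \mathrm{pen}_t(\widehat\theta)\bigr).
\]
Now I would use that $\mathrm{pen}_t$ is a seminorm to bound $\mathrm{pen}_t(\theta^*)-\mathrm{pen}_t(\widehat\theta) \le \mathrm{pen}_t(\widehat\theta - \theta^*)$ is false in general, so instead one argues in the usual two-regime way: either $\mathrm{pen}_t(\widehat\theta)\le 3\,\mathrm{pen}_t(\theta^*)$, in which case $\mathrm{pen}_t(\widehat\theta-\theta^*)\le \mathrm{pen}_t(\widehat\theta)+\mathrm{pen}_t(\theta^*)\le 4\,\mathrm{pen}_t(\theta^*)\le 4V$ by the triangle inequality for seminorms; or $\mathrm{pen}_t(\widehat\theta) > 3\,\mathrm{pen}_t(\theta^*)$, in which case the penalty difference $\lambda_t(\mathrm{pen}_t(\theta^*)-\mathrm{pen}_t(\widehat\theta))$ is negative and large enough that, after absorbing, one still concludes $\widehat\theta-\theta^*$ lies (up to a harmless constant rescaling of the radius) in the cone $\{v : \mathrm{pen}_t(v)\le 6V\} = K - \theta^*$-ish. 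The point of the constant $6$ in the definition of $K$ is precisely to make both regimes land inside $K$; this is where I would be most careful, chasing the constants so that $\widehat\theta - \theta^* \in K \cap B_\eta(0)$ on the relevant event, uniformly in $t$.

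Once that containment is established, the problem is structurally identical to Theorem~\ref{thm3}: $\widehat F(t) - F^*(t)$ lives in $K\cap B_\eta(0)$ (note $K$ here plays the role of $(K-K)$ there, and since $K$ is already a symmetric convex set one does not even need the difference set), and the estimation error is controlled by the supremum of the empirical process $\langle \xi(t), v\rangle$ over $v \in K\cap B_\varepsilon(0)$, uniformly over $t\in\mathbb{R}$. The second step is therefore to invoke exactly the chaining-plus-peeling machinery behind Theorem~\ref{thm3} (Theorem~\ref{thm1} in the Appendix and its proof): Dudley's entropy integral gives the $\frac{C}{\eta^2}\int_0^{\eta/4}\sqrt{\log N(\varepsilon, K\cap B_\varepsilon(0),\|\cdot\|)}\,d\varepsilon$ term, and the union over the (effectively finitely many, by monotonicity of $w(t)$ in $t$ and the fact that $w(t)$ only takes values in $\{0,1\}^n$, hence changes at most $n$ times) relevant values of $t$ contributes the $\frac{C\sqrt{\log n}}{\eta}$ term. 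The choice $\lambda_t = \eta^2/(4\,\mathrm{pen}_t(F^*(t)))$ is dictated by balancing $\lambda_t\,\mathrm{pen}_t(\theta^*) \asymp \eta^2$ against the target squared-error level $\eta^2$ in the basic inequality; I would verify this balance explicitly.

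The main obstacle I anticipate is not the chaining — that is inherited wholesale from Theorem~\ref{thm3}/Theorem~\ref{thm1} — but the bookkeeping in the first step: handling the case $\mathrm{pen}_t(F^*(t)) = 0$ (so $\lambda_t$ is undefined or infinite; one treats this degenerate case separately, noting $F^*(t)$ then lies in the null space of the seminorm and $\widehat F(t)$ does too), and making the two-regime cone argument yield a clean statement ``$\widehat F(t) - F^*(t) \in K$'' with the specific constant $6$, uniformly in $t$, rather than just for a fixed $t$. A secondary subtlety is that the event in \eqref{eqn:penalized_est} is a supremum over all real $t$, so the reduction to finitely many $t$-values (via the step-function structure of $t\mapsto w(t)$) must be done before, not after, applying concentration — exactly as in the proof of Theorem~\ref{thm3}, which I would mirror step for step.
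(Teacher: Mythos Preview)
Your overall strategy is right, and so is the final reduction to Dudley plus the $\sqrt{\log n}$ union over the $n$ jump times of $w(\cdot)$. But the second regime of your case split is where the argument actually has a gap. You write that when $\mathrm{pen}_t(\widehat\theta) > 3\,\mathrm{pen}_t(\theta^*)$, the negative penalty term ``after absorbing'' forces $\widehat\theta - \theta^* \in K$. It does not: in that regime nothing bounds $\mathrm{pen}_t(\widehat\theta - \theta^*)$ from above, so you cannot place $\widehat\theta-\theta^*$ in $K$, and you also have no a priori control on $\|\widehat\theta-\theta^*\|$ to localize into $B_\eta(0)$. The containment you need for the chaining step simply is not available at the point $\widehat\theta$.

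The paper's device, which your sketch is missing, is to work on the entire segment $\Lambda(t)=\{s\widehat F(t)+(1-s)F^*(t):s\in[0,1]\}$. Because the penalized objective is convex and $\widehat F(t)$ is its minimizer, the basic inequality $\tfrac12\|F-F^*\|^2\le \langle a(t),F-F^*\rangle + \lambda_t(\mathrm{pen}_t(F^*)-\mathrm{pen}_t(F))$ holds for \emph{every} $F\in\Lambda(t)$, not just the endpoint. This buys two things. First, localization: if $\|\widehat F(t)-F^*(t)\|>\eta$, pick $F\in\Lambda(t)$ with $\|F-F^*(t)\|=\eta$ exactly and apply the inequality there. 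Second, and this is what replaces your broken second case: define $\Omega_1$ as the event that some $F\in\Lambda(t)$ with $\|F-F^*\|\le\eta$ has $\mathrm{pen}_t(F)\ge 5\,\mathrm{pen}_t(F^*)$; on $\Omega_1$ one \emph{rescales} along the segment to a point $\widetilde F\in\Lambda(t)$ with $\mathrm{pen}_t(\widetilde F-F^*)=4\,\mathrm{pen}_t(F^*)$ exactly (using homogeneity of the seminorm), applies the basic inequality at $\widetilde F$, and with the stated choice of $\lambda_t$ obtains $\eta^2/2\le \langle a(t),\widetilde F-F^*\rangle$. Since $\widetilde F-F^*\in K\cap B_\eta(0)$ by construction, $\Omega_1$ is contained in the empirical-process event you can chain. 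On $\Omega_1^c$, the localized point $F$ from the first step automatically has $\mathrm{pen}_t(F)\le 5\,\mathrm{pen}_t(F^*)$, hence $F-F^*\in K\cap B_\eta(0)$, and the same empirical-process bound applies. The constant $6$ in $K$ comes from $5+1$ via the triangle inequality, not from your $3+1$; your threshold $3$ would need to be adjusted. Once you have these two pieces, the rest is exactly Markov plus symmetrization plus Lemma~\ref{lem3}, as you say.
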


Theorem \ref{thm6} demonstrates that achieving a uniform upper bound on the MSE can be accomplished by controlling the covering number of sets of the form $K\cap   B_{\varepsilon}(0)$, as outlined on the right-hand side of (\ref{eqn:penalized_est}).

We now turn to a statistical guarantee for the penalized version of trend filtering in distributional regression. This result follows directly from Theorem \ref{thm6} and involves a calculation analogous to the proof of (\ref{eqn:e23}).

\begin{corollary}
    \label{cor3}
	Consider the estimator in (\ref{eqn:penalized}) with  $\mathrm{pen}_t(\theta ) := \mathrm{TV}^{(r)}(\theta) $ and $\lambda_t$ chosen as in Theorem \ref{thm6}. Then
         \begin{equation}
       	\label{eqn:e24}
       	 \underset{t \in \mathbb{R}}{\sup} \,    \sum_{i=1}^{n}   \frac{1}{n}\left(  \widehat{F}_i(t)    - F^*_i(t)  \right)^2  = O_{\mathbb{P}}\left(\frac{V^{ \frac{2}{2r+1}  }  }{ n^{ \frac{2r}{2r+1}  } } + \frac{\log n}{n}  \right),
       \end{equation}
       where $V\,:=\, \underset{t \in \mathbb{R}}{\sup}  \,\,\mathrm{pen}_t (F^*(t)  ) $.
\end{corollary}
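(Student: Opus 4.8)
\textbf{Proof proposal for Corollary~\ref{cor3}.}

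The plan is to apply Theorem~\ref{thm6} with $\mathrm{pen}_t(\theta) = \mathrm{TV}^{(r)}(\theta)$, which is indeed a semi-norm (it is $n^{r-1}$ times the $\ell_1$-norm of the linear map $D^{(r)}$, hence convex, absolutely homogeneous, and satisfying the triangle inequality, while vanishing on polynomials of degree $<r$). With $V := \sup_{t\in\mathbb{R}} \mathrm{TV}^{(r)}(F^*(t))$, Theorem~\ref{thm6} tells us that for every $\eta>0$,
\[
\mathbb{P}\left( \sup_{t\in\mathbb{R}} \sum_{i=1}^n \bigl(\widehat{F}_i(t) - F_i^*(t)\bigr)^2 > 2\eta^2 \right) \leq \frac{C}{\eta^2}\int_0^{\eta/4} \sqrt{\log N(\varepsilon, K\cap B_\varepsilon(0), \|\cdot\|)}\, d\varepsilon + \frac{C\sqrt{\log n}}{\eta},
\]
where $K = \{\theta\in\mathbb{R}^n : \mathrm{TV}^{(r)}(\theta) \leq 6V\}$. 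So the whole task reduces to bounding the local entropy integral on the right-hand side, and then choosing $\eta$ to balance the two terms against the target probability.

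The key step is the local-entropy bound: I would invoke the standard metric-entropy estimate for the $r$th-order total-variation ball intersected with an $\ell_2$-ball. This is precisely the bound used in the proof of Corollary~\ref{thm5} (equations~\eqref{eqn:e22}--\eqref{eqn:e23}) — indeed the statement explicitly says the proof involves ``a calculation analogous to the proof of~\eqref{eqn:e23}.'' Concretely, one uses that for the set $K(R) := \{\theta : \mathrm{TV}^{(r)}(\theta) \leq R\}$ one has, for $\varepsilon$ small relative to $R$,
\[
\log N(\varepsilon, K(R)\cap B_\varepsilon(0), \|\cdot\|) \lesssim \left(\frac{R}{\varepsilon}\right)^{1/r}
\]
(up to logarithmic factors in $n$), a bound that traces back to the entropy computations of \citet{mammen1997locally} and \citet{guntuboyina2020adaptive}. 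Plugging $R = 6V$ in and integrating, $\int_0^{\eta/4} (V/\varepsilon)^{1/(2r)}\, d\varepsilon \asymp V^{1/(2r)} \eta^{1 - 1/(2r)}$ since $1/(2r) < 1$, so the first term on the right is of order $\eta^{-2} V^{1/(2r)} \eta^{1-1/(2r)} = V^{1/(2r)} \eta^{-1-1/(2r)}$. For this to be $O(1)$ we need $\eta^{1 + 1/(2r)} \gtrsim V^{1/(2r)}$, i.e. $\eta \gtrsim V^{1/(2r+1)}$; the term then contributes an MSE of order $\eta^2/n \asymp V^{2/(2r+1)}/n$. Rescaling to match~\eqref{eqn:e24} (the displayed rate is $V^{2/(2r+1)} n^{-2r/(2r+1)}$, not $V^{2/(2r+1)}/n$ — the $n^{r-1}$ factor inside $\mathrm{TV}^{(r)}$ absorbs the difference, exactly as in Corollary~\ref{thm5}), and the second term $C\sqrt{\log n}/\eta$ forces the additive $\log n / n$ contribution once $\eta \asymp \sqrt{\log n}$ is also allowed; taking $\eta^2$ to be the maximum of the two candidate scales and letting the failure probability be driven to zero gives the $O_{\mathbb{P}}$ statement in~\eqref{eqn:e24}.

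The main obstacle I anticipate is bookkeeping the $n^{r-1}$ normalization cleanly: $\mathrm{TV}^{(r)}(\theta) = n^{r-1}\|D^{(r)}\theta\|_1$, so the entropy bound above — which is naturally stated for $\|D^{(r)}\theta\|_1 \leq R'$ — must be applied with $R' = 6V / n^{r-1}$, and then the exponents $R'/\varepsilon = 6V/(n^{r-1}\varepsilon)$ propagate through the integral to produce the $n^{-2r/(2r+1)}$ scaling rather than $n^{-1}$. This is the same subtlety resolved in the proof of Corollary~\ref{thm5}, so I would either cite that computation directly or reproduce the one-line scaling argument. A minor secondary point is verifying that $\mathrm{TV}^{(r)}$ being only a semi-norm (not a norm) causes no problem in Theorem~\ref{thm6} — but the theorem is stated precisely for semi-norms, so this is immediate. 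Everything else — verifying the hypotheses of Theorem~\ref{thm6}, choosing $\lambda_t = \eta^2/(4\,\mathrm{pen}_t(F^*(t)))$, and passing from the high-probability bound to the $O_{\mathbb{P}}$ conclusion — is routine.
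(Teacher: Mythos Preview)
Your approach---apply Theorem~\ref{thm6}, bound the local entropy integral for the $\mathrm{TV}^{(r)}$-ball via the estimate from \citet{guntuboyina2020adaptive}, then choose $\eta$ to balance---is exactly the paper's, which simply says the result ``follows directly from Theorem~\ref{thm6} and involves a calculation analogous to the proof of~\eqref{eqn:e23}.''

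The one substantive slip is in your entropy bound: you write $\log N(\varepsilon, K(R)\cap B_\varepsilon(0), \|\cdot\|) \lesssim (R/\varepsilon)^{1/r}$, but the correct bound carries a factor of $\sqrt{n}$. The entropy-integral estimate actually used (quoted verbatim in the paper's proof of Corollary~\ref{thm5}) is
\[
\int_0^{\eta/4}\sqrt{\log N(\varepsilon, K\cap B_\varepsilon(0),\|\cdot\|)}\,d\varepsilon \;\lesssim\; \eta\Bigl(\tfrac{\sqrt{n}\,V}{\eta}\Bigr)^{1/(2r)} + \eta\sqrt{\log n},
\]
with $V$ already the $\mathrm{TV}^{(r)}$-scale (so the $n^{r-1}$ is already inside it). Plugging this into Theorem~\ref{thm6} and taking $\eta \asymp n^{1/(2(2r+1))}V^{1/(2r+1)} + \sqrt{\log n}$ gives $\eta^2/n \asymp V^{2/(2r+1)}n^{-2r/(2r+1)} + (\log n)/n$ directly. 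Your diagnosis---that the gap between your $V^{2/(2r+1)}/n$ and the target rate is absorbed by the $n^{r-1}$ in $\mathrm{TV}^{(r)}$---does not work: that normalization is already in $V$, and redoing your integral with $R' = 6V/n^{r-1}$ produces the exponent $-(4r-1)/(2r+1)$, not $-2r/(2r+1)$. The missing $\sqrt{n}$ instead reflects the unnormalized $\ell_2$ norm on $\mathbb{R}^n$ versus the $L^2[0,1]$ norm underlying the classical functional entropy estimate. Your fallback of citing Corollary~\ref{thm5}'s computation directly is the clean fix.
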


\newpage

\section{Dense ReLU Networks: assumption and definitions}
\label{Appendix-DRN}

In this appendix, we provide additional details for Section~\ref{DRN-section}. Before outlining our assumptions on the functions \( F^*(t) \), we introduce notation related to dense ReLU networks. To that end, we describe a dense neural network with architecture $(L,k)$ employing the ReLU activation function given as $\rho(s) =\max\{0,s\}$ for any $s\in \mathbb{R}$. Such a network is represented as a real-valued function $f: \mathbb{R}^d \rightarrow \mathbb{R}$  satisfying the following properties:
\begin{align}
f(x)=\sum_{i=1}^{k_L} c_{1, i}^{(L)} f_i^{(L)}(x)+c_{1,0}^{(L)}  \label{eq:form of approximation function 0}
\end{align}
for weights $c_{1,0}^{(L)}, \ldots, c_{1, k_L}^{(L)} \in \mathbb{R}$ and for $f_i^{(L)}$ 's recursively defined by
\begin{align}
f_i^{(s)}(x)=\rho\left(\sum_{j=1}^{k_{s-1}} c_{i, j}^{(s-1)} f_j^{(s-1)}(x)+c_{i, 0}^{(s-1)}\right) \label{eq:form of approximation function L}
\end{align} for some $c_{i,0}^{(s-1)}, \dots, c_{i, k_{s-1}}^{(s-1)} \in \mathbb{R}$,
$s \in \{2, \dots, L\}$,
and $f_i^{(1)}(\mathbf{x}) = \rho \left(\sum_{j=1}^d c_{i,j}^{(0)} x^{(j)} + c_{i,0}^{(0)} \right)$
\\
with  $c_{i,0}^{(0)}, \dots, c_{i,d}^{(0)} \in \mathbb{R}$.
%\textcolor{red}{say where the i's are. Also, we use $\sigma$ for other things, can change the notation of the reul function? }

Next we provide some notation necessary for define the class of signals where the  $F^*(t)$'s
 belong. 
 
%As  for the ground truths $f^*$ and $g^*$, we consider smoothness classes as in  \cite{kohler2019rate}. To provide a precise mathematical description of such classes, we first introduce the following definition of $(p, C)$-smoothness. 

%be in the class $\mathcal H(l, p)$ defined on page 9 of  \cite{kohler2019rate}. To give a full description of $\mathcal H(l, p)$, we first introduce the following definition of $(p, C)$-smoothness, which quantifies the smoothness of the function class  $\mathcal H(l, p)$.

\begin{definition}[$(p, C)$-smoothness]\label{definition: p,c smoothness}
Let $p=q+s$ for some $q \in \mathbb{N}=\mathbb{Z}^{+}\cup\left\{0\right\}$ and $0<s \leq 1$. We say that a function $g: \mathbb{R}^d \rightarrow \mathbb{R}$ is  $(p, C)$-smooth, if for every $\alpha=\left(\alpha_1, \ldots, \alpha_d\right) \in \mathbb{N}^d$, with $d \in \mathbb{Z}^{+}$ , where  $\sum_{j=1}^d \alpha_j=q$, the partial derivative $\partial^q g /\left(\partial u_1^{\alpha_1} \ldots \partial u_d^{\alpha_d}\right)$ exists and 
$$
\left|\frac{\partial^q g}{\partial u_1^{\alpha_1} \ldots \partial u_d^{\alpha_d}}\left(u\right)-\frac{\partial^q g}{\partial u_1^{\alpha_1} \ldots \partial u_d^{\alpha_d}}\left(v\right)\right| \leq C\| u- v\|^s
$$
for all $u, v \in \mathbb{R}^d$.% where $\|\cdot\|$ denotes the Euclidean norm.	
\end{definition}

%With the definition of $(p, C)$-smoothness in hand, we are ready to define the class of the generalized hierarchical interaction models $\mathcal{H}(l, \mathcal{P})$.
%, which is defined recursively:

Let us now define the generalized hierarchical interaction models $\mathcal{H}(l, \mathcal{P})$.

\begin{definition} [Space of Hierarchical Composition Models, \cite{kohler2021rate}]\label{definition: hierarchical composition model}
For $l=1$ and  smoothness constraint $\mathcal{P} \subseteq(0, \infty) \times \mathbb{N}$, the  space of hierarchical composition models is defined as
$$
\begin{array}{l}
         \mathcal{H}(1, \mathcal{P})    :=
      \left\{h: \mathbb{R}^d \rightarrow \mathbb{R}: h(a )=m\left(a_{(\pi(1))}, \ldots, a_{(\pi(M))}\right),\right. \text { where } \\
         \,\,\, m: \mathbb{R}^M \rightarrow \mathbb{R} \text { is }(p, C) \text {-smooth for some }(p, M) \in \mathcal{P}  
 \text { and } \pi:\{1, \ldots, M\} \rightarrow\{1, \ldots, d\}\} .
    \end{array}
    $$
For $l>1$, we set 
$$
\begin{array}{l}
 \mathcal{H}(l, \mathcal{P})    := 
 \left\{h: \mathbb{R}^d \rightarrow \mathbb{R}: h(\mathbf{x})=m\left(f_1(a), \ldots, f_M(a)\right),\right.  \text { where }\\
\,\,\,  m: \mathbb{R}^M \rightarrow \mathbb{R} \text { is }(p, C) \text {-smooth for some }(p, M) \in \mathcal{P}
 \text { and } \left.f_i \in \mathcal{H}(l-1, \mathcal{P})\right\}.
\end{array}
%\begin{aligned}
%& 
%& g: \mathbb{R}^K \rightarrow \mathbb{R} \text { is }(p, C) \text {-smooth for some }(p, K) \in \mathcal{P} \\
%&\text { and } \left.f_i \in \mathcal{H}(l-1, \mathcal{P})\right\}
%\end{aligned}
$$	
\end{definition}

With the notation above, we are ready to state our assumption on the true signals in the spirit of \cite{kohler2021rate}.

\begin{assumption}
    \label{as1}
Suppose that for all $t$ the function  $G^*(\cdot,t)$ is in the class $\mathcal{H}(l,\mathcal{P})$ as in Definition  \ref{definition: hierarchical composition model}. In addition, assume that each function $g^{t}$ in the definition of $G^*(\cdot,t)$ can have different smoothness $p_{g^t} =  q_{g^t} +s_{g^t}$, for $q_{g^t} \in \mathbb{N}$, $s_{g^t} \in (0,1]$,  and of potentially different input dimension $M_{g^t} $, so that $(p_{g^t}, M_{g^t} ) \in \mathcal{P}$. Let $M_{\max}$ be the largest input dimension and $p_{\max}$ the largest smoothness of any of the functions $g^t$ for all $t$. Suppose that for each $g^{t}$ all the partial derivatives of order less than or equal to $q_{g^t}$ are uniformly bounded by constant $C_{\mathrm{Smooth} }$, and each function $g^t$ is Lipschitz continuous
with Lipschitz constant $C_{\mathrm{Lip} } \geq1 $. Also, assume that $\max\{p_{\max},M_{\max} \} =O(1)$.
% Denote by $ \mathcal P$ be the collection of parameters associated with  the space $\mathcal{H}(l,\mathcal{P})$ and 
%Let 
%$\phi_{n} =  \max_{(p, M) \in \mathcal P } n^{-\frac{2p}{2p +M } },$ 

\end{assumption}

\newpage

\section{Rearrangement of estimates}

\begin{figure}[]
\begin{center}
\vskip 0.1in
%\centerline{\includegraphics[width=\columnwidth]{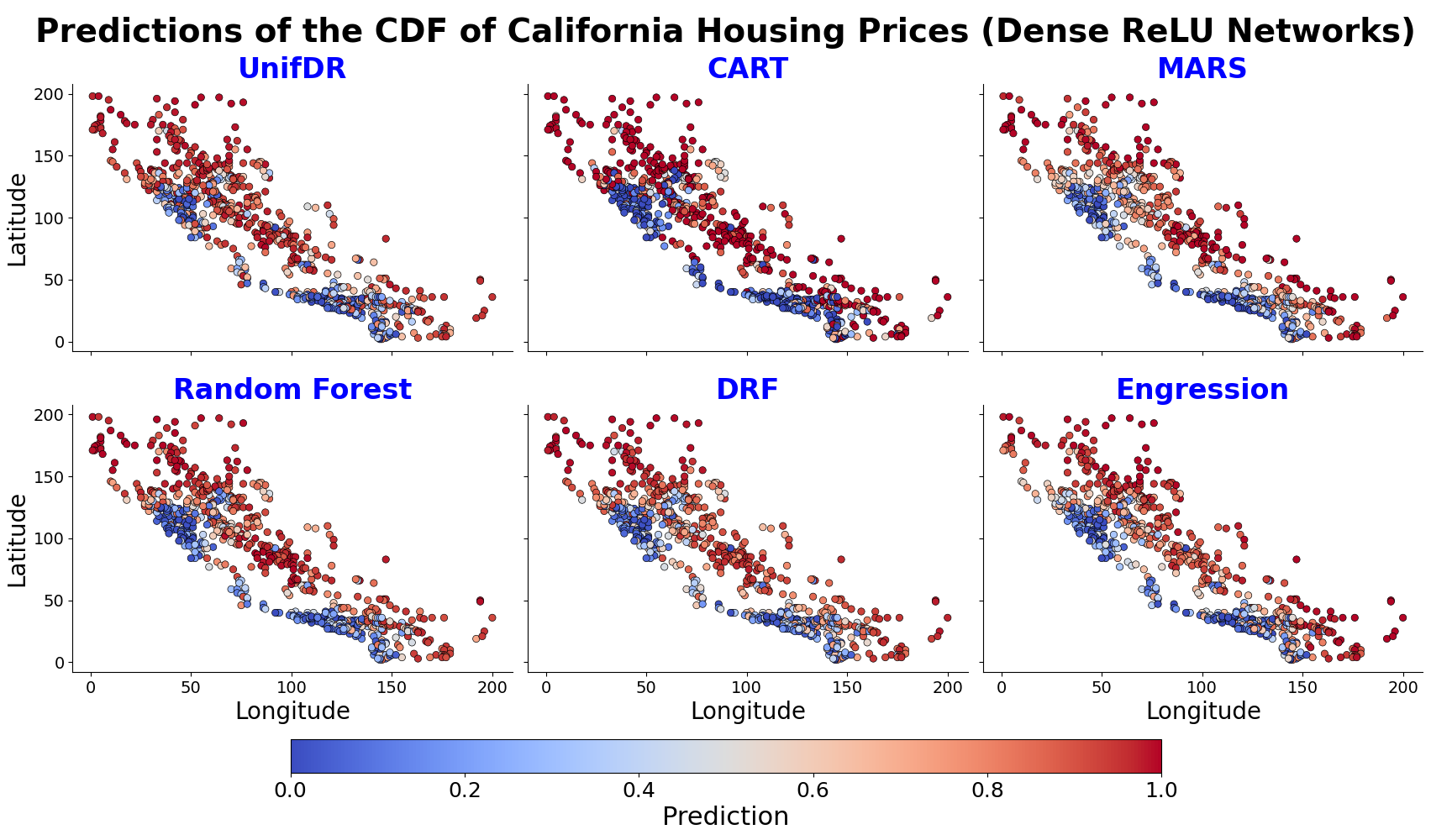}}
\includegraphics[width=2in]{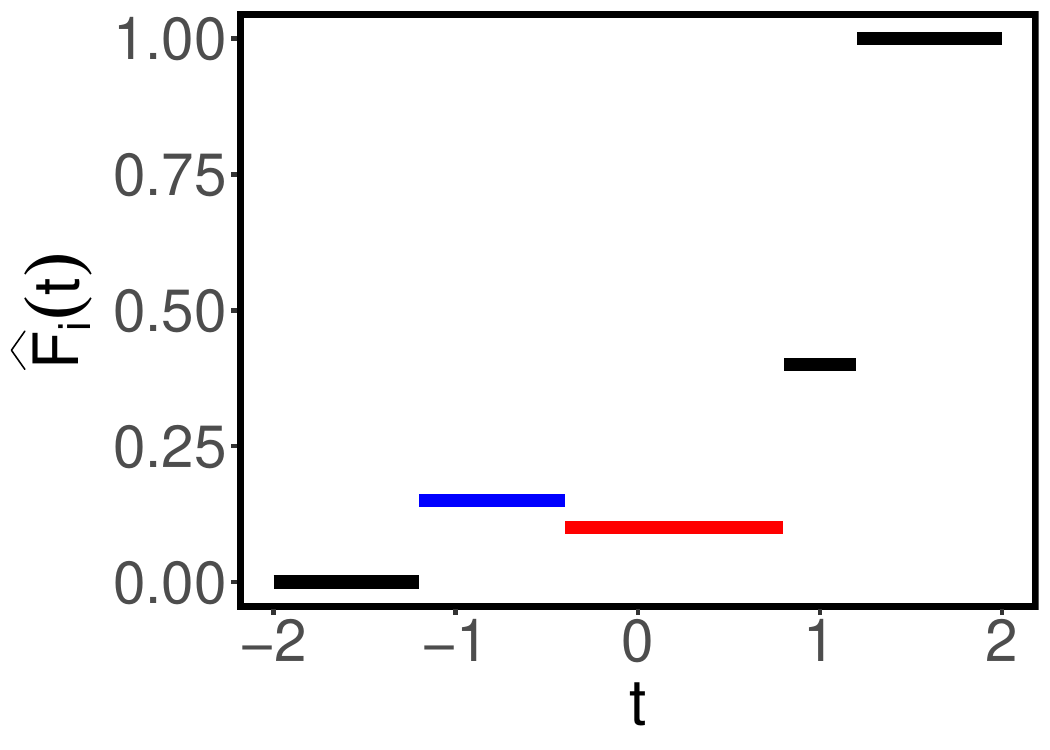}
\includegraphics[width=2in]{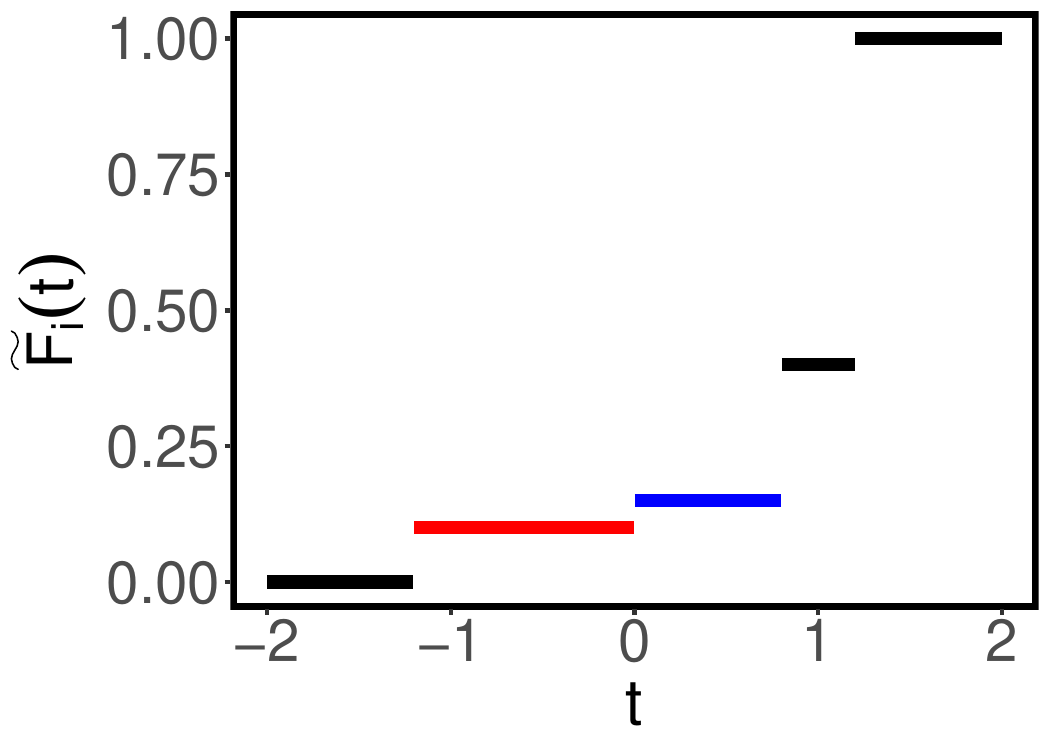}
\caption{The plot in the left shows a display of an example of a function $\widehat{F}_i$ and the right panel shows the corresponding rearrangement $\widetilde{F}_i$ as described in Corollary \ref{cor1}.}
%\caption{$\widehat{F}_i(t)$ for $t=12$ and all $i\in \text{Test}$, for all competitors.}
\label{fig10}
\end{center}
\vskip -0.2in
\end{figure}

Figure \ref{fig10} shows an example of the original $\widehat{F}_i$ and its rearrangement $\widetilde{F}_i$ as in Corollary \ref{cor1}, which is guaranteed to be non-decreasing.

\newpage

\section{Additional Numerical Results}
\label{ad-sim}

This appendix provides an extensive evaluation of the proposed method, {\bf UnifDR}, including additional results and analyses omitted from Sections \ref{simu-data} and \ref{RealD-sec}. These supplementary results further demonstrate the effectiveness and robustness of our methods across diverse settings. The appendix presents:

\begin{itemize}
    \item Additional real data applications to illustrate the practical utility of the proposed methods (Appendix \ref{other-rd}).
    \item Comprehensive evaluations on alternative evaluation sets (\(\Lambda_1\) and \(\Lambda_3\)) supplementing the results for \(\Lambda_2\) in Section \ref{simu-data}. This section also includes missing evaluation results for \(\Lambda_2\) related to the CRPS metric (Appendix \ref{lambda1-lambda3}).
    
    \item Performance results based on the Maximum Squared Difference (MSD) metric across all scenarios, which were not included in Section \ref{simu-data} (Appendix \ref{msd-results}).
    %\item Detailed tables summarizing all simulation outcomes, providing numerical insights for further comparisons (Appendix \ref{sec-tables}).
\end{itemize}

\subsection{Additional Real Data Applications}
\label{other-rd}

This appendix presents two additional real-world data applications to further demonstrate the effectiveness of the proposed methods. These examples span different domains, illustrating the versatility and robustness of our approach. Each case study includes a description of the dataset, the experimental setup, and a comparative performance analysis.

For each dataset, the data is randomly divided into a training subset (75\%) and a testing (25\%) subset. Model performance is evaluated using the empirical cumulative distribution function (CDF), \( w_i(t) \), computed over $100$ evenly spaced points in a predefined set $\Lambda$. The performance of each competitor is assessed using the Continuous Ranked Probability Score (CRPS) and the Maximum Squared Difference (MSD) metrics, defined as follows:
\[
\mathrm{CRPS}=\frac{1}{|\text{Test}|} \sum_{i \in \text{Test}} \frac{1}{100} \sum_{t \in \Lambda}\left(\widehat{F}_i(t)-w_i(t)\right)^2,
\]
and
\[
\mathrm{MSD}=\max _{t \in \Lambda} \frac{1}{|\text{Test}|} \sum_{i \in \text{Test}}\left(\widehat{F}_i(t)-w_i(t)\right)^2.
\]
 The proposed {\bf UnifDR} method is implemented in two variants: Trend Filtering, which captures smooth variations through total variation regularization, and Dense ReLU Networks, which leverages a deep neural network to incorporate covariate information.

\subsubsection{Chicago Crime Data with Dense ReLU Networks Approach}
\label{Chi-Cri-ex-DRN}

We revisit the \texttt{2015 Chicago crime dataset}, previously analyzed in the main text using the Trend Filtering approach. We remember that the dataset contains reported crimes in Chicago throughout 2015. As before, the spatial domain is discretized into a $100 \times 100$ grid, where each grid cell represents an aggregated crime count. The response variable remains the log-transformed total crime counts per grid cell, and grid cells with zero observed crimes are excluded, yielding a final dataset of 3,844 grid cells.  

Unlike the Trend Filtering approach which assumes a smooth
index trend, the Dense ReLU network approach leverages covariate information
for modeling crime intensity. The following covariates are included. Latitude and Longitude Bins, encapsulating spatial crime patterns. Day of the Week, represented using dummy variables for each weekday (Monday through Sunday). Beat, a categorical identifier for Chicago’s policing districts.  Arrest Indicator, a binary variable denoting whether an arrest was made (1) or not (0).

 The dataset is randomly split into 100 train (75\%) – test (25\%) partitions, with evaluation conducted at evenly spaced points \( \Lambda \) ranging from -1 to 6.  The Dense ReLU
Networks approach employs a fully connected feedforward architecture with five hidden layers of
64 neurons each, using ReLU activations. The model is trained using the Adam optimizer with a
learning rate of 0.001 over 1,000 epochs, minimizing the Binary Cross-Entropy (BCE) loss function
for improved CDF estimation.

Performance is assessed using the Continuous Ranked Probability Score (CRPS) and Maximum Squared Difference (MSD) metrics, comparing estimated CDFs $\widehat{F}_i(t)$ against empirical indicators $w_i(t)$, where $t \in \Lambda$. The methods CART, MARS, RF, DRF, and EnG serve as competitors for the Dense ReLU Networks approach. Table \ref{tab:RD1-DNN} summarizes the results demonstrating the superior performance of the Dense ReLU network relative to classical nonparametric regression methods. Additionally, Figure \ref{fig:RD1-DNN} visualizes \( \widehat{F}_i(t) \) for \( t = 3 \) across test grid cells for all competitors.

\begin{figure}[]
\begin{center}
%\vskip 0.1in
\centerline{\includegraphics[width=\columnwidth]{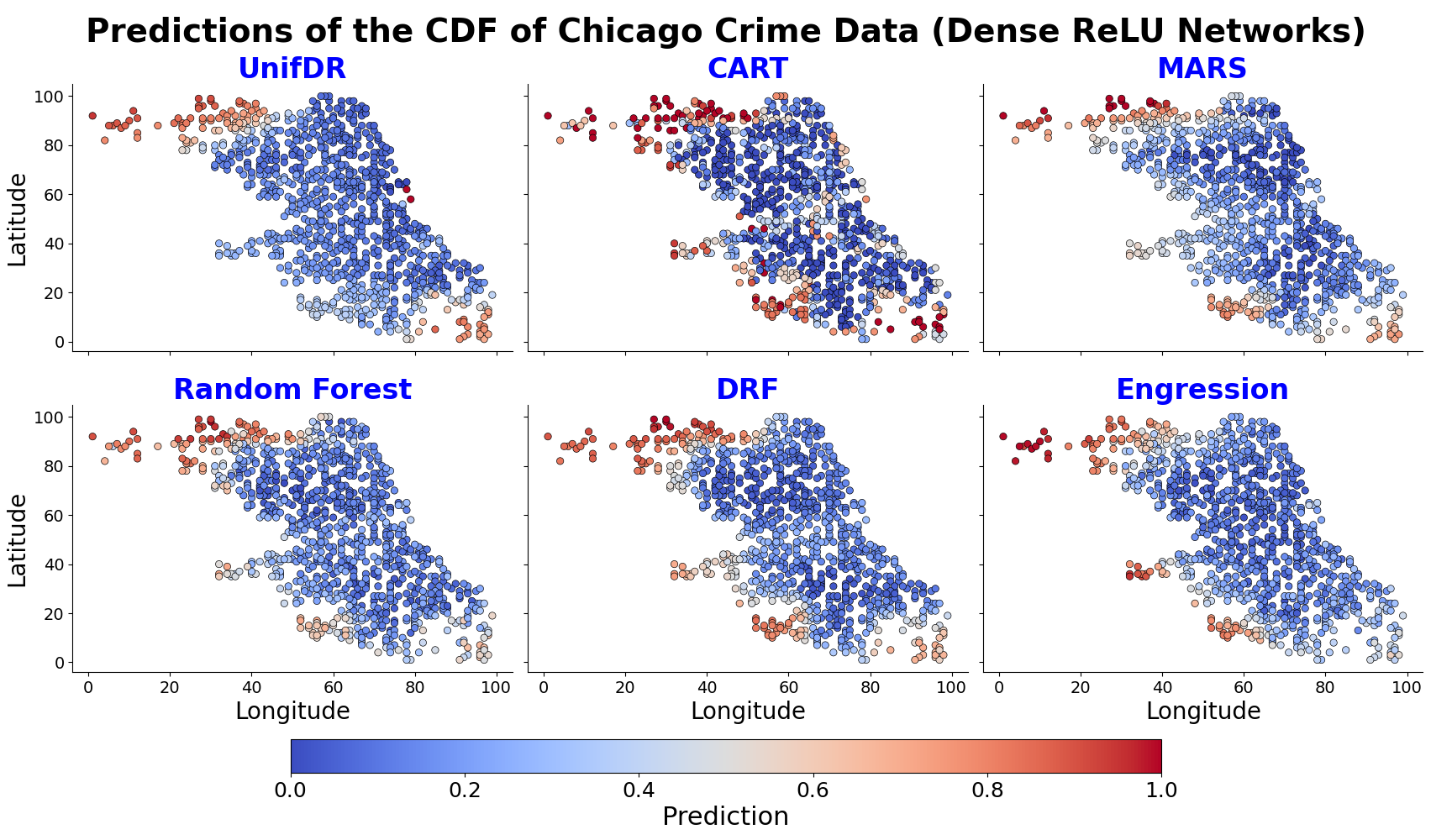}}
\caption{$\widehat{F}_i(t)$ for $t=3$ and all $i\in \text{Test}$, for all competitors.}
\label{fig:RD1-DNN}
\end{center}
\vskip -0.2in
\end{figure}

\begin{figure}[]
\begin{center}
%\vskip 0.1in
\centerline{\includegraphics[width=\columnwidth]{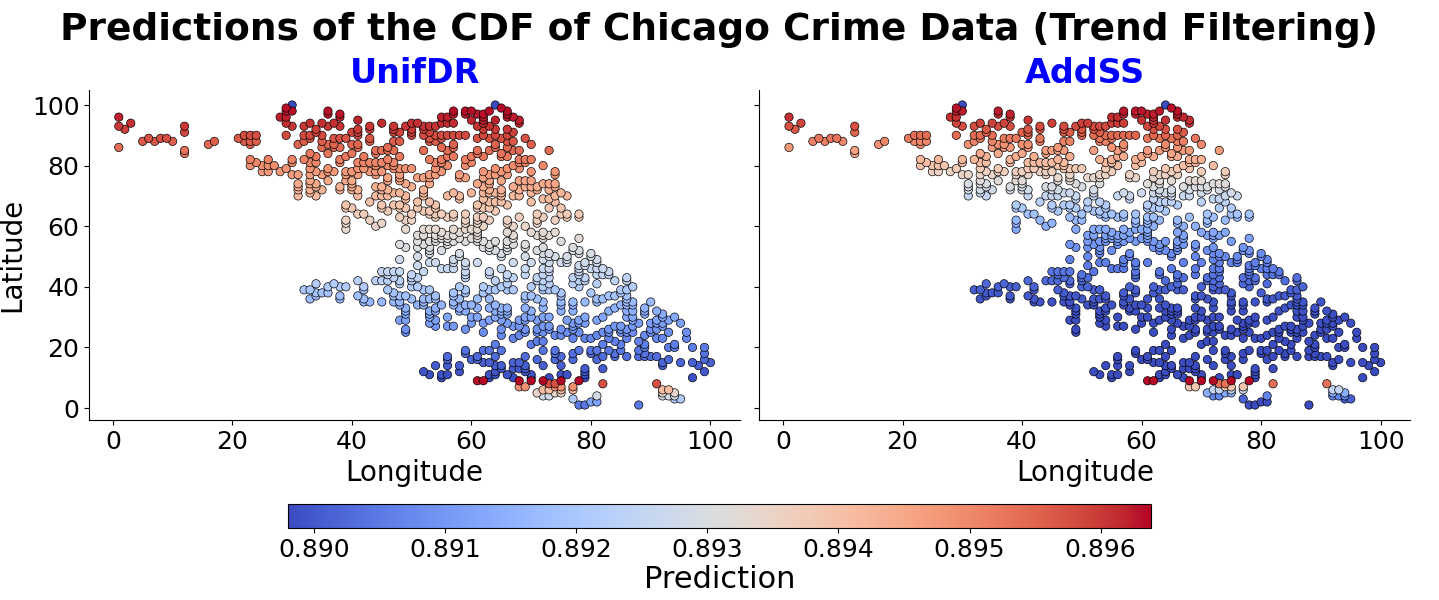}}
\caption{$\widehat{F}_i(t)$ for $t=3$ and all $i\in \text{Test}$, for all competitors for the example in Section \ref{RealD-sec}.}
\label{fig:RD1}
\end{center}
\vskip -0.3in
\end{figure}

\subsubsection{California Housing Prices}

We evaluate the effectiveness of the proposed {\bf UnifDR} method by analyzing the 1990 California housing dataset, which contains demographic and economic information from various neighborhoods across California. Originally introduced in \citet{pace1997sparse}, the dataset comprises 20,640 observations and is publicly available via the Carnegie Mellon StatLib repository at \url{http://lib.stat.cmu.edu/datasets/}, as well as the \url{https://www.dcc.fc.up.pt/~ltorgo/Regression/cal_housing.html} portal.
 
Following the approach of \citet{ye2021non}, the geographic area is discretized into a $200 \times 200$ spatial grid based on latitude and longitude coordinates. The response variable is derived by applying a logarithmic transformation to the median house values within each grid cell to enhance numerical stability and interpretability. Grid cells lacking valid data are excluded from further analysis, resulting in a final sample size of 3,165 grid cells.
 
In the Trend Filtering approach no covariates are included. Instead, the spatial grid is treated as an ordered sequence, allowing total variation regularization to capture smooth spatial variations in housing prices.  In contrast, the Dense ReLU Networks method integrates spatial features such as latitude and longitude with socioeconomic attributes like {\texttt{median$\_$income}} and {\texttt{average$\_$occupancy}}, which is computed as population divided by households. This approach enables the model to capture complex relationships influencing housing prices. The Dense ReLU neural network consists of three hidden layers, each containing 30 neurons followed by a ReLU activation function.
  
%The dataset is randomly partitioned into training (75\%) and testing (25\%) subsets. 
Evaluations are performed over 100 evenly spaced points within the range $\Lambda=[5,15]$.

\begin{table}[]
\caption{Evaluation metrics for UnifDR (Dense ReLU networks) and its competitors on the Chicago crime dataset.}
\label{tab:RD1-DNN}
\vskip 0.15in
\begin{center}
\begin{small}
\begin{sc}
\begin{tabular}{lcc}
\toprule
Method & CRPS (Mean $\pm$ Std) & MSD (Mean $\pm$ Std) \\
\midrule
UnifDR & \textbf{0.0811$\pm$ 0.0018} & \textbf{0.2133 $\pm$ 0.0031} \\
CART    & $0.0951\pm 0.0017$ & $0.2622 \pm 0.0071$ \\
DRF    & $0.0906\pm 0.0018$ & $0.2477 \pm 0.0042$ \\
ENG    & $0.1014\pm 0.0028$ & $0.2652 \pm 0.0045$ \\
MARS    & $0.0974\pm 0.0019$ & $0.2732 \pm 0.0047$ \\
RF    & $0.0934\pm 0.0020$ & $0.2581 \pm 0.0031$ \\
\bottomrule
\end{tabular}
\end{sc}
\end{small}
\end{center}
\vskip -0.1in
\end{table}
\begin{comment}

The empirical CDF \( w_i(t) \) for each test sample is used to compare model performance. Model performance is assessed using the Continuous Ranked Probability Score (CRPS) and the Maximum Squared Difference (MSD) metrics
\[
\mathrm{CRPS}=\frac{1}{|\text{Test}|} \sum_{i \in \text{Test}} \frac{1}{100} \sum_{t \in \Lambda}\left(\widehat{F}_i(t)-w_i(t)\right)^2
\]
and,
\[
\mathrm{MSD}=\max _{t \in \Lambda} \frac{1}{|\text{Test}|} \sum_{i \in \text{Test}}\left(\widehat{F}_i(t)-w_i(t)\right)^2
\]
respectively.
\end{comment}
Figure \ref{fig:Housing-TF} presents the estimated cumulative distribution functions \( \widehat{F}_i(t) \) at \( t=12 \), comparing the performance of {\bf UnifDR} (Trend Filtering) against its competitor, AddSS. Similarly, Figure \ref{fig:RD2-DNN} illustrates the estimated cumulative distribution functions at the same evaluation point, this time comparing {\bf UnifDR} (Dense ReLU networks) against CART, MARS, RF, DRF, and EnG. The corresponding evaluation metrics, summarized in Tables \ref{tab:Housing-Results} and \ref{tab:RD2-DNN}, demonstrate that {\bf UnifDR} consistently outperforms all competitors in terms of both CRPS and MSD.

\begin{figure}[ht]
\begin{center}
\centerline{\includegraphics[width=\columnwidth]{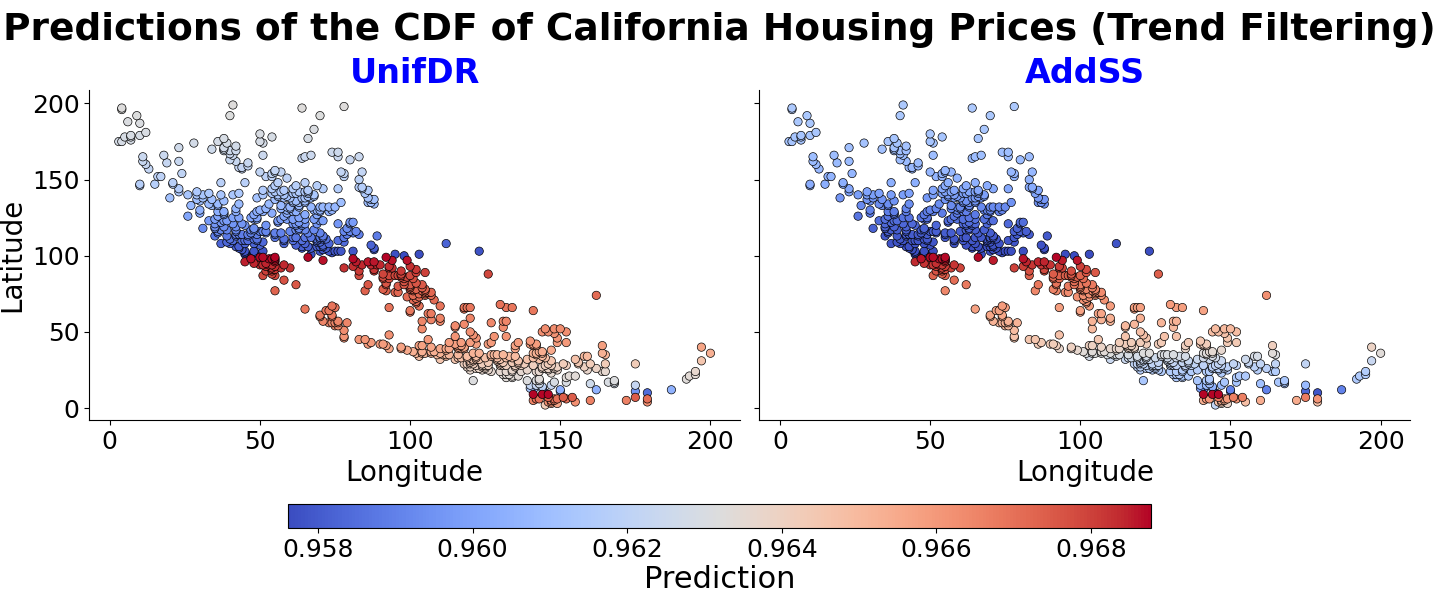}}
\caption{Estimated distribution function \( \widehat{F}_i(t) \) at \( t=12 \) for all grid cells in the test set, comparing UnifDR (Trend Filtering) and the AddSS competitor.}
\label{fig:Housing-TF}
\end{center}
\vskip -0.2in
\end{figure}

\begin{table}[t]
\caption{Evaluation metrics for UnifDR (Trend Filtering) and its competitor AddSS on the California housing dataset.}
\label{tab:Housing-Results}
\begin{center}
\begin{small}
\begin{sc}
\begin{tabular}{lcc}
\toprule
Method & CRPS (Mean $\pm$ Std) & MSD (Mean $\pm$ Std) \\
\midrule
UnifDR & {\bf{0.0343$\pm$ 0.0013}} & {\bf{0.2505 $\pm$ 0.0097}} \\
AddSS    & $0.0357\pm 0.0028$ & $0.2652 \pm 0.0407$ \\
\bottomrule
\end{tabular}
\end{sc}
\end{small}
\end{center}
\vskip -0.1in
\end{table}

\begin{figure}[]
\begin{center}
\vskip 0.1in
\centerline{\includegraphics[width=\columnwidth]{Housing-DNN-1.png}}
\caption{$\widehat{F}_i(t)$ for $t=12$ and all $i\in \text{Test}$, for all competitors.}
\label{fig:RD2-DNN}
\end{center}
\vskip -0.2in
\end{figure}

\begin{table}[]
\caption{Evaluation metrics for UnifDR (Dense ReLU networks) and its competitors on the California housing price dataset.}
\label{tab:RD2-DNN}
\vskip 0.15in
\begin{center}
\begin{small}
\begin{sc}
\begin{tabular}{lcc}
\toprule
Method & CRPS (Mean $\pm$ Std) & MSD (Mean $\pm$ Std) \\
\midrule
UnifDR & \textbf{0.0209$\pm$ 0.00055} & \textbf{0.1450 $\pm$ 0.0047} \\
CART    & $0.0258\pm  0.00053$ & $0.1984 \pm 0.0067$ \\
DRF    & $0.0229\pm 0.00051$ & $0.1755 \pm 0.0045$ \\
ENG    & $0.0244\pm 0.00059$ & $0.1924 \pm 0.0057$ \\
MARS    & $0.0233\pm 0.00057$ & $0.1798 \pm 0.0099$ \\
RF    & $0.0221\pm 0.00046$ & $0.1650 \pm 0.0047$ \\
\bottomrule
\end{tabular}
\end{sc}
\end{small}
\end{center}
\vskip -0.1in
\end{table}

\subsubsection{Ozone Data Analysis}

We further evaluate the effectiveness of the proposed {\bf UnifDR} method by analyzing ozone concentration data collected from the Environmental Protection Agency (EPA) Regional dataset. This dataset consists of daily ozone measurements collected across various monitoring stations in different regions of the United States for the year 2024. The dataset includes measurements of ozone concentration along with associated variables such as Air Quality Index (AQI), wind speed, temperature, latitude, and longitude. The available variables include: State Code, County Code, Site Number, Latitude, Longitude, Date (Year, Month, Day), Ozone concentration (in parts per million), AQI, Wind Speed (in miles per hour), Temperature (in Fahrenheit), Observation Percentage (percentage of valid observations for a given day), First Maximum Value (highest ozone level recorded in a day), First Maximum Hour (time at which the maximum value was recorded), and Observation Count (number of valid ozone measurements per day). The data are publicly accessible via the EPA's AirData portal at \url{https://aqs.epa.gov/aqsweb/airdata/download_files.html}, where historical records spanning multiple years can be downloaded.

Following the approach of previous studies, the geographic region of interest is discretized based on latitude and longitude coordinates, specifically within the range \( 30^\circ N \) to \( 50^\circ N \) and \( -153^\circ W \) to \( -70^\circ W \). In the Trend Filtering setup, each monitoring site within these bounds is uniquely identified using a lexicographic ordering of its latitude and longitude values. The response variable \( y_i \) for each site \( i \) is defined as the mean of the daily recorded ozone levels, ensuring robustness against short-term fluctuations and missing data. This approach results in a total of 1,189 unique monitoring sites, which serve as the basis for spatial trend estimation.

 In contrast, the Dense ReLU network method method allows for a flexible spatial fit by incorporating spatial location data as covariates, capturing complex relationships that influence ozone concentration.  Specifically, the model uses latitude, longitude, mean AQI, mean percentage of valid observations, mean 1st maximum ozone value, mean 1st maximum hour, and mean observation count as input features. The Dense ReLU network consists of two hidden layers, each containing 100 neurons followed by a ReLU activation function.

Evaluations are performed over 100 evenly spaced points within the range $\Lambda=[0,1]$. Figures \ref{fig:Ozone-TF} and \ref{fig:RD3-DNN} present the estimated cumulative distribution functions \( \widehat{F}_i(t) \) at \( t=0.03\), comparing the performance of {\bf UnifDR} using Trend Filtering and Dense ReLU network against their competitors. Tables \ref{tab:Ozone-Results} and \ref{tab:RD3-DNN} summarize the evaluation metrics, demonstrating that {\bf UnifDR} achieves superior performance in terms of both CRPS and MSD.

\begin{figure}[]
\begin{center}
\centerline{\includegraphics[width=\columnwidth]{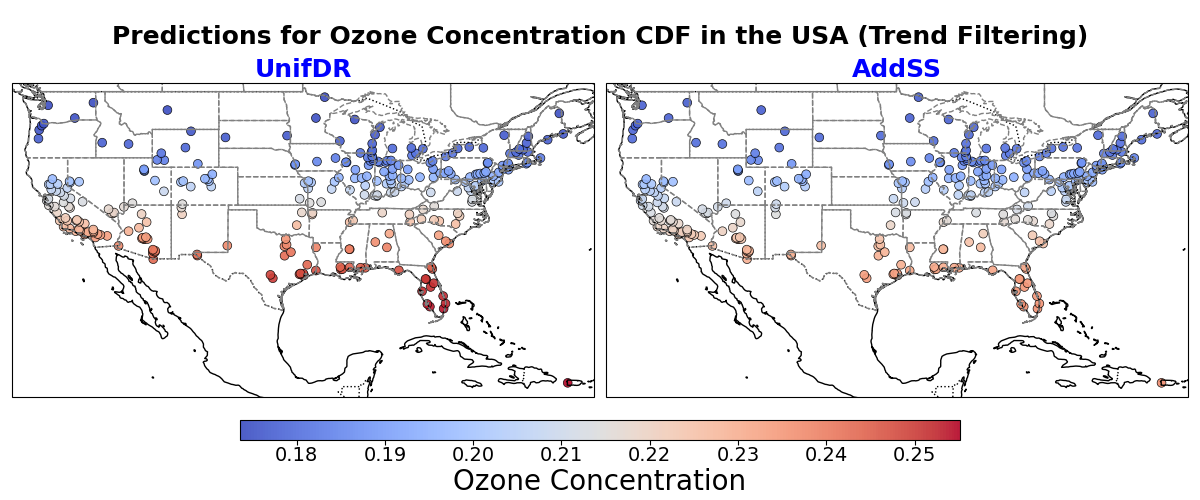}}
\caption{Estimated distribution function \( \widehat{F}_i(t) \) for all monitoring sites in the test set, comparing UnifDR (Trend Filtering) and the AddSS competitor.}
\label{fig:Ozone-TF}
\end{center}
\vskip -0.2in
\end{figure}

\begin{table}[]
\caption{Evaluation metrics for UnifDR (Trend Filtering) and its competitor AddSS on the ozone concentration dataset.}
\label{tab:Ozone-Results}
\begin{center}
\begin{small}
\begin{sc}
\begin{tabular}{lcc}
\toprule
Method & CRPS (Mean $\pm$ Std) & MSD (Mean $\pm$ Std) \\
\midrule
UnifDR & {\bf{0.0027$\pm$ 0.0002}} & {\bf{0.1581 $\pm$ 0.0124}} \\
AddSS    & $0.0035\pm 0.0002$ & $0.1987 \pm 0.0127$ \\
\bottomrule
\end{tabular}
\end{sc}
\end{small}
\end{center}
\vskip -0.1in
\end{table}

\begin{figure}[]
\begin{center}
\vskip 0.1in
\centerline{\includegraphics[width=\columnwidth]{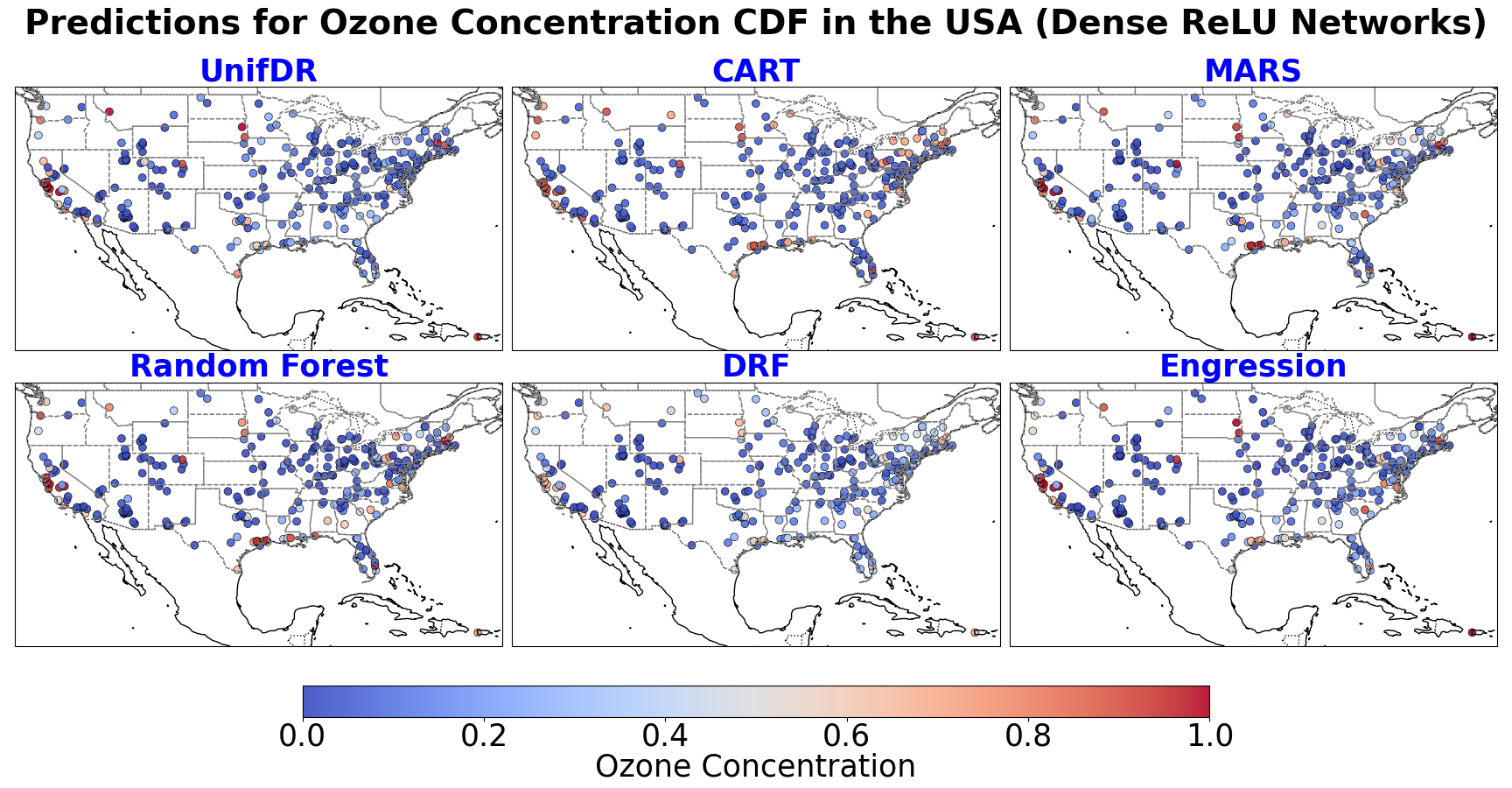}}
\caption{$\widehat{F}_i(t)$ for $t=0.03$ and all $i\in \text{Test}$, for all competitors.}
\label{fig:RD3-DNN}
\end{center}
\vskip -0.2in
\end{figure}

\begin{table}[]
\caption{Evaluation metrics for UnifDR (Dense ReLU networks) and its competitors on the Ozone dataset.}
\label{tab:RD3-DNN}
\vskip 0.15in
\begin{center}
\begin{small}
\begin{sc}
\begin{tabular}{lcc}
\toprule
Method & CRPS (Mean $\pm$ Std) & MSD (Mean $\pm$ Std) \\
\midrule
UnifDR & \textbf{0.0016$\pm$ 0.00021} & \textbf{0.0959 $\pm$ 0.0136} \\
CART    & $0.0026\pm  0.00028$ & $0.1529 \pm 0.0209$ \\
DRF    & $0.0031\pm  0.00016$ & $0.1593 \pm 0.0118$ \\
ENG    & $0.0022\pm 0.00099$ & $0.1097 \pm 0.0603$ \\
MARS    & $0.0022\pm 0.00039$ & $0.1288 \pm 0.0376$ \\
RF    & $0.0021\pm 0.00023$ & $0.1012 \pm 0.0157$ \\
\bottomrule
\end{tabular}
\end{sc}
\end{small}
\end{center}
\vskip -0.1in
\end{table}

\subsection{CRPS results on evaluation sets \texorpdfstring{\(\Lambda_1,\) and \(\Lambda_3\)}{Lambda1, and Lambda3}, and missing results for {\bf{S1}}, {\bf{S1}} and {\bf{S6}} in \texorpdfstring{\(\Lambda_2\)}{Lambda2}}
\label{lambda1-lambda3}

Section \ref{simu-data} presented results for the evaluation set \(\Lambda_2\) using the Continuous Ranked Probability Score (CRPS). However the results for scenarios \textbf{S1}, \textbf{S2} and \textbf{S6} were omitted due to space constraints. Moreover,
it is worth noting that the evaluation set \(\Lambda_2\) represents a balanced range of values centered around zero. To provide a more comprehensive analysis, this appendix includes the omitted CRPS results
for $\Lambda_2$, and the CRPS results for the alternative evaluation sets \(\Lambda_1\) and \(\Lambda_3\), which emphasize distinct distributional regions.
%In Section \ref{simu-data}, the evaluation results for the test set \(\Lambda_2\) using the Continuous Ranked Probability Score (CRPS) were partially presented, with results for scenarios \textbf{S3} and \textbf{S4} omitted due to space constraints. The test set \(\Lambda_2\) primarily captures a balanced range of values centered around zero, providing a representative benchmark. 

The new evaluation sets are defined as follows:
\begin{itemize}
    \item \(\Lambda_1\): 100 points evenly spaced between \(-1\) and \(0.4\), focusing on the lower and middle ranges of the distribution.
    \item \(\Lambda_3\): 100 points evenly spaced between \(0.8\) and \(10\), capturing the upper tail of the distribution.
\end{itemize}
This extended analysis provides deeper insights
into the robustness of the proposed methods in varying distributional regimes, including regions
with lower densities and heavier tails. As described in Section \ref{simu-data}, for each evaluation set (\(\Lambda_1\), \(\Lambda_2\) and \(\Lambda_3\)), the CRPS is computed and averaged over 100 Monte Carlo repetitions. Figures \ref{fig:lambda1-lambda3-S1-S2} through \ref{fig:lambda3-S6-CRPS} summarize the CRPS results across different scenarios and evaluation regions. Specifically, Figure \ref{fig:lambda1-lambda3-S1-S2} correspond to Scenarios \textbf{S1} and \textbf{S2} in \(\Lambda_1\), $\Lambda_2$, and \(\Lambda_3\).
 Figure \ref{fig:lambda1-lambda3-S3-S4} presents Scenarios \textbf{S3} and \textbf{S4} in \(\Lambda_1\) and \(\Lambda_3\).
Figure \ref{fig:lambda1-lambda3-S5} focuses on Scenario \textbf{S5} in \(\Lambda_1\) and \(\Lambda_3\).
 Figures \ref{fig:lambda1-S6-CRPS}, \ref{fig:lambda2-S6-CRPS}. and \ref{fig:lambda3-S6-CRPS} consider Scenario \textbf{S6} in \(\Lambda_1\), \(\Lambda_2\), and \(\Lambda_3\), respectively.

Across all scenarios, {\bf UnifDR} is implemented using different estimation methods: the isotonic estimator (Section \ref{iso-reg-sec}) for \textbf{S1} and \textbf{S2}, the trend filtering estimator (Section \ref{TredFil-sec}) for \textbf{S3} and \textbf{S4}, and the Dense ReLU Networks method (Section \ref{DRN-section}) for \textbf{S5} and \textbf{S6}. Our proposed framework {\bf UnifDR} outperforms competing approaches across all scenarios, evaluation regions, and sample sizes, regardless of the specific estimation method employed.  These findings highlight the versatility and robustness of {\bf UnifDR} in  adapting to diverse structural patterns within the data. Overall, performance trends remain consistent across \(\Lambda_1\), \(\Lambda_2\), and \(\Lambda_3\), with minor variations due to differences in the underlying data distributions. This extended analysis further reinforces the effectiveness of the proposed methods under varying evaluation conditions.

\begin{figure}[]
\begin{center}
\centerline{
\begin{tabular}{cc}   
\includegraphics[width=0.47\columnwidth]{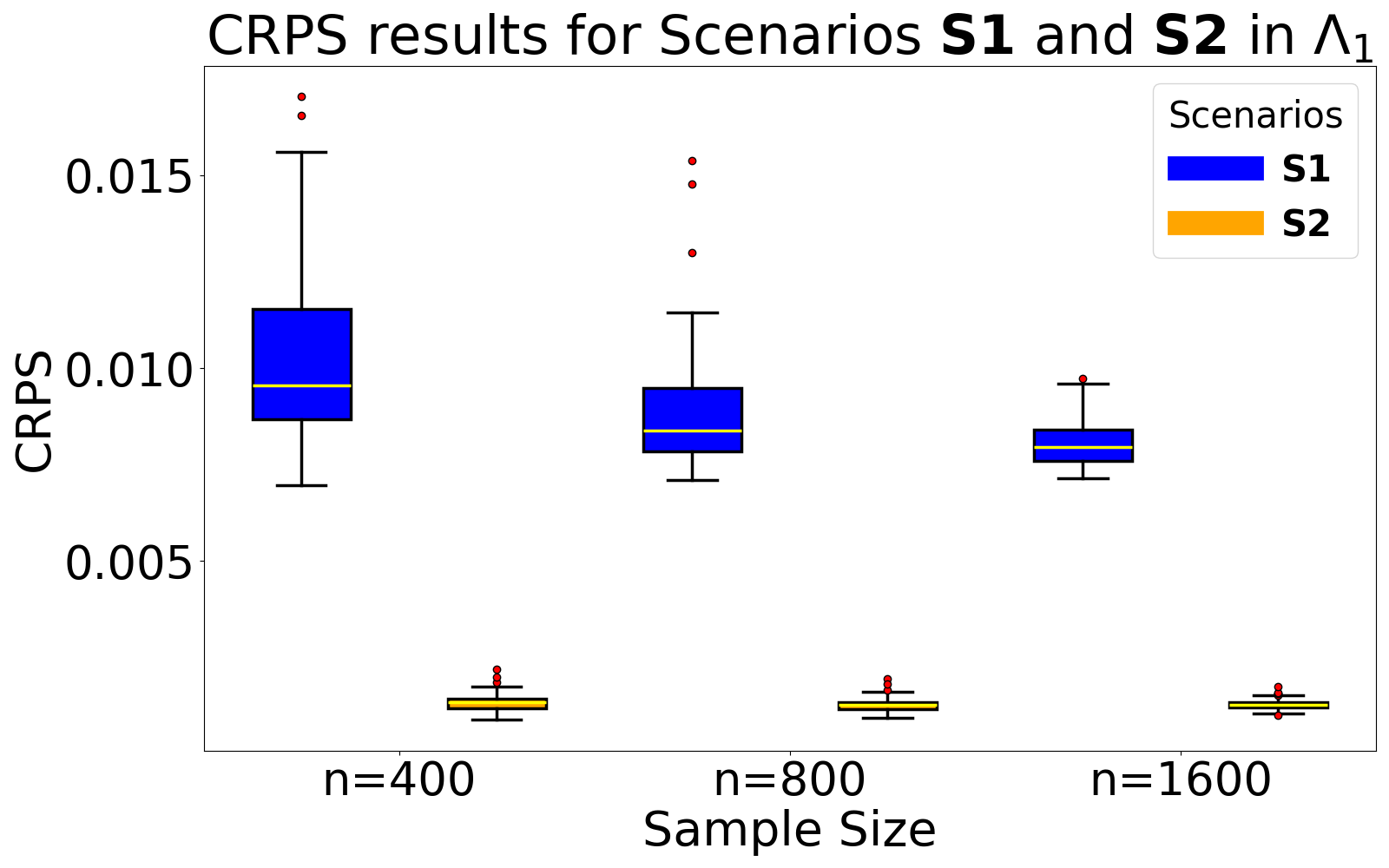} &
        \hspace{-1.1em}
\includegraphics[width=0.47\columnwidth]{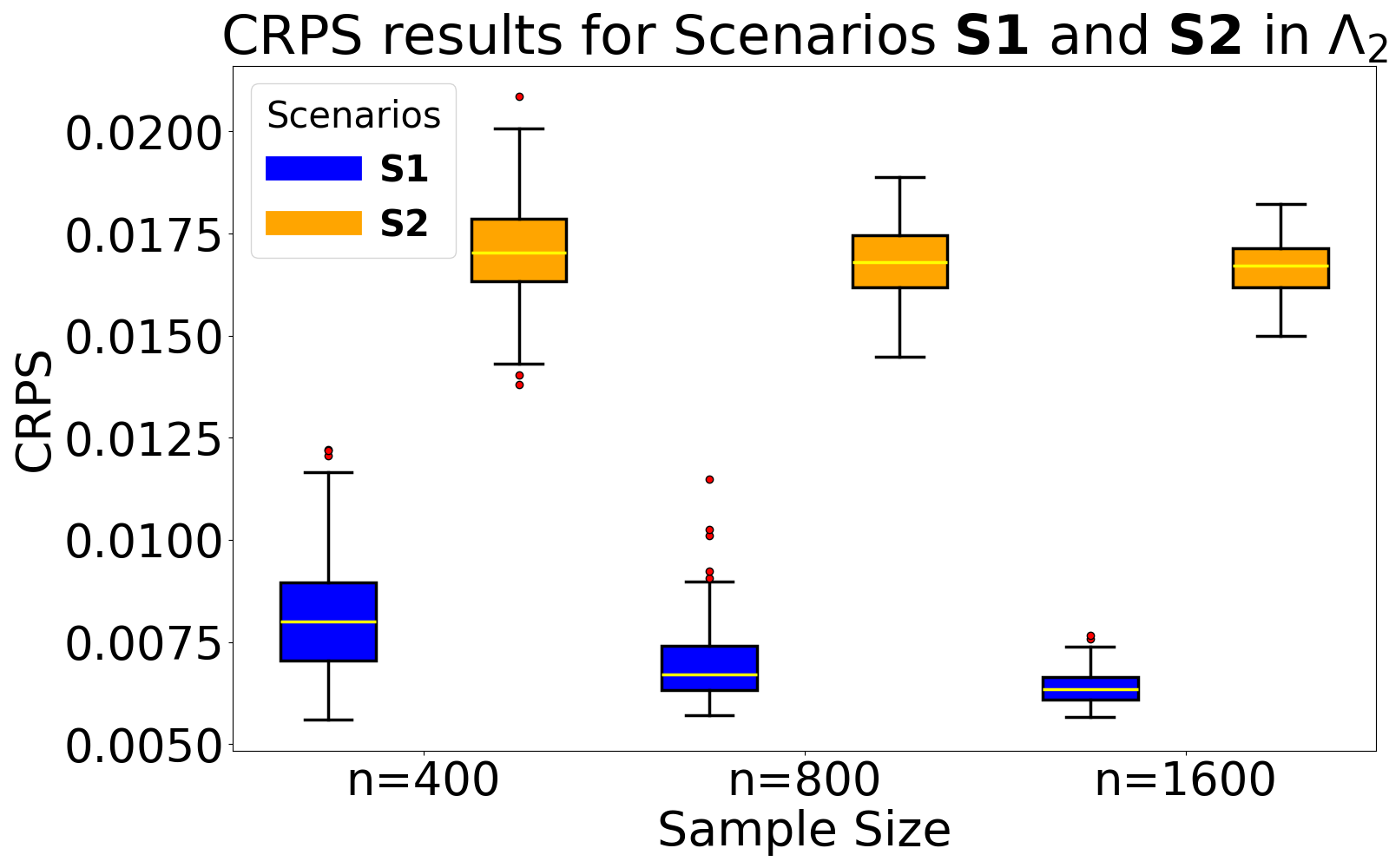} \\
    \multicolumn{2}{c}{\includegraphics[width=0.47\columnwidth]{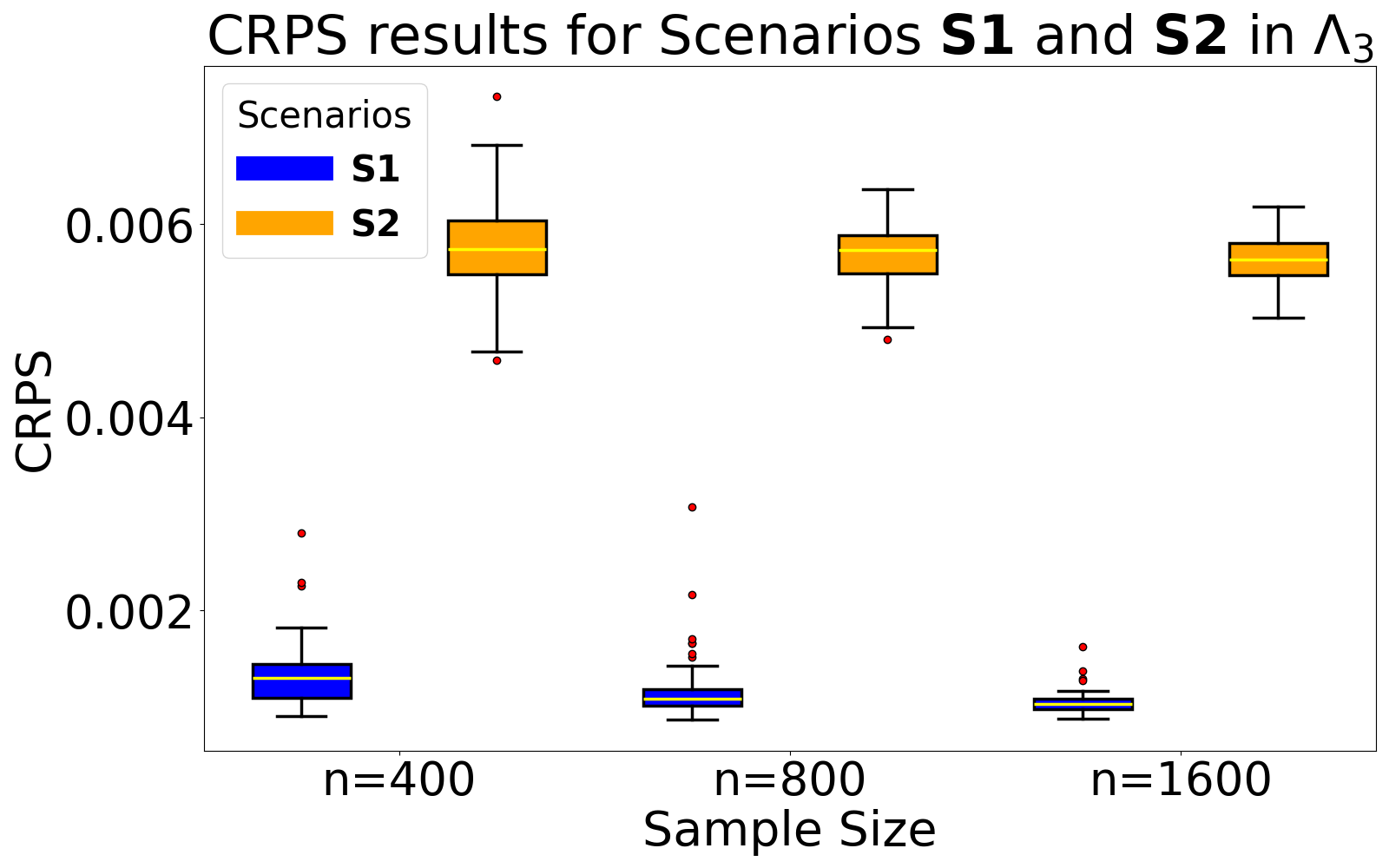}}
\end{tabular}}
\caption{Box plots for CRPS results in Scenarios \textbf{S1} and \textbf{S2}. The top row shows results for \(\Lambda_1\) (left) and \(\Lambda_2\) (right), while the bottom row displays results for \(\Lambda_3\).}
\label{fig:lambda1-lambda3-S1-S2}
\end{center}
\vskip -0.2in
\end{figure}

\begin{figure}[]
\begin{center}
\centerline{
\includegraphics[width=0.49\textwidth]{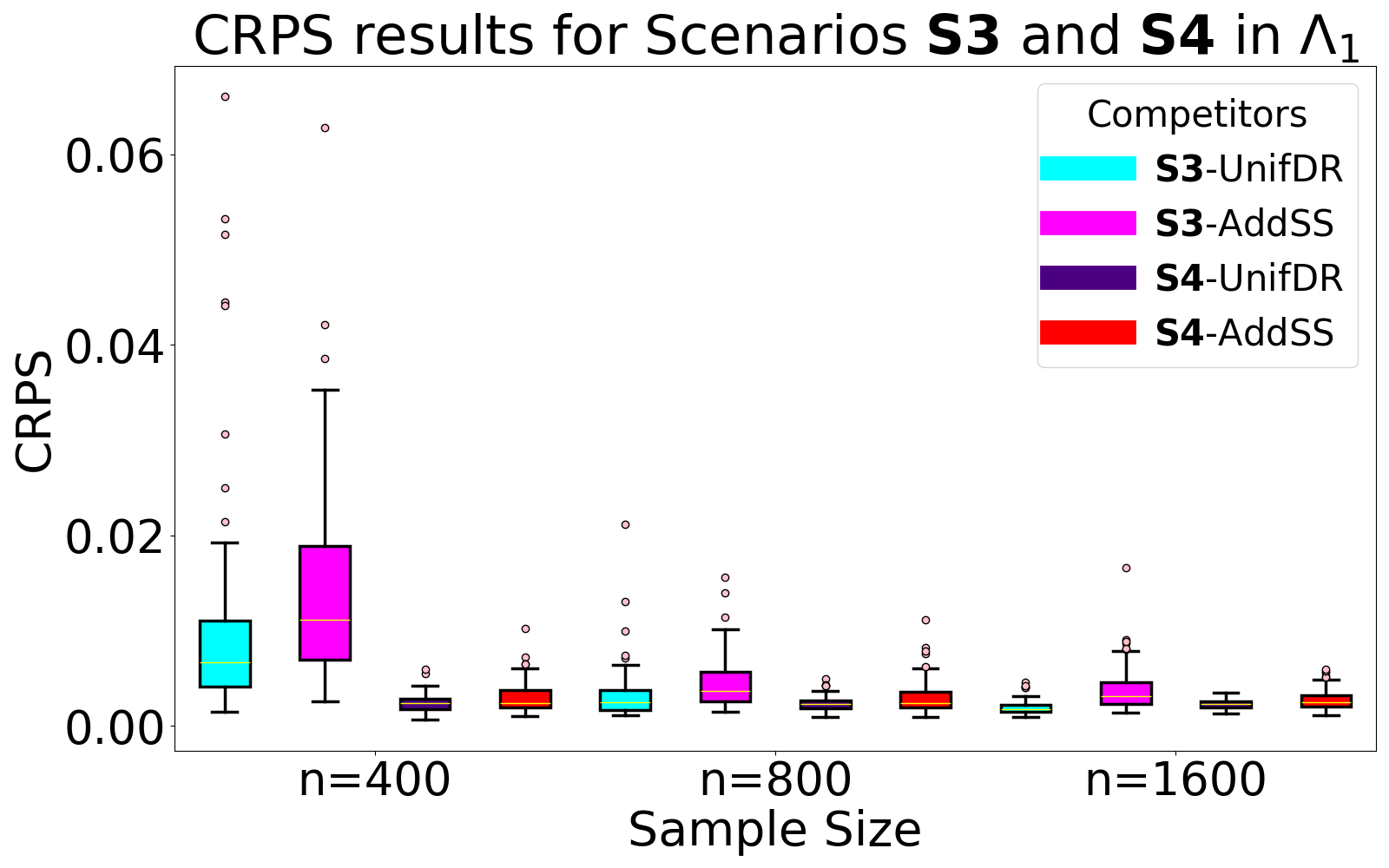}
\hspace{-0.5em}
\includegraphics[width=0.49\textwidth]{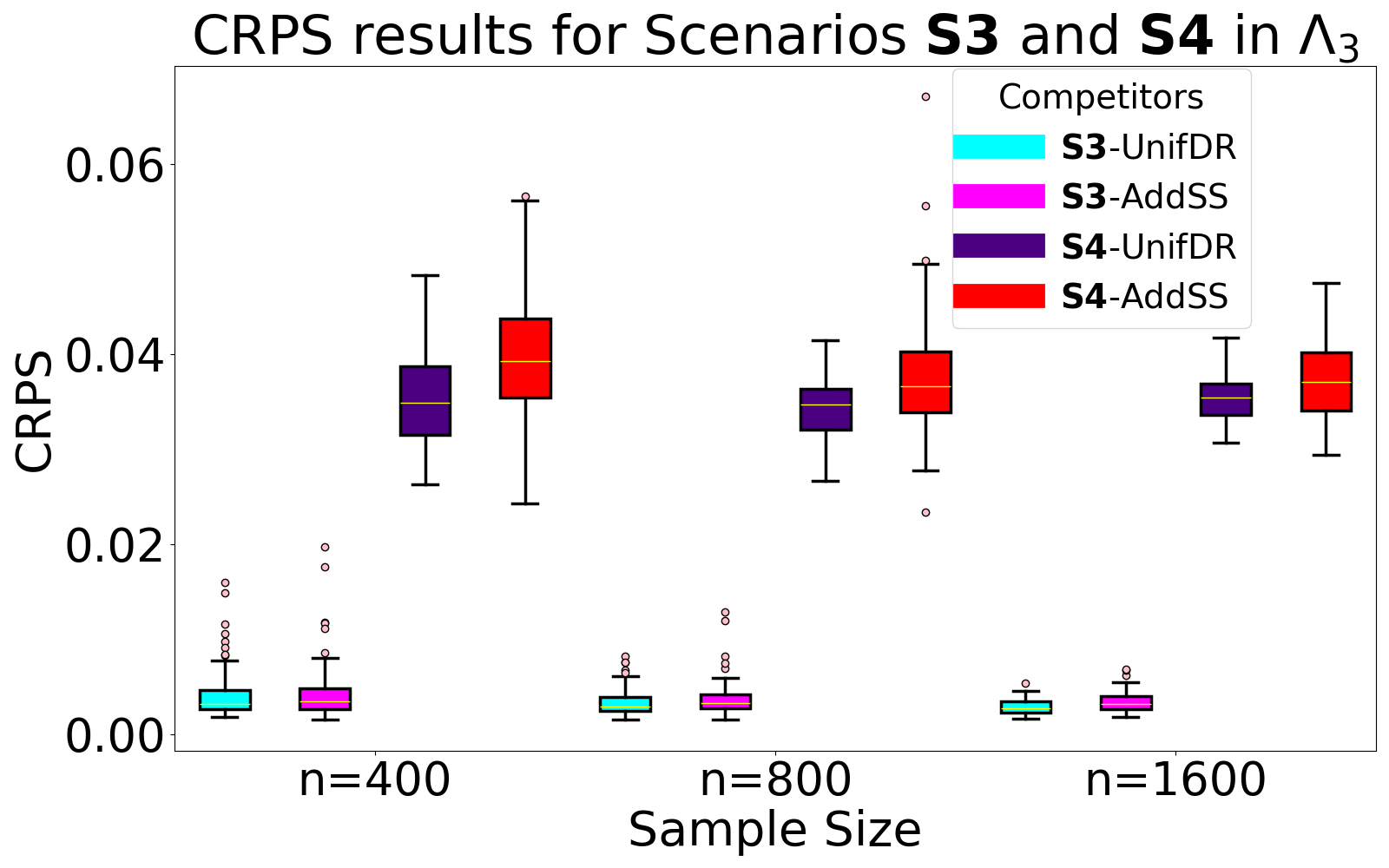}
}
\caption{Box plots for simulation results of {\bf S3-S4} for the CRPS metric. The row shows results for \(\Lambda_1\) (left) and \(\Lambda_2\) (right).}
\label{fig:lambda1-lambda3-S3-S4}
\end{center}
\vskip -0.2in
\end{figure}

\begin{figure}[]
\begin{center}
\centerline{
\includegraphics[width=0.5\textwidth]{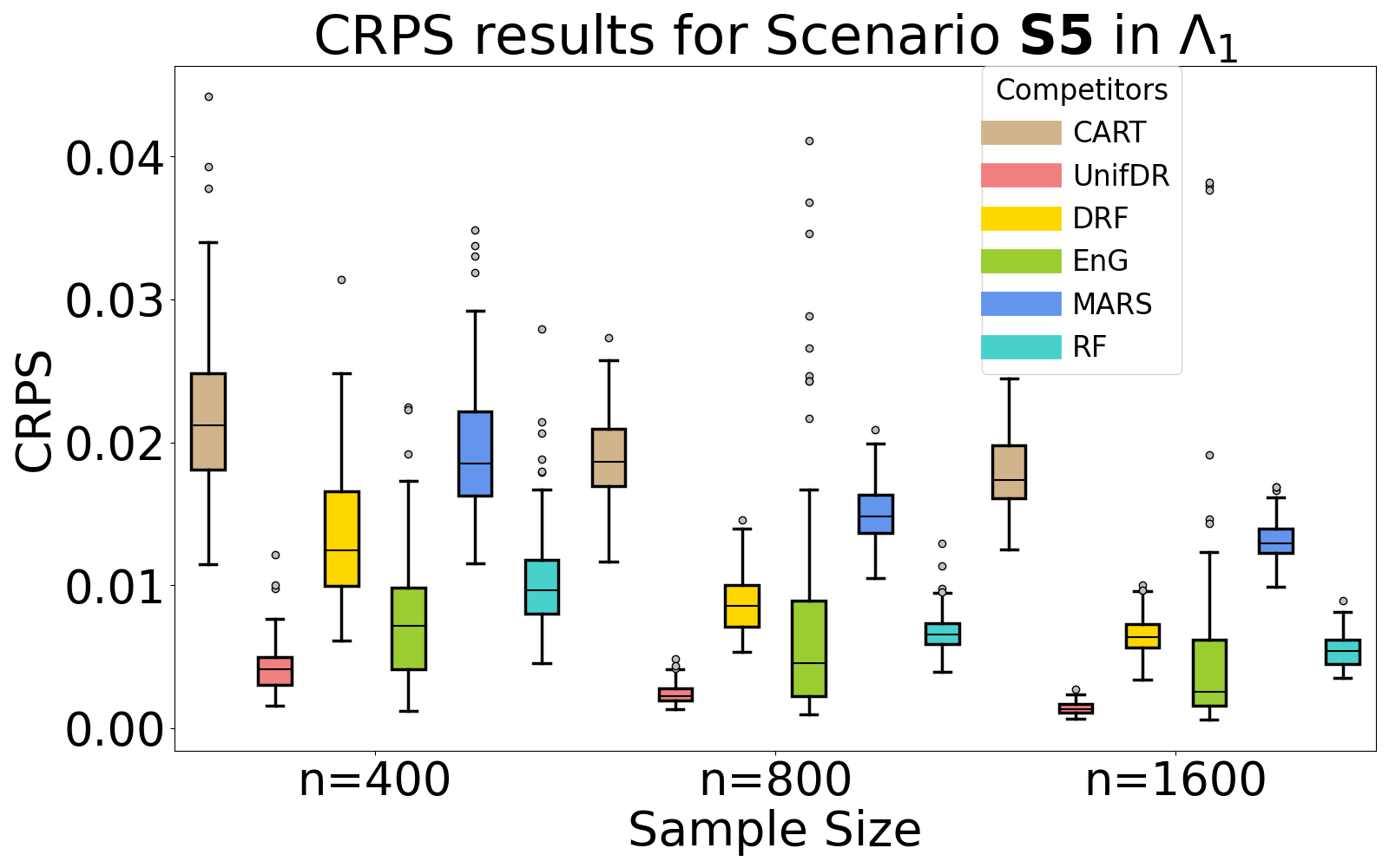}
\hspace{-0.5em}
\includegraphics[width=0.5\textwidth]{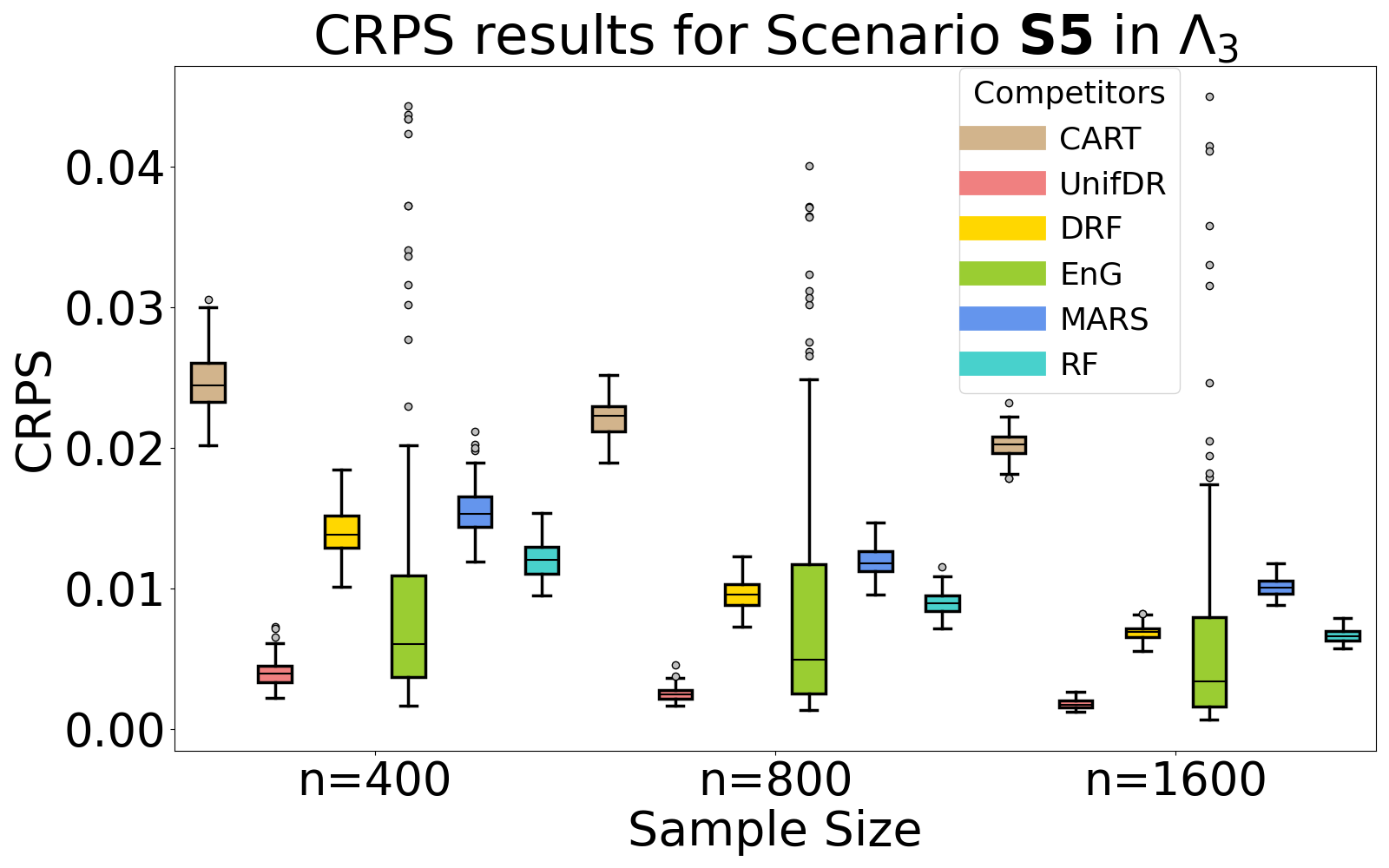}
}
\caption{Box plots for simulation results of {\bf S5} for the CRPS metric. The row shows results for \(\Lambda_1\) (left) and \(\Lambda_2\) (right).}
\label{fig:lambda1-lambda3-S5}
\end{center}
\vskip -0.2in
\end{figure}

\begin{figure}[]
\centering
\begin{tabular}{cc}
    \includegraphics[width=0.49\columnwidth]{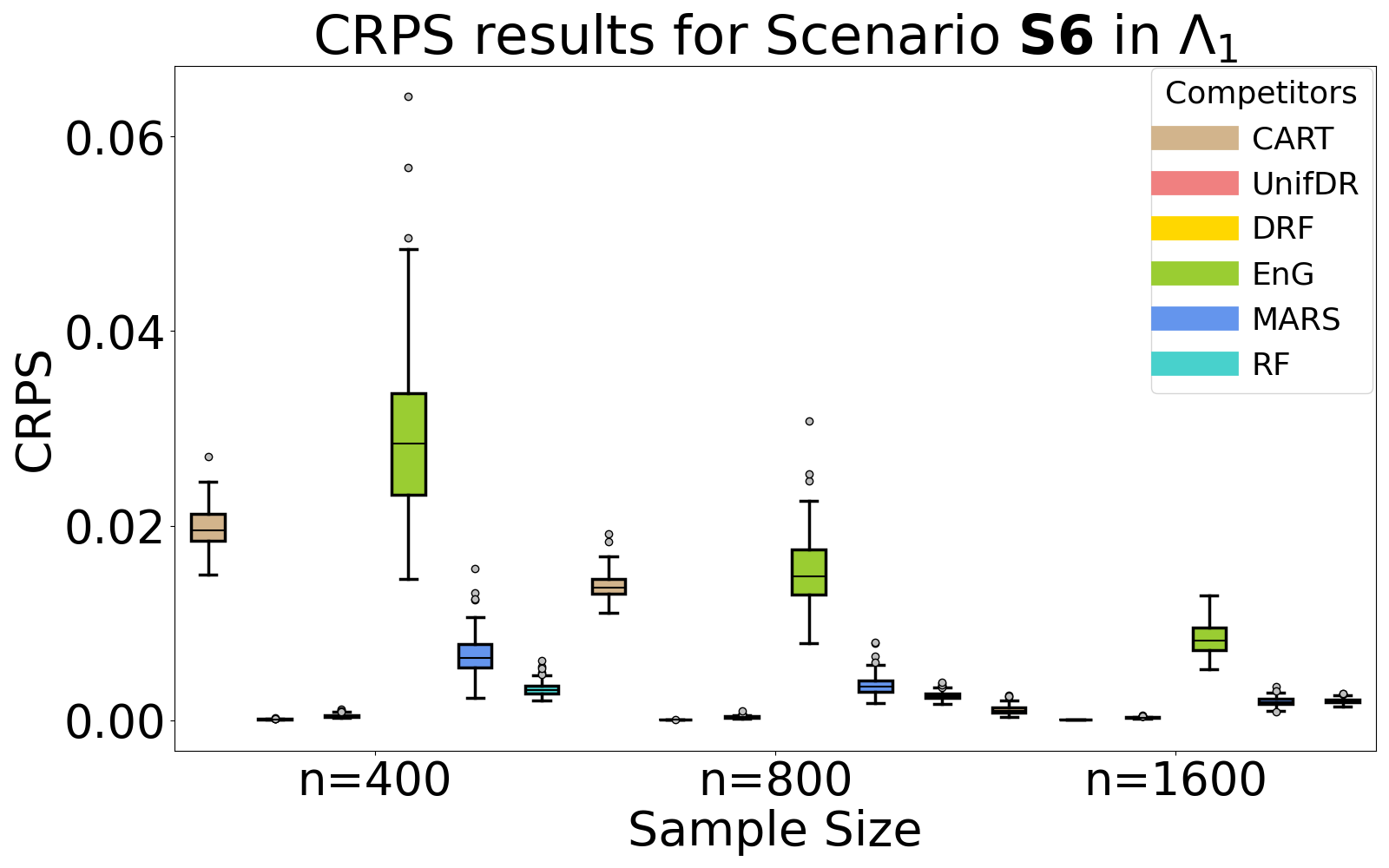} &
    \hspace{-1.1em}
    \includegraphics[width=0.49\columnwidth]{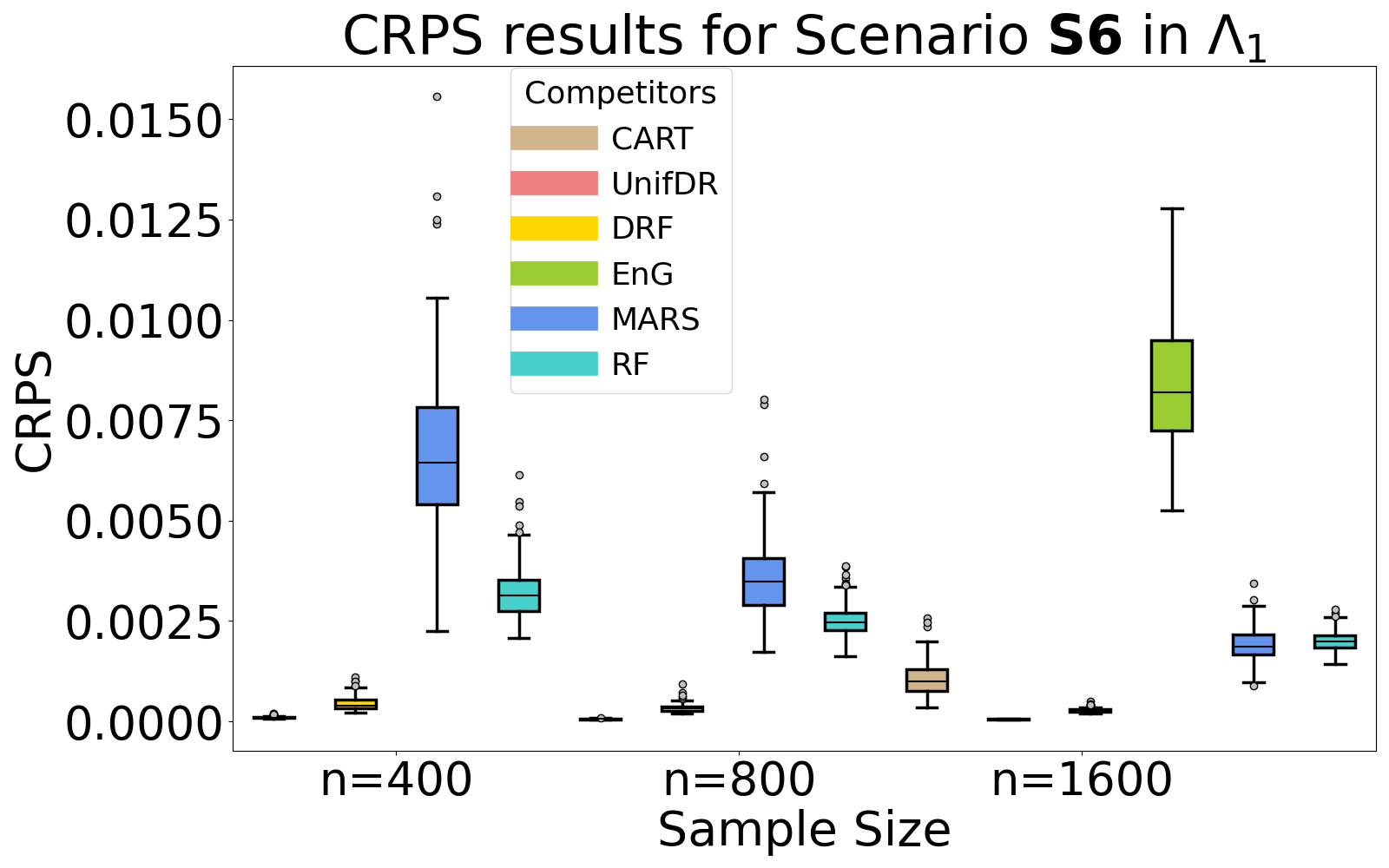} \\
    \multicolumn{2}{c}{\includegraphics[width=0.49\columnwidth]{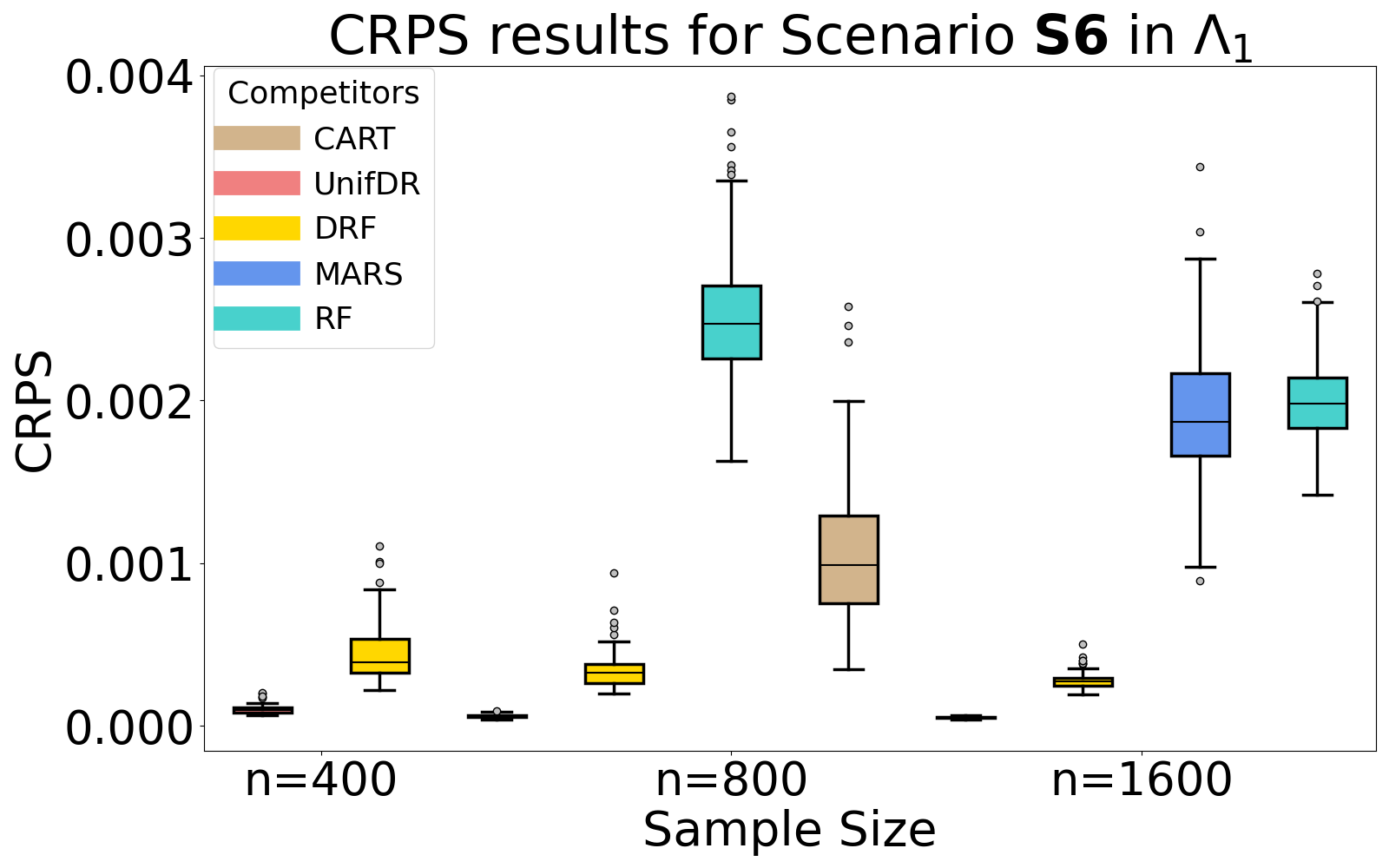}
    
    \includegraphics[width=0.49\columnwidth]{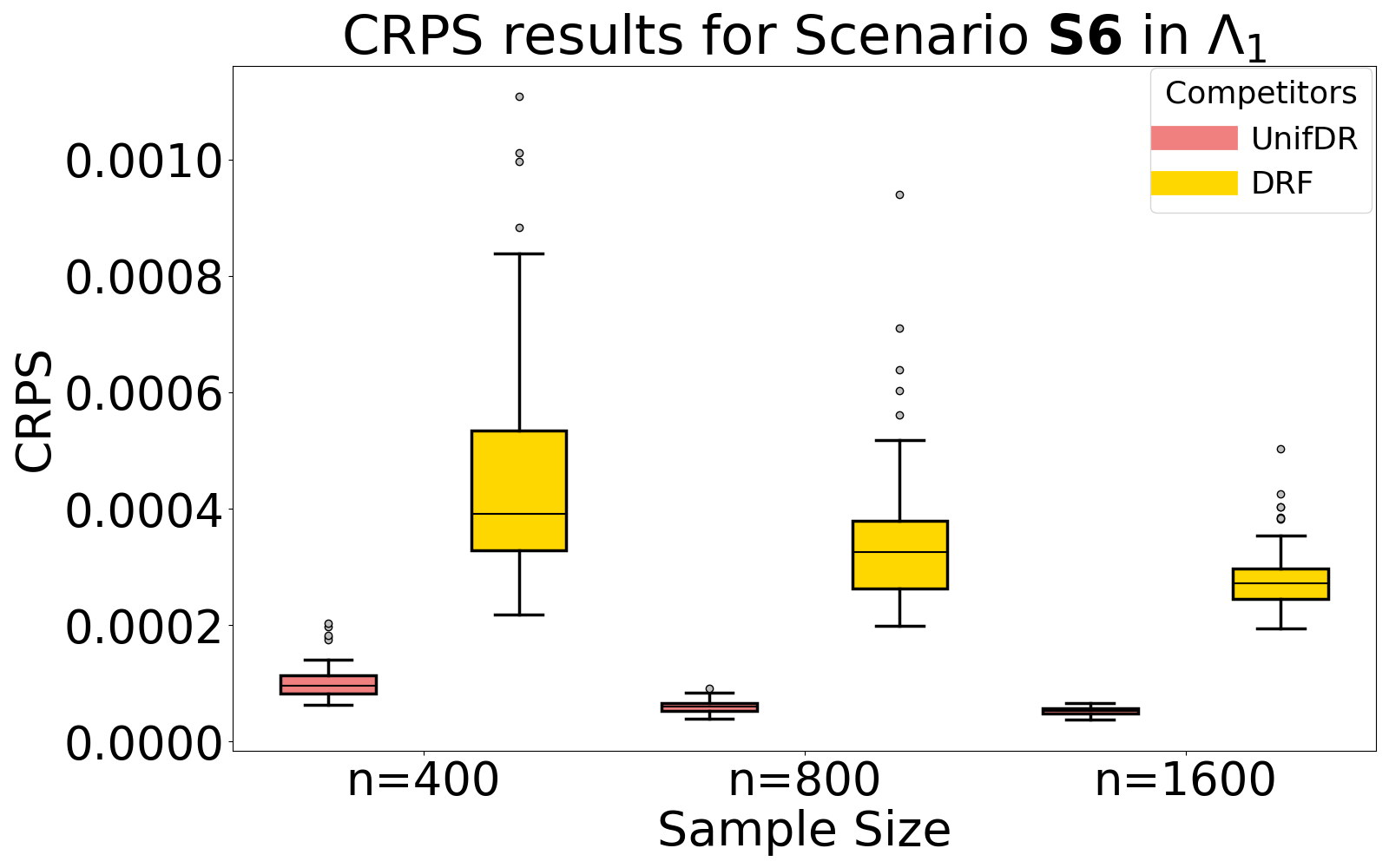}}
\end{tabular}
\caption{Box plots for simulation results of {\bf S6} for the CRPS metric for the set \(\Lambda_1\). The top row shows results for the all the competitors (left) and, competitors with median below $0.01$ (right). The bottom row displays results for competitors with median below \(0.0025\) (left), and best two competitors (right).}
\label{fig:lambda1-S6-CRPS}
\vskip -0.2in
\end{figure}

\begin{figure}[]
\centering
\begin{tabular}{cc}
    \includegraphics[width=0.47\columnwidth]{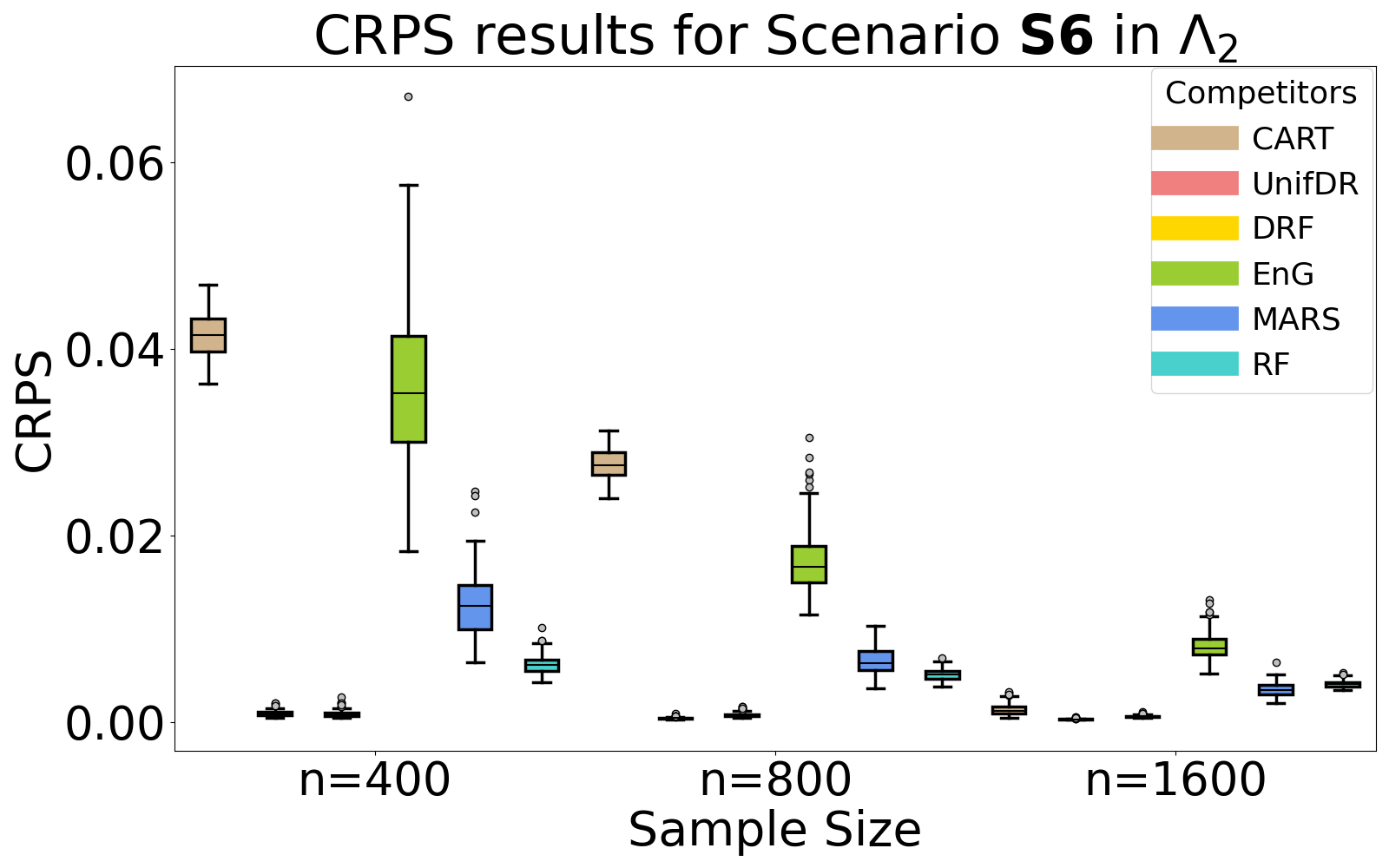} &
    \includegraphics[width=0.47\columnwidth]{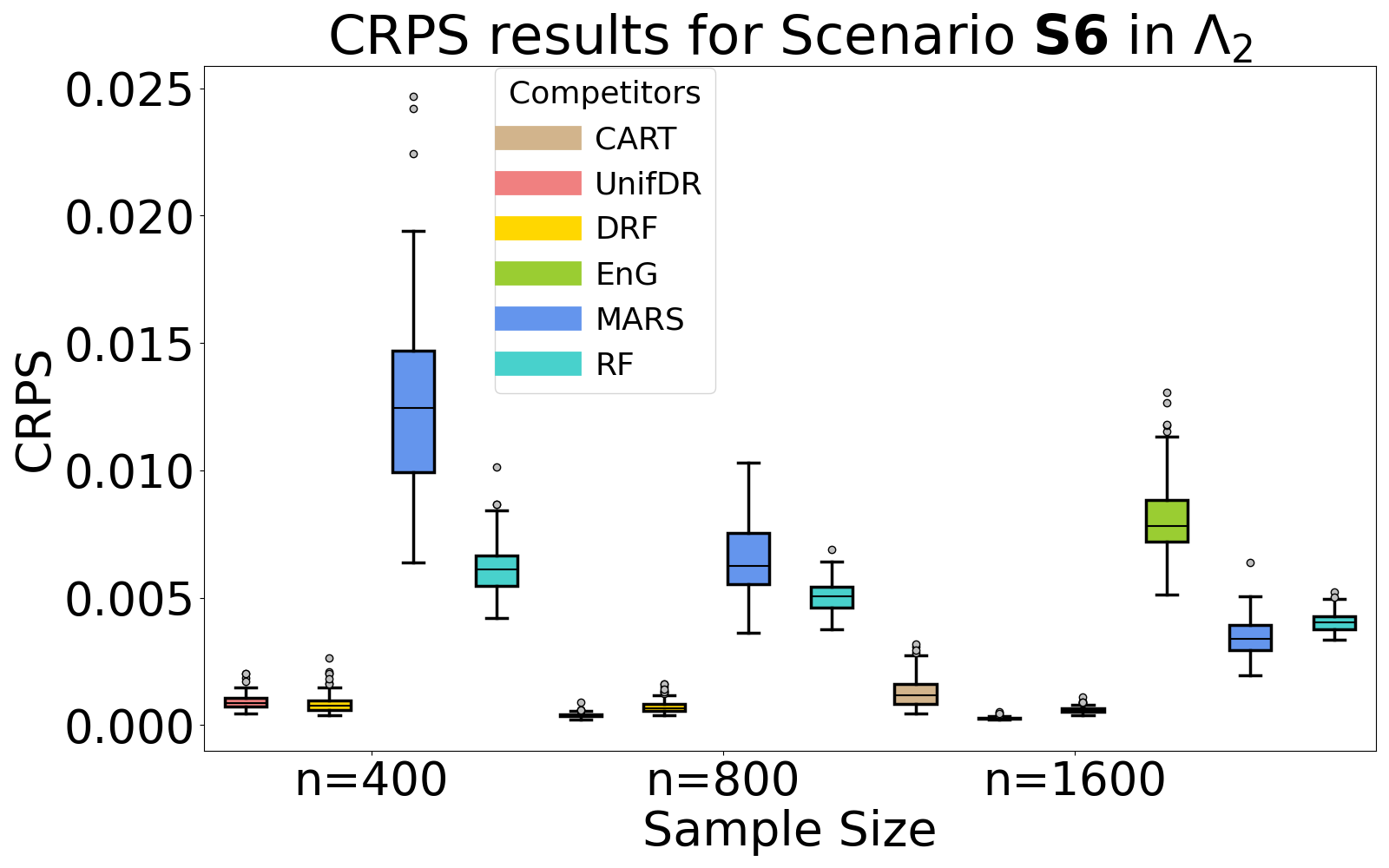} \\
    \multicolumn{2}{c}{\includegraphics[width=0.47\columnwidth]{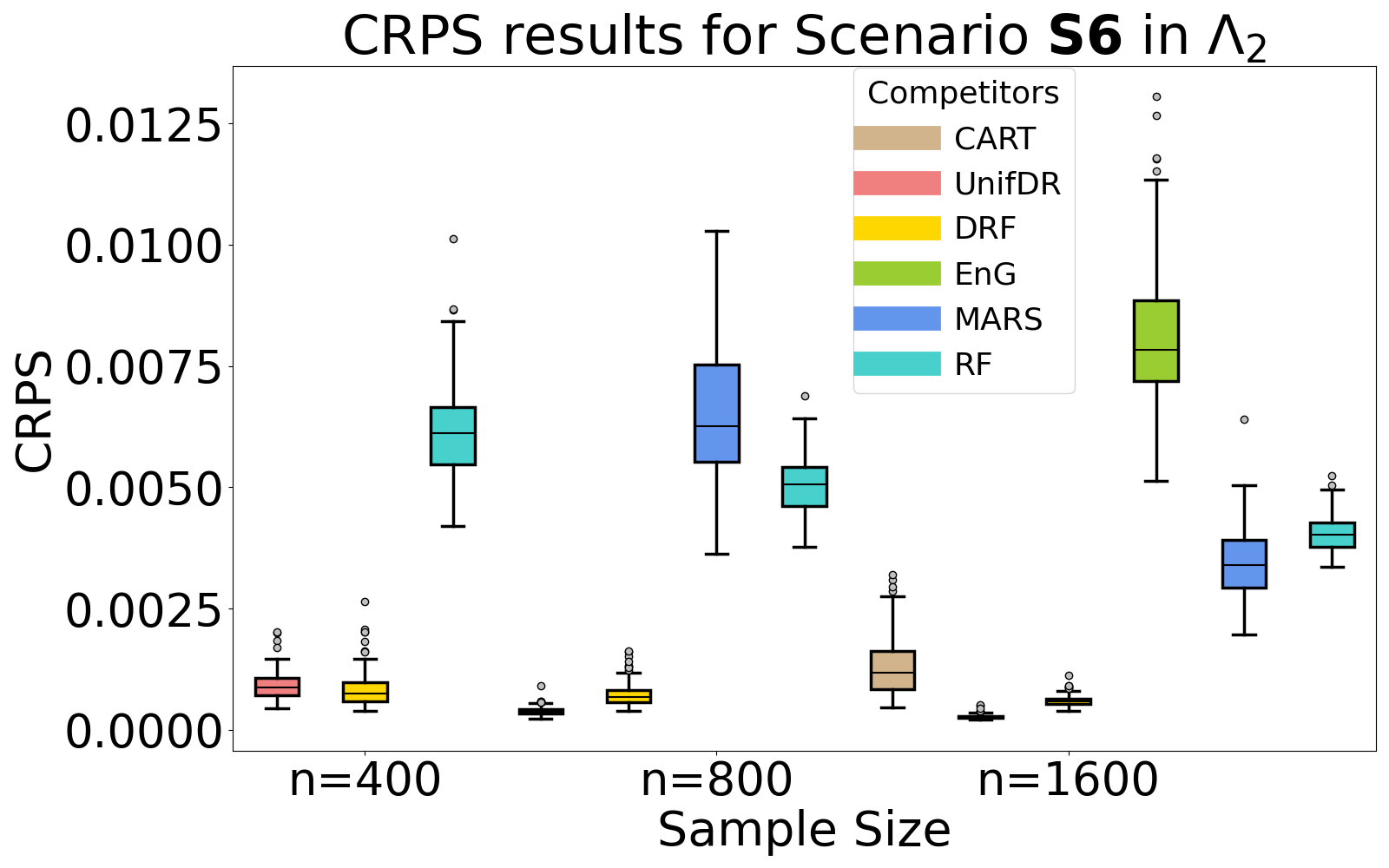}
    
    \includegraphics[width=0.47\columnwidth]{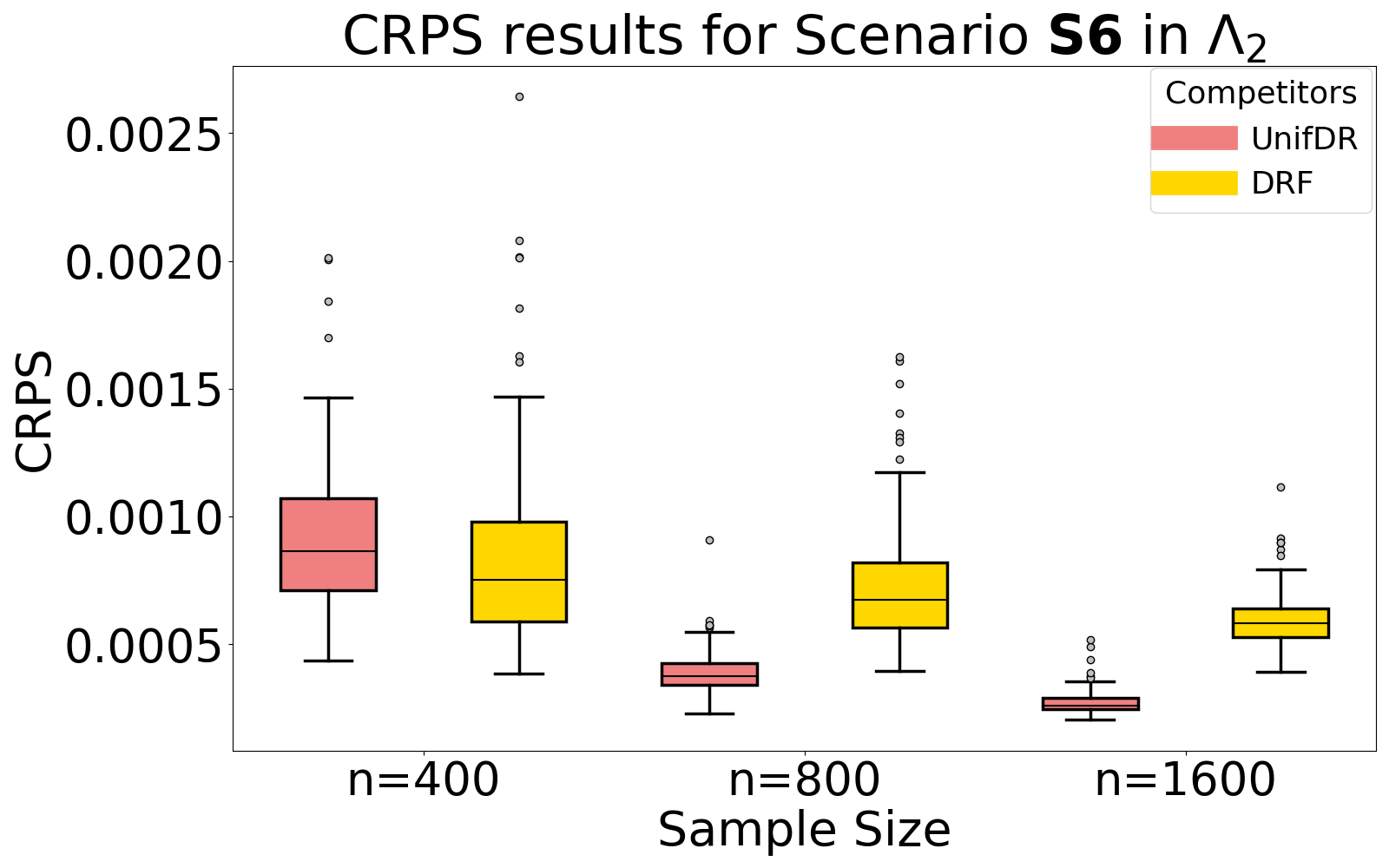}}
\end{tabular}
\caption{Box plots for CRPS results for {\bf S6} in $\Lambda_2$. The top row shows results for the all the competitors (left) and, competitors with median below $0.02$ (right). The bottom row displays results for competitors with median below \(0.01\) (left), and best two competitors (right).}
\label{fig:lambda2-S6-CRPS}
\end{figure}

\begin{figure}[]
\begin{center}
\centerline{
\includegraphics[width=0.5\textwidth]{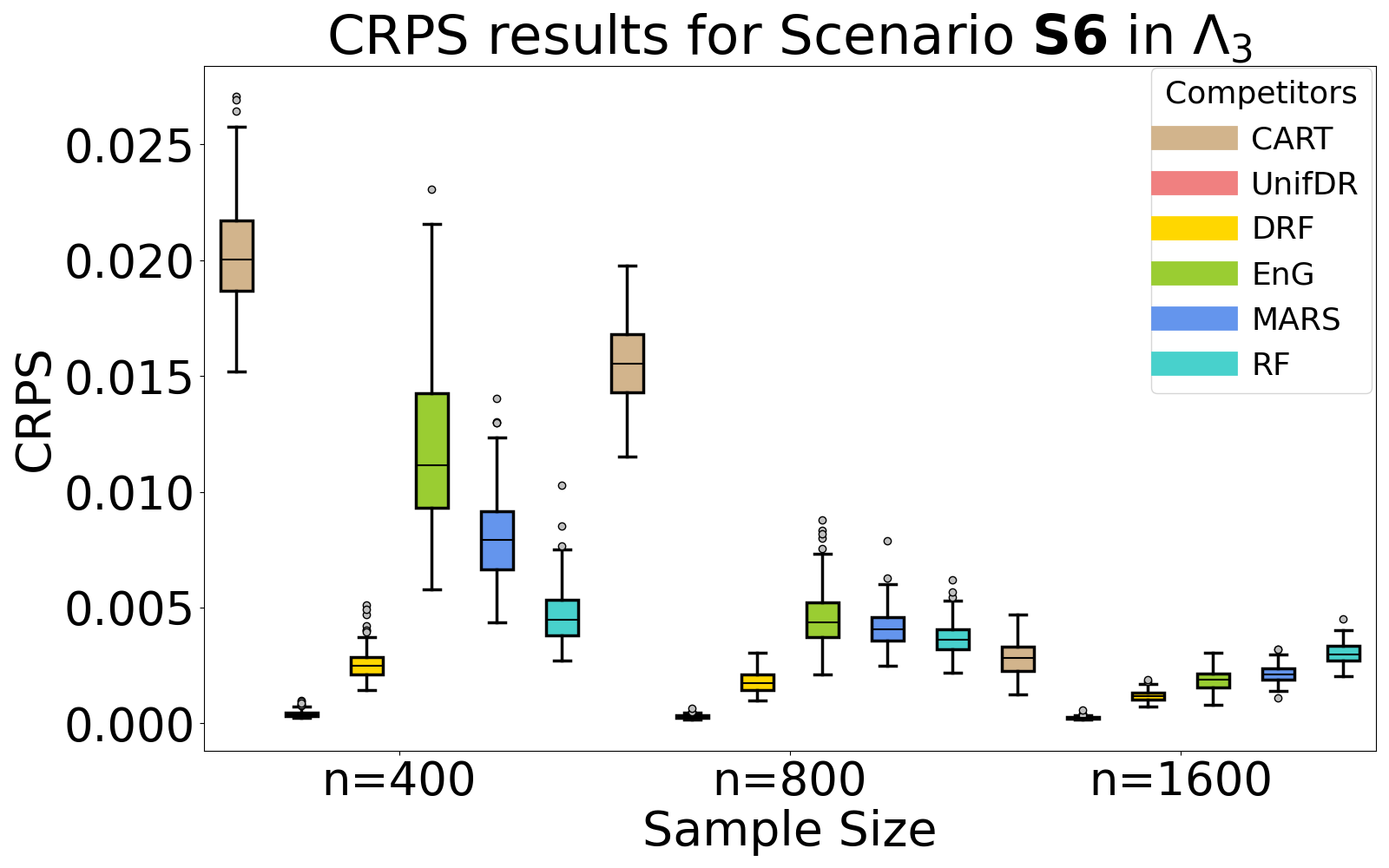}
\hspace{-0.5em}
\includegraphics[width=0.5\textwidth]{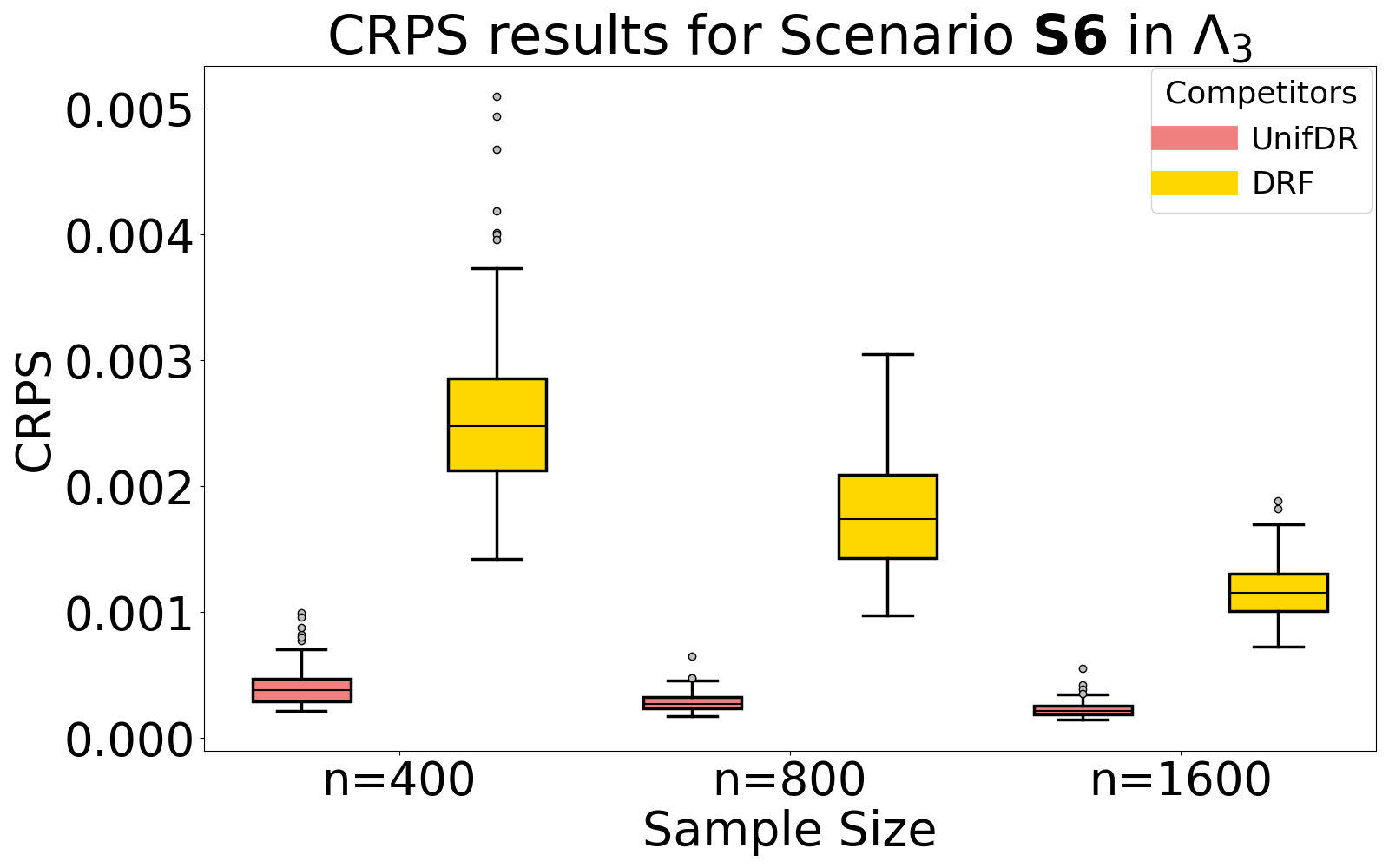}
}
\caption{Box plots for simulation results of {\bf S6} for the CRPS metric using evaluation set $\Lambda_3$. The left plot corresponds to all competitors performance, while the right plot corresponds to best two competitors.}
\label{fig:lambda3-S6-CRPS}
\end{center}
\vskip -0.2in
\end{figure}

\subsection{Additional Results for Maximum Squared Difference (MSD) Metric}
\label{msd-results}

This appendix extends the results presented in Section \ref{simu-data} and Appendix \ref{lambda1-lambda3} by providing evaluations of the Maximum Squared Difference (MSD) metric across all scenarios for the evaluation sets \(\Lambda_1\), \(\Lambda_2\), and \(\Lambda_3\). The MSD metric measures the worst-case discrepancy between the estimated and true cumulative distribution functions (CDFs), offering a stringent assessment of model accuracy and robustness.

Figure \ref{fig:composite} displays box plots of the MSD results for Scenarios \textbf{S1} and \textbf{S2} across all evaluation sets. As outlined in Section \ref{simu-data}, {\bf UnifDR} employs isotonic regression for these scenarios, which lacks direct competitors. The results indicate a decreasing trend in MSD values as the sample size increases, demonstrating the consistency and improved accuracy of isotonic regression with larger datasets. Moreover, variations in MSD across \(\Lambda_1\), \(\Lambda_2\), and \(\Lambda_3\) reflect natural differences in the underlying data distributions.

Figure \ref{fig:composite-S3-S} presents MSD results for Scenarios \textbf{S3} and \textbf{S4}, where {\bf UnifDR} employs trend filtering and is compared against the additive smoothing splines (AddSS) method. Across all evaluation sets and sample sizes, {\bf UnifDR} consistently outperforms AddSS, demonstrating its ability to adapt to complex structural variations. Notably, trend filtering exhibits particularly strong performance in regions with sparse or heavy-tailed data distributions.

Figures \ref{fig:composite-S5} to \ref{fig:lambda3-S6-MSD} summarize MSD results for Scenarios \textbf{S5} and \textbf{S6}, where {\bf UnifDR} is implemented via Dense ReLU Networks (Section \ref{DRN-section}). The results further confirm the superiority of {\bf UnifDR} over all competing methods, maintaining its advantage across different sample sizes and evaluation sets. As observed with the CRPS metric, {\bf UnifDR} effectively captures diverse structural patterns and remains robust in challenging scenarios involving data sparsity and heavy tails.

The inclusion of MSD results provides a comprehensive performance evaluation of {\bf UnifDR}, reinforcing its effectiveness and reliability across various experimental conditions.

\begin{figure}[]
\centering
\begin{tabular}{cc}   
\includegraphics[width=0.47\columnwidth]{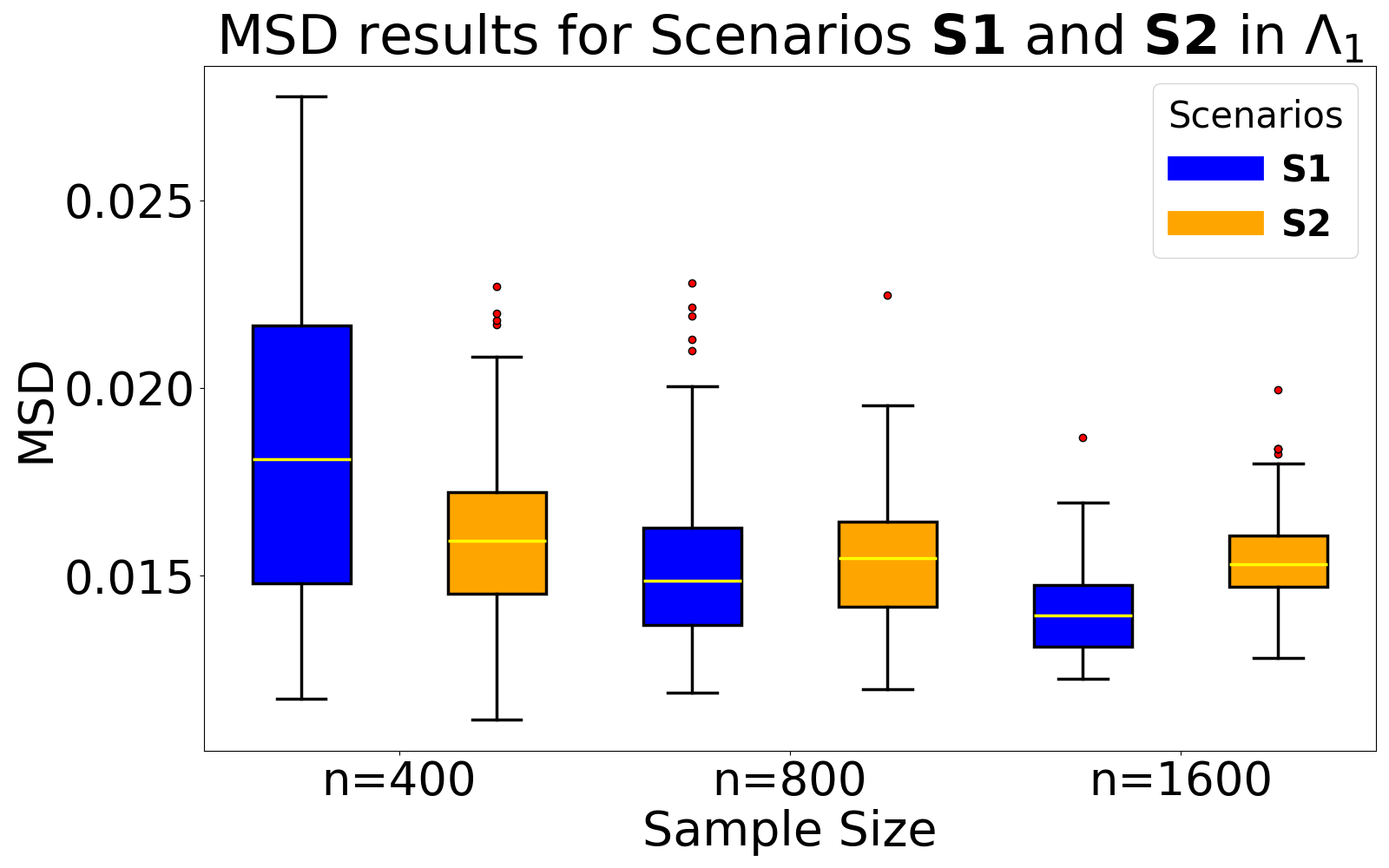} &
        \hspace{-1.1em}
\includegraphics[width=0.47\columnwidth]{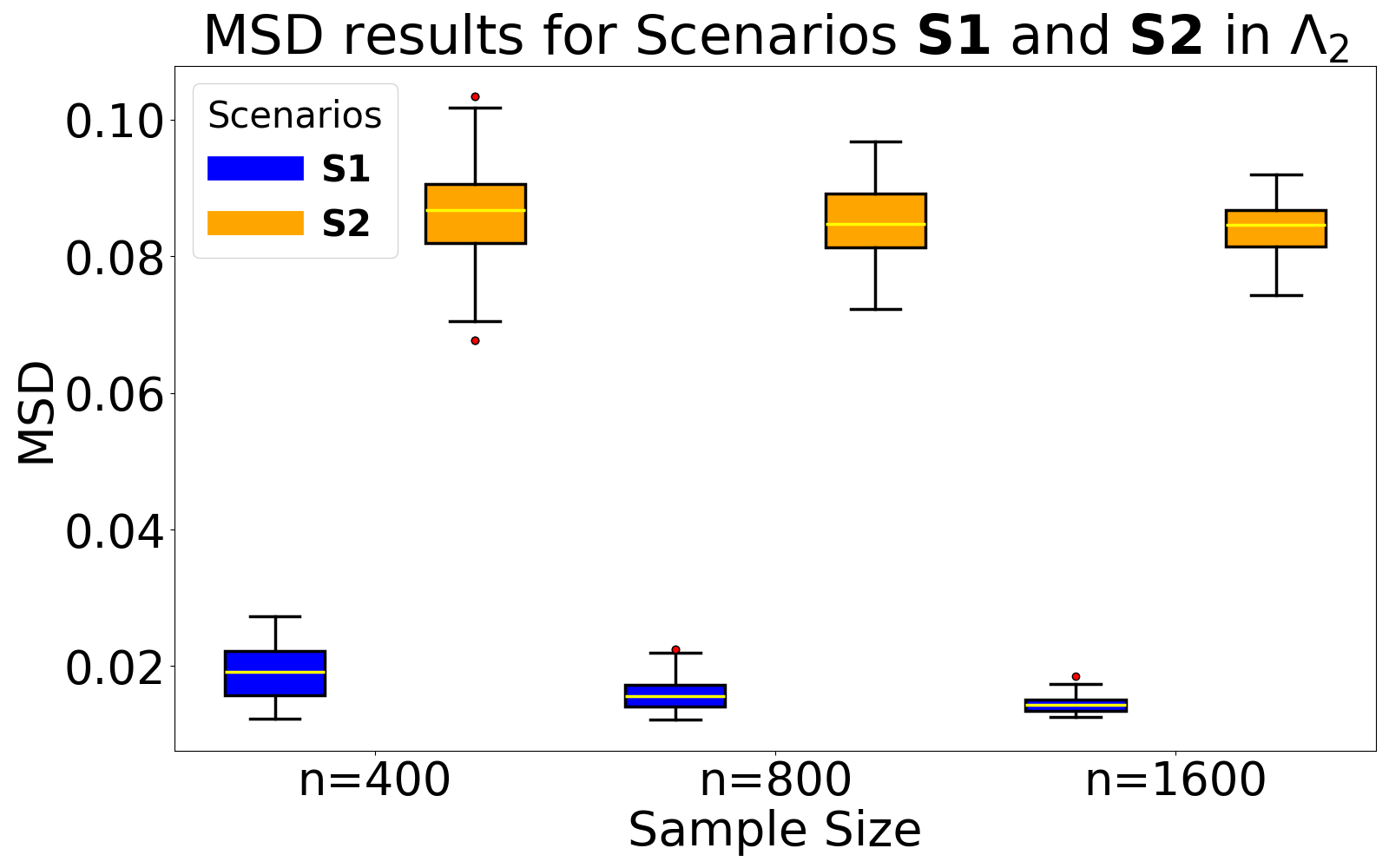} \\
    \multicolumn{2}{c}{\includegraphics[width=0.47\columnwidth]{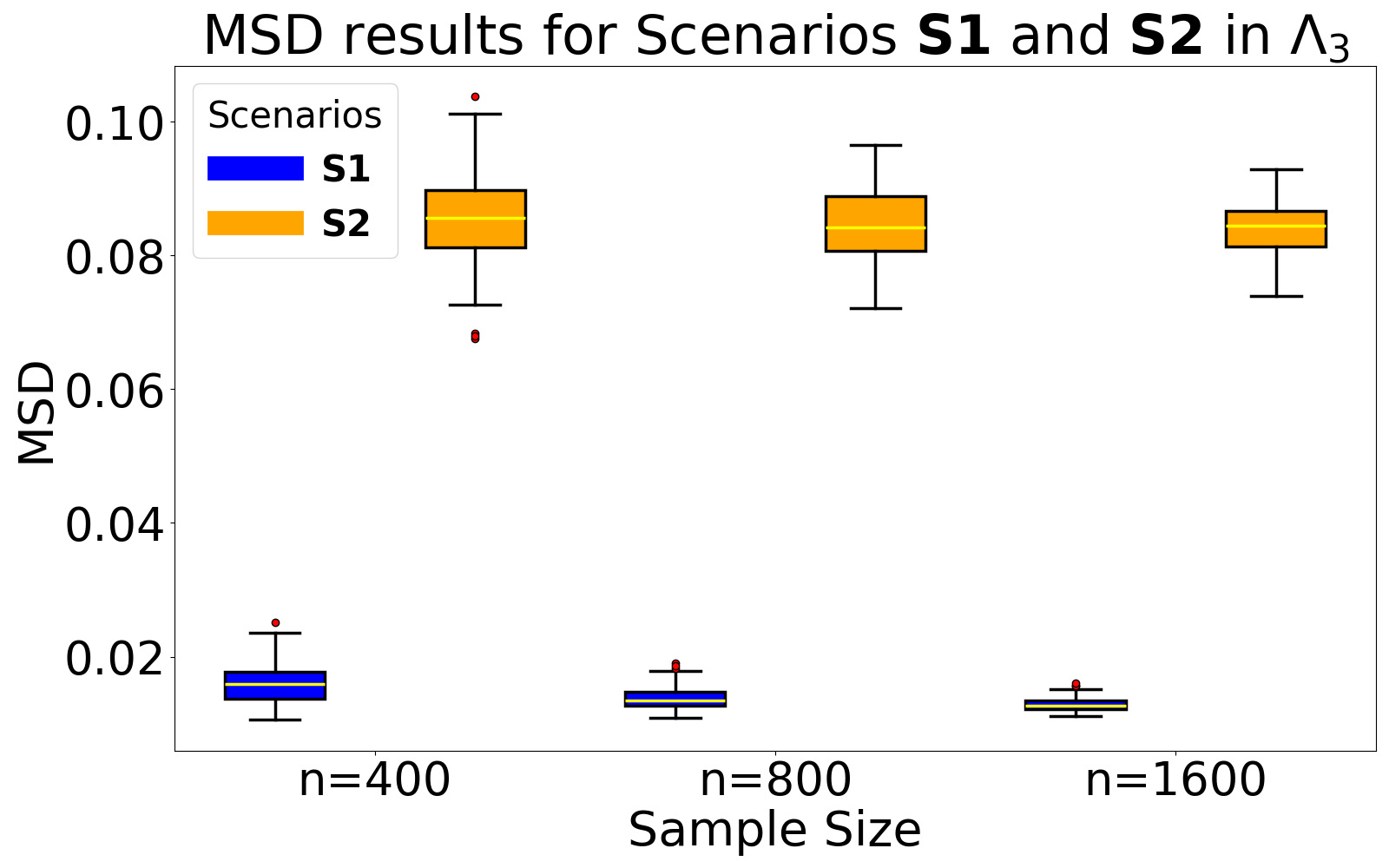}}
\end{tabular}
\caption{Box plots for MSD results in Scenarios \textbf{S1} and \textbf{S2}. The top row shows results for \(\Lambda_1\) (left) and \(\Lambda_2\) (right), while the bottom row displays results for \(\Lambda_3\).}
\label{fig:composite}
\end{figure}

\begin{figure}[]
\centering
\begin{tabular}{cc}    \includegraphics[width=0.45\columnwidth]{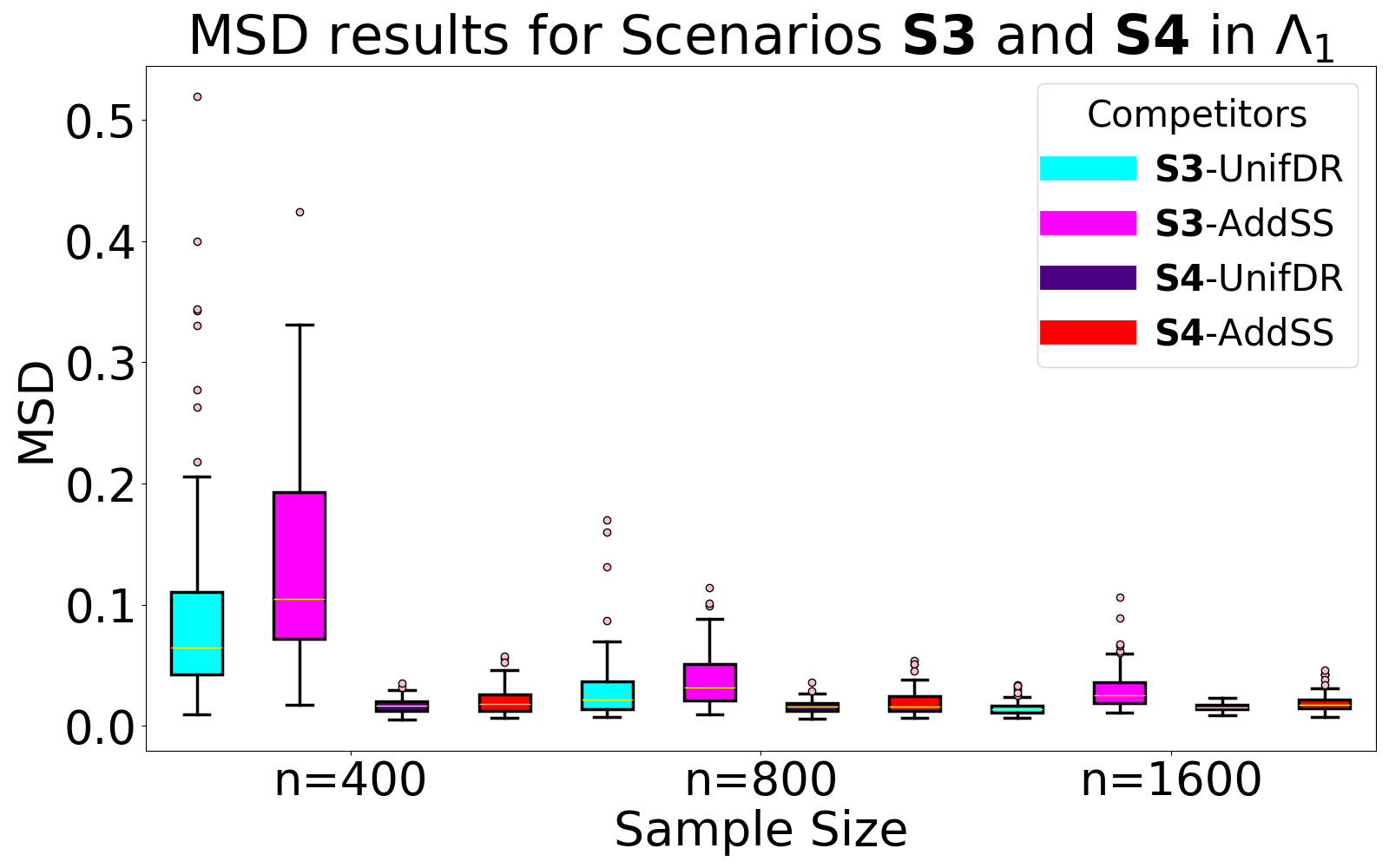} &
        \hspace{-1.1em}
\includegraphics[width=0.45\columnwidth]{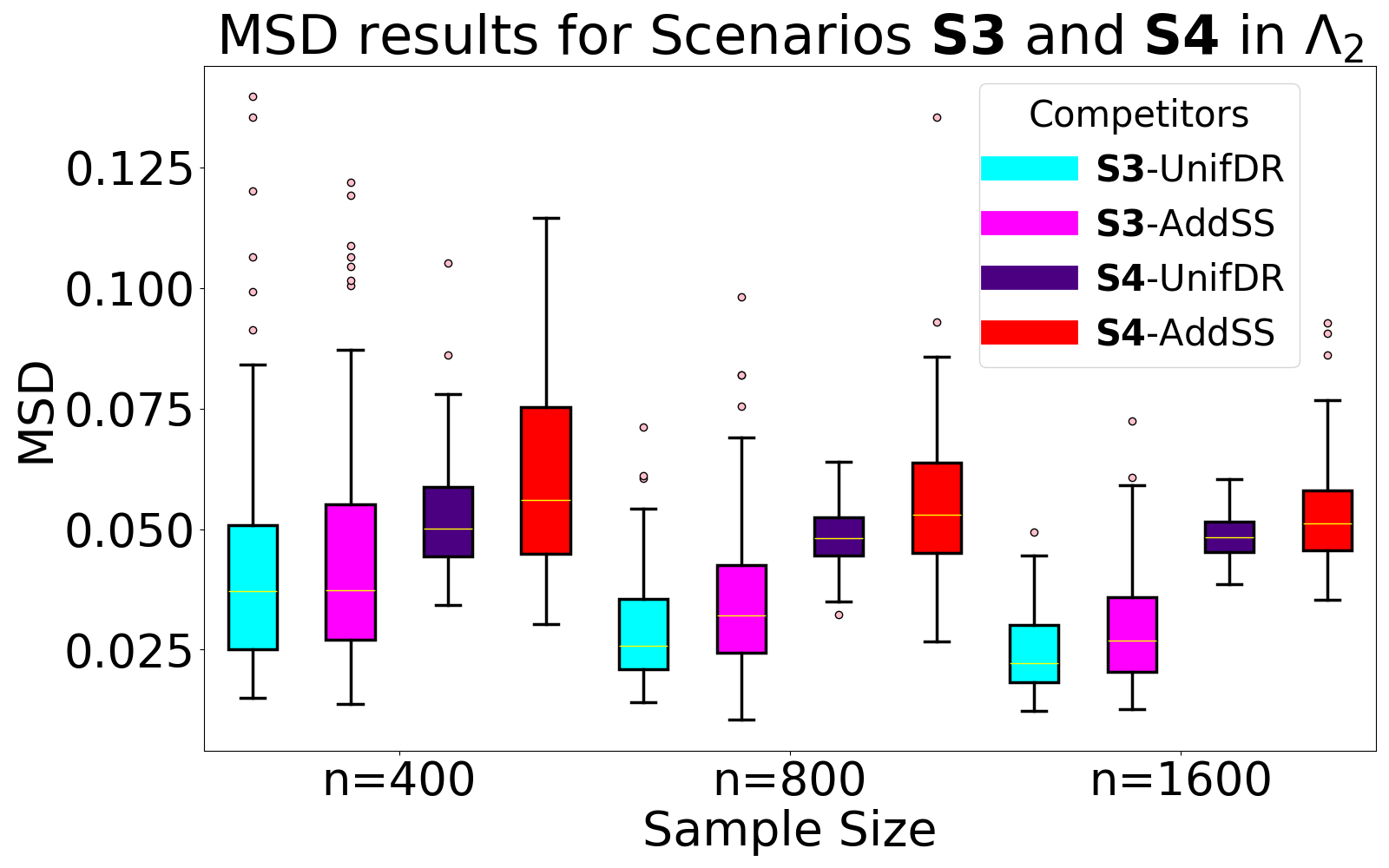} \\
    \multicolumn{2}{c}{\includegraphics[width=0.45\columnwidth]{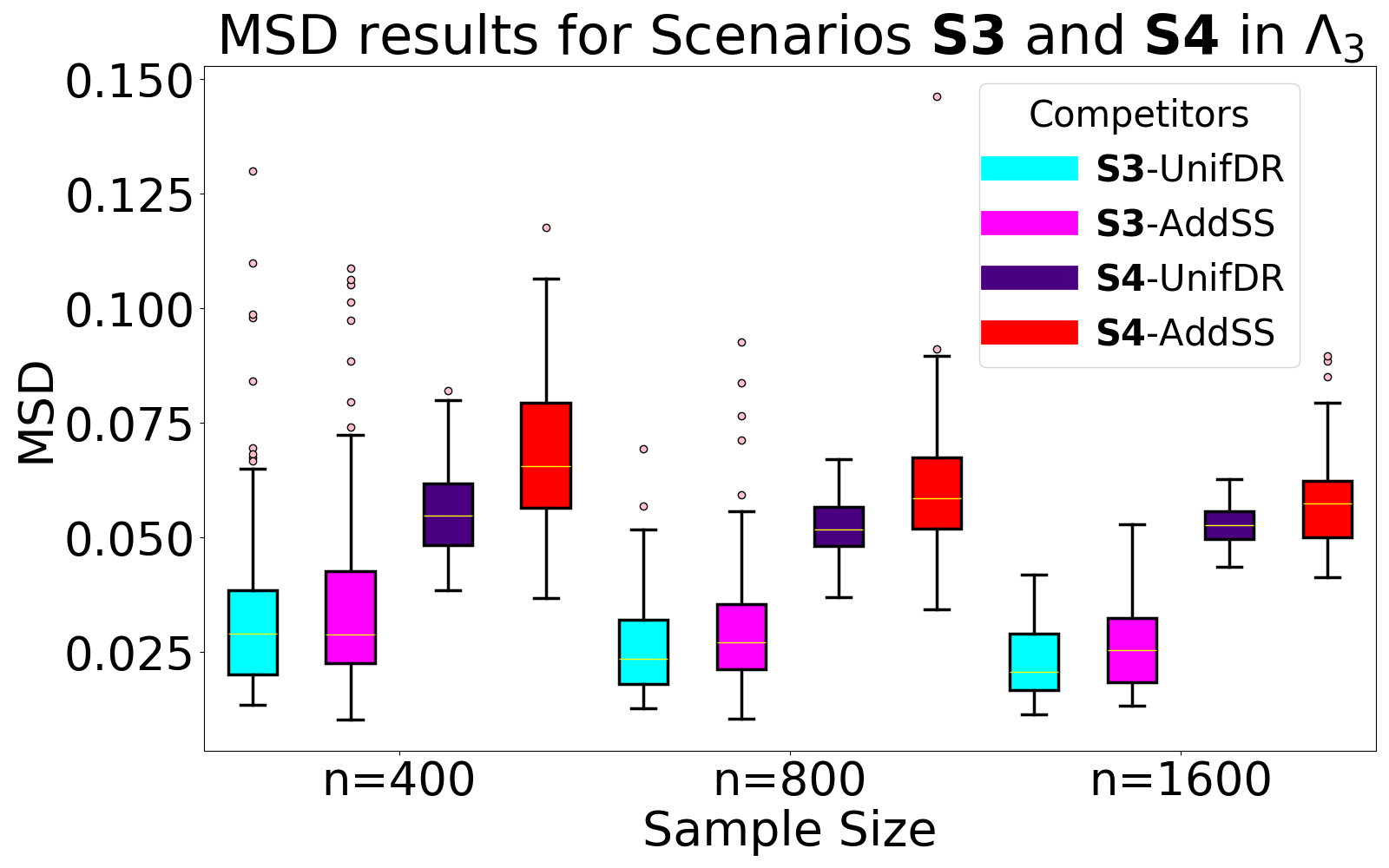}}
\end{tabular}
\caption{Box plots for MSD in \textbf{S3} and \textbf{S4}. The top row shows results for \(\Lambda_1\) (left) and \(\Lambda_2\) (right), while the bottom row displays results for \(\Lambda_3\).}
\label{fig:composite-S3-S}
\vskip -0.2in
\end{figure}

\begin{figure}[]
\centering
\begin{tabular}{cc}
    \includegraphics[width=0.47\columnwidth]{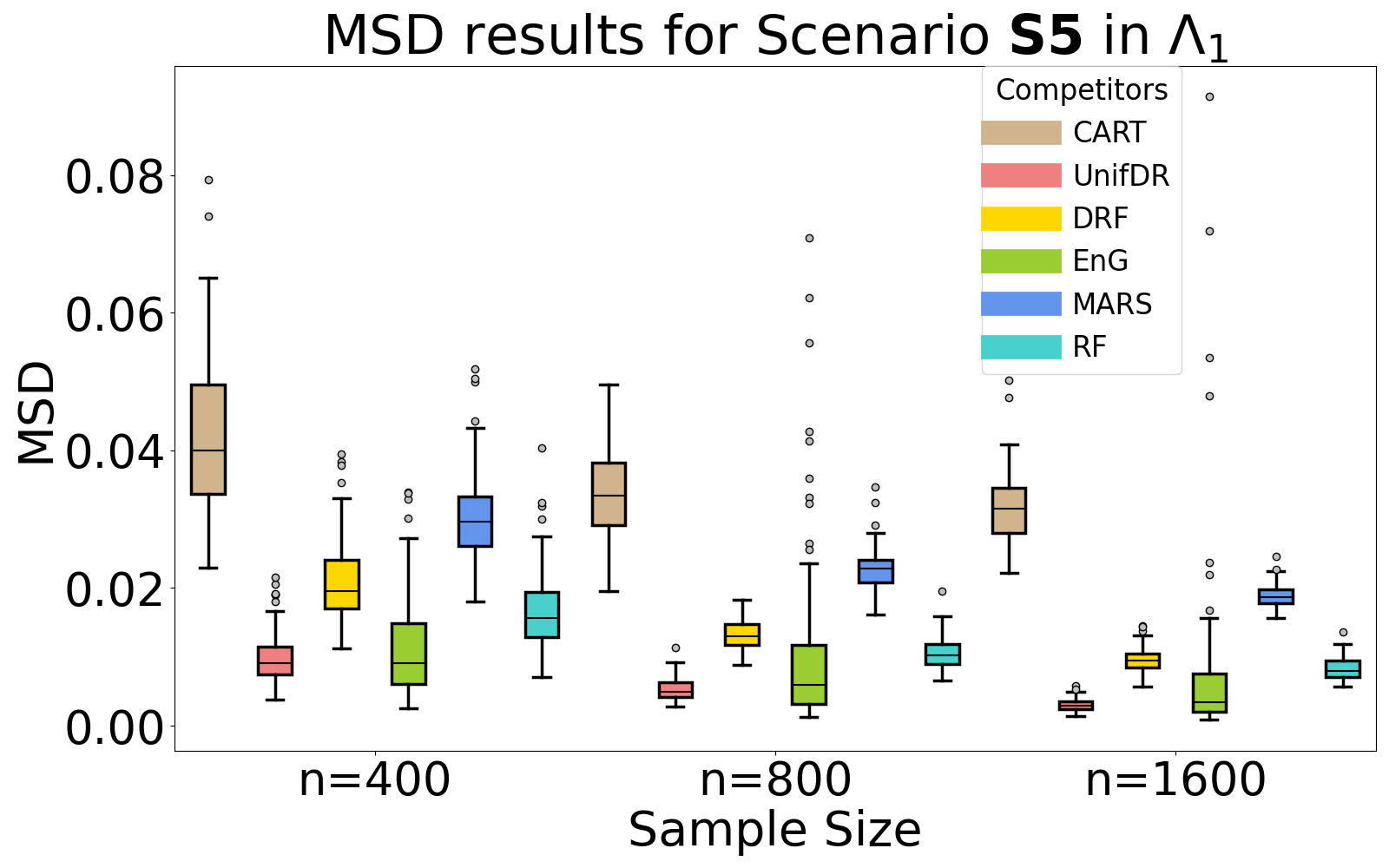} &
        \hspace{-1.1em}
\includegraphics[width=0.47\columnwidth]{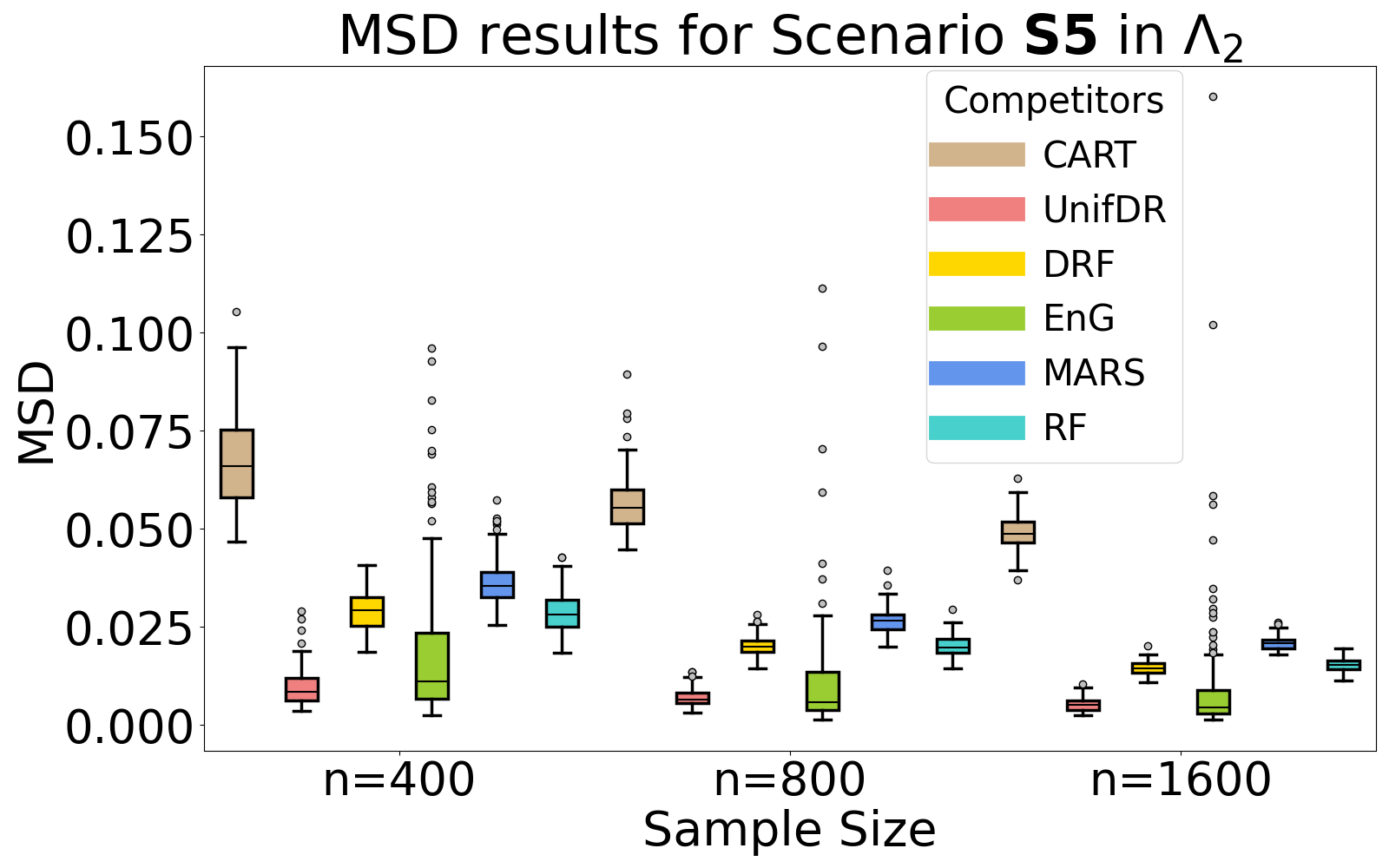} \\
    \multicolumn{2}{c}{\includegraphics[width=0.47\columnwidth]{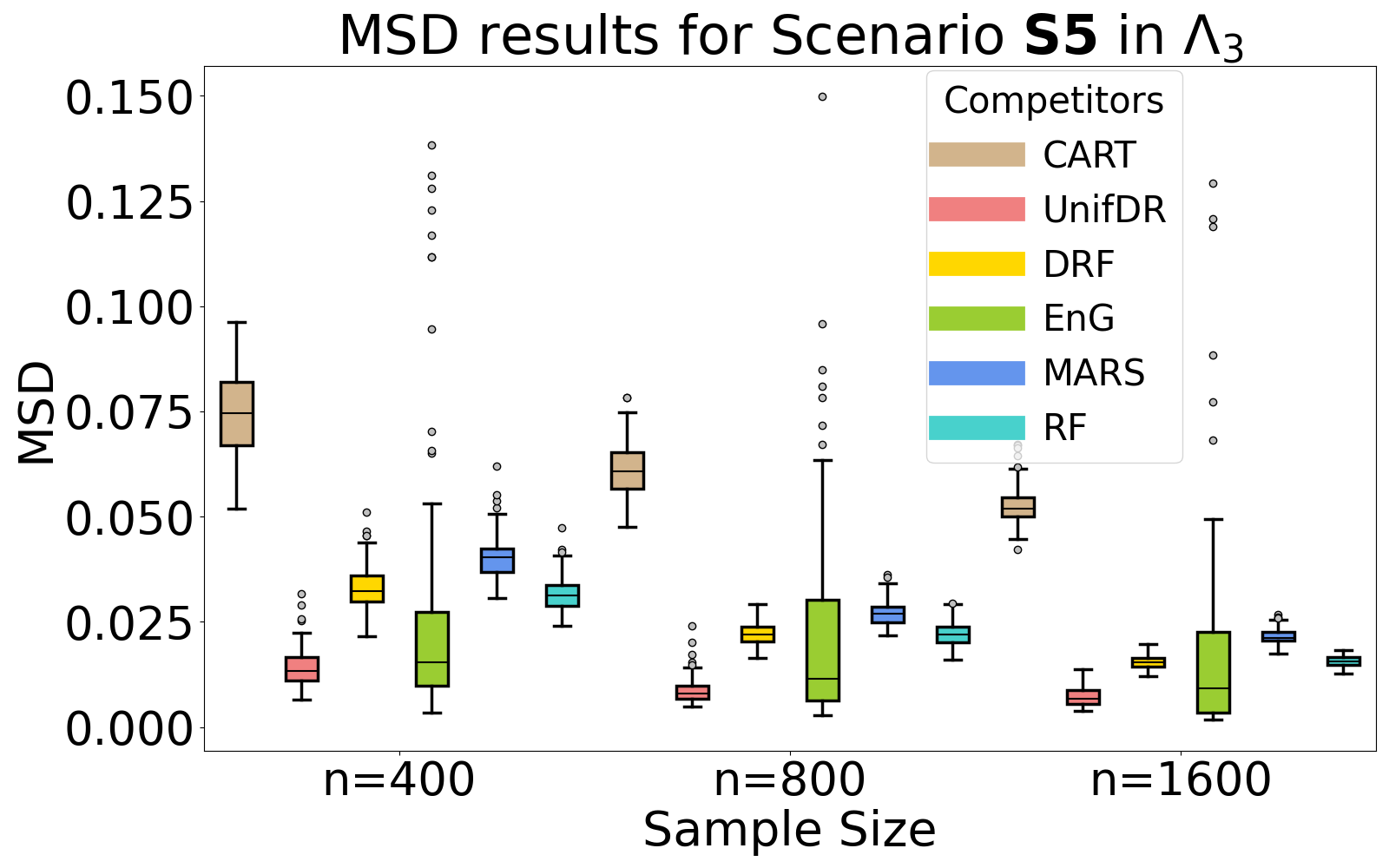}}
\end{tabular}
\caption{Box plots for MSD  in  \textbf{S5}. The top row shows results for \(\Lambda_1\) (left) and \(\Lambda_2\) (right), while the bottom row displays results for \(\Lambda_3\).}
\label{fig:composite-S5}
\vskip -0.3in
\end{figure}

\begin{figure}[]
\begin{center}
\centerline{
\includegraphics[width=0.5\textwidth]{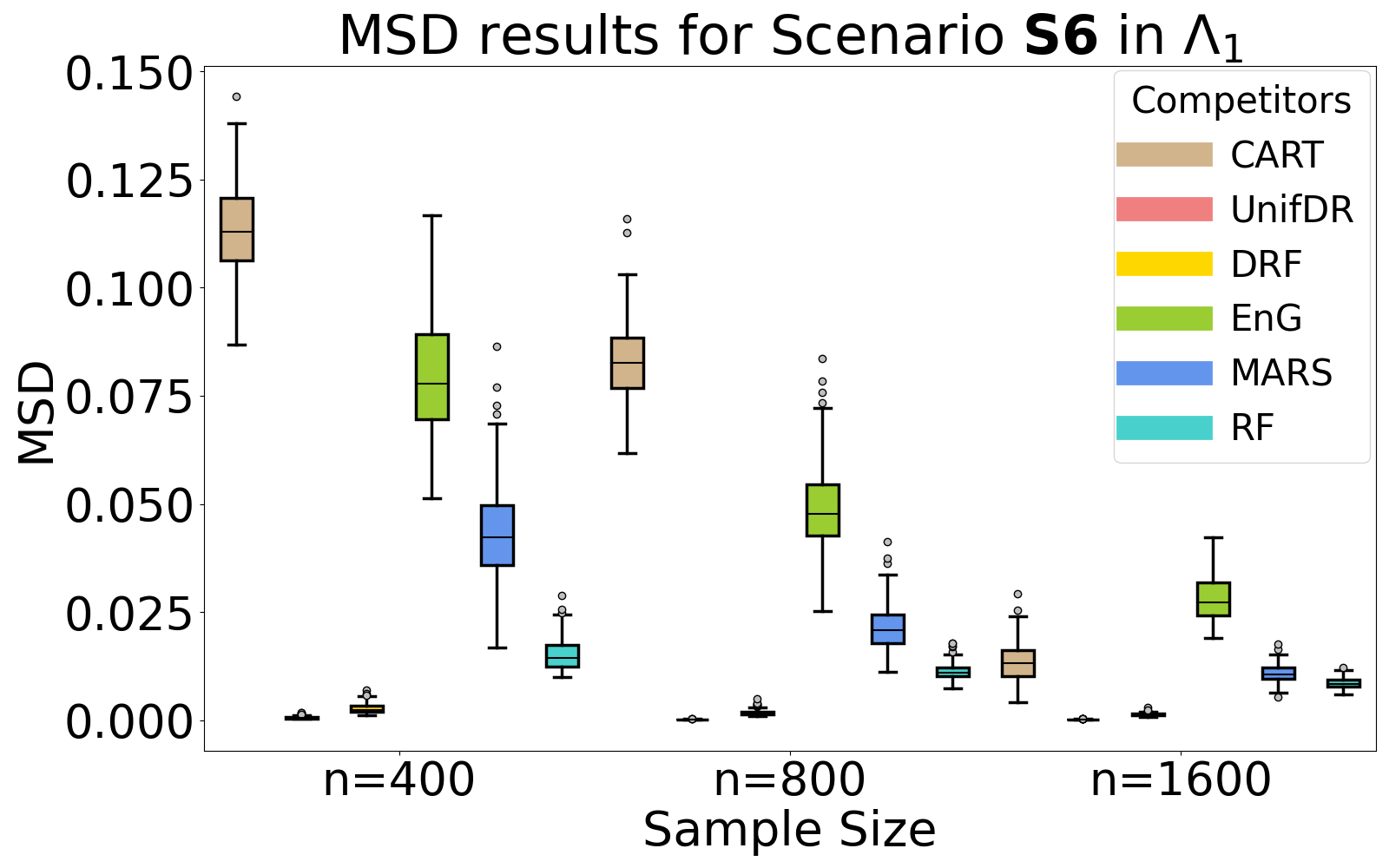}
\hspace{-0.5em}
\includegraphics[width=0.5\textwidth]{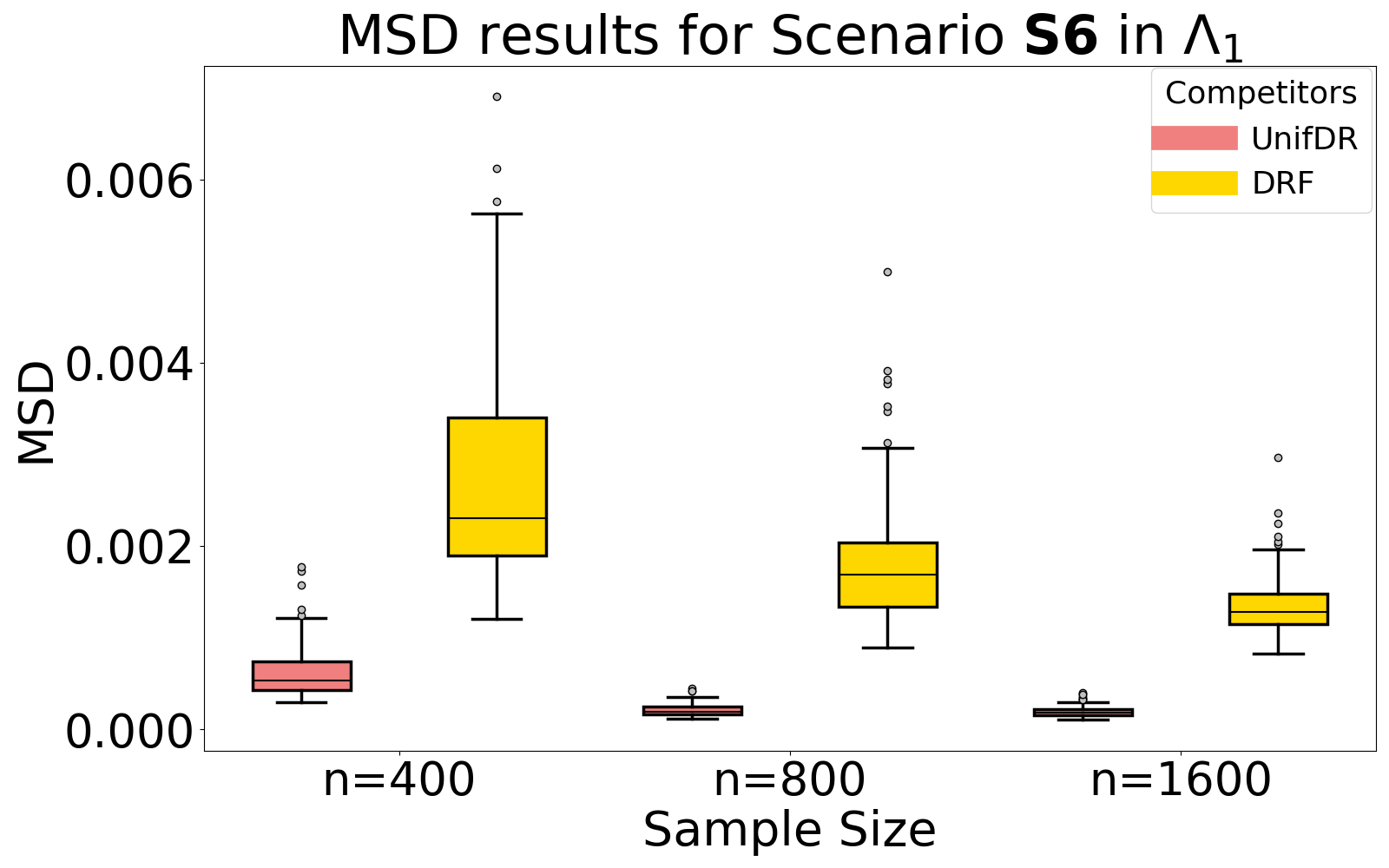}
}
\caption{Box plots for MSD in {\bf S6} using evaluation set $\Lambda_1$. The left plot corresponds to all competitors performance, while the right plot corresponds to best two competitors.}
\label{fig:lambda1-S6-MSD}
\end{center}
\vskip -0.2in
\end{figure}

\begin{figure}[]
\begin{center}
\centerline{
\includegraphics[width=0.5\textwidth]{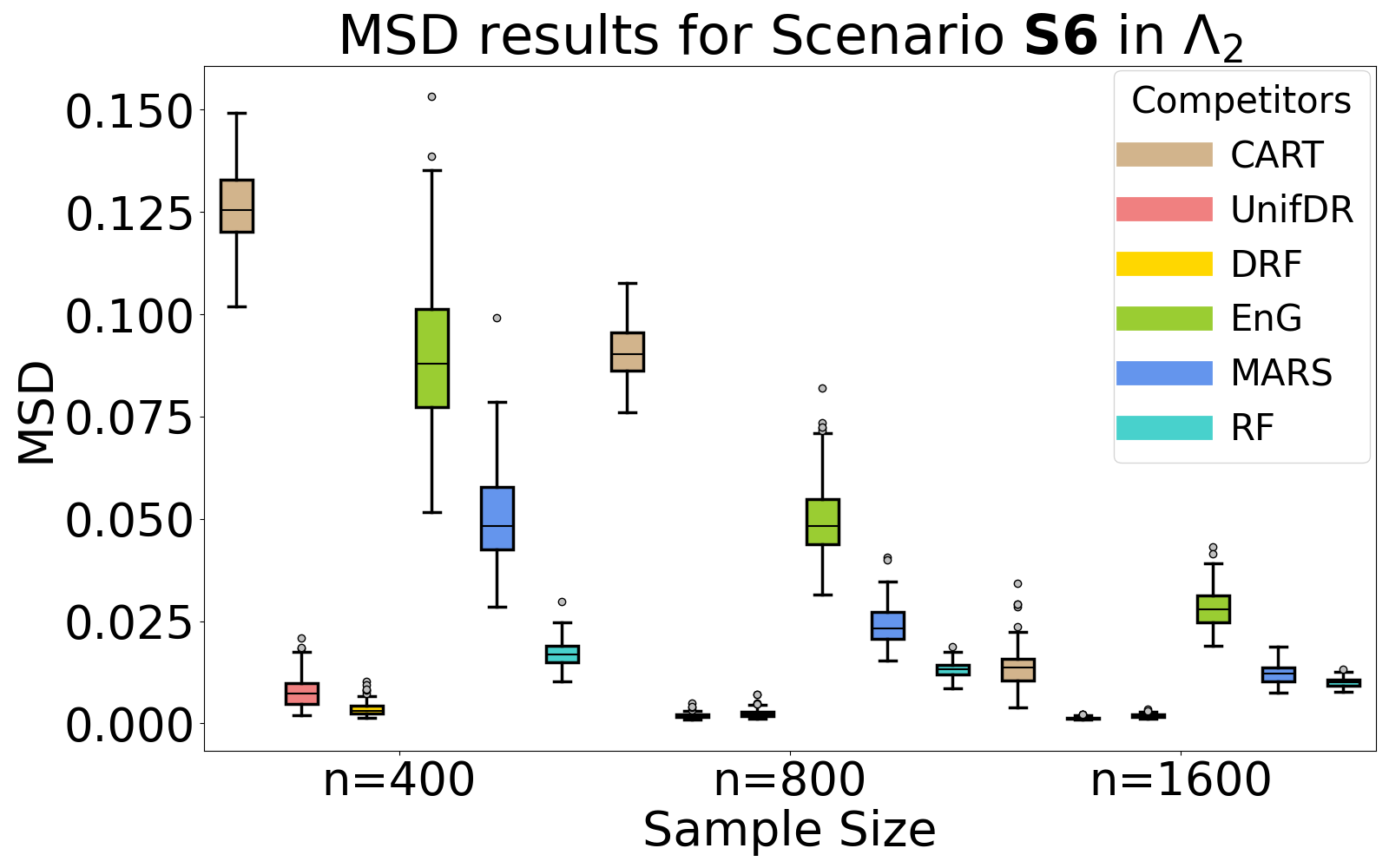}
\hspace{-0.5em}
\includegraphics[width=0.5\textwidth]{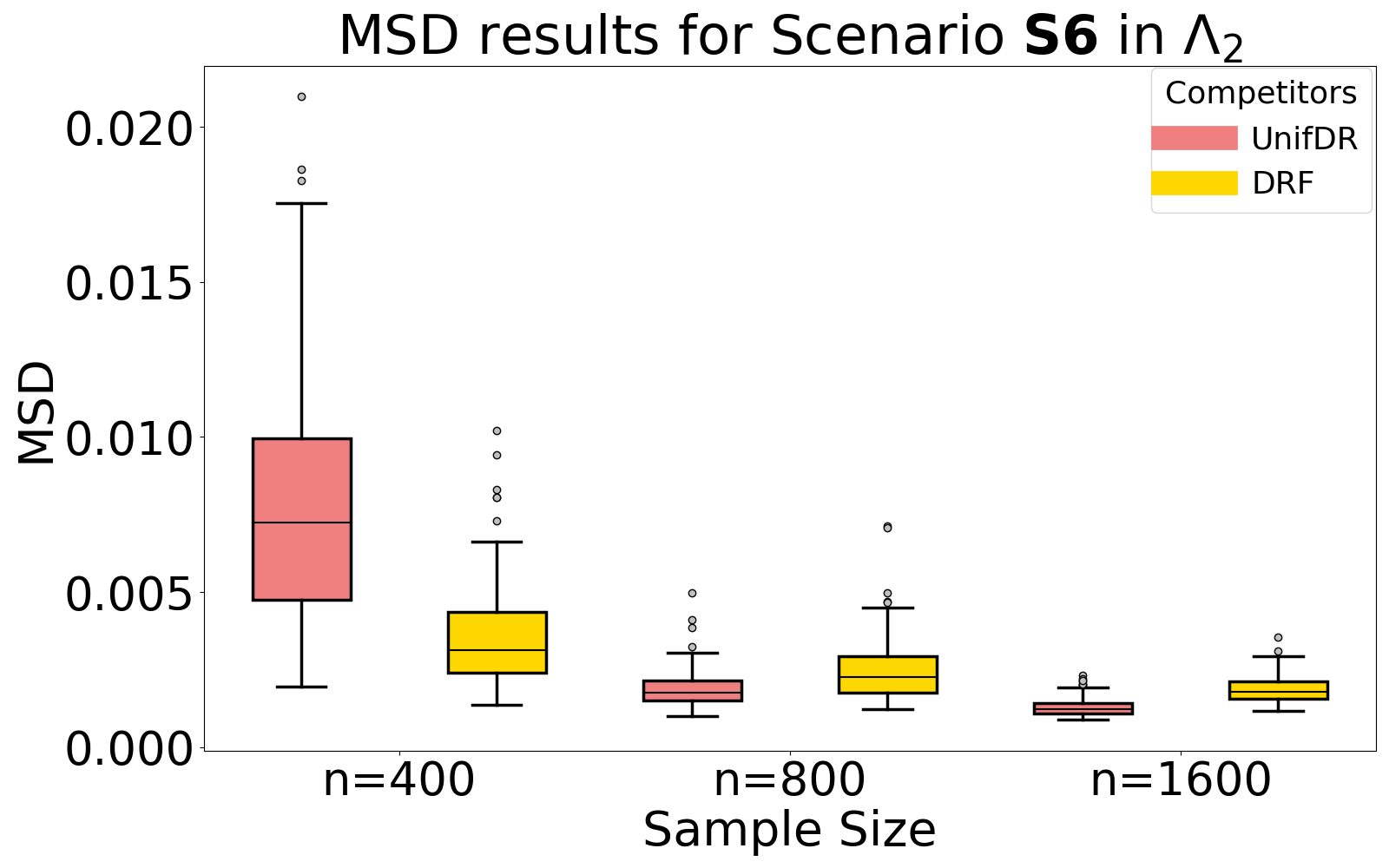}
}
\caption{Box plots for MSD in {\bf S6} using evaluation set $\Lambda_2$. The left plot corresponds to all competitors performance, while the right plot corresponds to best two competitors.}
\label{fig:lambda2-S6-MSD}
\end{center}
\vskip -0.3in
\end{figure}

\begin{figure}[]
\begin{center}
\centerline{
\includegraphics[width=0.5\textwidth]{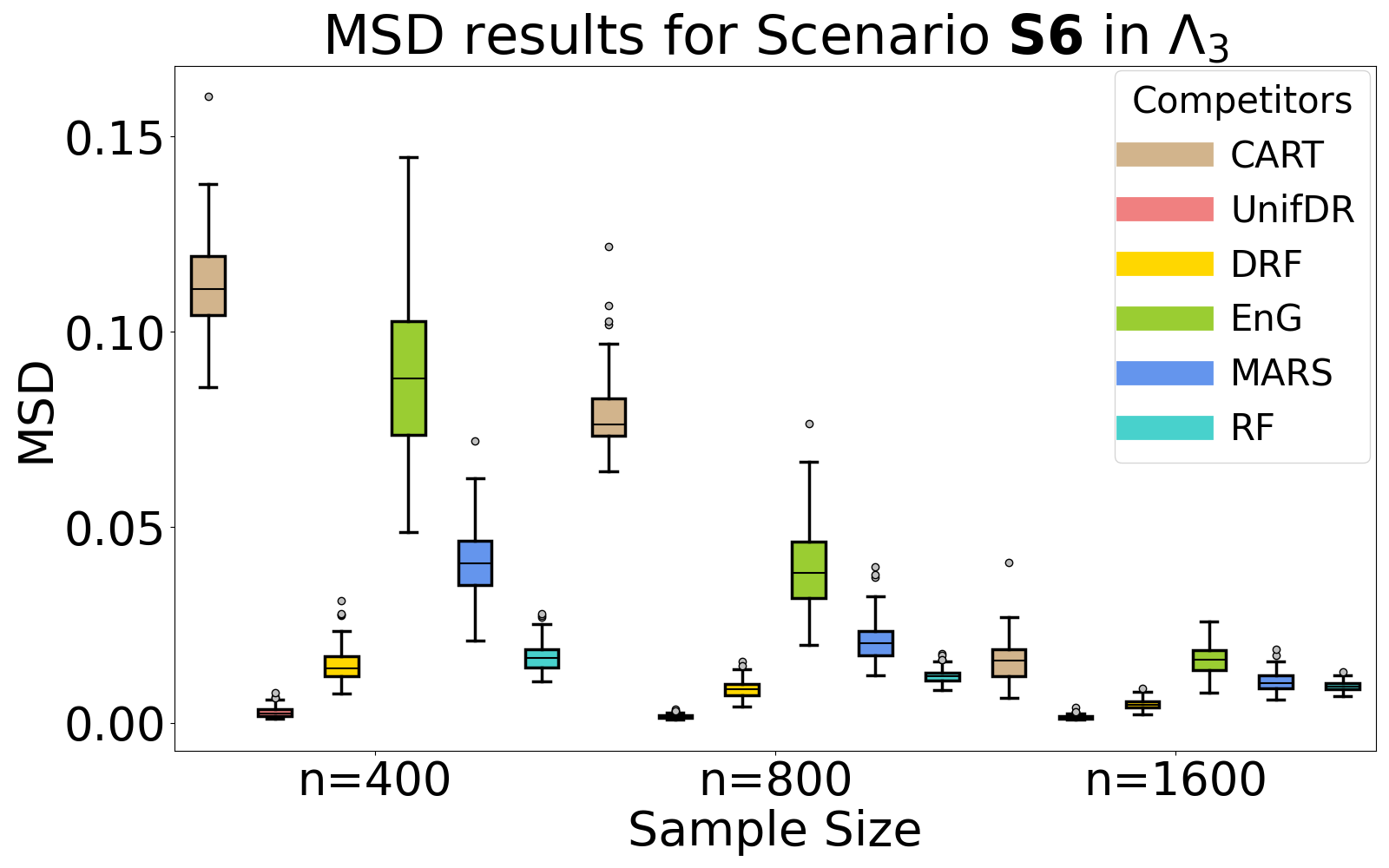}
\hspace{-0.5em}
\includegraphics[width=0.5\textwidth]{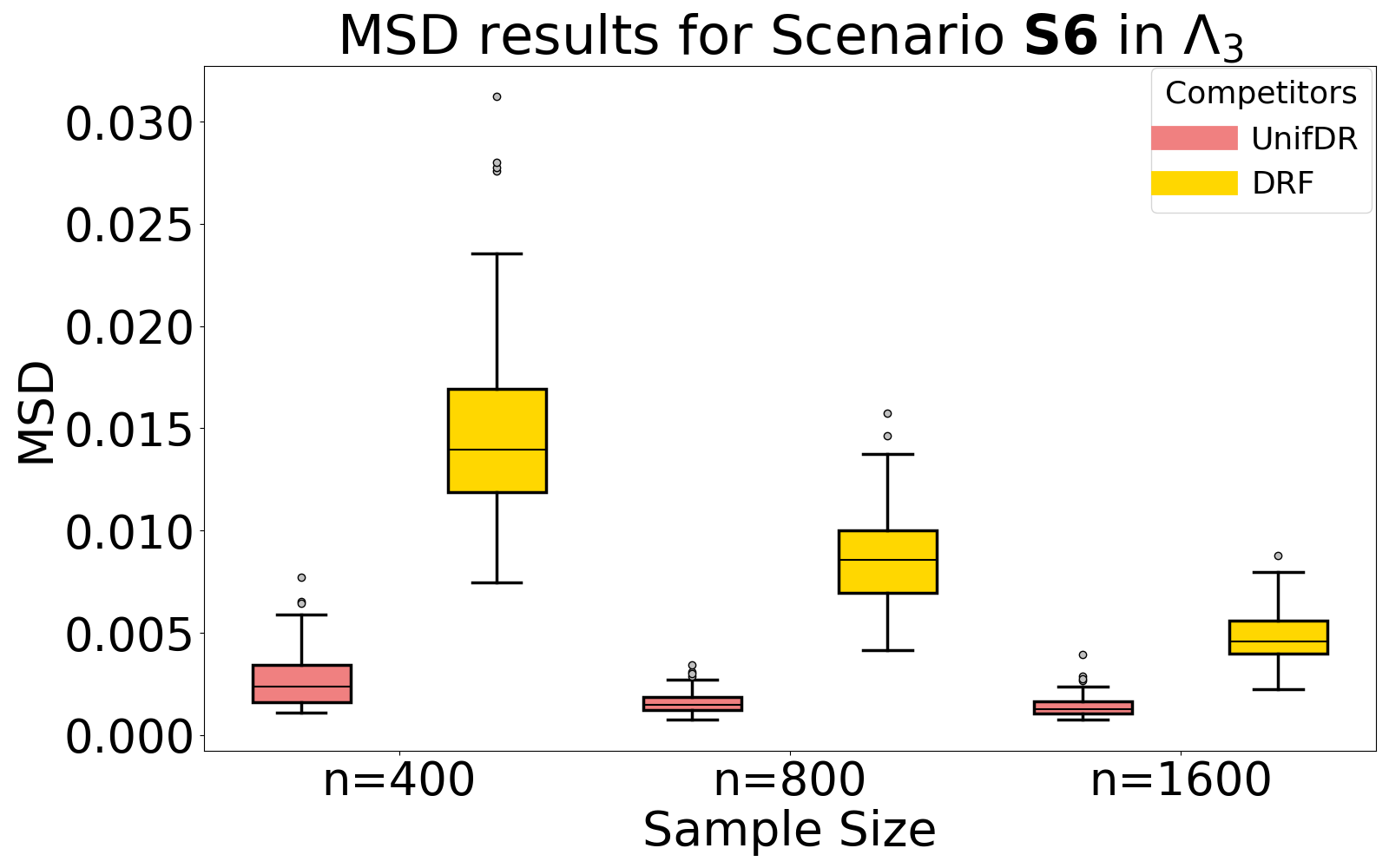}
}
\caption{Box plots for MSD in {\bf S6} using evaluation set $\Lambda_3$. The left plot corresponds to all competitors performance, while the right plot corresponds to best two competitors.}
\label{fig:lambda3-S6-MSD}
\end{center}
\vskip -0.2in
\end{figure}

\newpage

\section{Proofs}
\label{Appenproofs}

\subsection{Proof of Lemma \ref{lemma-1}}

\begin{proof}
	Let   $y_{(1)} \leq  y_{(2)}   \leq \ldots \leq  y_{(n)}$  the order statistics of  $y$.
	Notice that
	\[
	 \begin{array}{lll}
	 \displaystyle	L(F)   & := &  \displaystyle	\sum_{i=1}^{n}  \mathrm{CRPS}(F_i,  1\{y_i \leq  \cdot\} )  \\
	  &= & \displaystyle  \int    \sum_{i=1}^{n}  (F_i(t) - 1\{y_i \leq  t\} )^2dt  \\ 
	 & = & \displaystyle  \sum_{j=1}^{n+1} \int_{A_j} \sum_{i=1}^{n}  (F_i(t) - 1\{y_i \leq  t\} )^2dt   
  	 \end{array}
	\]
	where  $A_1 = (-\infty, y_{(1)}),\,  A_2 = [y_{(1)},y_{(2)}),\ldots, A_n = [y_{(n-1)},y_{(n)}), \, A_{n+1} =[y_{(n)},\infty)$.   However, for every $j \in \{1,\ldots, n+1\}$ and  $t, t^{\prime} \in A_j$ we have that 
	\[
	 \begin{array}{l}
	        \underset{F(t) \in K}{\min} \sum_{i=1}^{n}  (F_i(t) - 1\{y_i \leq  t\} )^2 
            \,= \,      \underset{F(t^{\prime}) \in K}{\min} \sum_{l=1}^{n}  (F_{i}(t^{\prime}) - 1\{y_i \leq  t^{\prime}\}   )^2.
	 \end{array}
	\]
	Hence,  letting  $t_j $ be an element of $A_j$, we obtain that minimizing  $L(F)$ with the constraints $F(t) \in K$  for all $t$ is equivalent to solving the independent problems 
	\[
	   \underset{ F(t_j)  \in  K }{\min}\,   	\sum_{i=1}^{n}  (F_i(t_j) - 1\{y_i \leq  t_j\} )^2,
	 \]
	 and the claim follows.
\end{proof}

\subsection{Proof of Theorem \ref{thm2}}

\begin{theorem}
	\label{thm1}
	\textbf{[Theorem A.1 in   \cite{guntuboyina2020adaptive}].} There exists a universal positive constant  $C >0$ such that for  every  $t$, 
	\[
	\begin{array}{l}
	\displaystyle    \frac{1}{n}\mathbb{E}\left(   \sum_{i=1}^{n}   \left(  \widehat{F}_i(t) -  F_i^*(t)  \right)^2    \right) 
     \,\leq  \,\frac{   C  \max\{\eta^2,\underset{i=1,\ldots,n}{\max }\,  F_i^*(t)(1-F_i^*(t)) \}    }{n}, 
     %\frac{   C  \max\{1,\eta^2\} \,\,\,\underset{i=1,\ldots,n}{\max }\,  F_i^*(t)(1-F_i^*(t))     }{n}, 
	\end{array}
	\]
	for every  $\eta>1$ satisfying 
    \begin{equation}
    	\label{eqn:guntuboyina_cond}
    	 \mathbb{E}\left[  \underset{ \theta \in K_t   \,:\,  \| \theta  - F^*(t)\|\leq  \eta }{\sup}   \,\epsilon(t)^{\top} (\theta - F^*(t))     \right]\leq  \frac{\eta^2}{2}
    \end{equation}
	where  $\epsilon(t) =  w(t) -  F^*(t) $.
	% \sim  N(0,  I_n)$.
\end{theorem}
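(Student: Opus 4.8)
Fix $t \in \mathbb{R}$ once and for all and abbreviate $\theta^* = F^*(t)$, $\widehat\theta = \widehat F(t)$, $\epsilon = \epsilon(t) = w(t) - F^*(t)$, and $\sigma_i^2 = F_i^*(t)(1-F_i^*(t)) = \mathrm{Var}(\epsilon_i)$, with $\sigma_{\max}^2 = \max_{1\le i\le n}\sigma_i^2$; the coordinates of $\epsilon$ are independent, mean zero, and each supported on an interval of length at most one. The plan is the classical reduction of the least-squares risk for a projection onto a closed convex set to a single localized-complexity term. The first step is the \textbf{basic inequality}: since $\widehat\theta$ is the Euclidean projection of $w(t)$ onto $K_t$ and $\theta^* \in K_t$, the obtuse-angle criterion gives $\langle w(t) - \widehat\theta,\, \theta^* - \widehat\theta\rangle \le 0$, and substituting $w(t) - \widehat\theta = (\theta^* - \widehat\theta) + \epsilon$ and expanding yields
\[
\|\widehat\theta - \theta^*\|^2 \,\le\, \langle \epsilon,\, \widehat\theta - \theta^*\rangle .
\]

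Next I would \textbf{localize}. Set $r = \|\widehat\theta - \theta^*\|$ and $Z = \sup_{\theta \in K_t,\ \|\theta - \theta^*\| \le \eta}\langle \epsilon,\, \theta - \theta^*\rangle$ (a nonnegative quantity, since $0$ is feasible). If $r \le \eta$ then trivially $r^2 \le \eta^2$. If $r > \eta$, convexity of $K_t$ shows $\bar\theta := \theta^* + (\eta/r)(\widehat\theta - \theta^*) \in K_t$ with $\|\bar\theta - \theta^*\| = \eta$, so that $Z \ge \langle \epsilon, \bar\theta - \theta^*\rangle = (\eta/r)\langle \epsilon, \widehat\theta - \theta^*\rangle \ge (\eta/r)\,r^2 = \eta\,r$ by the basic inequality, hence $r \le Z/\eta$. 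In both cases $r^2 \le \max\{\eta^2,\, Z^2/\eta^2\} \le \eta^2 + Z^2/\eta^2$, and taking expectations reduces everything to bounding $\mathbb{E}[Z^2] = (\mathbb{E}Z)^2 + \mathrm{Var}(Z)$. The hypothesis \eqref{eqn:guntuboyina_cond} is exactly $\mathbb{E}Z \le \eta^2/2$, so the first term contributes $(\mathbb{E}Z)^2/\eta^2 \le \eta^2/4$.

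The remaining piece — and the only step that is not bookkeeping — is the \textbf{variance of $Z$}. Here $Z$ is the supremum of the linear process $v \mapsto \langle \epsilon, v\rangle$ over $T := (K_t - \theta^*) \cap B_\eta(0)$; every $v \in T$ satisfies $\|v\|^2 \le \eta^2$, and the coordinates of $\epsilon$ are independent and bounded. An Efron–Stein (or Talagrand convex-concentration) argument for suprema of linear functionals of independent bounded coordinates gives $\mathrm{Var}(Z) \lesssim \mathbb{E}\big[\sum_i \widehat v_i^{\,2}\big] = \mathbb{E}\|\widehat v\|^2 \le \eta^2$, where $\widehat v \in T$ is a (measurable) maximizer; the care needed is that $\widehat v$ depends on $\epsilon$, so the one-coordinate increments must be controlled coordinatewise and then aggregated via $\|\widehat v\|^2 \le \eta^2$ rather than naively. (A slightly finer version of the same argument, tracking the per-coordinate variances $\sigma_i^2$ against $\mathbb{E}[\widehat v_i^{\,2}]$, yields $\mathrm{Var}(Z) \lesssim \sigma_{\max}^2\,\eta^2$, which is what makes the $\max_i F_i^*(t)(1-F_i^*(t))$ appear in the sharp form of the statement.) Plugging back, and using $\eta > 1$,
\[
\mathbb{E}\big[\|\widehat\theta - \theta^*\|^2\big] \,\le\, \eta^2 + \frac{(\mathbb{E}Z)^2 + \mathrm{Var}(Z)}{\eta^2} \,\le\, \eta^2 + \frac{\eta^2}{4} + c \,\le\, C\,\max\{\eta^2,\, \sigma_{\max}^2\},
\]
and dividing by $n$, together with $\|\widehat\theta - \theta^*\|^2 = \sum_{i=1}^n (\widehat F_i(t) - F_i^*(t))^2$, gives the asserted inequality with a constant $C$ independent of $t$. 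I expect the variance/concentration estimate for $Z$ to be the main obstacle, as it is the one place where the independence and boundedness of the $\epsilon_i$ must be used quantitatively rather than just to justify the projection framework.
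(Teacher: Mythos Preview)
Your proof is correct and takes a somewhat different route from the paper's. Both arguments start from the basic inequality and localize via convexity of $K_t$, but the paper then proceeds via a \emph{high-probability} deviation bound: it invokes a Talagrand-type concentration inequality for separately convex Lipschitz functions of bounded independent coordinates (Theorem~3.4 in \citet{wainwright2019high}) to obtain $\|\widehat\theta-\theta^*\|\le \eta+2\sigma_{\max}\delta$ with probability at least $1-\exp(-\sigma_{\max}^2\delta^2/16)$, and then integrates the resulting tail to recover the expectation bound. Your approach is more direct and more elementary: after the same localization you pass immediately to the second moment, writing $\mathbb E[Z^2]=(\mathbb EZ)^2+\mathrm{Var}(Z)$, using the hypothesis for the mean and an Efron--Stein estimate for the variance. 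This avoids the heavier concentration machinery; the paper's route, on the other hand, produces a high-probability deviation inequality as a byproduct. One minor caveat: your parenthetical ``slightly finer'' bound $\mathrm{Var}(Z)\lesssim\sigma_{\max}^2\eta^2$ does not drop out of a plain Efron--Stein argument---the cross term $\sum_i\mathbb E[\epsilon_i^2\,\widehat v_i^{\,2}]$ survives because $\epsilon_i$ and $\widehat v_i$ are dependent, and only obeys the cruder bound $\le\eta^2$---but this is harmless here, since $\eta>1$ and $\sigma_{\max}^2\le 1/4$ force $\max\{\eta^2,\sigma_{\max}^2\}=\eta^2$ anyway.
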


In the followig we first present the proof of Theorem \ref{thm1}.
\begin{proof}
Let $t\in\mathbb{R}.$ Define $\sigma^2=\underset{i=1,\ldots,n}{\max }\,  F_i^*(t)(1-F_i^*(t))$. We consider the following two cases separately based on the value of \(\sigma\):
\begin{enumerate}
    \item \(\sigma = 0\),
    \item \(\sigma \neq 0\).
\end{enumerate}
{\bf{Case 1: \(\sigma = 0\)}}.

By definition, 
$
\sigma^2 = \max_{i=1,\ldots,n} F_i^*(t)(1 - F_i^*(t)) = 0.
$
This implies that for all \(i = 1, \ldots, n\),
$
F_i^*(t)(1 - F_i^*(t)) = 0.
$
Since \(F_i^*(t)(1 - F_i^*(t)) = 0\), it follows that either \(F_i^*(t) = 0\) or \(F_i^*(t) = 1\) for each \(i\). Now observe that for each \(i \in \{1, \ldots, n\}\), either \(w_i(t) = 0\) or \(w_i(t) = 1\). 
Given that \(\mathbb{E}(w_i(t)) = F_i^*(t)\), it follows that \(w_i(t) = F_i^*(t)\). Therefore, by the definition of \(\widehat{F}\) in Equation \ref{eqn:estimator}, we have \(\widehat{F} = F^*\). 
In this case, it holds that
\[
\mathbb{E}\left(\|\widehat{F}(t) - F^*(t)\|_2^2\right) = 0,
\]
and the result is obtained trivially.

{\bf{Case 2: \(\sigma \neq 0\)}}.

Denote by $\Theta_{F^*(t)}(\eta):=\{\theta-F^*(t) \in \Theta_{F^*(t)}   \,:\,  \| \theta  - F^*(t)\|_2\leq  \eta \}$, where $\Theta_{F^*(t)}=\{\theta-F^*(t)\,:\, \theta\in K_t\}$.
 Notice that the function $L:\mathbb{R}^n\rightarrow \mathbb{R}$ given by
$$
\epsilon \rightarrow \sup _{\theta \in \Theta_{F^*(t)}(\eta)}\left|\epsilon^{\top}( \theta  - F^*(t))\right|
$$
is $\eta$-Lipschitz. Moreover observe that $L$ is separately convex. In fact, for any $k\in\{1,...n\}$ we have that
\begin{align*}
    &(\epsilon_1,...,\epsilon_{k-1},t\epsilon_k^{(1)}+(1-t)\epsilon_k^{(2)},\epsilon_{k+1},...,\epsilon_n)\\
    =&(t\epsilon_1,...,t\epsilon_{k-1},t\epsilon_k^{(1)},t\epsilon_{k+1},...,t\epsilon_n)
    \\
    +&((1-t)\epsilon_1,...,(1-t)\epsilon_{k-1},(1-t)\epsilon_k^{(2)},(1-t)\epsilon_{k+1},...,(1-t)\epsilon_n)
    \\
    =&t\epsilon_{k,1}+(1-t)\epsilon_{k,2}.
\end{align*}
Therefore,
\begin{align*}
    &L\left[(\epsilon_1,...,\epsilon_{k-1},t\epsilon_k^{(1)}+(1-t)\epsilon_k^{(2)},\epsilon_{k+1},...,\epsilon_n)\right]\\
    =&\sup _{\theta \in \Theta_{F^*(t)}(\eta)}\left|(t\epsilon_{k,1}+(1-t)\epsilon_{k,2})^{\top}( \theta  - F^*(t))\right|
    \\
    =&\sup _{\theta \in \Theta_{F^*(t)}(\eta)}\left|t(\epsilon_{k,1})^{\top}( \theta  - F^*(t))+(1-t)(\epsilon_{k,2})^{\top}( \theta  - F^*(t))\right|
    \\
    \le&\sup _{\theta \in \Theta_{F^*(t)}(\eta)}\left[t\left|(\epsilon_{k,1})^{\top}( \theta  - F^*(t))\right|+(1-t)\left|(\epsilon_{k,2})^{\top}( \theta  - F^*(t))\right|\right],
\end{align*}
where the inequality is followed by triangle inequality.
Using the fact that the supremum of the sum is bounded by the sum of the supremums, it follows that
\begin{align*}
    &L\left[(\epsilon_1,...,\epsilon_{k-1},t\epsilon_k^{(1)}+(1-t)\epsilon_k^{(2)},\epsilon_{k+1},...,\epsilon_n)\right]
    \\
    \le&t\sup _{\theta \in \Theta_{F^*(t)}(\eta)}\left|(\epsilon_{k,1})^{\top}( \theta  - F^*(t))\right|+(1-t)\sup _{\theta \in \Theta_{F^*(t)}(\eta)}\left|(\epsilon_{k,2})^{\top}( \theta  - F^*(t))\right|
    \\
    =&tL\left[(\epsilon_1,...,\epsilon_{k-1},\epsilon_k^{(1)},\epsilon_{k+1},...,\epsilon_n)\right]+(1-t)L\left[(\epsilon_1,...,\epsilon_{k-1},\epsilon_k^{(2)},\epsilon_{k+1},\epsilon_n)\right].
\end{align*}
Thus, $L$ is separately convex. Notice that $\epsilon_i(t)=\left(1\{y_i\leq t\}-F_i^*(t)\right)$ satisfies $\vert \epsilon(t)\vert\le 1.$
Hence, by Theorem 3.4 in \citet{wainwright2019high}, for any $\delta>0$
\begin{align}
\label{gunt-new-0}
\sup _{\theta \in \Theta_{F^*(t)}(\eta)}\left|\epsilon(t)^{\top}( \theta  - F^*(t))\right| \leq \mathbb{E}\left(\sup _{\theta \in \Theta_{F^*(t)}(\eta)}\left|\epsilon(t)^{\top}( \theta  - F^*(t))\right|\right)+\sigma\eta\delta
\end{align}
with probability at least $1-e^{-\delta^2\sigma^2 /16}$.
Next, we have that
\begin{align}
\label{gunt-new-1}
    \left|\epsilon(t)^{\top}( \theta  - F^*(t))\right|\le& \max\{\frac{\vert\vert  \theta  - F^*(t)\vert\vert_2}{\eta},1\}\widetilde{L}(\eta),
\end{align}
for any $\theta\in K_t,$ where $\widetilde{L}(\eta)=\sup _{\theta \in \Theta_{F^*(t)}(\eta)}\left|\epsilon(t)^{\top}( \theta  - F^*(t))\right|$. This conclusion is derived based on the subsequent line of reasoning.  If $\theta \in K_t$ and $\|\theta-F^*(t)\|_2 \leq \eta$, then $\theta-F^*(t) \in \Theta_{F^*(t)}(\eta)$ and inconsequence  $|\epsilon(t)^{\top} (\theta-F^*(t))| \leq \widetilde{L}(\eta)$, by definition of $\widetilde{L}(\eta)$. If $\theta \in K_t$ and $\|\theta-F^*(t)\|_2>\eta$, then $\frac{\theta-F^*(t)}{\|\theta-F^*(t)\|_2} \cdot \eta\in \Theta_{F^*(t)}(\eta)$ because $\Theta_{F^*(t)}$ is star-shaped, given that $K_t$ is convex. Also $\left\|\frac{\theta-F^*(t)}{\|\theta-F^*(t)\|}\eta\right\|_2=\eta$. Hence,
$$
\left| \epsilon(t)^{\top}\left( \frac{\theta-F^*(t)}{\|\theta-F^*(t)\|}\eta\right)\right| \leq \widetilde{L}(\eta),
$$
which implies,
$$
\vert \epsilon(t)^{\top} (\theta-F^*(t))\vert \leq \frac{\|\theta-F^*(t)\|_2}{\eta} \cdot \widetilde{L}(\eta),
$$
for any $\theta\in K_n.$ Then we observe that by the basic inequality
\begin{align*}
    \vert\vert w(t)-\widehat{F}(t)\vert\vert_2^2 \le &\vert\vert w(t)-F^*(t)\vert\vert_2^2.
\end{align*}
This implies that 
\begin{align*}
    \frac{1}{2}\vert\vert \widehat{F}(t)-F^*(t)\vert\vert_2^2 \le & \epsilon(t)^{\top}(\widehat{F}(t)-F^*(t)).
\end{align*}
Given that $\widehat{F}(t)\in K_t$
it follows from inequality (\ref{gunt-new-1}) that
\begin{align*}
     \frac{1}{2}\vert\vert \widehat{F}(t)-F^*(t)\vert\vert_2^2 \le& \max\{\frac{\vert\vert \widehat{F}(t)  - F^*(t)\vert\vert_2}{\eta},1\}\widetilde{L}(\eta),
\end{align*}
and by inequality (\ref{gunt-new-0}),
\begin{align*}
     \frac{1}{2}\vert\vert \widehat{F}(t)-F^*(t)\vert\vert_2^2 \le& \max\{\frac{\vert\vert \widehat{F}(t)  - F^*(t)\vert\vert_2}{\eta},1\}\left(\mathbb{E}\left(\sup _{\theta \in \Theta_{F^*(t)}(\eta)}\left|\epsilon(t)^{\top}( \theta  - F^*(t))\right|\right)+\sigma\eta\delta\right),
\end{align*}
for any $\delta>0$, with probability at least $1-e^{-\sigma^2\delta^2 /16}$.
Thus, for any $\delta>0$ 
\begin{align*}
     \vert\vert \widehat{F}(t)-F^*(t)\vert\vert_2 \le& \max\left\{\frac{2G(\eta,\delta)}{\eta},\sqrt{2G(\eta,\delta)}\right\},
\end{align*}
with probability at least $1-e^{-\sigma^2\delta^2 /16}$,
where $G(\eta,\delta)=\mathbb{E}\left(\sup _{\theta \in \Theta_{F^*(t)}(\eta)}\left|\epsilon(t)^{\top}( \theta  - F^*(t))\right|\right)+\sigma\eta\delta.$
Next, by inequality (\ref{eqn:guntuboyina_cond}),
\begin{align*}
    \max\left\{\frac{2G(\eta,\delta)}{\eta}\sqrt{2G(\eta,\delta)}\right\}\le& \max\left\{\frac{\eta^2+2\sigma\eta\delta}{\eta},\ \sqrt{\eta^2+2\sigma\eta\delta} \right\}=\max\left\{\eta+2\sigma\delta,\ \sqrt{\eta(\eta+2\sigma\delta)} \right\}\le \eta+2\sigma\delta.
\end{align*}
In consequence for any $\delta>0$,
\begin{align}
\label{gunt-new-2}
     \vert\vert \widehat{F}(t)-F^*(t)\vert\vert_2 \le& \eta+2\sigma\delta,
\end{align}
with probability at least $1-e^{-\sigma^2\delta^2 /16}$. Finally, we observe that
\begin{align*}
    \mathbb{E}(\vert\vert \widehat{F}(t)-F^*(t)\vert\vert_2^2)=&\int_{0}^{\infty} \mathbb{P}\left(\vert\vert \widehat{F}(t)-F^*(t)\vert\vert_2^2>s\right)ds
    \\
    =&\int_{0}^{(\eta+2\sigma)^2} \mathbb{P}\left(\vert\vert \widehat{F}(t)-F^*(t)\vert\vert_2^2>s\right)ds+\int_{(\eta+2\sigma)^2}^{\infty} \mathbb{P}\left(\vert\vert \widehat{F}(t)-F^*(t)\vert\vert_2^2>s\right)ds
    \\
    =&I_1+I_2.
\end{align*}
To analyze the term $I_1$ we observe that $\mathbb{P}\left(\vert\vert \widehat{F}(t)-F^*(t)\vert\vert_2^2>s\right)\le1$, and therefore 
\begin{align}
\label{I1-bound}
    I_1\le& (\eta+2\sigma)^2.
\end{align}
For the term $I_2$, we perform a change of variables $s=(\eta+2\sigma\delta)^2$ to obtain
\begin{align*}
    I_2\le\int_{1}^\infty \mathbb{P}\left(\vert\vert \widehat{F}(t)-F^*(t)\vert\vert_2^2>(\eta+2\sigma\delta)^2\right) 4\sigma(\eta+2\sigma\delta)d \delta,
\end{align*}
and by inequality (\ref{gunt-new-2}),
\begin{align}
    \label{I2-bound}
    I_2\le 4\sigma\int_{1}^\infty e^{-\frac{\sigma^2\delta^2}{16}} (\eta+2\sigma\delta)d \delta.
\end{align}
Moreover,
\begin{align*}
  4\sigma\int_{1}^\infty e^{-\frac{\sigma^2\delta^2}{16}} (\eta+2\sigma\delta)d \delta
  =&
   \, 4\eta\sigma\int_{1}^\infty e^{-\frac{\sigma^2\delta^2}{16}} d \delta+8\sigma^2\int_{1}^\infty e^{-\frac{\sigma^2\delta^2}{16}} \delta d \delta 
   \\
   =& 2\eta\frac{\sqrt{\pi} \sigma \operatorname{erfc}(\sigma)}{\sigma}+4e^{-\sigma^2}
   \\
   \le&
   2\eta\sqrt{\pi}  \operatorname{erfc}(\sigma)+4e^{-\sigma^2}
   .
\end{align*}
where $\operatorname{erfc}(z)  =\frac{2}{\sqrt{\pi}} \int_z^{\infty} e^{-\delta^2} d \delta $.
From inequality (\ref{I1-bound}) (\ref{I2-bound}), and the fact that $\eta>1$, we conclude that
\begin{align*}
     \mathbb{E}(\vert\vert \widehat{F}(t)-F^*(t)\vert\vert_2^2)\le& C_1 (\eta^2+\sigma^2),
\end{align*}
for an absolute positive constant $C_1.$
Finally observe that,
\begin{align*}
       \eta^2+\sigma^2 \leq& 2\max \left(\eta^2,\sigma^2\right) 
\end{align*}
 Taking $C=2C_1$ the result is achieved.
\end{proof}

Now we are ready to start the proof of Theorem \ref{thm2}. %\textcolor{red}{Plan: modify Theorem 1.1 fro Chatterjee, and Theorem A.1 from Guntuboyina A.1}

\begin{proof}
	Notice that for all $i$, it holds that  $\widehat{F}_i(t )   =F^*(t) = 0$  for all $t < \inf\{ a\,\,:\, a\in \Omega  \}$ and $\widehat{F}_i(t )   =F^*(t) = 1$  for all $t >\sup\{ a\,\,:\, a\in \Omega  \}$. Hence, 
	\[
	 \begin{array}{lll}
\displaystyle 	\mathbb{E}\left(  \frac{1}{n} \sum_{i=1}^{n} \mathrm{CRPS}( \widehat{F}_i,F_i^*  )   \right) 	&=& \displaystyle  \mathbb{E}\left(  \frac{1}{n} \sum_{i=1}^{n} \int_{-\infty}^{\infty}( \widehat{F}_i(t)\,-\, F_i^*(t)  )^2 dt   \right)  \\
&=& \displaystyle  \mathbb{E}\left(  \frac{1}{n} \sum_{i=1}^{n} \int_{\Omega }( \widehat{F}_i(t)\,-\, F_i^*(t)  )^2 dt   \right)  \\
&=&\displaystyle  \int_{ \Omega}  \mathbb{E}\left(  \frac{1}{n}\sum_{i=1}^{n} (\hat{F}_i(t) -  F_i^*(t)  )^2  \right) dt
	 	 \\
	 	  &\leq&  \displaystyle \int_{\Omega}  \frac{   C  \max\{1,\eta^2\}     }{n}dt\\
	 	&   = &\displaystyle  \frac{C  \max\{1,\eta^2\} }{n} \,\,\,\,\int_{\Omega} dt
	 \end{array}
	\]
    where the inequality follows from Theorem \ref{thm1}, by noticing that  (\ref{eqn:lc_gaussian}) and  Lemma \ref{lem4}   imply (\ref{eqn:guntuboyina_cond}).
\end{proof}

\subsection{Proof of Corollary \ref{cor1}}

\begin{proof}
Throughout we use the notation from Definitions \ref{def6} and \ref{def7}.

First, notice that $\widehat{F}_i(t) = \widetilde{F}_i(t) =0 $ for all $t 
< y_{(1)}$ and for all $i$. Similarly,  $\widehat{F}_i(t) = \widetilde{F}_i(t) =1$ for all $t
\geq y_{(n)}$ and for all $i$. Therefore,
\begin{equation}
    \label{eqn:e60}
     \int_{-\infty }^{y_{(1)}}  ( \widehat{F}_i^+(t) - F_i^*(t)  )^2  \,+\,   \int_{y_{(n)} }^{ \infty }  ( \widehat{F}_i^+(t) - F_i^*(t)  )^2 \,=\, \int_{-\infty }^{y_{(1)}}  ( \widetilde{F}_i(t) - F_i^*(t)  )^2  \,+\,   \int_{y_{(n)} }^{ \infty }  ( \widetilde{F}_i(t) - F_i^*(t)  )^2.
\end{equation}
Next, define
\[
  \widehat{G}_i(t)\,:=\, \begin{cases}
      \widehat{F}_i^+( (1-t) (y_{(n)} -  y_{(1)} ) + y_{(1)}  ) & \text{for} \,\, t\in [0,1),\\
  0  &  \text{otherwise}.
  \end{cases} 
\]
Clearly, $\widehat{G}_i(0) = \widehat{F}_i^+(y_{(n)})$ and  $\widehat{G}_i(1) = \widehat{F}_i^+(y_{(1)})$.  Moreover, recalling that for $t \in [y_{(1)}, y_{(n)}  )$, we can write 
\[
 \widehat{F}_i^+(t) \,=\, \sum_{k=1  }^{n-1} a_{i,j_k }1_{  [ y_{( j_k) }, y_{ ( j_k+1)  })}(t), 
\]
then for $t\in [0,1)$, it holds that 
\begin{equation}
    \label{eqn:e57}
    \widehat{G}_i(t)\,:=\,\sum_{k =1  }^{n-1} a_{i,j_k }1_{  [u_{j_{k}+1}, u_{j_k} ) }(t) 
\end{equation}
where 
\[
u_{l} \,:=\,1\,-\, \frac{  y_{(l)} - y_{(1)}  }{y_{(n)}- y_{(1)} }
\]
%and 
%\[
%a_{l} \,:=\,1\,-\, \frac{  y_{  \pi(\pi^{-1}(l) +1  ) } - y_{(1)}  }{y_{(n)}- y_{(1)} }
%\]
for $l\in \{1,\ldots,n\}$.

Furthermore,  let 
\[
  G_i^*(t)\,:=\, \begin{cases}
      F_i^*( (1-t) (y_{(n)} -  y_{(1)} ) + y_{(1)}  ) & \text{if} \,\,t  \in [0,1),\\
      0 &\text{otherwise.}
  \end{cases}
\]

Now, we observe that 
\begin{align}
   \label{eqn:e50}
       & \displaystyle  \int_{0}^{1}(\widehat{G}_i(t) -G_i^*(t)  )^2 dt \nonumber
       \\
          = &   \displaystyle  \int_{0}^{1}(\widehat{F}_i^{+}( (1-t) (y_{(n)} -  y_{(1)} ) + y_{(1)}  ) -F_i^*( (1-t) (y_{(n)} -  y_{(1)} ) + y_{(1)}  )  )^2 dt \nonumber
       \\
=&          \displaystyle \frac{1}{  y_{(n)}-y_{(1)}  } \int_{y_{(1)}}^{ y_{(n)}  }(\widehat{F}_i^+( s ) -F^*_i(s)  )^2 ds 
\end{align}
by making the change of variable $s=   (1-t) (y_{(n)} -  y_{(1)} ) + y_{(1)}  $.

Furthermore, by Lemma \ref{lem21}, 
\begin{align}
    \label{eqn:e51}
& \int_{0}^{1}  \vert D(G_i^*)(t)\vert^2dt  \,=\,   \int_{0}^{\infty}  \vert D(G_i^*)(t)\vert^2dt \nonumber \\
 &\,=\, \int_{0}^{\infty}  \vert G_i^*(t)\vert^2dt  \,=\,  \int_{0}^{1}  \vert G_i^*(t)\vert^2dt \,=\,  \frac{1}{ y_{(n)} -y_{(1)}  }      \int_{y_{(1)}}^{ y_{(n)}  } \vert F_i^*(s)\vert^2 ds,
\end{align}
and 
\begin{align}
    \label{eqn:e52}
 &\int_{0}^{1}  \vert D(\widehat{G}_i)(t)\vert^2dt  \,=\,   \int_{0}^{\infty}  \vert D(\widehat{G}_i)(t)\vert^2dt   \,=\, \int_{0}^{\infty}  \vert \widehat{G}_i(t)\vert^2dt \nonumber
 \\
 &\,=\,  \int_{0}^{1}  \vert \widehat{G}_i(t)\vert^2dt \,=\,  \frac{1}{ y_{(n)} -y_{(1)}  }      \int_{y_{(1)}}^{ y_{(n)}  } \vert \widehat{F}_i^+(s)\vert^2 ds,
\end{align}
Also, by Lemma \ref{lem21},
\begin{align}
    \label{eqn:e53-new}
  & -  \int_{0}^{1}   D(\widehat{G}_i)(t) \cdot D(G_i^*)(t) dt\,= \, -  \int_{0}^{\infty}   D(\widehat{G}_i)(t) \cdot D(G_i^*)(t) dt \, \nonumber
   \\
   \leq\,& -  \int_{0}^{\infty}  \widehat{G}_i(t) \cdot G_i^*(t) dt\,=\,-  \int_{0}^{1}  \widehat{G}_i(t) \cdot G_i^*(t) dt
\end{align}
which implies
\begin{equation}
    \label{eqn:e54}
   -  \int_{0}^{1}   D(\widehat{G}_i)(t) \cdot D(G_i^*)(t) dt\,\leq \,  \frac{1}{ y_{(n)} -y_{(1)}  }      \int_{y_{(1)}}^{ y_{(n)}  }   \widehat{F}_i^+(t)\cdot F^*_i(t) dt.
\end{equation}

Combining (\ref{eqn:e51}), (\ref{eqn:e52}) and (\ref{eqn:e53-new}), we obtain that  
\begin{equation}
    \label{eqn:e55}
    \int_{0}^{1} (D(\widehat{G}_i)(t)    \,-\,  D(G_i^*)(t) )^2 dt\,\leq\,  \frac{1}{ y_{(n)} -y_{(1)}  }      \int_{y_{(1)}}^{ y_{(n)}  } (\widehat{F}_i^+(t)- F^*_i(t) )^2 dt.
\end{equation}

However, since $G_i^*$ is decreasing and continuous in $[0,1)$, then by Lemma \ref{lem21}, it holds that $D(G_i^*)(t) = G_i^*(t)$ for all $t\geq 0$. Thus, from (\ref{eqn:e55}), 
\begin{equation}
    \label{eqn:e56}
    \int_{0}^{1} (D(\widehat{G}_i)(t)    \,-\,  G_i^*(t) )^2 dt\,\leq\,  \frac{1}{ y_{(n)} -y_{(1)}  }      \int_{y_{(1)}}^{ y_{(n)}  } (\widehat{F}_i^+(t)- F^*_i(t) )^2 dt.
\end{equation}
Now, by Lemma \ref{lem20} and  (\ref{eqn:e57}), we obtain that
\begin{equation}
    \label{eqn:e58}
  D(\widehat{G}_i)(t) \,:=\,\sum_{l=1}^{n-1} a_{i,j_l, }1_{  [ m_{l-1}, m_{l}) }(t) 
\end{equation}
    where 
    \[
    m_l \,:=\, \sum_{k=1}^l \frac{ y_{ (j_k+1)  }\,-\,y_{(j_k)}  }{y_{(n)}- y_{(1)}} , \,\,\,l=1,\ldots,n-1,
    \]
    and with $m_0 =0$.

With (\ref{eqn:e58}) in hand, we let $H_i(t)= D(\widehat{G}_i)( 1-  (t-y_{(1)} )/(  y_{(n)}\,-\,y_{(1)}  )    )$ for all $t \in (y_{(1)}, y_{(n)}]$. Thus,  can write
\[
H_i(t) \,=\,\sum_{l=1}^{n-1} a_{i,j_l, }1_{  [  v_{l}, v_{l-1}) }(t) 
\]
for all $t \in (y_{(1)}, y_{(n)})$, where $v_0$ and  $v_{l} = y_{(n)}  - \sum_{k=1}^l (y_{(j_k+1) }\,-\,y_{(j_k)} ) $ for all $l=1,\ldots,n-1$. Also, $D(\widehat{G}_i)(t)\,:=\, H((1-t) (  y_{(n)}\,-\,y_{(1)}  )  + y_{(1)}  )   $.
Hence,    from (\ref{eqn:e56}) we obtain that 
\[
   \begin{array}{lll}
 \displaystyle  \frac{1}{ y_{(n)} -y_{(1)}  }      \int_{y_{(1)}}^{ y_{(n)}  } (H_i(t)- F^*_i(t) )^2 dt      & = &   \displaystyle \int_{0}^{1} (D(\widehat{G}_i)(t)    \,-\,  G_i^*(t) )^2 dt\\
        & \leq& \displaystyle  \frac{1}{ y_{(n)} -y_{(1)}  }      \int_{y_{(1)}}^{ y_{(n)}  } (\widehat{F}_i^+(t)- F^*_i(t) )^2 dt\\
%        &\leq &\displaystyle \frac{1}{ y_{(n)} -y_{(1)}  }      \int_{y_{(1)}}^{ y_{(n)}  } (\widehat{F}_i(t)- F^*_i(t) )^2 dt
   \end{array}
\]
and as a result, 
\begin{equation}
    \label{eqn:e59}
    \int_{y_{(1)}}^{ y_{(n)}  } (\widetilde{F}_i(t)- F^*_i(t) )^2 dt \,=\,  \int_{y_{(1)}}^{ y_{(n)}  } (H_i(t)- F^*_i(t) )^2 dt  \,\leq\,  \int_{y_{(1)}}^{ y_{(n)}  } (\widehat{F}_i^{+}(t)- F^*_i(t) )^2dt, 
\end{equation}
since $H_i(t) = \widetilde{F}_i(t)$ for all $t \in (y_{(1)}, y_{(n)})$. 

Combining (\ref{eqn:e60}) and (\ref{eqn:e59}), we obtain, 
\[
\int_{\mathbb{R} } (\widetilde{F}_i(t)- F^*_i(t) )^2 dt\,\leq\,  \int_{\mathbb{R}}(\widehat{F}_i^+(t)- F^*_i(t) )^2 dt \,\leq\,  \int_{\mathbb{R}}(\widehat{F}_i(t)- F^*_i(t) )^2 dt.
%\,=\,  \int_{y_{(1)}}^{ y_{(n)}  } (H_i(t)- F^*_i(t) )^2 dt  \,\leq\,  \int_{y_{(1)}}^{ y_{(n)}  } (\widehat{F}_i^{+}(t)- F^*_i(t) )^2dt, 
\]
The claim then follows. 
%\begin{equation}
 %   \label{eqn:e57}
 %   \widehat{G}_i(t)\,:=\,\sum_{k=1}^{n-1} a_{i,j_k }1_{  [u_{j_{k+1}}, u_{j_k}) }(t) 
%\end{equation}
%where 
%\[%
%u_{l} \,:=\,1\,-\, \frac{  y_{(l)} - y_{(1)}  }{y_{(n)}- y_{(1)} }
%\]
\end{proof}

\subsection{Proof of Theorem \ref{thm3}}

\begin{proof}
	First, by the basic inequality we have that 
	\[
	%   \begin{array}{l}
	        \frac{1}{2} \sum_{i=1}^{n}   ( F_i(t) -  F_i^*(t)  )^2  
            \,\leq\,  \sum_{i=1}^{n}    (  F_i^*(t) -  1_{  \{y_i \leq t\} }  ) (   F_i^*(t) -   F_i(t) )
	  % \end{array}
	\]
	for all $F  \in  \{  \kappa F ^*+(1-\kappa) \widehat{F}\,:\, \kappa \in [0,1]  \}$. Hence, for $\xi_1,\ldots,\xi_n$ are independent Rademacher random variables, we have that 
	\[
	\begin{array}{lll}
		\displaystyle  \mathbb{P}\left(   \underset{t \in \mathbb{R}}{\sup} \,    \sum_{i=1}^{n}   \left(  \widehat{F}_i(t)    - F^*_i(t)  \right)^2   > 2\eta^2   \right)   
      &   \leq&    	  \displaystyle  \mathbb{P}\left(   \underset{t \in \mathbb{R}}{\sup}\, \, \underset{  \theta \in K_t\,:\,    \|\theta - F^*(t)\|\leq \eta  }{\sup} \,   \sum_{i=1}^{n}  (F_i^*(t)   -   1_{\{ y_i\leq t\}} )(  \theta_i - F_i^*(t) )  > \eta^2\right)\\
		& \leq &  	  \displaystyle  \mathbb{P}\left(   \underset{t \in \mathbb{R}}{\sup}\, \, \underset{  \theta \in K\,:\,    \|\theta - F^*(t)\|\leq \eta }{\sup} \,   \sum_{i=1}^{n}  (F_i^*(t)   -   1_{\{ y_i\leq t\}} )(  \theta_i - F_i^*(t) )  > \eta ^2\right)\\
	&	\leq &  \displaystyle \frac{1}{\eta^2}\mathbb{E}\bigg(\underset{t \in \mathbb{R}}{\sup}\, \, \underset{  \theta \in K\,:\,    \|\theta - F^*(t)\|\leq \eta }{\sup} \,   \sum_{i=1}^{n}  (F_i^*(t)   -   1_{\{ y_i\leq t\}} )(  \theta_i - F_i^*(t) )  \bigg)\\
	& \leq& \displaystyle \frac{1}{\eta^2}\mathbb{E}\bigg(\underset{t \in \mathbb{R}}{\sup}\, \, \underset{  \theta \in K-K\,:\,    \|\theta \|\leq \eta }{\sup} \,   \sum_{i=1}^{n}  (F_i^*(t)   -   1_{\{ y_i\leq t\}} ) \theta_i  \bigg)\\
	&	\leq& \displaystyle \frac{2}{\eta^2}\mathbb{E}\bigg(\underset{t \in \mathbb{R}}{\sup}\, \, \underset{  \theta \in K-K\,:\,    \|\theta \|\leq \eta }{\sup} \,   \sum_{i=1}^{n}  \xi_i  1_{\{ y_i\leq t\}} \theta_i  \bigg)\\
	\end{array}
	\]
	\[
	\begin{array}{lll}
		%& \leq& \displaystyle \frac{2}{\eta^2}\mathbb{E}\bigg(\underset{  \theta \in K-K\,:\,    \|\theta \|\leq \eta  }{\sup}\,\,\underset{t \in \mathbb{R}}{\sup} \,   \sum_{i=1}^{n}  \xi_i  1_{\{ y_i\leq t\}} \theta_i  \bigg)\\
		& \leq & \displaystyle \frac{2}{\eta^2}\mathbb{E}\bigg( \mathbb{E}\bigg(\underset{t \in \mathbb{R}}{\sup} \, \,\underset{  \theta \in K-K\,:\,    \|\theta \|\leq \eta  }{\sup}\,\,  \sum_{i=1}^{n}  \xi_i  1_{\{ y_i\leq t\}} \theta_i   |  y\bigg)  \bigg)\\
		& =&    	  \displaystyle \frac{2}{\eta^2}\mathbb{E}\bigg( \mathbb{E}\bigg(   \,\underset{t \in   \{  y_1,\ldots,y_n   \}  }{\max} \,\,   \underset{  \theta \in K-K\,:\,    \|\theta \|\leq \eta  }{\sup}\,  \sum_{i=1}^{n}  \xi_i  1_{\{ y_i\leq t\}} \theta_i   \bigg|  y\bigg)  \bigg)\\   
	\end{array}
	\]
	where the second inequality follows from Markov's inequality,  the fourth by simmetrization.  Next, notice that for a fixed $t$ and $y$, the random variables $\{  \xi_i  1_{\{ y_i\leq t\}}\}_{i=1}^n$ are subGaussian($1$). Hence, by Lemma \ref{lem3},
	\[
	\begin{array}{lll}
		\displaystyle  \mathbb{E}\bigg(   \,\underset{t \in   \{  y_1,\ldots,y_n   \}  }{\max} \,\,   \underset{  \theta \in K-K\,:\,    \|\theta \|\leq \eta  }{\sup}\,  \sum_{i=1}^{n}  \xi_i  1_{\{ y_i\leq t\}} \theta_i   \bigg|  y\bigg) &\leq & \displaystyle C \int_{0}^{\eta/4}    \sqrt{\log N(\varepsilon,(K-K)\cap   B_{\varepsilon}(0),\|\cdot\|   )} d\varepsilon  \\
		& & \displaystyle \,+ \,  C  \eta \sqrt{\log n},
	\end{array}
	\]
	for some constant $C>0$. The claim then follows.
	%and the last by Talagrand's inequality.
	
\end{proof}

\subsection{Proof of Corollary \ref{thm_isotonic}}

\begin{proof}
	Following the proof of Theorem 2.2  in \citet{chatterjee2014new}, we obtain that for any positive integer $l$ it holds, for $g \sim  N(0,  I_n)$, that 
	\begin{equation}
		\label{eqn:lc_gaussian_iso}
	\begin{array}{lll}
	\displaystyle 		\mathbb{E}\left[  \underset{ \theta \in K_t   \,:\,  \| \theta  - F^*(t)\|\leq  \eta }{\sup}   \, g^{\top} (\theta - F^*(t))     \right] &=&	\displaystyle \mathbb{E}\left[  \underset{ \theta \in K   \,:\,  \| \theta  - F^*(t)\|\leq  \eta }{\sup}   \, g^{\top} (\theta - F^*(t))    \right]  \\
	 &\leq & \displaystyle  C_1 \left[   2\sqrt{2^l   \eta  } n^{1/4} \,+ \, \frac{\eta^2}{2^{l-1}  }   \right]
%		 & \leq& \displaystyle 
	\end{array}
	\end{equation}
	for a positive constant $C_1$. Next, let $L$ the constant in (\ref{eqn:lc_gaussian}).  We now choose $l$ large enough such that 
	\[
 \frac{ C_1 \eta^2}{2^{l-1}  }  	\,\leq\, \frac{\eta^2}{2L}.
	\]
	Furthermore,  for this choice of $l$, we can choose $\eta$ as $\eta \asymp n^{1/6}$ such that 
	\[
	C_1  2\sqrt{2^l   \eta  } n^{1/4}  \,\leq\,  \frac{\eta^2}{2L}.
	\]
	Thus,  for a choice of $\eta $  satisfying  $\eta \asymp n^{1/6}$, we obtain 
	\[
		\underset{t \in \mathbb{R}   }{\sup}\,	\mathbb{E}\left[  \underset{ \theta \in K_t   \,:\,  \| \theta  - F^*(t)\|\leq  \eta }{\sup}   \, g^{\top} (\theta - F^*(t))     \right]  \,\leq\, \frac{\eta^2 }{L},
	\]
    and so (\ref{eqn:e11}) follows from Theorem \ref{thm2}.    Furthermore, the corresponding conclusion for $\{\widetilde{F}(t)\}_{t\in \mathbb{R}}$ follows from Corollary \ref{cor1}.

	Finally, we notice that for some positive constant $C_2$
	\[
     \begin{array}{lll}
\displaystyle      	\int_{0}^{\eta/4}    \sqrt{\log N(\varepsilon,(K\cap [a,b] -K \cap [a,b])\cap   B_{\varepsilon}(0),\|\cdot\|   )} d\varepsilon   & \leq & \displaystyle 2	\int_{0}^{\eta/4}    \sqrt{\log N(\varepsilon/2,(K\cap [a,b] \cap   B_{\varepsilon}(0),\|\cdot\|   )} d\varepsilon   \\
 & \leq& \displaystyle 2	\int_{0}^{\eta/4}    \sqrt{\log N(\varepsilon/2,(K\cap [a,b] \cap   B_{\varepsilon}(0),\|\cdot\|   )} d\varepsilon   \\
  & \leq& \displaystyle 2	\int_{0}^{\eta/4}    \sqrt{  \frac{  2C_2 \sqrt{n}   (b-a) }{ \varepsilon }   }   d\varepsilon   \\
   & \leq &\displaystyle 2 \sqrt{2C_2  (b-a) } n^{1/4} \eta^{1/2}
      \end{array}
	\]
	where the third inequality follows from Lemma 4.20 in \citet{chatterjee2014new}. Therefore, the claim in (\ref{eqn:e12}) follows form Theorem \ref{thm3}  by taking $\eta$ satisfying $\eta \asymp (b-a)^{1/3}n^{1/6}  \,+\, \sqrt{\log n}$.
\end{proof}

\newpage

\subsection{Proof of Theorem \ref{thm6} }

\begin{proof}
First  we observe that  by the basic inequality, for all $t  \in \mathbb{R}$,
\begin{equation}
\label{eqn:e4}
\frac{ \|F(t)-F^*(t)\|^2   }{2}\,\leq\, a(t)^{\top} (   F(t) -  F^*(t)    )  +        \lambda_t[\mathrm{pen}_t (    F^*(t) ) \,- \, \mathrm{pen}_t (    F(t) ) ]
\end{equation}
where  $a(t) =  w(t) - F^*(t)$  for  all $t \in \mathbb{R}$ and  $i \in \{1,\ldots,n\}$, and where the inequality holds for all
\[
   F(t)   \,\in \, \Lambda(t)  \,:=\, \left\{  s \widehat{F}(t) +    (1-s) F^*(t) \,:\,  s\in [0,1]      \right\} \subset \mathbb{R}^n.   
\]
Therefore,
\[
\begin{array}{lll}
\displaystyle 	\mathrm{pen}_t(   F(t) )       & \leq&  \displaystyle  \mathrm{pen}_t(   F(t) )   \,+\,     \frac{ \|F(t)-F^*(t)\|^2   }{2\lambda_t} \\
 & \leq &   \displaystyle \frac{a(t)^{\top} (   F(t) -  F^*(t)    ) }{\lambda_t}   +       \mathrm{pen}_t(    F^*(t) )
\end{array}
\]
for all $F(t) \in  \Lambda(t)$ and  $t \in \mathbb{R}$.
Hence, by the properties of $\mathrm{pen}_t(\cdot)$, we have that 
\begin{equation}
\label{eqn:e2}
\begin{array}{lll}
	\displaystyle 	\mathrm{pen}_t(   F(t)  - F^*(t)  )  & \leq&\displaystyle \mathrm{pen}_t(   F(t))   \,+\,    \mathrm{pen}_t( F^*(t)  )   \\
	& \leq&    \displaystyle \frac{a(t)^{\top} (   F(t) -  F^*(t)    ) }{\lambda_t}   +      2  \mathrm{pen}_t(    F^*(t) )\\
	%   	content...
\end{array}
\end{equation}
for all $F(t) \in  \Lambda(t)$ and  $t \in \mathbb{R}$.

Now  suppose that there exists $F(t) \in \Lambda(t)$   such that 
\[
      \| F(t) -  F^*(t) \|     \,\leq \, \eta^2
\]
and  $\mathrm{pen}_t(    F(t) )   \geq  5\mathrm{pen}_t(    F^*(t) )$. Then 
\[
      \begin{array}{lll}
        \mathrm{pen}_t(   F(t) -  F^*(t) )    & \geq&              \mathrm{pen}_t(   F(t) )\,-\,         \mathrm{pen}_t( F^*(t) )  \\ 
        & \geq& 4\mathrm{pen}_t( F^*(t) ) .
          \end{array}
\]
Hence, we let  
\[
s   \,:=\  \frac{     4\, \mathrm{pen}_T(  F^*(t) )  }{   \mathrm{pen}_t (    F(t)-F^*(t)  )  }   \in [0,1].
\]
Then we set 
\[
   \widetilde{F}(t) \,:=   s F(t)  \,+\, (1-s) F^*(t)        \,\in  \,  \Lambda(t).
 \]
 As a result,
 \[
        \|   F^*(t) - \widetilde{F}(t) \|^2 \,\leq\,\|   F^*(t) -  F(t) \|^2   \,\leq \eta^2.
 \]
 Also,
 \[
    \begin{array}{lll}
    	\mathrm{pen}_t(  \widetilde{F}(t) - F^*(t)   )  & = &    \mathrm{pen}_t(s F(t)  \,+\, (1-s) F^*(t)       -    F^*(t)   ) \\
    	& = &    \mathrm{pen}_t(s F(t)  \,-\, s F^*(t)      ) \\
    		& = &    s\mathrm{pen}_t( F(t)  \,-\, F^*(t)      ) \\
    		  & = & 4\mathrm{pen}_t(   F^*(t) ).
    \end{array}
 \]
 Therefore, 
 \[
 \begin{array}{lll}
 4\mathrm{pen}_t(   F^*(t) )    & = &\mathrm{pen}_t(  \widetilde{F}(t) - F^*(t)   )  \\
  & \leq&      \displaystyle \frac{a(t)^{\top} (     \widetilde{F}(t)  - F^*(t)  ) }{\lambda_t}   +      2  \mathrm{pen}_t(    F^*(t) ),
 \end{array}
 \]
 where the inequality follows from (\ref{eqn:e2}). This implies that 
 \begin{equation}
 \label{eqn:e3}
   2  \mathrm{pen}(      F^*(t)  ) \,\leq \, \frac{a(t)^{\top} (  \widetilde{F}(t)    -  F^*(t)  ) }{\lambda_t}.
 \end{equation}
 Hence, we  let
 \[
       \lambda _t\,=\,   \frac{\eta^2}{  4   \mathrm{pen}(  F^*(t))  }.
 \]
 Then from (\ref{eqn:e3})  we obtain that
 \[
        \frac{\eta^2}{2} \,\leq \,      a(t)^{\top} (    \widetilde{F}(t)   - F^*(t)     ) .
 \]
 It follows that the events
 \[
  \Omega_1 \,:=\, \underset{t \in \mathbb{R} }{\bigcup}\left\{\,\, \underset{F(t) \in  \Lambda(t) \,:\,     \| F(t) - F^*(t) \| \leq \eta }{\sup}   \, \mathrm{pen}_t(   F(t)  ) \,\geq  \,   5    \mathrm{pen}_t(   F^*(t)  ) \,\,\, \right\}
 \]
 and
  \[
 \Omega_2 \,:=\, \left\{\underset{t \in \mathbb{R}}{\sup}\,\, \underset{F(t) \in  \Lambda(t) \,:\,     \| F(t) - F^*(t) \| \leq \eta,\,\,     \mathrm{pen}( F^*(t)- F(t)  ) \leq  4 \mathrm{pen}( F^*(t)  ) }{\sup}   \,a(t)^{\top} (   F(t) - F^*(t)    ) \,\geq  \,   \frac{\eta^2}{2}  \right\}
 \]
 satisfy that $\Omega_1  \subset \Omega_2$. Hence, $\mathbb{P}(\Omega_1) \leq \mathbb{P}(\Omega_2)$.

Next, we observe that if   $\|  \hat{F}(t) -  F(t) \|  \geq  \eta$, then there exists $F(t) \in \Lambda(t)$  such that $\| F(t)  - F^*(t ) \|=\eta$. This implies, by (\ref{eqn:e4}), that 
\[
   \frac{\eta^2}{2}\,\leq \,      a(t)^{\top} (F(t) -F^*(t) )  \,+\,  \lambda_t\left[  \mathrm{pen}(F^*(t) ) - \mathrm{pen}(F(t))  \right]   
\]
and so from our choice of $\lambda_t$
\begin{equation}
	\label{eqn:e5}
	 \frac{\eta^2}{4}\,\leq\,   a(t)^{\top} (F(t)-F^*(t) )  .
\end{equation}
Thus, (\ref{eqn:e5})  holds for some $F(t) \in \Lambda(t)$  with $\|F(t) -F^*(t) \|\leq \eta$ provided that  $\| \widehat{F}(t)-F^*(t)\| >\eta$. Therefore,
\begin{equation}
	\label{eqn:e6}
	\begin{array}{lll}
		\mathbb{P}\left(   \underset{t \in  \mathbb{R}}{\sup}\, \| \widehat{F}(t)-F^*(t)\| \,>\,\eta      \right)  & \leq &    \mathbb{P}\left(  \left\{ \underset{t \in  \mathbb{R}}{\sup}\, \| \widehat{F}(t)-F^*(t)\| \,>\,\eta   \right\} \cap  \Omega_1^c  \right) \,+\, \mathbb{P}(   \Omega_1 )\\
		& \leq&  \mathbb{P}\bigg(       \underset{t \in  \mathbb{R} }{\sup}\,\,\underset{F(t)  \in \Lambda(t) \,:\,  \|F^*(t) -F(t) \|\leq \eta,  \,\,\mathrm{pen}(  F(t))  \leq  5  \mathrm{pen}(  F^*(t))  }{\sup}   a(t)^{\top} (F(t)-F^*(t)  )   \\
		& &\displaystyle \,\,\,\,\,\,\,\,\,\,\,\,      \, \geq \,  \frac{\eta^2}{4}    \bigg)\,+\, \mathbb{P}(   \Omega_2 )\\
		& \leq&  2\,\mathbb{P}\bigg(       \underset{t \in  \mathbb{R} }{\sup}\,\,\underset{F(t)  \in \Lambda(t) \,:\,  \|F(t) -F^*(t) \|\leq \eta,  \,\,\mathrm{pen}(  F(t))  \leq  5  \mathrm{pen}(  F^*(t))  }{\sup}   a(t)^{\top} (F(t) -F^*(t) )   \\
		& &\displaystyle \,\,\,\,\,\,\,\,\,\,\,\,      \, \geq \,  \frac{\eta^2}{4}    \bigg)\\
		& \leq& \displaystyle  \frac{8}{\eta^2}\mathbb{E}\left(   \underset{t \in  \mathbb{R} }{\sup}\,\,\underset{F(t)  \,:\,  \|F^*(t) -F(t) \|\leq \eta,  \,\,\mathrm{pen}(  F(t))  \leq  5  \mathrm{pen}(  F^*(t))  }{\sup}   a(t)^{\top} (F(t)-F^*(t) ) \right)\\
		& \leq& \displaystyle  \frac{8}{\eta^2}\mathbb{E}\left(   \underset{t \in  \mathbb{R} }{\sup}\,\,\underset{\theta  \in K \,:\,  \|\theta\|\leq \eta }{\sup}   a(t)^{\top}\theta \right).
	\end{array}
\end{equation}
 Hence, for $\xi_1,\ldots,\xi_n$ independent Rademacher variables  independent of $y$, we have that
 \[
 %\displaystyle \frac{2}{r^2}\mathbb{E}\bigg(\underset{t \in \mathbb{R}}{\sup}\, \, \underset{  \theta \in K-K\,:\,    \|\theta \|\leq r  }{\sup} \,   \sum_{i=1}^{n}  \xi_i  1_{\{ y_i\leq t\}} \theta_i  \bigg)\\
 \begin{array}{lll}
 	\mathbb{P}\left(   \underset{t \in  \mathbb{R}}{\sup}\, \| \widehat{F}(t)-F^*(t)\| \,>\,\eta      \right)  & \leq &  \displaystyle \frac{16}{\eta^2}  \mathbb{E}\bigg(\underset{t \in \mathbb{R}}{\sup}\, \, \underset{  \theta \in K\,:\,    \|\theta \|\leq \eta }{\sup} \,   \sum_{i=1}^{n}  \xi_i  1_{\{ y_i\leq t\}} \theta_i  \bigg)\\
 		& =& \displaystyle \frac{16}{\eta^2}\mathbb{E}\bigg( \mathbb{E}\bigg(\underset{t \in \mathbb{R}}{\sup} \,\,\underset{  \theta \in K\,:\,    \|\theta \|\leq \eta  }{\sup}\,   \sum_{i=1}^{n}  \xi_i  1_{\{ y_i\leq t\}} \theta_i    \bigg|   y\bigg)  \bigg)\\
 	& =&    	  \displaystyle \frac{16}{\eta^2}\mathbb{E}\bigg( \mathbb{E}\bigg(   \,\underset{t \in   \{  y_1,\ldots,y_n   \}  }{\max} \,\,   \underset{  \theta \in K\,:\,    \|\theta \|\leq \eta  }{\sup}\,  \sum_{i=1}^{n}  \xi_i  1_{\{ y_i\leq t\}} \theta_i   \bigg|  y\bigg)  \bigg)\\   
 	%\mathbb{P}\left(  \left\{ \underset{t \in  \mathbb{R}}{\sup}\, \| \hat{F}(t)-F^*(t)\| \,>\,\eta   \right\} \cap  \Omega_1^c  \right) \,+\, \mathbb{P}(   \Omega_1 )\\
 	%& \leq&  
 	%\mathbb{P}\bigg(       \underset{t \in  \mathbb{R} }{\sup}\,\,\underset{F(t)  \in \Lambda(t) \,:\,  \|F(t) -F^*(t) \|\leq \eta,  \,\,\mathrm{pen}(  F(t))  \leq  5  \mathrm{pen}(  F^*(t))  }{\sup}   a(t)^{\top} (F(t) -F^*(t) )   \\
 	%& &\displaystyle \,\,\,\,\,\,\,\,\,\,\,\,      \, \geq \,  \frac{\eta^2}{4}    \bigg)\,+\, \mathbb{P}(   \Omega_2 )\\
 	%& \leq&  2\,\mathbb{P}\bigg(       \underset{t \in  \mathbb{R} }{\sup}\,\,\underset{F(t)  \in \Lambda(t) \,:\,  \|F(t) -F^*(t) \|\leq \eta,  \,\,\mathrm{pen}(  F(t))  \leq  5  \mathrm{pen}(  F^*(t))  }{\sup}   a(t)^{\top} (F(t) -F^*(t) )   \\
 	%& &\displaystyle \,\,\,\,\,\,\,\,\,\,\,\,      \, \geq \,  \frac{\eta^2}{4}    \bigg)\\
 	%& \leq& \displaystyle  \frac{8}{\eta^2}\mathbb{E}\left(   \underset{t \in  \mathbb{R} }{\sup}\,\,\underset{F(t)  \,:\,  \|F(t) -F^*(t) \|\leq \eta,  \,\,\mathrm{pen}(  F(t))  \leq  5  \mathrm{pen}(  F^*(t))  }{\sup}   a(t)^{\top} (F(t) -F^*(t) ) \right)\\
 	%& \leq& \displaystyle  \frac{8}{\eta^2}\mathbb{E}\left(   \underset{t \in  \mathbb{R} }{\sup}\,\,\underset{\theta  \in K \,:\,  \|\theta -F^*(t) \|\leq \eta }{\sup}   a(t)^{\top}\theta \right).
 \end{array}
 \]
 where the first inequality follows from (\ref{eqn:e6}) and symmetrization. However, by Lemma \ref{lem3},
 \[
 \begin{array}{lll}
 	\displaystyle  \mathbb{E}\bigg(   \,\underset{t \in   \{  y_1,\ldots,y_n   \}  }{\max} \,\,   \underset{  \theta \in K\,:\,    \|\theta \|\leq \eta  }{\sup}\,  \sum_{i=1}^{n}  \xi_i  1_{\{ y_i\leq t\}} \theta_i   \bigg|  y\bigg) &\leq & \displaystyle C \int_{0}^{\eta/4}    \sqrt{\log N(\varepsilon,K\cap   B_{\varepsilon}(0),\|\cdot\|   )} d\varepsilon  \\
 	& & \displaystyle \,+ \,  C  \eta \sqrt{\log n},
 \end{array}
 \]
 for some constant $C>0$. The claim then follows.

\end{proof}

\subsection{Proof of 
Corollary  \ref{thm5} }

\begin{proof}
Let $g \sim  N(0,  I_n)$ and set
\begin{equation}
	\label{eqn:tv_set2}
	K \,:=\,   \left\{  \theta   \in \mathbb{R}^n  \,:\,    	\mathrm{TV}^{(r)}(\theta) \leq V \right\}.
\end{equation}
Hence, 
%and notice that 
	\begin{equation}
		\label{eqn:lc_gaussian_tv}
	\begin{array}{lll}
	\displaystyle 		\mathbb{E}\left[  \underset{ \theta \in K_t   \,:\,  \| \theta  - F^*(t)\|\leq  \eta }{\sup}   \, g^{\top} (\theta - F^*(t))     \right] &\leq&	\displaystyle \mathbb{E}\left[  \underset{ \theta \in K   \,:\,  \| \theta  - F^*(t)\|\leq  \eta }{\sup}   \, g^{\top} (\theta - F^*(t))    \right]  \\
	 &\leq & \displaystyle  C_r \left[  \eta \left( \frac{\sqrt{n}V }{\eta} \right)^{1/2r} \,+\,\eta \sqrt{\log n} \right]
%		 & \leq& \displaystyle 
	\end{array}
	\end{equation}
where $C_r>0$  is a constant the second inequality follows from Lemma B.1 in \citet{guntuboyina2020adaptive}. 

Next, notice that 
\[
C_r   \eta \left( \frac{\sqrt{n}V }{\eta} \right)^{1/2r}\,\leq\,\frac{\eta^2}{2L}
\]
holds if
\[
(2LC_r)^{2r/(2r+1)} n^{1/(4r+2)}  V^{1/(2r+1)} \,\leq\, \eta.
\]
Also, 
\[
C_r\eta \sqrt{\log n} \,\leq \, \frac{\eta^2}{2L }
\]
provided that   $2LC_r\sqrt{\log n} \leq \eta $. Hence, taking
\[
\eta =  \max\{ (2LC_r)^{2r/(2r+1)} n^{1/(4r+2)}  V^{1/(2r+1)} ,   2LC_r\sqrt{\log n} \}
\]
we obtain (\ref{eqn:e22}).

The claim for $\widetilde{F}(t)$ follows from Corollary \ref{cor1}.

To show (\ref{eqn:e23}), we observe that by the proof of Theorem B.1 in \citet{guntuboyina2020adaptive},  for $0<\eta <n$, we have 
	\[
      \arraycolsep=1.4pt\def\arraystretch{1.6}
     \begin{array}{lll}
\displaystyle      	\int_{0}^{\eta/4}    \sqrt{\log N(\varepsilon,(K -K )\cap   B_{\varepsilon}(0),\|\cdot\|   )} d\varepsilon   & \leq & \displaystyle 2	\int_{0}^{\eta/4}    \sqrt{\log N(\varepsilon/2,K\cap   B_{\varepsilon}(0),\|\cdot\|   )} d\varepsilon   \\
 & \leq& \tilde{C}_{r} \left[  \eta \left( \frac{\sqrt{n}V }{\eta} \right)^{1/2r} \,+\,\eta \sqrt{\log n} \right]\\
%  & \leq& \displaystyle 2	\int_{0}^{\eta/4}    \sqrt{  \frac{  2C_2 \sqrt{n}   (b-a) }{ \varepsilon }   }   d\varepsilon   \\
 %  & \leq &\displaystyle 2 \sqrt{2C_2  (b-a) } n^{1/4} \eta^{1/2}
      \end{array}
	\]
    for some positive constant $\tilde{C}_r$. Hence, (\ref{eqn:e23}) follows with the same argument as above. 

\end{proof}
\subsection{Proof of Theorem \ref{thm9}}
\label{sec:proof_thm9}

\begin{proof}
Notice that for any 
$t \in \mathbb{R}$ we have that 
\[
 \|F(t)- H(t)\|^2 \,\leq  \,  2n B
\]
for all $F(t), H(t) \in K_t$.

%Hence, we let

%and  $a_i(t) =  F_i^*(t) - 1_{ \{  y_i\leq t\} }$.  
%We also 

Then 
\[
\begin{array}{lll}
\mathbb{P}\left(      \underset{t\in \mathbb{R}}{\sup}\,   \|  \widehat{F}(t) - G(t) \|   \,>\,\eta   \right)  & =& \displaystyle  \mathbb{P}\left(      \underset{t\in \mathbb{R}}{\sup}\,   \|  \widehat{F}(t) - G(t) \|^2   \,>\,\eta ^2 ,\,\,\underset{t\in \mathbb{R}}{\sup}\,   \|  \widehat{F}(t) - G(t) \|^2 \leq 2n B\right)  \\
 & \leq& \displaystyle \sum_{j=1}^J   \mathbb{P}\left(      \underset{t\in \mathbb{R}}{\sup}\,   \|  \widehat{F}(t) - G(t) \|^2   \,>\,2^{j-1}\eta ^2 ,\,\,\underset{t\in \mathbb{R}}{\sup}\,   \|  \widehat{F}(t) - G(t) \|^2 \leq 2^j \eta^2 \right)  \\
  & \leq&\displaystyle \sum_{j=1}^J   \mathbb{P}\bigg(      \underset{t\in \mathbb{R}}{\sup}\,  2\sum_{i=1}^{n}(\widetilde{F}_i(t)-   1_{  \{ y_i\leq t \} }  )(G_i(t)-\widehat{F}_i(t) )    \,>\,2^{j-1}\eta ^2 ,\,\,\\
   & &\,\,\,\,\,\,\,\,\,\,\,\,\,\,\,\, \underset{t\in \mathbb{R}}{\sup}\,   \|  \widehat{F}(t) - G(t) \|^2 \leq 2^j \eta^2 \bigg)  \\
    & \leq&\displaystyle \sum_{j=1}^J    \mathbb{P}\bigg(  \underset{t\in \mathbb{R}}{\sup}  \,\,\underset{F(t) \in K_t \,:\,  \|  F(t)- G(t)\|^2\leq  2^{j }\eta^2   }{\sup} \, \sum_{i=1}^{n} (G_i(t)-   1_{  \{ y_i\leq t \} }  )(G_i(t)-F_i(t) )    \\
     & &\,\,\,\,\,\,\,\,\,\,\,\,\,>\,2^{j-2}\eta ^2\bigg)\\
     % & \leq&\
\end{array}
\]
where the first inequalit follows by union bound, the second by the basic inequality. However, 
\[
  \begin{array}{l}
\displaystyle  \mathbb{P}\bigg(  \underset{t\in \mathbb{R}}{\sup}  \,\,\underset{F(t) \in K_t \,:\,  \|  F(t)- G(t)\|^2\leq  2^{j }\eta^2   }{\sup} \, \sum_{i=1}^{n} (G_i(t)-   1_{  \{ y_i\leq t \} }  )(G_i(t)-F_i(t) )  \,>\,  2^{j-2} \eta^2\bigg)\\
  \leq  	\displaystyle     \frac{1}{2^{j-2} \eta^2}\mathbb{E}\left(  \underset{t\in \mathbb{R}}{\sup}  \,\,\underset{F(t) \in K_t \,:\,  \|  F(t)- G(t)\|^2\leq  2^{j }\eta^2   }{\sup} \, \sum_{i=1}^{n} (G_i(t)-   1_{  \{ y_i\leq t \} }  )(G_i(t)-F_i(t) )   \right)\\
		 \leq\displaystyle     \frac{1}{2^{j-2} \eta^2}\mathbb{E}\left(  \underset{t\in \mathbb{R}}{\sup}  \,\,\underset{F(t) \in K_t \,:\,  \|  F(t)- G(t)\|^2\leq  2^{j }\eta^2   }{\sup} \, \sum_{i=1}^{n} (F_i^*(t)-   1_{  \{ y_i\leq t \} }  )(G_i(t)-F_i(t) )   \right)\\
	\displaystyle     \,+\,  \frac{1}{2^{j-2} \eta^2} \underset{t\in \mathbb{R}}{\sup}  \,\,\underset{F(t) \in K_t \,:\,  \|  F(t)- G(t)\|^2\leq  2^{j }\eta^2   }{\sup} \, \sum_{i=1}^{n}  (  G_i(t)- F_i^*(t)  )(G_i(t)-F_i(t) )\\
	\leq\displaystyle     \frac{1}{2^{j-2} \eta^2}\mathbb{E}\left(  \underset{t\in \mathbb{R}}{\sup}  \,\,\underset{F(t) \in K_t \,:\,  \|  F(t)- G(t)\|^2\leq  2^{j }\eta^2   }{\sup} \, \sum_{i=1}^{n} (F_i^*(t)-   1_{  \{ y_i\leq t \} }  )(G_i(t)-F_i(t) )   \right)\\
	\displaystyle     \,+\,  \frac{4\sqrt{n}   }{2^{j/2} \eta} \,\, \underset{t\in \mathbb{R}}{\sup}\,  \| G(t)-F^*(t) \|_{\infty} 
\end{array}
\]
for some constant $C>0$, where the first inequality follows from Markov's inequality, and  the last inequality holds by Cauchy–Schwarz inequality.  Furthermore, 
\[
 \begin{array}{l}
 \displaystyle 	\mathbb{E}\left(  \underset{t\in \mathbb{R}}{\sup}  \,\,\underset{F(t) \in K_t \,:\,  \|  F(t)- G(t)\|^2\leq  2^{j }\eta^2   }{\sup} \, \sum_{i=1}^{n} (F_i^*(t)-   1_{  \{ y_i\leq t \} }  )(G_i(t)-F_i(t) )   \right)\\
  \displaystyle \leq 	\mathbb{E}\left(  \underset{t\in \mathbb{R}}{\sup}  \,\,\underset{F(t) \in K_t -K_t\,:\,  \|  F(t)\|^2\leq  2^{j }\eta^2   }{\sup} \, \sum_{i=1}^{n} (F_i^*(t)-   1_{  \{ y_i\leq t \} }  )F_i(t)    \right)\\
   \displaystyle \leq   C \int_{0}^{2^{j/2}\eta/4}    \sqrt{\log N(\varepsilon,(K-K)\cap   B_{\varepsilon}(0),\|\cdot\|   )} d\varepsilon  
 \,+ \,  C  2^{j/2}\eta \sqrt{\log n},
 \end{array}
\]
where the last inequality follows as in the proof of Theorem \ref{thm3}. Therefore,
\[
  \begin{array}{lll}
  	\mathbb{P}\left(      \underset{t\in \mathbb{R}}{\sup}\,   \|  \widehat{F}(t) - G(t) \|   \,>\,\eta   \right)    & \leq&\displaystyle \sum_{j=1}^{J}\frac{1}{2^{j-2} \eta^2}\bigg[ C \int_{0}^{2^{j/2}\eta/4}    \sqrt{\log N(\varepsilon,(K-K)\cap   B_{\varepsilon}(0),\|\cdot\|   )} d\varepsilon  
  	\\
  	 & &\displaystyle \,+ \,  C   2^{j/2}\eta \sqrt{\log n}\bigg]  \,+\, \sum_{j=1}^{J} \frac{4\sqrt{n}   }{2^{j/2} \eta} \,\, \underset{t\in \mathbb{R}}{\sup}\,  \| G(t)-F^*(t) \|_{\infty} \\
  	  & \leq&\displaystyle   \frac{C  }{\eta^2}\sum_{j=1}^{J}    \frac{1}{2^{j-2}} \int_{0}^{  2^{j/2} \eta/4 }\sqrt{\log N(\varepsilon,(K-K)\cap   B_{\varepsilon}(0),\|\cdot\|   )} d\varepsilon  \,+\,\\
  	   & &\displaystyle \frac{4C \sqrt{\log n} }{\eta} \sum_{j=1}^{J}   \left( \frac{1}{2^{1/2}}\right)^j\,+\, \frac{4 \sqrt{n}}{\eta}\underset{t\in \mathbb{R}}{\sup}\,  \| G(t)-F^*(t) \|_{\infty}   \sum_{j=1}^{J} \left( \frac{1}{2^{1/2}}\right)^j\\
  	    & \leq&   \displaystyle   \frac{C  }{\eta^2}\sum_{j=1}^{J}    \frac{1}{2^{j-2}} \int_{0}^{  2^{j/2} \eta/4 }\sqrt{\log N(\varepsilon,(K-K)\cap   B_{\varepsilon}(0),\|\cdot\|   )} d\varepsilon  \,+\,\\
  	       & &\displaystyle \frac{4C \sqrt{\log n} }{\eta}  \frac{2^{-1/2}}{1- 2^{-1/2}}\,+\, \frac{4 \sqrt{n}}{\eta}\underset{t\in \mathbb{R}}{\sup}\,  \| G(t)-F^*(t) \|_{\infty} \frac{2^{-1/2}}{1- 2^{-1/2}}\\
  \end{array}
\]
and the claim follows  noticing that
\[
  \begin{array}{lll}
   	\mathbb{P}\left(      \underset{t\in \mathbb{R}}{\sup}\,   \|  \widehat{F}(t) - F^*(t) \|   \,>\,\eta   +    \sqrt{n}\| F^*(t)- G(t)\|_{\infty}  \right)  & \leq&	\mathbb{P}\left(      \underset{t\in \mathbb{R}}{\sup}\,   \|  \widehat{F}(t) - G(t) \|   \,>\,\eta   \right) .
  \end{array}
\]
\end{proof}

\newpage
\subsection{Proof  of Corollary \ref{thm11} }
\label{Proofofthm11}

We begin by restating the corollary to be proved. 

\begin{corollary}
    \label{thm11-aux}
    Let $\widehat{F}(t)$ be the estimator from (\ref{eqn:estimator}) with the set  $K_t$ as in (\ref{eqn:set_k}) for all $t\in \mathbb{R}$ with $F^*(t)$ not necessarily in $K_t$. Suppose that Assumption \ref{as1}, described in Appendix~\ref{Appendix-DRN}, holds. Let
    \[
    \phi_{n} = \max_{(p, M) \in \mathcal P } n^{\frac{-2p}{ (2p+M)}}. % = \max_{\left(p, K \right) \in \mathcal P  }\left(n\bar{m}\right)^{\frac{-2p}{\left(2p+K\right)}}
\]
    There exists  positive constants $c_1$ and $c_2$ such that if 
\begin{equation}
    \label{eqn:choice1}
       L  =\lceil c_1 \log n \rceil\,\,\,\,\,\,\text{and}\,\,\,\,\,\, \nu = \left\lceil c_2  \max_{(p, M) \in \mathcal P  } n^{\frac{M}{ 2(2p+M)}}\right\rceil
\end{equation}
or 
\begin{equation}
    \label{eqn:choice2}
    L  =\left\lceil c_1  \max_{(p, M) \in \mathcal P } n^{\frac{M}{ 2(2p+M)}} \log n\right\rceil\,\,\,\,\,\,\text{and}\,\,\,\,\,\, \nu = \left\lceil c_2  \right\rceil,
\end{equation}
then
    \begin{equation}
       	\label{eqn:e30-aux}
       	 \underset{t \in \mathbb{R}}{\sup} \,    \sum_{i=1}^{n}   \frac{1}{n}\left(  \widehat{F}_i(t)    - F^*_i(t)  \right)^2  = O_{\mathbb{P}}\left(  \frac{\log n}{n} \,+\, \phi_n \log^4 n   \right).
       \end{equation}
 \end{corollary}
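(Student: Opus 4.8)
The plan is to derive Corollary~\ref{thm11-aux} as an application of Theorem~\ref{thm9} with the network constraint set $K_t = K$ given by \eqref{eqn:set_k}. The right-hand side of Theorem~\ref{thm9} splits into three contributions: the local-entropy integral of $K-K$, the $\sqrt{\log n}/\eta$ term, and the approximation error $\sqrt{n}\sup_t\|G(t)-F^*(t)\|_\infty$. So the proof naturally has two self-contained pieces: (i) an \emph{approximation} estimate, showing that for each $t$ there is a network $f\in\mathcal F(L,\nu)$ with $\sup_t\|f(x_\cdot)-F^*(\cdot)(t)\|_\infty$ small, and (ii) a \emph{complexity} estimate, bounding $N(\varepsilon, (K-K)\cap B_\varepsilon(0),\|\cdot\|)$ via the number of parameters (VC-type / pseudo-dimension) of the ReLU class $\mathcal F(L,\nu)$.

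For piece (i), I would invoke the approximation theory for hierarchical composition models from \citet{kohler2021rate}: under Assumption~\ref{as1}, for a suitable network architecture with depth $L$ and width $\nu$ chosen as in \eqref{eqn:choice1} or \eqref{eqn:choice2}, there is a ReLU network approximating each $(p,M)$-smooth composite function with sup-norm error of order $\max_{(p,M)\in\mathcal P} n^{-p/(2p+M)} \cdot \text{polylog}(n) = \sqrt{\phi_n}\,\text{polylog}(n)$, uniformly in $t$ (the key point being that $G^*(\cdot,t)$ lies in the same class $\mathcal H(l,\mathcal P)$ for every $t$, so one architecture works simultaneously, and $\|F^*\|_\infty\le 1$ means the truncation to $\|f\|_\infty\le 1$ costs nothing). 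This yields $\sqrt{n}\sup_t\|G(t)-F^*(t)\|_\infty \lesssim \sqrt{n\phi_n}\,\text{polylog}(n)$. For piece (ii), the number of weights in $\mathcal F(L,\nu)$ is of order $L\nu^2$ (or $Ld_0\nu$ for the first layer), so by standard bounds on covering numbers of bounded ReLU classes in terms of pseudo-dimension, $\log N(\varepsilon, (K-K)\cap B_\varepsilon(0),\|\cdot\|) \lesssim (\text{params})\log(n/\varepsilon)$; plugging into the $j$-summed entropy integral in Theorem~\ref{thm9} and summing the geometric-in-$j$ factors gives a bound of order $(1/\eta)\sqrt{(\text{params})\log^2 n}$ up to constants, after the usual Dudley-integral computation where $\sqrt{\log(n/\varepsilon)}$ integrates to something $\lesssim \eta\sqrt{\log n}$.

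Then I would \emph{balance}: set $\eta^2 \asymp n\phi_n\log^4 n + \text{params}\cdot\log^2 n + \log n$, check $\eta>1$, and verify that the chosen architecture makes $\text{params}/n$ of the same order as $\phi_n\log^c n$ — this is exactly why the width/depth choices in \eqref{eqn:choice1}–\eqref{eqn:choice2} are calibrated to $n^{M/2(2p+M)}$, so that $L\nu^2 \asymp n\phi_n$ up to logs. With that choice, Theorem~\ref{thm9} gives $\sup_t\|\widehat F(t)-F^*(t)\| \le \eta + \sqrt n\sup_t\|G(t)-F^*(t)\|_\infty = O_{\mathbb P}(\sqrt{n\phi_n}\,\log^2 n + \sqrt{\log n})$ on an event of probability $\to 1$ (the RHS of Theorem~\ref{thm9} is $O(1)$ for $\eta$ a large enough constant multiple of this quantity, since each term is then bounded by a small constant). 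Dividing by $n$ yields $\sup_t \frac1n\|\widehat F(t)-F^*(t)\|^2 = O_{\mathbb P}(\phi_n\log^4 n + \frac{\log n}{n})$, which is \eqref{eqn:e30-aux}.

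The main obstacle I anticipate is bookkeeping the polylogarithmic factors and the interplay between the two architecture regimes \eqref{eqn:choice1} and \eqref{eqn:choice2}: one must show the \emph{same} final rate emerges whether one scales width or depth, which requires the parameter count $L\nu^2$ (equivalently the pseudo-dimension bound, which behaves like $W L \log W$ for ReLU nets with $W$ weights and depth $L$) to land at $\asymp n\phi_n$ up to logs in both cases — in the deep-narrow regime the extra factor of $L$ in the pseudo-dimension must be absorbed into the $\log^4 n$ budget. A secondary subtlety is that $K$ here is data-dependent through $x_1,\dots,x_n$; since Theorem~\ref{thm9} is stated for a fixed set $K$ with $K_t\subset K$, one should either condition on the design $\{x_i\}$ (the covering and approximation bounds hold uniformly in the design, since the network class does not depend on the $x_i$) and then take expectation, or note the i.i.d.\ design only enters through $\|f\|_\infty\le 1$ on $[0,1]^{d_0}$, which is design-free. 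Everything else is a routine assembly of \citet{kohler2021rate}'s approximation bound, a standard ReLU covering-number bound, and the Dudley-integral estimate already carried out in the proofs of Theorems~\ref{thm3} and~\ref{thm9}.
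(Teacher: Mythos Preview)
Your proposal is correct and follows essentially the same route as the paper: apply Theorem~\ref{thm9}, invoke the \citet{kohler2021rate} approximation bound to control $\sup_t\|G(t)-F^*(t)\|_\infty\lesssim\sqrt{\phi_n}$, bound the covering number of the ReLU class, plug into the $j$-summed entropy integral, and balance $\eta\asymp\sqrt{\log n}+\sqrt{n\phi_n}\,\log^2 n$. The paper also conditions on the design at the outset, exactly as you anticipate. The only cosmetic difference is that the paper quotes the covering bound $\log N(\varepsilon,\mathcal F(L,\nu),\|\cdot\|_n)\lesssim L^2\nu^2\log(L\nu)\log(1/\varepsilon)$ directly from Lemma~19 of \citet{kohler2021rate} (so the relevant complexity parameter is $L\nu\asymp(\log n)\sqrt{n\phi_n}$ in both architecture regimes), whereas you phrase it via the Bartlett-type pseudo-dimension bound $WL\log W$; as you correctly note in your ``obstacle'' paragraph, in the deep-narrow regime \eqref{eqn:choice2} the extra depth factor is what brings the parameter count up to $L^2\nu^2\asymp n\phi_n\log^2 n$, matching the paper's bound.
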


Then the proof of such result is provided. 

\begin{proof}
Throughout the proof, we condition on the covariates $x_i$'s and omit this dependence from the notation for simplicity. We proceed by using Theorem \ref{thm9}. First, for a vector $v \in  \mathbb{R}^n$, we let $\|v\|_n := \|v\|/\sqrt{n}$. Then, by  Theorem 3 in \citet{kohler2021rate}, it holds that 
\begin{equation}
    \label{eqn:e31}
     \underset{t \in \mathbb{R} }{\sup}\, \|F^*(t) - G(t) \|_{\infty}\,\leq\, C_1\sqrt{\phi_n}. 
\end{equation}
for some positive constant $C_1$.

Furthermore, as in the proof of Theorem 2 in \citet{zhang2024dense}, see also Lemma 19 in \citet{kohler2021rate}, we have that 
\[
   \log N( \varepsilon,  \mathcal{F}(L,\nu ),\| \cdot\|_n ) \lesssim L^2 \nu^2 \log( L \nu ) \log( \varepsilon^{-1}   )
\]
for $\varepsilon \in (0,1)$. Therefore, for $\epsilon \in (0,2\sqrt{n})$, we have that 
\[
   \log N( \varepsilon,  \mathcal{F}(L,\nu ),\| \cdot\|) \lesssim L^2 \nu^2 \log( L \nu ) \log( 2\varepsilon^{-1} \sqrt{n}  ).
\]
Therefore, for $\eta <\sqrt{n}$, with 
\[
J \,=\,  \left\lceil \frac{\log(2n/\eta^2)}{\log 2} \right\rceil,
\]
it holds that 
\begin{equation*}
    \begin{array}{l}
\displaystyle     \frac{C  }{\eta^2}\sum_{j=1}^{J}    \frac{1}{2^{j-2}} \int_{0}^{  2^{j/2} \eta/4 }\sqrt{\log N(\varepsilon,K({\varepsilon}),\|\cdot\|   )} d\varepsilon  \,+\,  \frac{C \sqrt{\log n} }{\eta} \,+\, \frac{C\sqrt{n}}{\eta}\underset{t\in \mathbb{R}}{\sup}\,  \| G(t)-F^*(t) \|_{\infty} \\
\leq  \displaystyle     \frac{C  }{\eta^2}\sum_{j=1}^{J}    \frac{1}{2^{j-2}} \int_{0}^{  2^{j/2} \eta/4 }\sqrt{\log N(\varepsilon,K -K,\|\cdot\|   )} d\varepsilon  \,+\,  \frac{C \sqrt{\log n} }{\eta} \,+\, \frac{C\sqrt{n}}{\eta}\underset{t\in \mathbb{R}}{\sup}\,  \| G(t)-F^*(t) \|_{\infty} \\
\leq  \displaystyle     \frac{C  }{\eta^2}\sum_{j=1}^{J}    \frac{1}{2^{j-2}} \int_{0}^{  2^{j/2} \eta/4 }\sqrt{\log N(\varepsilon,K -K,\|\cdot\|   )} d\varepsilon  \,+\,  \frac{C C_1 \sqrt{\log n} }{\eta} \,+\, \frac{C\sqrt{n \phi_n }}{\eta}.   \\
\lesssim  \displaystyle     \frac{1 }{\eta^2}\sum_{j=1}^{J}    \frac{1}{2^{j-2}} \int_{0}^{  2^{j/2} \eta/4 }\sqrt{\log N(\varepsilon/2,K,\|\cdot\|   )} d\varepsilon  \,+\,  \frac{\sqrt{\log n} }{\eta} \,+\, \frac{\sqrt{n \phi_n }}{\eta}   \\
\lesssim  \displaystyle     \frac{1 }{\eta^2}\sum_{j=1}^{J}    \frac{1}{2^{j-2}} \int_{0}^{  2^{j/2} \eta/4 }\sqrt{L^2 \nu^2 \log( L \nu ) \log( 4\varepsilon^{-1} \sqrt{n}  )} d\varepsilon  \,+\,  \frac{\sqrt{\log n} }{\eta} \,+\, \frac{\sqrt{n \phi_n }}{\eta} \\
\leq \displaystyle     \frac{L \nu \sqrt{\log(L \nu)} }{\eta^2}\sum_{j=1}^{J}    \frac{1}{2^{j-2}} \int_{0}^{  2^{j/2} \eta/4 }\sqrt{\log(n ) +  4\epsilon^{-1}  } d\varepsilon  \,+\,  \frac{\sqrt{\log n} }{\eta} \,+\, \frac{\sqrt{n \phi_n }}{\eta}  \\
    \end{array}
\end{equation*}
Moreover,
\begin{equation}
    \label{eqn:e32}
    \begin{array}{l}
 \displaystyle     \frac{L \nu \sqrt{\log(L \nu)} }{\eta^2}\sum_{j=1}^{J}    \frac{1}{2^{j-2}} \int_{0}^{  2^{j/2} \eta/4 }\sqrt{\log(n ) +  4\epsilon^{-1}  } d\varepsilon  \,+\,  \frac{\sqrt{\log n} }{\eta} \,+\, \frac{\sqrt{n \phi_n }}{\eta}   \\
\leq\displaystyle     \frac{L \nu \sqrt{\log(L \nu)} }{\eta^2}\sum_{j=1}^{J}    \frac{1}{2^{j-2}} \int_{0}^{  2^{j/2} \eta/4 } (\sqrt{\log(n )} +  2\epsilon^{-1/2})   d\varepsilon  \,+\,  \frac{\sqrt{\log n} }{\eta} \,+\, \frac{\sqrt{n \phi_n }}{\eta}   \\
\leq\displaystyle  \frac{L \nu \sqrt{\log(L \nu)} }{\eta^2}\sum_{j=1}^{J}    \frac{4\eta  \sqrt{\log n} }{2^{j/2}}     \,+\, \frac{L \nu \sqrt{\log(L \nu)} }{\eta^2}\sum_{j=1}^{J}    \frac{1}{2^{j-2}} \int_{0}^{  2^{j/2} \eta/4 }   2\epsilon^{-1/2}   d\varepsilon  \,+\,  \frac{\sqrt{\log n} }{\eta} \,+\, \frac{\sqrt{n \phi_n }}{\eta}   \\
\lesssim \displaystyle  \frac{L \nu \sqrt{\log(L \nu)  \log n } }{\eta}    \,+\, \frac{L \nu \sqrt{\log(L \nu)} }{\eta^2}\sum_{j=1}^{J}    \frac{1}{2^{j-2}} \int_{0}^{  2^{j/2} \eta/4 }   2\epsilon^{-1/2}   d\varepsilon  \,+\,  \frac{\sqrt{\log n} }{\eta} \,+\, \frac{\sqrt{n \phi_n }}{\eta}   \\
\lesssim \displaystyle  \frac{L \nu \sqrt{\log(L \nu)  \log n } }{\eta}    \,+\, \frac{L \nu \sqrt{\log(L \nu)} }{\eta^2}\sum_{j=1}^{J}    \frac{ \sqrt{\eta} }{2^{3j/4-2}}  \,+\,  \frac{\sqrt{\log n} }{\eta} \,+\, \frac{\sqrt{n \phi_n }}{\eta}.   \\
\lesssim \displaystyle  \frac{L \nu \sqrt{\log(L \nu)  \log n } }{\eta}   \,+\, \frac{L \nu \sqrt{\log(L \nu)} }{\eta^{3/2}  }  \,+\,  \frac{\sqrt{\log n} }{\eta} \,+\, \frac{\sqrt{n \phi_n }}{\eta}.   \\
     %  	\mathbb{P}\left(      \underset{t\in \mathbb{R}}{\sup}\,   \|  \widehat{F}(t) - F^*(t) \|   >\eta   +   \underset{t\in \mathbb{R}}{\sup} \sqrt{n}\| F^*(t)- G(t)\|_{\infty}  \right)  \\
    %\displaystyle 		 \leq   \frac{C  }{\eta^2}\sum_{j=1}^{J}    \frac{1}{2^{j-2}} \int_{0}^{  2^{j/2} \eta/4 }\sqrt{\log N(\varepsilon,K({\varepsilon}),\|\cdot\|   )} d\varepsilon  \,+\,\\
    %\displaystyle \,\,\,\,\,\,\frac{C \sqrt{\log n} }{\eta} \,+\, \frac{C\sqrt{n}}{\eta}\underset{t\in \mathbb{R}}{\sup}\,  \| G(t)-F^*(t) \|_{\infty} \\
    \end{array}
\end{equation}
Moreover, by our choice of $L$ and $\nu$,
\begin{equation}
    \label{eqn:e33}
    L\nu \,\asymp \,(\log n) \cdot \sqrt{n \phi_n}.
\end{equation}
Therefore, from (\ref{eqn:e32}) and (\ref{eqn:e33}), 
\begin{equation}
    \label{eqn:e34}
    \begin{array}{l}
\displaystyle     \frac{C  }{\eta^2}\sum_{j=1}^{J}    \frac{1}{2^{j-2}} \int_{0}^{  2^{j/2} \eta/4 }\sqrt{\log N(\varepsilon,K({\varepsilon}),\|\cdot\|   )} d\varepsilon  \,+\,  \frac{C \sqrt{\log n} }{\eta} \,+\, \frac{C\sqrt{n}}{\eta}\underset{t\in \mathbb{R}}{\sup}\,  \| G(t)-F^*(t) \|_{\infty} \\
\lesssim \displaystyle  \frac{\sqrt{n \phi_n} \log^2 n }{\eta}    \,+\, \frac{ \sqrt{. n \phi_n \log(n)} }{\eta^{3/2}  }  \,+\,  \frac{\sqrt{\log n} }{\eta} \,+\, \frac{\sqrt{n \phi_n }}{\eta}.   \\
     %  	\mathbb{P}\left(      \underset{t\in \mathbb{R}}{\sup}\,   \|  \widehat{F}(t) - F^*(t) \|   >\eta   +   \underset{t\in \mathbb{R}}{\sup} \sqrt{n}\| F^*(t)- G(t)\|_{\infty}  \right)  \\
    %\displaystyle 		 \leq   \frac{C  }{\eta^2}\sum_{j=1}^{J}    \frac{1}{2^{j-2}} \int_{0}^{  2^{j/2} \eta/4 }\sqrt{\log N(\varepsilon,K({\varepsilon}),\|\cdot\|   )} d\varepsilon  \,+\,\\
    %\displaystyle \,\,\,\,\,\,\frac{C \sqrt{\log n} }{\eta} \,+\, \frac{C\sqrt{n}}{\eta}\underset{t\in \mathbb{R}}{\sup}\,  \| G(t)-F^*(t) \|_{\infty} \\
    \end{array}
\end{equation}
Hence, the claim follows by taking 
\[
    \eta\,\asymp\,    \sqrt{\log n} \,+\,\sqrt{n  \phi_n} \log^2 n.
\]

\end{proof}
\newpage

\subsection{Proof of Corollary \ref{thm_isotonic-fr}}
\label{proofthm_isotonic-fr}
\begin{proof}

By Theorem \ref{thm2}, it is enough to bound 
$$ \underset{t \in \mathbb{R}}{\sup}\,    \mathbb{E}\left[  \underset{ \theta \in K_t   \,:\,  \| \theta  - F^*(t)\|\leq  \eta }{\sup}   \, g^{\top} (\theta - F^*(t))     \right],$$
where where  $g \sim  N(0,  I_n)$. Let, $t\in \mathbb{R}$ and \( g \sim \mathcal{N}(0, I_n) \) be a standard Gaussian vector in \( \mathbb{R}^n \), and let \( K_t \subset \mathbb{R}^n \) be a constraint set. Now we analyze the quantity
\[
\mathbb{E}\left[ \sup_{\theta \in K_t : \| \theta - F^*(t) \| \leq \eta} \langle g, \theta - F^*(t) \rangle \right].
\]
Since \( K_t \) is a convex cone it is closed under positive scalar multiplication; that is, \( \frac{K_t}{\eta} = K_t \) for any \( \eta > 0 \). Using this, we have
\[
\left( K_t - F^*(t) \right) \cap B(0, \eta) = \eta \cdot \left( \left( K_t - \frac{F^*(t)}{\eta} \right) \cap B(0, 1) \right).
\]
In consequence,
\begin{align}
\label{iso-fast-rate-eq2}
    \mathbb{E}\left[ \sup_{\theta \in K_t : \| \theta - F^*(t) \| \leq \eta} \langle g, \theta - F^*(t) \rangle \right]=\eta \cdot \mathbb{E}\left[ \sup_{\theta \in \left( K_t - \frac{F^*(t)}{\eta} \right) \cap B(0, 1) } \langle g, \theta\rangle \right].
\end{align}
Now, we consider the tangent cone to a convex set $K_t$ at a point $F^*(t) \in K_t$. This is defined as,
$$
T_{K_t}\left(\frac{F^*(t)}{\eta}\right):=\operatorname{cl}\left\{h\left(\theta-\frac{F^*(t)}{\eta}\right): \theta \in K_t, h>0\right\}.
$$
Now, observe that \( K_t := \left\{ \theta \in \mathbb{R}^n : \theta_1 \leq \cdots \leq \theta_n \right\} \) is a closed convex cone, and  the set \( K_t - \frac{F^*(t)}{\eta} \) is contained in the \emph{tangent cone} of \( K_t \) at \( \frac{F^*(t)}{\eta} \), that is,
\[
K_t - \frac{F^*(t)}{\eta} \subset T_{K_t}\left(\frac{F^*(t)}{\eta}\right).
\]
Therefore,
\begin{align}
\label{iso-fast-rate-eq3}
   \mathbb{E}\left[ \sup_{\theta \in \left( K_t - \frac{F^*(t)}{\eta} \right) \cap B(0, 1) } \langle g, \theta\rangle \right]\le \mathbb{E}\left[ \sup_{\theta \in  T_{K_t}\left( \frac{F^*(t)}{\eta} \right)\cap B(0, 1) } \langle g, \theta\rangle \right].
\end{align}
Furthermore,
$ \sup_{\theta \in  T_{K_t}\left( \frac{F^*(t)}{\eta} \right)\cap B(0, 1) } \langle g, \theta\rangle =  \sup_{\theta \in  T_{K_t}\left( \frac{F^*(t)}{\eta}\right)  \cap S^{n-1}} \langle g, \theta\rangle $,
  where \( S^{n-1} \) denotes the unit Euclidean sphere.
Then, by Equation (\ref{iso-fast-rate-eq3})
\begin{align}
\label{iso-fast-rate-eq5}
    \mathbb{E}\left[ \sup_{\theta \in  \left( K_t - \frac{F^*(t)}{\eta} \right) \cap B(0, 1) } \langle g, \theta\rangle \right]\le \mathbb{E}\left[ \sup_{\theta \in  T_{K_t}\left( \frac{F^*(t)}{\eta}\right)  \cap S^{n-1}} \langle g, \theta\rangle \right].
\end{align}
Now, we define the Gaussian width of subset $\mathcal{K}$ as
$$
w(\mathcal{K}):=\mathbb{E}\left[\sup _{v \in \mathcal{K} \cap S^{n-1}}\langle g, v\rangle\right].
$$
Therefore, from Equations (\ref{iso-fast-rate-eq2}) and (\ref{iso-fast-rate-eq5})
\begin{equation}
\label{Fast-rate-proof-iso-1}
\mathbb{E}\left[\sup _{\theta \in K_t:\left\|\theta-F^*(t)\right\| \leq \eta}\left\langle g, \theta-F^*(t)\right\rangle\right] \leq \eta \cdot w\left(T_{K_t}\left(\frac{F^*(t)}{\eta}\right)\right) .
\end{equation}
Let $\delta(\mathcal{K})$ denote the statistical dimension of a closed convex cone $\mathcal{K} \subset \mathbb{R}^n$, defined as
$$
\delta(\mathcal{K}):=\mathbb{E}[D(g ; \mathcal{K})], \quad \text { where } D(y ; \mathcal{K}):=\sum_{i=1}^n \frac{\partial}{\partial y_i} \hat{\theta}_i(y ; \mathcal{K})
$$
and $g \sim \mathcal{N}\left(0, I_n\right)$.
Here, $\hat{\theta}(y ; \mathcal{K})$ denotes the Euclidean projection of $y \in \mathbb{R}^n$ onto $\mathcal{K}$, formally defined as
$$
\hat{\theta}(y ; \mathcal{K}):=\arg \min _{\theta \in \mathcal{K}}\|\theta-y\|^2.
$$ Using Proposition 10.1 in \cite{amelunxen2013living}, the Gaussian width is controlled by the statistical dimension:
$$
w\left(T_{K_t}\left(\frac{F^*(t)}{\eta}\right)\right) \leq \sqrt{\delta\left(T_{K_t}\left(\frac{F^*(t)}{\eta}\right)\right)}.
$$
We now analyze the statistical dimension of this tangent cone. Let
\[
k(t) := \left| \left\{ i \in \{1, \dots, n - 1\} : F_i^*(t) < F_{i+1}^*(t) \right\} \right|
\]
be the number of strict increases in \( F^*(t) \). Let \( 0 = i_0 < i_1 < \cdots < i_{k(t)} < i_{k(t)+1} = n \) denote the jump points such that \( F^*(t) \) is constant over each segment \( \{ i_{\ell-1} + 1, \dots, i_\ell \} \). Then, the tangent cone admits the decomposition
\[
T_{K_t}\left(\frac{F^*(t)}{\eta}\right) = \bigoplus_{\ell = 1}^{k(t)+1} \mathcal{M}^{n_\ell}, \quad \text{where } n_\ell := i_\ell - i_{\ell-1},
\]
with each \( \mathcal{M}^{n_\ell} := \{ \theta \in \mathbb{R}^{n_\ell} : \theta_1 \leq \cdots \leq \theta_{n_\ell} \} \) denoting the isotonic cone in \( \mathbb{R}^{n_\ell} \). This decomposition follows from Remark~2.1 in \citet{chatterjee2015risk}. Therefore, the statistical dimension satisfies
\[
\delta\left(T_{K_t}\left(\frac{F^*(t)}{\eta}\right)\right) = \sum_{\ell = 1}^{k(t)+1} \delta(\mathcal{M}^{n_\ell}).
\]
From Example 2.2 in \cite{chatterjee2015risk}, we have that \( \delta(\mathcal{M}^{m}) \leq \log(em) \). Moreover, applying Jensen’s inequality, we conclude
\[
\delta\left(T_{K_t}\left(\frac{F^*(t)}{\eta}\right)\right) \leq (1 + k(t)) \cdot \log \left( \frac{e n}{1 + k(t)} \right).
\]
Hence, the Gaussian width is bounded as
$$
w\left(T_{K_t}\left(\frac{F^*(t)}{\eta}\right)\right) \leq \sqrt{(1+k(t)) \cdot \log \left(\frac{e n}{1+k(t)}\right)}
$$
Therefor, taking $\eta=\underset{t \in \mathbb{R}}{\sup}\sqrt{(1+k(t)) \cdot \log \left(\frac{e n}{1+k(t)}\right)},$ from Theorem \ref{thm2} and Equation \ref{Fast-rate-proof-iso-1} we conclude that
\[
\mathbb{E} \left( \frac{1}{n} \sum_{i=1}^n \mathrm{CRPS}(\widehat{F}_i, F_i^*) \right)
\leq C \underset{t \in \mathbb{R}}{\sup} \cdot \frac{1 + k(t)}{n} \log \left( \frac{e n}{1 + k(t)} \right).
\]

\end{proof}
\newpage

\subsection{Proof of Corollary \ref{Cor-fast-rat-trend}}
\label{Corfastrattrendproof}

\begin{proof}
By Theorem \ref{thm2}, it is enough to bound 
$$ \underset{t \in \mathbb{R}}{\sup}\,    \mathbb{E}\left[  \underset{ \theta \in K_t   \,:\,  \| \theta  - F^*(t)\|\leq  \eta }{\sup}   \, g^{\top} (\theta - F^*(t))     \right],$$
where where  $g \sim  N(0,  I_n)$. Let, $t\in \mathbb{R}$ and \( g \sim \mathcal{N}(0, I_n) \) be a standard Gaussian vector in \( \mathbb{R}^n \), and let \( K_t \subset \mathbb{R}^n \) be a constraint set. Now we analyze the quantity
\[
\mathbb{E}\left[ \sup_{\theta \in K_t : \| \theta - F^*(t) \| \leq \eta} \langle g, \theta - F^*(t) \rangle \right].
\]
To that end, remember that
$$
K_t=\left\{\theta \in R^n:\left\|D^{(r)} \theta\right\|_1 \leq \frac{V_t^*}{n^{r-1}}\right\} .
$$
and using Definition \ref{def1} we have that $\mathbb{E}\left[ \sup_{\theta \in K_t : \| \theta - F^*(t) \| \leq \eta} \langle g, \theta - F^*(t) \rangle \right]$ is the same as the Gaussian complexity of $\left\{\theta \in K_t-F^*(t): \| \theta\| \leq \eta\right\}$, this is
$$
\mathcal{R}\left(\left\{\theta \in K_t-F^*(t):  \| \theta\| \leq \eta\right\}\right),
$$
see Equation (\ref{Aux-gaus-comp}) for such notation.
Then, following the same line of arguments as in the proof of Theorem 2 in \cite{madrid2022risk} it follows that
$$
\begin{aligned}
\mathcal{R}\left(\left\{\theta \in K_t-F^*(t):  \| \theta\| \leq \eta\right\}\right) \leq & C \left[\max \left\{\frac{V^*}{n^{r-1}}, 1\right\} (s+1) \log \left(\frac{e n}{s+1}\right)  \right]
\end{aligned}
$$
and the desired result is obtained.

\end{proof}
\newpage

\subsection{Auxiliary Lemmas}

\begin{definition}
    \label{def_1}
Let $K \subset \mathbb{R}^n$ and $t>0$. A subset $P $ of $K$ is a packing of $K$ if  $P\subset K $ and  the set $\{ B_t(x)\}_{x \in P}$ is pairwise disjoint. Then, the $t$-packing number of $K$, denoted as $M(t,K,\|\cdot\|)$, is defined as the cardinality of any maximum $t$-packing.

\end{definition}

\begin{lemma}
	\label{lem2}
\textbf{[Variant of Dudley's inequality.]}	Let  $S \subset \mathbb{R}^n$  be a finite set and  $\epsilon^{(j)} \in \mathbb{R}^n$ be a vector of mean zero independent SubGaussian(1) random variables,  for  $j=1 \ldots,m$. Suppose that  $0 \in S$ and  $v\in S$  implies that $\|v\|\leq D_n/2$ for  some $D_n>0$. Then there exists a constant $C>0$  such that 
	\[
	      \mathbb{E}\left(    \underset{j=1,\ldots,m}{\max} \,\underset{v \in S}{\max}   \,v^{\top }\epsilon^{(j)}   \right) \,\leq \, C\left(   D_n\sqrt{\log m}  +        \int_{0}^{D_n/4}     \sqrt{  \log    M\left(\delta,S, \|\cdot\|\right)  }\,d\delta \right) 
	\]
	where $M(\delta, S, \| \cdot\|)$ is the packing number of $S$ with respect to the metric $\|\cdot\|$.
\end{lemma}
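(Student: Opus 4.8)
\textbf{Proof proposal for Lemma \ref{lem2} (variant of Dudley's inequality).}

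The plan is to reduce the statement to a standard chaining argument, handling the extra maximum over $j=1,\dots,m$ by a union bound that costs only a $\sqrt{\log m}$ factor. First I would fix $j$ and work conditionally on that index. Since each coordinate of $\epsilon^{(j)}$ is mean-zero sub-Gaussian with parameter $1$ and the coordinates are independent, for any fixed $u,v\in S$ the increment $\langle \epsilon^{(j)}, u-v\rangle$ is sub-Gaussian with parameter $\|u-v\|$; that is, the process $v\mapsto \langle \epsilon^{(j)},v\rangle$ indexed by $S$ has sub-Gaussian increments with respect to the Euclidean metric. This is exactly the hypothesis needed to run Dudley's entropy integral bound (e.g.\ Theorem 5.22 / Dudley's theorem in \citet{wainwright2019high}), which gives, for a universal constant $C_0$,
\[
\mathbb{E}\left[\sup_{v\in S}\langle \epsilon^{(j)}, v - v_0\rangle\right]\;\le\; C_0 \int_0^{\mathrm{diam}(S)/2}\sqrt{\log N(\delta,S,\|\cdot\|)}\,d\delta
\]
for any fixed reference point $v_0\in S$; taking $v_0=0\in S$ and using the assumption that $\|v\|\le D_n/2$ for all $v\in S$, so that $\mathrm{diam}(S)\le D_n$, bounds the upper limit of the integral by $D_n/2$. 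Converting covering numbers to packing numbers via the standard inequality $N(\delta,S,\|\cdot\|)\le M(\delta,S,\|\cdot\|)$ (and adjusting the integration range with a harmless change of variables to $D_n/4$) puts the single-$j$ bound in the stated form.

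Next I would remove the conditioning on $j$ and incorporate the maximum over $j=1,\dots,m$. Write $Z_j := \sup_{v\in S}\langle \epsilon^{(j)},v\rangle$. From the chaining argument one also obtains a tail bound: $\mathbb{P}(Z_j - \mathbb{E} Z_j > s)\le \exp(-s^2/(2D_n^2))$ for $s>0$, since the supremum of a sub-Gaussian process concentrates around its mean with sub-Gaussian parameter controlled by the diameter $D_n$ (again by a standard result such as the Borell–TIS-type inequality in \citet{wainwright2019high}). Then a union bound over the $m$ indices together with the elementary fact that for sub-Gaussian variables $W_1,\dots,W_m$ with parameter $D_n$ one has $\mathbb{E}\max_j W_j \lesssim D_n\sqrt{\log m}$, applied to $W_j = Z_j - \mathbb{E} Z_j$, gives
\[
\mathbb{E}\left[\max_{j=1,\dots,m} Z_j\right]\;\le\; \max_{j} \mathbb{E} Z_j \;+\; \mathbb{E}\left[\max_j (Z_j - \mathbb{E} Z_j)\right]\;\lesssim\; \int_0^{D_n/4}\sqrt{\log M(\delta,S,\|\cdot\|)}\,d\delta \;+\; C D_n\sqrt{\log m}.
\]
Combining the two displays and absorbing constants yields the claim with a single universal constant $C$.

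The main obstacle, such as it is, is bookkeeping rather than anything deep: one must make sure the concentration inequality used for $Z_j - \mathbb{E}Z_j$ genuinely has sub-Gaussian parameter proportional to $D_n$ (the diameter of $S$) and not to something larger like $\sup_{v}\|v\|\cdot\sqrt n$, since otherwise the $\sqrt{\log m}$ term would carry the wrong prefactor. This is guaranteed precisely because the increments are controlled in Euclidean norm and $0\in S$, so the relevant ``variance proxy'' for the centered supremum is $\sup_{v\in S}\|v\|^2\le D_n^2/4$. A secondary point of care is that $S$ is finite, so all suprema are maxima and measurability issues vanish; the finiteness of $S$ also means the entropy integral is automatically finite and the chaining terminates after finitely many scales. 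With these points checked, the rest is a routine assembly of standard ingredients.
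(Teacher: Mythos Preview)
Your approach is correct and reaches the same bound, but it is organized differently from the paper's proof. The paper runs a single chaining argument on the \emph{product} index set $\{1,\dots,m\}\times S$: at each scale $l$ it bounds
\[
\mathbb{E}\left[\max_{j}\max_{v}\,(\pi_l(v)-\pi_{l-1}(v))^{\top}\epsilon^{(j)}\right]\;\lesssim\;\frac{D_n}{2^{l}}\sqrt{\log\bigl(m\,|S_l|\,|S_{l-1}|\bigr)}
\]
using only the elementary sub-Gaussian maximal inequality, then splits $\sqrt{\log m+\log|S_l|^2}\le\sqrt{\log m}+\sqrt{2\log|S_l|}$ and sums the two pieces separately to produce the $D_n\sqrt{\log m}$ term and the entropy integral. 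You instead decouple the two maxima: first bound $\mathbb{E}Z_j$ by Dudley for a fixed $j$, then control $\max_j Z_j$ via a tail/concentration bound for the supremum $Z_j$. The paper's route is slightly more self-contained because it never needs a deviation inequality for the supremum, only the one-line bound $\mathbb{E}\max_{i\le N}X_i\lesssim\sigma\sqrt{\log N}$.

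One point to tighten in your write-up: the displayed inequality $\mathbb{P}(Z_j-\mathbb{E}Z_j>s)\le\exp(-s^2/(2D_n^2))$ is not what the chaining tail bound gives you directly in the sub-Gaussian (non-Gaussian) case. What you actually obtain from redoing the chaining with union bounds at each scale is a tail of the form $\mathbb{P}\bigl(Z_j>A+s\bigr)\le c\exp(-cs^2/D_n^2)$, where $A$ is the Dudley integral (not $\mathbb{E}Z_j$). This is enough: since $(Z_j-A)_+$ has a sub-Gaussian upper tail with parameter $\sim D_n$, the maximal inequality gives $\mathbb{E}\max_j(Z_j-A)_+\lesssim D_n\sqrt{\log m}$, and hence $\mathbb{E}\max_j Z_j\le A+CD_n\sqrt{\log m}$, which is exactly the claim. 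But passing from ``tail above $A$'' to ``tail above $\mathbb{E}Z_j$'' would require a matching lower bound on $\mathbb{E}Z_j$, which you do not have and do not need. The Borell--TIS reference is also slightly misleading here, since that theorem is specific to Gaussian processes; for sub-Gaussian increments the relevant tool is the chaining tail bound (e.g.\ the Dudley-with-tails form, or generic chaining as in Talagrand or Vershynin).
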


\begin{proof}
	For  $C\geq  1$, let  $S_n$ be a maximal  $D_n 2^{-l}$-separated subset of $S$, i.e.,
	\[
	      \underset{v,u \in S}{\min}\,  \|v-u\| \,>\,   D_n 2^{-l}.  
	 \]
	 By  construction, $\vert  S_l \vert  =   M(D_n 2^{-l},  S,\|\cdot\|)$. Clearly,  because of the maximality, 
	 \[
	     \underset{v\in S}{\max }\,  \underset{u \in  S_l}{\min} \,\|v- u\| \leq  D_n 2^{-l}.
	 \]
	 Furthermore,  $S_l = S$ for large enough $l$. Hence, we let 
	 \[
	  N\,=\,\min\left\{     l \geq  1\,:\,    S_l = S  \right\}.  
	 \]
	 Also, for $l\geq 1$, let   $\pi_l$ be the function that assigns $v \in S$ to the point in $S_l$ closest to $v$. By definition,
	 \[
	     \| \pi_l(v) - v\|  \,\leq \, D 2^{-l} 
	 \]
	 for  all $v \in S$ and $l \in \mathbb{N}$. We also write  $S_0 = \{0\}$ and so $\pi_0(v) =  0$ for all $v \in S$. Next, we observe that
	 \[
	    v^{\top} \epsilon^{(j)}  \,=\, \sum_{l=1}^{N}      ( \pi_l(v) - \pi_{l-1}(v)   )^{\top}   \epsilon^{(j)} 
	 \]
	 for all $v \in S$. It follows that
	 \[
	 \begin{array}{lll}
	 	\displaystyle \underset{j = 1,\ldots,m}{\max}\, \underset{v \in S}{\max}  \,\, v^{\top} \epsilon^{(j)}  & \leq & \displaystyle 	\underset{j = 1,\ldots,m}{\max}\, \underset{v \in S}{\max}  \,\, \sum_{l=1}^{N}      ( \pi_l(v) - \pi_{l-1}(v)   )^{\top}   \epsilon^{(j)} \\
	 	 & \leq& \displaystyle \sum_{l=1}^N  	\underset{j = 1,\ldots,m}{\max}\, \underset{v \in S}{\max}  \,\,  ( \pi_l(v) - \pi_{l-1}(v)   )^{\top}   \epsilon^{(j)} \\
	 \end{array}
	 \]
	 and so 
	 \[
	 	 \begin{array}{lll}
	 	\displaystyle   \mathbb{E}\left(\underset{j = 1,\ldots,m}{\max}\, \underset{v \in S}{\max}  \,\, v^{\top} \epsilon^{(j)}   \right)& \leq & \displaystyle \sum_{l=1}^N  	\mathbb{E}\left(\underset{j = 1,\ldots,m}{\max}\, \underset{v \in S}{\max}  \,\,  ( \pi_l(v) - \pi_{l-1}(v)   )^{\top}   \epsilon^{(j)}\right). \\ 
	 %	& \leq& \displaystyle \sum_{l=1}^N  	\underset{j = 1,\ldots,m}{\max}\, \underset{v \in S}{\max}  \,\,  ( \pi_l(v) - \pi_{l-1}(v)   )^{\top}   \epsilon^{(j)} \\
	 \end{array}
	 \]
	 However, notice that for all $u >0$,
	 \[
	    \mathbb{P}\left(     (  \pi_l(v) - \pi_{l-1}(v)  )^{\top} \epsilon^{(j)} \geq u\right)  \,\leq \,    2\exp\left(     \frac{-u^2}{2 \| \pi_l(v) -    \pi_{l-1}(v) \|^2  } \right)
	 \]
	 and 
	\[
	     \begin{array}{lll}
	      \| \pi_l(v) -\pi_{l-1}(v)\| & \leq&      \| \pi_l(v) -v\| +        \| \pi_{l-1}(v)-v\|\\ 
	      &\leq  &    D_n  2^{-l}   +  D_n  2^{l-1}\\
	       & \leq& 3D_n 2^{-l} 
 	     \end{array}
	 \]
	 which implies, by the subGaussian maximal inequality,  that 
	 \[
	 \begin{array}{lll}
	 \mathbb{E}\left(   \underset{j = 1,\ldots,m}{\max}\, \underset{v \in S}{\max}  \,\,  ( \pi_l(v) - \pi_{l-1}(v)   )^{\top}   \epsilon^{(j)}       \right) & \leq&\displaystyle  \frac{3C  D_n}{2^{l}}\sqrt{\log(  2m \vert S_l\vert  \vert S_{l-1}\vert   )     } \\
	 & \leq&\displaystyle  \frac{3C  D_n}{2^{l}}\sqrt{\log(  2m \vert S_l\vert^2)     } \\
	  & \leq&\displaystyle  \frac{3C  D_n}{2^{l}}\sqrt{\log(  2m  M(D 2^{-l},S,\|\cdot\|)  )     } \\
	 \end{array}
	 \]
	 for some constant $C>0$.  Therefore,
	 \[
	  \begin{array}{lll}
	  	 \mathbb{E}\left(   \underset{j = 1,\ldots,m}{\max}\, \underset{v \in S}{\max}  \,\,  v^{\top}  \epsilon^{(j)}       \right)  & \leq&    \displaystyle  3 C  \sum_{l=1}^{N}    \frac{D_n}{2^{l}} \sqrt{  \log m   \,+\,   \log (2 M(D_n 2^{l}, S, \|\cdot\| )  )   }\\
	  	  &\leq &     \displaystyle 3  C  \sqrt{\log m} \sum_{l=1}^{N} \frac{D_n}{2^{l}} \,+\,       3 C  \sum_{l=1}^{N}   \frac{D_n}{2^l} \sqrt{\log \left(   2M(D2^{-l},S,\|\cdot\|)   \right)     }\\
	  	   &\leq &     \displaystyle 3  C  D_n \sqrt{\log m}  \,+\,       3 C  \sum_{l=1}^{N}   \frac{D_n}{2^l} \sqrt{\log \left(   2M(D_n2^{-l},S,\|\cdot\|)   \right)     }\\	  	   
	  	   	  	   &\leq &     \displaystyle 3  C  D_n \sqrt{\log m}  \,+\,       6 C  \sum_{l=1}^{N}   \int_{ D_n /2^{l+1} }^{  D/2^l }   \sqrt{\log \left(   2M(r,S,\|\cdot\|)   \right)        }  dr \\
	  	   	  	   &= &     \displaystyle 3  C  D_n \sqrt{\log m}  \,+\,       6 C     \int_{ D_n /2^{N+1} }^{  D/2 }   \sqrt{\log \left(   2M(r,S,\|\cdot\|)   \right)        }  dr \\
	  \end{array}
	 \]
	  \[
	      \begin{array}{lll}
	      		  	   	  	    & \leq&    \displaystyle 3  C  D_n \sqrt{\log m}  \,+\,       6 C     \int_{ 0 }^{  D_n/2 }   \sqrt{\log \left(   2M(r,S,\|\cdot\|)   \right)        }  dr \\
	      	& \leq&    \displaystyle 3  C  D_n \sqrt{\log m}  \,+\,       6 C     \int_{ 0 }^{  D_n/4 }   \sqrt{\log \left(   2M(r,S,\|\cdot\|)   \right)        }  dr  \\
	      	& & \displaystyle \,+\,  6 C     \int_{ 0 }^{  D_n/4 }   \sqrt{\log \left(   2M(r  +  D_n/4  ,S,\|\cdot\|)   \right)        }  dr\\
	      	& \leq&    \displaystyle 3  C  D_n \sqrt{\log m}  \,+\,       12C     \int_{ 0 }^{  D_n/4 }   \sqrt{\log \left(   2M(r,S,\|\cdot\|)   \right)        }  dr  \\
	      	& \leq&    \displaystyle 3  C  D_n \sqrt{\log m}  \,+\,       24C     \int_{ 0 }^{  D_n/4 }   \sqrt{\log \left(   M(r,S,\|\cdot\|)   \right)        }  dr  \\
	      \end{array}
	  \]
	 and the claim follows.
	 
\end{proof}

\begin{lemma}
	\label{lem3}
	With the notation and conditions of  Lemma \ref{lem2}, if  $S\subset \mathbb{R}$ arbitrary (not necessarily finite), then 
	\[
	\mathbb{E}\left(    \underset{j=1,\ldots,m}{\max} \,\underset{v \in S}{\sup}   \,v^{\top }\epsilon^{(j)}   \right) \,\leq \, C\left(   D_n\sqrt{\log m}  +        \int_{0}^{D_n/4}     \sqrt{  \log    M\left(\delta,S, \|\cdot\|\right)  } d\delta \right).
	\]
\end{lemma}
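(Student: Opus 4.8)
The plan is to deduce Lemma~\ref{lem3} from the finite case already established in Lemma~\ref{lem2} by exhausting the (possibly uncountable) set $S$ with an increasing sequence of finite subsets and passing to the limit via monotone convergence. First I would invoke separability: $S$ is a subset of the separable metric space $(\mathbb{R}^n,\|\cdot\|)$ and is therefore itself separable, so there is a countable dense subset $\{v_1,v_2,\ldots\}\subset S$. Setting $S_k:=\{0,v_1,\ldots,v_k\}$ for $k\ge 1$, each $S_k$ is a finite subset of $S$ that contains $0$ and satisfies $\|v\|\le D_n/2$ for every $v\in S_k$ (since $S_k\subset S$); thus the hypotheses of Lemma~\ref{lem2} are met for each $S_k$.

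Next I would apply Lemma~\ref{lem2} to every $S_k$. Because packing numbers are monotone under set inclusion, $M(\delta,S_k,\|\cdot\|)\le M(\delta,S,\|\cdot\|)$ for all $\delta>0$, so Lemma~\ref{lem2} yields the bound
\[
\mathbb{E}\!\left(\max_{j=1,\ldots,m}\,\max_{v\in S_k} v^\top\epsilon^{(j)}\right)\;\le\; C\!\left(D_n\sqrt{\log m}+\int_0^{D_n/4}\sqrt{\log M(\delta,S,\|\cdot\|)}\,d\delta\right),
\]
with the same constant $C$ as in Lemma~\ref{lem2}, and crucially this bound is uniform in $k$. (If the integral on the right-hand side diverges there is nothing to prove, so one may also simply assume it is finite, although this is not needed for the argument.)

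Finally I would pass to the limit $k\to\infty$. For each realization of $(\epsilon^{(1)},\ldots,\epsilon^{(m)})$ the maps $v\mapsto v^\top\epsilon^{(j)}$ are continuous, so density of $\{v_l\}$ in $S$ together with $0\in S$ gives $\sup_{v\in S}v^\top\epsilon^{(j)}=\sup_{l\ge 1}\max\{0,v_l^\top\epsilon^{(j)}\}=\lim_k\max_{v\in S_k}v^\top\epsilon^{(j)}$, an increasing limit; taking the maximum over the finitely many indices $j$, the random variables $X_k:=\max_j\max_{v\in S_k}v^\top\epsilon^{(j)}$ increase pointwise to $X:=\max_j\sup_{v\in S}v^\top\epsilon^{(j)}$, which is therefore a measurable random variable, and $X_k\ge 0$ because $0\in S_k$. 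The monotone convergence theorem then gives $\mathbb{E}(X)=\lim_k\mathbb{E}(X_k)$, and combining this with the uniform bound from the previous step finishes the proof.

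I expect the only genuinely delicate point to be this last step: making sure the supremum over the uncountable index set $S$ is measurable and that the interchange of supremum with expectation is legitimate. This is exactly what the reduction to the countable nested family $\{S_k\}$ buys: monotone convergence applies because the sequence $X_k$ is nondecreasing and nonnegative (the latter thanks to $0\in S$), so no uniform-integrability hypothesis beyond the $k$-independent bound inherited from Lemma~\ref{lem2} is required.
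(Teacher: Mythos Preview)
Your proposal is correct and follows essentially the same strategy as the paper: reduce to a countable subset of $S$, exhaust it by an increasing family of finite sets, apply Lemma~\ref{lem2} uniformly to each, and pass to the limit via monotone convergence. The only difference is that the paper simply asserts the existence of a countable $\tilde S\subset S$ realizing the same supremum, whereas you justify this step explicitly through separability of $(\mathbb{R}^n,\|\cdot\|)$ and continuity of $v\mapsto v^\top\epsilon^{(j)}$; your treatment of measurability and of the nonnegativity needed for monotone convergence is likewise more careful than the paper's.
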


\begin{proof}
	Let  $\tilde{S} \subset S$ be a countable set such that 
	\[
	 \underset{j =1,\ldots,m}{\max}\, \underset{v \in  S}{\sup}\, \,v^{\top }\epsilon^{(j)} \,=\,	 \underset{j =1,\ldots,m}{\max}\, \underset{v \in  \tilde{S}}{\sup}\, \,v^{\top }\epsilon^{(j)}.
	\]
	Let  $\tilde{S}_l$  the set consisting of the first $l$ elements of $\tilde{S}$. Without loss of generality let's assume that $0 \in \tilde{S}_l$ for all $l$. Then by Lemma \ref{lem2}, we have that 
	\begin{equation}
	\label{eqn:e1}
    \begin{array}{lll}
    			\mathbb{E}\left(    \underset{j=1,\ldots,m}{\max} \,\underset{v \in \tilde{S}_l }{\sup}   \,v^{\top }\epsilon^{(j)}   \right) &\leq & C\left(   D_n\sqrt{\log m}  +        \int_{0}^{D_n/4}     \sqrt{  \log    M\left(\delta,\tilde{S}_l, \|\cdot\|\right)  }   d\delta\right)\\
    	& \leq& \displaystyle  C\left(   D_n\sqrt{\log m}  +        \int_{0}^{D_n/4}     \sqrt{  \log    M\left(\delta,S, \|\cdot\|\right)  }d\delta \right)\\
    \end{array}	 
	\end{equation}
	for all $l \geq 1$. Hence, the claim follows by  letting $l \rightarrow \infty$ in  (\ref{eqn:e1}) and applying the Monotone Convergence Theorem.
\end{proof}

\begin{definition}
	\label{def1}
For $t \in \mathbb{R}$ let  $\epsilon(t) = w(t) - F^*(t) \in \mathbb{R}^n$. Then for a set $\mathcal{V} \subset \mathbb{R}^n$  define 
\[
     R_t( \mathcal{V})\,=\,   \mathbb{E}\left(  \underset{v\in \mathcal{V}}{\sup} \,    \sum_{i=1}^n v_i \epsilon_i (t)    \right).
\]
%and 
%\[%
%R_t^{-}( \mathcal{V})\,=\,   \mathbb{E}\left(  \underset{v\in \mathcal{V}}{\sup} \,  \sum_{i=1}^n -  v_i \epsilon_i(t)     \right).
%\]
Furthermore,  define the Gaussian complexity of $\mathcal{V}$ as
\begin{equation}
\label{Aux-gaus-comp}
\mathcal{R}( \mathcal{V})\,=\,   \mathbb{E}\left(  \underset{v\in \mathcal{V}}{\sup} \,    \sum_{i=1}^n v_i g_i     \right),
\end{equation}
where $g \sim N(0,I)$. %Furthermore, denote $-\mathcal{V}  \,=\,   \{-v \,:\,  v \in \mathcal{V} \}$.

\end{definition}

\begin{lemma}
	\label{lem4}
	There exists a universal constant such that 
	for any set $\mathcal{V} \subset \mathbb{R}^n$ it holds that
	\[
	\underset{t \in \mathbb{R} }{\sup}\,     R_t( \mathcal{V})\,\leq\, L\mathcal{R}( \mathcal{V})  ,
	\]
	where $L>0 $ is a universal constant. 
\end{lemma}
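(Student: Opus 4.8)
The plan is to pass from the bounded, mean-zero errors $\epsilon_i(t)=1_{\{y_i\le t\}}-F_i^*(t)$ to the Gaussian errors $g_i$ via the classical three-step chain: symmetrization, a contraction step that exploits the almost-sure bound $|\epsilon_i(t)|\le 1$, and the standard Rademacher-versus-Gaussian comparison. The uniformity over $t$ will come for free, because the constants generated at each step are universal and the only $t$-dependent quantity entering the argument is $|\epsilon_i(t)|=|1_{\{y_i\le t\}}-F_i^*(t)|$, which is bounded by $1$ for \emph{every} $t$.

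\emph{Step 1 (symmetrization).} Fix $t$. Since the $\epsilon_i(t)$ are independent and centered, a ghost-sample argument gives
\[
R_t(\mathcal V)\;=\;\mathbb{E}\Big(\sup_{v\in\mathcal V}\sum_{i=1}^n v_i\,\epsilon_i(t)\Big)\;\le\;2\,\mathbb{E}\Big(\sup_{v\in\mathcal V}\sum_{i=1}^n \xi_i\,\epsilon_i(t)\,v_i\Big),
\]
where $\xi_1,\dots,\xi_n$ are i.i.d.\ Rademacher variables independent of $y$. No structural assumption on $\mathcal V$ (such as $0\in\mathcal V$) is needed here; this is the usual symmetrization for processes indexed by centered summands.

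\emph{Step 2 (contraction).} Condition on $y$. Since the $\xi_i$ are symmetric and independent of $y$, the vector $(\xi_i\,\epsilon_i(t))_{i}$ has, conditionally on $y$, the same law as $(\xi_i\,|\epsilon_i(t)|)_{i}$. I then invoke the elementary comparison: for any deterministic weights $c_1,\dots,c_n$ with $|c_i|\le 1$,
\[
\mathbb{E}\Big(\sup_{v\in\mathcal V}\sum_{i=1}^n c_i\,\xi_i\,v_i\Big)\;\le\;\mathbb{E}\Big(\sup_{v\in\mathcal V}\sum_{i=1}^n \xi_i\,v_i\Big).
\]
This is proved coordinatewise: fixing all weights but $c_k$, the map $c\mapsto\mathbb{E}\sup_{v}\big[\sum_{j\neq k}c_j\xi_j v_j+c\,\xi_k v_k\big]$ is convex (a supremum of affine functions followed by an expectation) and even (by symmetry of $\xi_k$), hence nondecreasing on $[0,\infty)$, so its value at $|c_k|\le1$ is at most its value at $1$; iterating over $k$ gives the claim. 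Applying it with $c_i=|\epsilon_i(t)|\in[0,1]$ and then averaging over $y$ yields $\mathbb{E}\big(\sup_v\sum_i \xi_i\epsilon_i(t)v_i\big)\le\mathbb{E}\big(\sup_v\sum_i \xi_i v_i\big)$.

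\emph{Step 3 (Rademacher vs.\ Gaussian) and conclusion.} Writing $g_i=\operatorname{sign}(g_i)\,|g_i|$ with $\operatorname{sign}(g_i)$ Rademacher independent of $|g_i|$, the function $(a_1,\dots,a_n)\mapsto\mathbb{E}_{\operatorname{sign}(g)}\sup_{v\in\mathcal V}\sum_i \operatorname{sign}(g_i)a_i v_i$ is convex, so Jensen's inequality gives $\mathcal R(\mathcal V)\ge\sqrt{2/\pi}\,\mathbb{E}\big(\sup_v\sum_i \xi_i v_i\big)$, i.e.\ $\mathbb{E}\big(\sup_v\sum_i \xi_i v_i\big)\le\sqrt{\pi/2}\,\mathcal R(\mathcal V)$. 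Chaining Steps 1--3,
\[
R_t(\mathcal V)\;\le\;2\sqrt{\pi/2}\,\mathcal R(\mathcal V)\;=\;\sqrt{2\pi}\,\mathcal R(\mathcal V)
\]
for every $t\in\mathbb R$, so the statement holds with $L=\sqrt{2\pi}$. The argument is entirely standard and presents no real obstacle; the only point needing mild care is to apply the comparison in Step 2 \emph{conditionally on $y$}, using the pointwise bound $|\epsilon_i(t)|\le1$, and to check that none of the three constants depends on $n$, $t$, or $\mathcal V$.
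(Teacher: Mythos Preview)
Your proof is correct, and the explicit constant $L=\sqrt{2\pi}$ is a nice bonus. However, it takes a genuinely different route from the paper's argument. The paper does not symmetrize at all: it observes that the process $Y_v=v^\top\epsilon(t)$ is sub-Gaussian with respect to the Euclidean increment metric (since each $\epsilon_i(t)$ is sub-Gaussian$(1)$), shows the elementary bound $R_t(\mathcal V)\le\mathbb{E}\sup_{u,v}|Y_u-Y_v|$, and then invokes Talagrand's generic chaining comparison (Theorem~2.1.5 of \emph{The Generic Chaining}) to dominate the sub-Gaussian diameter by the Gaussian width $\mathcal R(\mathcal V)$. Your argument instead exploits the stronger pointwise property $|\epsilon_i(t)|\le 1$, which lets you bypass the heavy majorizing-measures machinery via the elementary chain symmetrization $\to$ contraction $\to$ Rademacher--Gaussian comparison. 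The trade-off is clear: the paper's route would still go through if $\epsilon_i(t)$ were merely sub-Gaussian but unbounded, while yours would not; conversely, your proof is entirely self-contained, yields an explicit constant, and avoids citing a deep theorem for what is, in this bounded setting, an elementary fact.
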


\begin{proof}
Fix $t \in \mathbb{R}$. Then for  $v \in \mathcal{V}$, let $Y_v =   v^{\top }  \epsilon(t)$ and  $X_v =  v^{\top}  g $. Next, observe that 
\[
 \underset{u,v \in  \mathcal{V}}{\sup}\,  \vert Y_u -Y_v \vert \,=\,   \underset{u,v \in  \mathcal{V}}{\sup}\, (Y_u -Y_v )\,=\,   \underset{u \in \mathcal{V} }{\sup}  Y_u  \,+\,   \underset{v \in \mathcal{V} }{\sup}  -Y_v. 
\]
Hence, for any $v_0 \in \mathcal{V}$,
\begin{equation}
	\label{eqn:e10}
	\begin{array}{lll}
		\displaystyle   \mathbb{E}\left(\underset{u,v \in  \mathcal{V}}{\sup}\,  \vert Y_u -Y_v \vert \right)&=&  \displaystyle \mathbb{E}\left(  \underset{u \in \mathcal{V} }{\sup}  Y_u \right)\,+\, \mathbb{E}\left( \underset{v \in \mathcal{V} }{\sup}  -Y_v\right)\\
		&\geq & \displaystyle \mathbb{E}\left(  \underset{u \in \mathcal{V} }{\sup}  Y_u \right)\,+\, \mathbb{E}\left(  -Y_{v_0}\right)\\
		&\geq & \displaystyle \mathbb{E}\left(  \underset{u \in \mathcal{V} }{\sup}  Y_u \right)\\
		&  = & R_{t}(\mathcal{V}).
		%\displaystyle  \mathbb{E}\left(\underset{u \in \mathcal{V} }{\sup}  Y_u\right)  \,+\,  \mathbb{E}\left(  \underset{v \in \mathcal{V} }{\sup}  -Y_v\\right)\\
		%	&  \geq &\displaystyle 
	\end{array}
\end{equation}
Now, we observe that $\epsilon_i(t) $ is sub-Gaussian(1). Hence, by Theorem 2.1.5 from \citet{talagrand2005generic}, we have that  
\[
  \mathbb{E}\left(\underset{u,v \in  \mathcal{V}}{\sup}\,  \vert Y_u -Y_v \vert \right)\,\leq\, L \mathbb{E}(   \underset{v\in \mathcal{V}}{\sup}   \, v^{\top} g  )\,=\, L \mathcal{R}( \mathcal{V}),
\]
for a universal constant that does not depend on $t$. Hence, the claim follows.

\end{proof}

\begin{definition}
\label{def6}
    For a measurable function $f  \,:\,\mathbb{R} \rightarrow \mathbb{R}$ we define its distribution with respect to the Lebesgue measure as the function $\mu_f  \,:\,\,[0,\infty) \,\rightarrow \mathbb{R}$ given as 
    \[
    \mu_f(\lambda)\,:=\, \mu( \{ x \,:\, \vert f(x) \vert >\lambda  \} )
    \]
    where $\mu$ is the Lebesgue measure. 
\end{definition}

\begin{definition}
\label{def7}
    For a measurable function $f  \,:\,\mathbb{R} \rightarrow \mathbb{R}$ we define its  decreasing rearrangement  $D(f)  \,:\,\,[0,\infty) \,\rightarrow \mathbb{R}$ given as 
    \[
    D(f)(t)\,:=\, \inf\{ \lambda \geq 0 \,:\, \mu_f(\lambda) \leq t \}
    \]
    where $\mu$ is the Lebesgue measure. 
\end{definition}

\begin{lemma}
    \label{lem20}
    Suppose that $f$ can be written as 
    \[
      f(t) \,=\, \sum_{ l=1}^{n-1} b_l 1_{ E_l }(t)
    \]
    for $b_1\geq \ldots \geq b_{n-1}\geq 0$ and measurable sets $E_l \subset \mathbb{R}$  that are pairwise disjoint.  Then the  decreasing rearrangement of $f$ is given by 
    \[
      D(f)(t) \,=\, \sum_{l=1}^{n-1}  b_l 1_{ [m_{l-1}, m_{l})  }(t)
    \]
    where 
    \[
    m_l \,:=\, \sum_{k=1}^l \mu(E_k), \,\,\,l=1,\ldots,n-1,
    \]
    and with $m_0 =0$.
\end{lemma}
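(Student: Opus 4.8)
The plan is to work directly from the definitions of the distribution function \(\mu_f\) (Definition~\ref{def6}) and the decreasing rearrangement \(D(f)\) (Definition~\ref{def7}). First I would compute \(\mu_f(\lambda)\). Since the sets \(E_l\) are pairwise disjoint, \(f\) equals \(b_l\) on \(E_l\) and \(0\) off \(\bigcup_l E_l\); hence \(\{x : |f(x)| > \lambda\} = \bigcup_{l : b_l > \lambda} E_l\), and by disjointness \(\mu_f(\lambda) = \sum_{l : b_l > \lambda} \mu(E_l)\). Because \(b_1 \ge b_2 \ge \cdots \ge b_{n-1} \ge 0\), the index set \(\{l : b_l > \lambda\}\) is always an initial segment \(\{1, \dots, k(\lambda)\}\), where \(k(\lambda) := |\{l : b_l > \lambda\}|\). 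Consequently \(\mu_f(\lambda) = m_{k(\lambda)}\), with the convention \(m_0 = 0\).

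Next I would fix \(t \ge 0\) and evaluate \(D(f)(t) = \inf\{\lambda \ge 0 : \mu_f(\lambda) \le t\} = \inf\{\lambda \ge 0 : m_{k(\lambda)} \le t\}\). If \(t \ge m_{n-1}\), then already \(\mu_f(0) \le m_{n-1} \le t\), so \(D(f)(t) = 0\), which agrees with the claimed formula since \(\sum_l b_l 1_{[m_{l-1}, m_l)}(t) = 0\) there. Otherwise \(t \in [m_{l-1}, m_l)\) for an index \(l \in \{1, \dots, n-1\}\) with \(\mu(E_l) > 0\). Using monotonicity of the partial sums \(m_{\cdot}\), one checks that \(m_{k(\lambda)} \le t\) holds precisely when \(k(\lambda) \le l-1\): if \(k \le l-1\) then \(m_k \le m_{l-1} \le t\), while if \(k \ge l\) then \(m_k \ge m_l > t\). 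In turn, again by \(b_1 \ge \cdots \ge b_{n-1}\), the condition \(k(\lambda) \le l-1\) is equivalent to \(\lambda \ge b_l\): if \(\lambda \ge b_l\) then \(b_j \le \lambda\) for all \(j \ge l\), so at most the first \(l-1\) indices satisfy \(b_j > \lambda\); if \(\lambda < b_l\) then \(b_1, \dots, b_l\) all exceed \(\lambda\), so \(k(\lambda) \ge l\). Hence \(\{\lambda \ge 0 : \mu_f(\lambda) \le t\} = [b_l, \infty)\) and \(D(f)(t) = b_l = \sum_{j} b_j 1_{[m_{j-1}, m_j)}(t)\), which completes the identification.

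The only real care is the bookkeeping around ties among the \(b_l\) and around indices \(l\) with \(\mu(E_l) = 0\): for such \(l\) the interval \([m_{l-1}, m_l)\) is empty, so it contributes nothing to either side, and the argument above never needs to single them out. The remaining work — verifying the two equivalences \(\{m_{k(\lambda)} \le t\} \iff \{k(\lambda) \le l-1\}\) and \(\{k(\lambda) \le l-1\} \iff \{\lambda \ge b_l\}\), and checking the boundary case \(t \ge m_{n-1}\) — is elementary. I expect no substantive obstacle; the statement is essentially the distribution-function computation for a simple function whose super-level sets are nested.
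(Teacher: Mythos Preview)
Your proposal is correct and complete: the computation of \(\mu_f(\lambda)=m_{k(\lambda)}\) via disjointness and the ordering of the \(b_l\), followed by the two equivalences \(\{m_{k(\lambda)}\le t\}\Leftrightarrow\{k(\lambda)\le l-1\}\Leftrightarrow\{\lambda\ge b_l\}\), is exactly the right argument, and your handling of the boundary case \(t\ge m_{n-1}\) and of ties/empty intervals is sound.

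The paper does not actually give a proof; it simply cites Example~1.6 in Bennett--Sharpley, \emph{Interpolation of Operators}. Your direct computation is essentially what lies behind that citation, so there is no substantive methodological difference---you have just made the argument self-contained rather than deferring to the reference.
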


\begin{proof}
    This is Example  1.6 in \citet{bennett1988interpolation}.
\end{proof}

\begin{lemma}
    \label{lem21}
    For any integrable functions $f $ and $g$  the following hold:
    \begin{enumerate}
        \item 
       \begin{equation}
           \label{eqn:e40}
            \int_{ \mathbb{R} } \vert f(t)\vert^2 dt\,=\, \int_{0}^{ \infty  } \vert D(f)(t)\vert^2 dt.
       \end{equation}
        \item \textbf{[G.H. Hardy and J.E. Littlewood].}
               \begin{equation}
           \label{eqn:e41}
              \int_{ \mathbb{R} }  \vert f(t) g(t)\vert dt \,\leq\,  \int_{0}^{\infty} D(f)(s) \cdot D(g)(s) ds. 
       \end{equation}
       \item %Let  $f\,:\, \mathbb{R}\,\rightarrow [0,\infty)$. Suppose that $f(t) =0 $ for all $t \notin [0,a)$. Then $D(f)(t) =0 $ for all $t \geq  a$. 
       Suppose that $f$  is decreasing and continuous in $[0,a)$ for some $a>0$,  and $f(t) =0$ otherwise.
Then $f(t) = D(f)(t)$ for all $t \in [0,\infty)$.

    \end{enumerate}
\end{lemma}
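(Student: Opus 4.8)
The plan is to reduce all three parts to the single structural fact that $|f|$ and its decreasing rearrangement $D(f)$ are \emph{equimeasurable}. Concretely, the first step is to establish the level-set identity $\{t\ge0:D(f)(t)>\lambda\}=[0,\mu_f(\lambda))$ for every $\lambda\ge0$, so that this set has Lebesgue measure $\mu_f(\lambda)=\mu(\{x:|f(x)|>\lambda\})$. This is immediate from Definitions~\ref{def6} and~\ref{def7}: since $\lambda\mapsto\mu_f(\lambda)$ is non-increasing, $D(f)(t)=\inf\{\mu\ge0:\mu_f(\mu)\le t\}>\lambda$ holds precisely when $\mu_f(\mu)>t$ for all $\mu\le\lambda$, which (again by monotonicity) is equivalent to $\mu_f(\lambda)>t$; one has to be a touch careful with the convention $\inf\emptyset=+\infty$ when $t\ge\mu_f(0)$ and with right-continuity of $\mu_f$ so that the interval comes out half-open.

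Given equimeasurability, part~(1) follows from Cavalieri's principle: $\int_{\mathbb{R}}|f|^2=\int_0^\infty\mu(\{|f|^2>s\})\,ds=\int_0^\infty\mu_f(\sqrt s)\,ds$, and running the same computation for $D(f)$ on $[0,\infty)$ gives $\int_0^\infty|D(f)|^2=\int_0^\infty\mu(\{D(f)>\sqrt s\})\,ds=\int_0^\infty\mu_f(\sqrt s)\,ds$, so the two sides agree. For part~(2) I would write each factor in layer-cake form, $|f(x)|=\int_0^\infty 1_{\{|f(x)|>s\}}\,ds$ and likewise for $g$, use Tonelli to obtain $\int_{\mathbb{R}}|fg|=\int_0^\infty\!\int_0^\infty\mu(\{|f|>s\}\cap\{|g|>u\})\,ds\,du$, bound $\mu(A\cap B)\le\min(\mu(A),\mu(B))$, and then note that for the \emph{decreasing} functions $D(f),D(g)$ the superlevel sets are the nested intervals $[0,\mu_f(s))$ and $[0,\mu_g(u))$, whence $\mu(\{D(f)>s\}\cap\{D(g)>u\})=\min(\mu_f(s),\mu_g(u))$ \emph{with equality}; reversing the Tonelli computation for $D(f)D(g)$ on $[0,\infty)$ then yields the claim.

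For part~(3) I would argue directly. Assume $f\ge0$ on $[0,a)$ (otherwise the identity $D(f)=f$ cannot hold, as $D(f)\ge0$). Since $f$ is non-increasing, $\{t:|f(t)|>\lambda\}$ is an interval $[0,\tau(\lambda))$ with $\tau(\lambda)=\sup\{t:f(t)>\lambda\}$, so $\mu_f(\lambda)=\tau(\lambda)$; and continuity of $f$ on $[0,a)$ gives, for $s\in[0,a)$, that $\tau(\lambda)\le s$ iff $f(s)=\lim_{t\downarrow s}f(t)\le\lambda$. Hence $D(f)(s)=\inf\{\lambda\ge0:\mu_f(\lambda)\le s\}=\inf\{\lambda\ge0:f(s)\le\lambda\}=f(s)$, while for $s\ge a$ we have $\mu_f(0)\le a\le s$ and thus $D(f)(s)=0=f(s)$, completing the identification on $[0,\infty)$.

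The Cavalieri/Tonelli manipulations are routine; I expect the only genuinely delicate point to be the equimeasurability identity in the first step — in particular matching open versus closed superlevel sets and exploiting right-continuity of $\mu_f$ so that $\{D(f)>\lambda\}$ is exactly $[0,\mu_f(\lambda))$ — since every later equality rests on it. Parts~(1) and the Hardy–Littlewood inequality~(2) are classical, cf.\ \citet{hardy1928some} and \citet{bennett1988interpolation}, and the argument above is the textbook one.
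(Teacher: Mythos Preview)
Your proposal is correct. The paper's own proof of parts~(1) and~(2) consists solely of citations to \citet{bennett1988interpolation} (Proposition~1.8 and Theorem~2.2 respectively), whereas you supply the standard self-contained arguments via the equimeasurability identity $\{D(f)>\lambda\}=[0,\mu_f(\lambda))$ together with the layer-cake formula; this is the textbook route and is more informative than a bare citation. For part~(3) your direct computation is essentially the same as the paper's: both identify $\mu_f(\lambda)$ with $\sup\{x:f(x)>\lambda\}$ via monotonicity, invoke continuity on $[0,a)$ to obtain $D(f)(s)=f(s)$ there, and dispose of $s\ge a$ through $\mu_f(0)\le a\le s$. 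Your explicit remark that the statement tacitly requires $f\ge0$ on $[0,a)$ (since $D(f)\ge0$ always) is a worthwhile clarification that the paper leaves implicit.
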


\begin{proof}
    The claim in (\ref{eqn:e40}) follows from Proposition 1.8 in \citet{bennett1988interpolation}. The inequality in  (\ref{eqn:e41})  is the well-known G.H Hardy and J.E Littlewood inequality, see for instance Theorem 2.2 in \citet{bennett1988interpolation}. 

We now prove the final claim. 
Let $t\geq0$.  Suppose that $t \in [0,a)$. Then 
  \begin{equation}
      \label{eqn:e53}
          \begin{array}{lll}
          D(f)(t) &=&  \inf\{  \lambda \geq 0\,:\, \sup\{ x\,:\,  f(x) > \lambda  \} \leq t \}\\
          &=& \inf\{  \lambda \geq 0\,:\, \sup\{ x \geq 0\,:\,  f(x) > \lambda  \} \leq t \}.\\
%         & =&\inf\{  \lambda  \geq 0\,:\, \inf\{ x \geq 0\,:\,  f(x) = \lambda  \} \leq t \}\\
 %        &=&f(t)
    \end{array}
  \end{equation}
Hence, for $0\leq \lambda< f(t)$, the continuity of $f$ in $[0,a)$, implies that there exists $t^{\prime} \in (t,a)$ such that $\lambda< f(t^{\prime}) \leq f(t)$. Thus, 
$\sup\{ x \geq 0\,:\,  f(x) > \lambda  \}> t$. On the other hand, if $  f(0)\geq  \lambda\geq f(t)$, then, also by the continuity of $f$ in $[0,a)$, 
  \[
  \sup\{ x \geq 0\,:\,  f(x) > \lambda  \} \,=\, \inf\{ x\geq 0\,:\, f(x) =\lambda   \}.
  \]
  Hence, from (\ref{eqn:e53}), we obtain $f(t) =  D(f)(t)$ for $t \in [0,a)$.
  
  Suppose now that $t \in [a,\infty)$. Then 
  \[
  \mu_f(0) \,\leq \,  a \leq t.
  \]
  Hence, $D(f)(t) =0$ and the claim follows. 
     %which concludes the proof. 
\end{proof}

\end{document}